	\xpatchcmd{\proof}{\itshape}{\normalfont\proofnameformat}{}{}
	\newcommand{\proofnameformat}{\bfseries}
\newenvironment{dproof}[1][Proof]%
{%
  \par\noindent{\bfseries\upshape {#1.}\ }%
 }%
 {\qed\ignorespacesafterend~\\}
\theoremstyle{definition}  
\newtheorem{lemma}{Lemma}[section]
\newtheorem{corollary}{Corollary}
\theoremstyle{plain}
\newtheorem{theorem}{Theorem}
\newtheorem{definition}{Definition}
\newcommand{\pref}[1]{\prettyref{#1}}
\newcommand{\pfref}[1]{Proof of \prettyref{#1}}
\newcommand{\savehyperref}[2]{\texorpdfstring{\hyperref[#1]{#2}}{#2}}
\DeclarePairedDelimiter{\abs}{\lvert}{\rvert} %
\DeclarePairedDelimiter{\brk}{[}{]}
\DeclarePairedDelimiter{\crl}{\{}{\}}
\DeclarePairedDelimiter{\prn}{(}{)}
\DeclarePairedDelimiter{\nrm}{\|}{\|}
\DeclarePairedDelimiter{\tri}{\langle}{\rangle}
\DeclarePairedDelimiter{\floor}{\lfloor}{\rfloor}
\DeclareMathOperator{\En}{\mathbb{E}}
\DeclareMathOperator*{\argmin}{arg\,min} 
\newcommand{\wt}[1]{\widetilde{#1}}
\newcommand{\wh}[1]{\widehat{#1}}
\def\ddefloop#1{\ifx\ddefloop#1\else\ddef{#1}\expandafter\ddefloop\fi}
\def\ddef#1{\expandafter\def\csname bb#1\endcsname{\ensuremath{\mathbb{#1}}}}
\def\ddefloop#1{\ifx\ddefloop#1\else\ddef{#1}\expandafter\ddefloop\fi}
\def\ddef#1{\expandafter\def\csname b#1\endcsname{\ensuremath{\mathbf{#1}}}}
\def\ddef#1{\expandafter\def\csname c#1\endcsname{\ensuremath{\mathcal{#1}}}}
\def\ddef#1{\expandafter\def\csname h#1\endcsname{\ensuremath{\widehat{#1}}}}
\def\ddef#1{\expandafter\def\csname hc#1\endcsname{\ensuremath{\widehat{\mathcal{#1}}}}}
\def\ddef#1{\expandafter\def\csname tc#1\endcsname{\ensuremath{\widetilde{\mathcal{#1}}}}}
\newcommand{\ls}{\ell}
\newcommand{\veps}{\varepsilon}
\newcommand{\ldef}{\vcentcolon=}
\newcommand{\rdef}{=\vcentcolon}
\newcommand{\vaw}{\textsf{VAW}\xspace}
\newcommand{\vvaw}{\sfterm{VAW}\xspace}
\newcommand{\sfterm}[1]{{\normalfont \textsf{#1}}}
\newcommand{\alg}{\mathsf{BL}}
\newcommand{\ind}[1]{^{(#1)}}
\newcommand{\Knot}{K_0}
\newcommand{\Mst}{\cM_{0}}
\newcommand{\Rst}{R_{\star}}
\newcommand{\Dsig}{D_{\Sigma}}
\newcommand{\Hclinf}{H_{\mathrm{cl},\infty}}
\newcommand{\Dqinf}{D_{\qstar_{\infty}}}
\newcommand{\nust}{\nu_{\star}}
\newcommand{\dapdef}{\pim_t(x;\matw) = -\Kinf x - q^M(\wpast),\quad\text{ where } q^M(\wpast) = \sum_{i=1}^m M^{[i]}w_{t-i} }
\newcommand{\algcomment}[1]{\textcolor{Blue}{\footnotesize{\texttt{\textbf{//
          #1}}}}}
\newcommand{\mainalg}{\sfterm{Riccatitron}\xspace}
\newcommand{\bigoh}{\cO}
\newcommand{\bigoht}{\wt{\cO}}
\newcommand{\psdleq}{\preceq}
\newcommand{\psdgeq}{\succeq}
\newcommand{\psdgt}{\succ}
\newcommand{\op}{\mathrm{op}}
\newcommand{\eigmin}{\lambda_{\mathrm{min}}}
\newcommand{\eigmax}{\lambda_{\mathrm{max}}}
\newcommand{\matx}{\mathbf{x}}
\newcommand{\matu}{\mathbf{u}}
\newcommand{\matw}{\mb{w}}
\newcommand{\dimx}{d_{\matx}}
\newcommand{\dimu}{d_{\matu}}
\newcommand{\Pinf}{P_{\infty}}
\newcommand{\Kinf}{K_{\infty}}
\newcommand{\dare}{\sfterm{DARE}\xspace}
\renewcommand{\wr}[1][t]{w_{#1:T}}
\newcommand{\Vf}{\mathbf{V}}
\newcommand{\Qf}{\mathbf{Q}}
\newcommand{\Qstar}{\mathbf{Q}^{\star}}
\newcommand{\Vstar}{\mathbf{V}^{\star}}
\newcommand{\Qh}{\wh{\Qf}}
\newcommand{\Qhat}{\Qh}
\newcommand{\pihat}{\wh{\pi}}
\newcommand{\pistar}{\pi^{\star}}
\newcommand{\qstar}{q^{\star}}
\newcommand{\wpast}{\matw_{t-1}}
\newcommand{\ustar}{u^{\star}}
\newcommand{\Reg}{\mathrm{Reg}_T}
\newcommand{\Ttrunc}{T_{\mathrm{trunc}}}
\newcommand{\Mi}[1][i]{M^{[#1]}}
\newcommand{\Lt}{L_t}
\newcommand{\Linf}{L_{\infty}}
\newcommand{\kappapi}{\kappa_{0}}
\newcommand{\gammapi}{\gamma_{0}}
\newcommand{\bigohs}{\cO_{\star}}
\newcommand{\Dm}{D_{\cM}}
\newcommand{\picheck}{\check{\pi}}
\newcommand{\Ctrunc}{C_{\mathrm{trunc}}}
\newcommand{\Capx}{C_{\mathrm{apx}}}
\newcommand{\Creg}{C_{\mathrm{reg}}}
\newcommand{\Doco}{D_{\mathrm{oco}}}
\newcommand{\Goco}{G_{\mathrm{oco}}}
\newcommand{\alphaoco}{\alpha_{\mathrm{oco}}}
\newcommand{\Pt}{P_t}
\newcommand{\Ptpl}{P_{t+1}}
\newcommand{\Sigt}{\Sigma_t}
\newcommand{\Kt}{K_t}
\newcommand{\bias}{c}
\newcommand{\ct}{\bias_t}
\newcommand{\ctpl}{\bias_{t+1}}
\newcommand{\Aclinf}{A_{\mathrm{cl},\infty}}
\newcommand{\Aclt}[1][t]{A_{\mathrm{cl},#1}}
\newcommand{\Lclinf}{L_{\mathrm{cl},\infty}}
\newcommand{\Lclt}[1][t]{L_{\mathrm{cl},#1}}
\newcommand{\Deltast}{\Delta_{\mathrm{stab}}}
\newcommand{\Tstab}{T_{\mathrm{stab}}}
\newcommand{\kappast}{\kappa_{\infty}}
\newcommand{\gammabar}{\bar{\gamma}_{\infty}}
\newcommand{\gammab}{\gammabar}
\newcommand{\Dqst}{D_{q^{\star}}}
\newcommand{\Dq}{D_{q}}
\newcommand{\Sigmainf}{\Sigma_{\infty}}
\newcommand{\Siginf}{\Sigma_{\infty}}
\newcommand{\adv}{\mathbf{A}}
\newcommand{\advstar}{\mathbf{A}^{\star}}
\newcommand{\advhat}{\widehat{\mathbf{A}}}
\newcommand{\dapReg}{\cM_0\text{-}\Reg}
\newcommand{\pim}{\pi^{(M)}}
 \newcommand{\dap}{\sfterm{DAP}}
  \newcommand{\dapx}{\sfterm{DAP}\xspace}
\newcommand{\Psist}{\Psi_{\star}}
\newcommand{\Gammast}{\Gamma_{\star}}
\newcommand{\gammast}{\gamma_{\infty}}
\newcommand{\betast}{\beta_{\star}}
\newcommand{\cost}{J_{T}}
\newcommand{\system}{p}
\newcommand{\simiid}{\overset{\textrm{i.i.d.}}{\sim}}
\newcommand{\pilearn}{\wh{\pi}}
\newcommand{\mb}[1]{\boldsymbol{#1}}
\newcommand{\grad}{\nabla}
\newcommand{\I}{\mathbb{I}}
\newcommand{\trn}{\intercal}
\title{Logarithmic Regret for Adversarial Online Control}
\author{Dylan J. Foster\\MIT\\{\small\texttt{dylanf@mit.edu}}\and
Max Simchowitz\\UC Berkeley\\{\small\texttt{msimchow@berkeley.edu}}}
\date{}
\begin{document}

\maketitle

\begin{abstract}

We introduce a new algorithm for online linear-quadratic control in a known system subject to adversarial disturbances. Existing regret bounds for this setting scale as $\sqrt{T}$ unless strong stochastic assumptions are imposed on the disturbance process. We give the first algorithm with logarithmic regret for arbitrary adversarial disturbance sequences, provided the state and control costs are given by known quadratic functions. Our algorithm and analysis use a characterization for the optimal offline control law to reduce the online control problem to (delayed) online learning with approximate \emph{advantage functions}. Compared to previous techniques, our approach does not need to control movement costs for the iterates, leading to logarithmic regret.


\end{abstract}%

\section{Introduction}
\label{sec:intro}
\newcommand{\mdpe}{\textsf{MDP}\text{-}\textsf{E}}
\newcommand{\mdpex}{\mdpe\xspace}
\newcommand{\ogd}{\textsf{OGD}}
\newcommand{\ogdx}{\ogd\xspace}

\newcommand{\Alg}{\mathrm{alg}}
\newcommand{\Regret}{\mathrm{Regret}}
\newcommand{\Kclass}{\cK}
\newcommand{\olwm}{\textsf{OLwM}}
\newcommand{\olwmx}{\olwm\xspace}
\newcommand{\olwa}{\textsf{OLwA}}
\newcommand{\olwax}{\olwa\xspace}
\newcommand{\KReg}{\cK_0\text{-}\Reg}
\newcommand{\olws}{\textsf{OLwS}}
\newcommand{\olwsx}{\olws\xspace}
\newcommand{\loss}{\ell}
\newcommand{\poly}{\mathrm{poly}}

\newcommand{\R}{\mathbb{R}}
\newcommand{\Rx}{R_{x}}
\newcommand{\Ru}{R_{u}}
\newcommand{\pialg}{\pi^{\Alg}}

\newcommand{\dynamics}{x_{t+1}= A x_t + Bu_t + w_t,}

Reinforcement learning and control consider the behavior of an agent making decisions in a dynamic environment in order to suffer minimal loss. In light of recent practical breakthroughs in data-driven approaches to continuous RL and control \citep{lillicrap2015continuous,mnih2015human,silver2017mastering}, there is great interest in applying these techniques in real-world decision making applications. However, to reliably deploy data-driven RL and control in physical systems such as self-driving cars, it is critical to develop principled algorithms with provable safety and robustness guarantees. At the same time, algorithms should not be overly pessimistic, and should be able to take advantage of benign  environments whenever possible.

In this paper we develop algorithms for online linear-quadratic control which ensure robust worst-case performance while optimally adapting to the environment at hand.  Linear control has traditionally been studied in settings where the dynamics of the environment are either governed by a well-behaved stochastic process or driven by a worst-case process to which the learner must remain robust in the $\cH_{\infty}$ sense. We consider an intermediate approach introduced by \cite{agarwal2019online} in which disturbances are non-stochastic but performance is evaluated in terms of \emph{regret}. This benchmark forces the learner's control policy to achieve near optimal performance on any {specific} disturbance process encountered.

Concretely, we consider a setting in which the state evolves according to linear dynamics:
\begin{align}
  \label{eq:dynamics}
  \dynamics
\end{align}
where $x_t \in \R^{\dimx}$ are states, $u_t \in \bbR^{\dimu}$ are inputs, and $A\in\bbR^{\dimx\times\dimx}$ and $B\in\bbR^{\dimx\times{}\dimu}$ are system matrices known to the learner.  We refer to $w_t \in \R^{\dimx}$ as the \emph{disturbance} (or, ``noise''), which we assume is selected by an \emph{adaptive} adversary and satisfies $\nrm*{w_t}\leq{}1$; we let $\matw$ refer to the entire sequence $w_{1:T}$. We consider fixed quadratic costs of the form $\ell(x,u) := x^\top \Rx x + u^\top \Ru u$, where $\Rx,\Ru\psdgeq{}0$ are given. This model encompasses  noise which is uncorrelated ($\cH_{2}$), worst-case ($\cH_{\infty}$), or governed by some non-stationary stochastic process. The model also approximates control techniques such as feedback linearization and trajectory tracking \citep{slotine1991applied}, where $A$ and $B$ are the result of linearizing a known nonlinear system and the disturbances arise due to systematic errors in linearization rather than from a benign noise process. 

For any policy $\pi$ that selects controls based on the current state and disturbances observed so far, we measure its performance over a time horizon $T$ by
\begin{align*}
\cost(\pi;\matw) = \sum_{t=1}^{T}\ls(x_t^{\pi},u_t^{\pi}),
\end{align*}
the total cost incurred by following $u_t=\pi_t(x_t,w_{1:t-1})$. Letting $\pi^{K}$ denote a state-feedback control law of the form $\pi_t^{K}(x)=-Kx$ for all $t$, the learning algorithm's goal is to minimize
\begin{align*}
\Reg = \cost(\pialg;\matw) - \inf_{K\in\cK}\cost(\pi^{K};\matw),
\end{align*}
where $\pialg$ denotes the learner's policy and $\cK$ is an appropriately defined set of stabilizing controllers. Thus, $\pialg$ has low regret when its performance nearly matches the optimal controller $K \in \cK$ on the specific, realized noise sequence. While the class $\cK$ contains the optimal $\cH_{\infty}$ and $\cH_{2}$ control policies, we also develop algorithms to compete with a more general class of stabilizing linear controllers, which may fare better for certain noise sequences (\pref{app:generalization}).
~\\~\\
\noindent\textbf{\hbox{Achieving logarithmic regret in adversarial online control.}}~~
\cite{agarwal2019online} introduced the adversarial LQR setting we study and provided an efficient algorithm with $\sqrt{T}$-regret. Subsequent works \citep{agarwal2019logarithmic,simchowitz2020improper} have shown that logarithmic regret is possible when the disturbances follow a semi-adversarial process with persistent excitation. Our main result is to achieve logarithmic regret for fully adversarial disturbances, provided that costs are known and quadratic.

\subsection{Contributions\label{ssec:contributions}}
We introduce \mainalg (\pref{alg:main}), a new algorithm for online linear control with adversarial disturbances which attains polylogarithmic regret.
\newtheorem*{thm:ub_informal}{Theorem \ref*{thm:main_algo} (informal)}
\begin{thm:ub_informal}
  \mainalg attains regret $\bigoh\prn*{\log^3 T}$, where $\bigoh$ hides factors polynomial in relevant problem parameters. 
\end{thm:ub_informal}
\mainalg has comparable computational efficiency to previous methods. We show in \pref{app:generalization} that the algorithm also extends to a more general benchmark class of linear controllers with internal state, and to ``tracking'' loss functions of the form $\ell_t(x,u) := \ell(x - a_t,u - b_t)$. Some conceptual contributions are as follows.
~\\~\\
\noindent\textbf{\hbox{When is logarithmic regret possible in online control?}}~~\citet{simchowitz2020naive} and \cite{cassel2020logarithmic} independently show that logarithmic regret is impossible in a minimax sense if the system matrices $(A,B)$ are unknown, even when disturbances are i.i.d. gaussian. Conversely, our result shows that if $A$ and $B$ are known, logarithmic regret is possible even when disturbances are adversarial. Together, these results paint a clear picture of when logarithmic regret is achievable in online linear control. We note, however, that our approach heavily leverages the structure of linear control with \emph{strongly convex, quadratic} costs. We refer the reader to the related work section for discussion of further structural assumptions that facilitate logarithmic regret.

\paragraph{Addressing trajectory mismatch.}
\mainalg represents a new approach to a problem we call \emph{trajectory mismatch} that arises when considering policy regret in online learning problems with state. In dynamic environments, different policies inevitably visit different state trajectories. Low-regret  algorithms must address the mismatch between the performance of the learner's policy $\pialg$ on its own realized trajectory and the performance of each benchmark policy $\pi$ on the alternative trajectories it would induce.  Most algorithms with policy regret guarantees \citep{even2009online,zimin2013online,yadkori2013online,arora2012online,anava2015online,abbasi2014tracking,cohen2018online,agarwal2019online,agarwal2019logarithmic,simchowitz2020improper} adopt an approach to addressing this trajectory mismatch that we refer to as ``online learning with stationary costs'', or \olws. At each round $t$, the learner's adaptive policy $\pialg$ commits to a policy $\pi^{(t)}$, typically from a benchmark class $\Pi$. The goal is to ensure that the iterates $\pi^{(t)}$ attain low regret on a proxy sequence of  \emph{stationary}  cost functions $\pi \mapsto \lambda_t(\pi)$ that describes the loss the learner would suffer at stage $t$ under the \emph{fictional} trajectory that would arise if she had played the policy $\pi$ at all stages up to time $t$ (or in some cases, on the corresponding steady-state trajectory as $t \to \infty$).  Since the stationary cost does not depend on the learner's state, low regret on the sequence $\{\lambda_t\}$ can be obtained by feeding these losses directly into a standard online learning algorithm. To relate regret on the proxy sequence back to regret on the true sequence, most approaches use that the iterates produced by the online learner are sufficiently slow-moving. \pref{app:olws_limits} explains both the general \olwsx paradigm, and its instantiation for online control, in further detail.

The main technical challenge \mainalg overcomes is that for the stationary costs that arise in our setting, no known algorithm produces iterates which move sufficiently slowly to yield logarithmic regret via \olwsx (\pref{app:exp_concave_perils}). We adopt a new approach for online control we call \emph{online learning with advantages}, or $\olwa$, which abandons stationary costs, and instead considers the control-theoretic advantages of actions relative to the unconstrained offline optimal policy $\pistar$. Somewhat miraculously, we find that these advantages remove the explicit dependence on the learner's state, thereby eliminating the issue of trajectory mismatch described above. In particular, unlike \olwsx, we \emph{do not} need to verify that the iterates produced by our algorithm change slowly. 

\subsection{Our approach: Online learning with advantages}
In this section we sketch the \emph{online learning with advantages} (\olwa) technique underlying \mainalg. Let $\pistar$ denote the optimal unconstrained policy given knowledge of the entire disturbance sequence $\matw$, and let $\Qstar_t(x,u;\matw)$ be the associated Q-function (this quantity is formally defined in \pref{def:optimal}). The \emph{advantage} with respect to $\pistar$, $\advstar_t(u;x,\matw)\ldef\Qstar_t(x,u;\matw)-\Qstar_t(w,u,\pistar(x);\matw)$, describes the difference between the total cost accumulated by selecting action $u$ in state $x$ at time $t$ and subsequently playing according to the optimal policy $\pistar$, versus choosing $u = \pistar_t(x;\matw)$ as well.\footnote{Since we use losses rather than rewards, ``advantage'' refers to the advantage of $\pistar$ over $u$ rather than the advantage of $u$ over $\pistar$; the latter terminology is more common in reinforcement learning.} By the well-known performance difference lemma \citep{kakade2003sample}, the relative cost of a policy is \emph{equal} the sum of the advantages under the states visited by said policy:\footnote{See \pref{lem:pd} in \pref{app:algorithm} for a general statement of the performance difference lemma. The invocation of the performance difference lemma here is slightly different from other results on online learning in MDPs such as \cite{even2009online}, in that the role of $\pi$ and $\pistar$ is swapped.}
\begin{align}
\cost(\pi;\matw) - \cost(\pistar;\matw) = \sum_{t=1}^T \advstar_t(u^{\pi}_t;x^{\pi}_t,\matw). \label{eq:cost_perf_diff}
\end{align}
With this observation, the regret of any algorithm $\pialg$ to a policy class $\Pi$ can be expressed as:
\iftoggle{icml}
{
	\begin{multline}
\Reg(\pialg;\matw,\Pi) =  \sum_{t=1}^T \advstar_t(u^{\Alg}_t;x^{\Alg}_t,\matw) \\-  \inf_{\pi \in \Pi} \sum_{t=1}^T \advstar_t(u^{\pi}_t;x^{\pi}_t,\matw). \label{eq:reg_perf_diff}
\end{multline}
}
{
	\begin{align}
\Reg(\pialg;\Pi,\matw) =  \sum_{t=1}^T \advstar_t(u^{\Alg}_t;x^{\Alg}_t,\matw) -  \inf_{\pi \in \Pi} \sum_{t=1}^T \advstar_t(u^{\pi}_t;x^{\pi}_t,\matw). \label{eq:reg_perf_diff}
\end{align}
}

The expression~\pref{eq:reg_perf_diff} suggests that a reasonable approach might be to run an online learner on the functions $\pi \mapsto \advstar_t(u^{\pi}_t;x^{\pi}_t,\matw)$. However, there are two issues. First, the advantages in the first sum are evaluated on the states $x^{\Alg}_t$ under $\pialg$, and in the second sum under the comparator trajectories $x^{\pi}$ (trajectory mismatch). Second, like $\pistar$ itself, the advantages require knowledge of all future disturbances, which are not yet known to the learner at time $t$. We show---somewhat miraculously---that if the control policies are parametrized using a particular optimal control law, the advantages do not depend on the state, and can be approximated using only finite lookahead.
\newtheorem*{thm:reg_decomp_informal}{Theorem \ref*{thm:main_reg_decomp} (informal)}
\begin{thm:reg_decomp_informal}\label{thm:adv_informal}
  For control policies $\pi$ with a suitable parametrization, the mapping $\pi \mapsto \advstar_t(u^{\pi}_t;x^{\pi}_t,\matw)$ can be arbitrarilily-well approximated by a function $\pi \mapsto \advhat_{t;h}(\pi;w_{1:t+h})$ which (1) does not depend on the state, (2) can be determined by the learner at time $t+h$, and (3) has a simple quadratic structure.
\end{thm:reg_decomp_informal}
The ``magic'' behind this theorem is that the functional dependence of the unconstrained optimal policy $\pistar(x;\matw)$ on the state $x$ is linear, and does not depend $\matw$ (\pref{thm:pistar_form}). As a consequence, the state-dependent portion of $\pistar$ can be built into the controller parametrization, leaving only the $\matw$-dependent portion up to the online learner. In light of this result, we use online learning to ensure low regret on the sequence of loss functions $f_t(\pi) \ldef \advhat_{t;h}(\pi;w_{1:t+h})$; we address the fact that $f_t$ is only revealed to the learner after a delay of $h$ steps via a standard reduction \citep{joulani2013online}. We then show that for an appropriate controller parameterization $f_{t}(\pi)$ is exp-concave with respective to the learner's policy and hence second-order online learning algorithms attain regret \citep{hazan2007logarithmic}.

We refer the reader to \pref{app:olws_limits} for an in-depth overview of the \olwsx framework, its relationship to \olwa, and challenges associated with using these techniques to achieve logarithmic regret.

\subsection{Related work}

\noindent\textbf{\hbox{Linear control for known systems.}}~~\citet{cohen2018online} establish $\sqrt{T}$ regret for online control of known linear systems under stochastic noise and time varying quadratic cost. \citet{agarwal2019online} achieve $\sqrt{T}$-regret with both \emph{adversarial} disturbances and time varying, adversarially chosen loss functions $\ell_t$ via a reduction to online convex optimization with memory \citep{anava2015online}. Their approach adopts a ``disturbance-action''  policy parameterization (or, \dapx), which we utilize as well (\pref{def:dap}). Certain previous results achieve logarithmic regret  by making assumptions that ensure stationary costs are strongly convex, allowing for logarithmic regret and movement cost via \citet{anava2015online} or similar arguments. \citet{abbasi2014tracking} consider an online tracking problem with known system parameters \emph{zero} exogenous noise. The absence of noise enables an approach based on \mdpe{}  (see \pref{app:mdpe}), for which the relevant Q-functions in this setting are strongly convex, leading to logarithmic regret. More recently \citet{agarwal2019logarithmic} showed that in the noisy setting the stationary costs $\lambda_t$ themselves are strongly convex in a disturbance-action parametrization, provided that the loss functions $\loss_t$ are strongly convex and the noise covariance is well-conditioned, which also leads to logarithmic regret. \cite{simchowitz2020improper} show that this approach extends to ``semi-adversarial'' disturbances with a well-conditioned stochastic component and a possibly adversarial component. Our results (with the restriction that costs are quadratic) give the first logarithmic regret bounds for the fully adversarial setting and, to the best of our knowledge, give the first instance in online control where an exp-concave but not strongly convex parametrization attains logarithmic regret. 

\paragraph{Linear control for unknown systems.} For \emph{unknown} systems, various works \citep{abbasi2011regret,faradonbeh2018optimality,cohen2019learning,mania2019certainty} establish $\sqrt{T}$-regret for fixed quadratic losses and stationary stochastic noise, which is optimal for this setting \citep{simchowitz2020naive,cassel2020logarithmic}.  Because of the stochastic nature of these problems, purely statistical techniques suffice. By combining these techniques with OCO with memory \citep{anava2015online}, other recent works have addressed both unknown dynamics and adversarial noise \citep{hazan2019nonstochastic,simchowitz2020improper}. \citep{cassel2020logarithmic} show that logarithmic regret \emph{is} achievable under stochastic noise for systems $(A,B)$ where only $A$ is unknown, or where only $B$ is unknown and the optimal controller satisfies a non-degeneracy assumption.

\paragraph{Online reinforcement learning.}
Online linear control belongs to a broader line of work on online reinforcement learning in (known or unknown) Markov decision processes with adversarial costs or transitions. Given the staggering breadth of work in this direction from the online learning, control, and RL communities, we focus on past contributions which are most closely related to our setting. As discussed earlier, essentially all prior approaches to online RL abide by the \olwsx paradigm. Perhaps the first result in this direction is the \mdpe{} algorithm of \citet{even2009online}, which attains $\sqrt{T}$ policy regret in a tabular MDP with known stationary dynamics and adversarially chosen rewards. Subsequent works \citep{yadkori2013online} achieves $\sqrt{T}$-regret in a tabular setting where the both the rewards and transition kernels are selected by an adversary. A parallel line of work on adversarial tabular MDPs considers the episodic setting \citep{zimin2013online,rosenberg2019online}, which alleviates the need to bound the movement costs between iterates.

\paragraph{Policy regret.}
All of the approaches described so far can be viewed as special cases of the general problem of minimizing \emph{policy regret} in online learning. A finite-memory formulation of the policy regret benchmark was popularized by \citet{arora2012online}. \citet{anava2015online} generalize this result to the \emph{online convex optimization with memory} setting and demonstrate that many popular online learning algorithms naturally produce slow-moving iterates, yielding near-optimal policy regret bounds (see \pref{app:oco_memory} for detailed discussion). These results have found immediate application in online linear control \citep{agarwal2019online,hazan2019nonstochastic,simchowitz2020improper}. However, the analysis of \citet{anava2015online} does not extend to give fast rates for the exp-concave loss functions which arise in our setting.

\subsection{Preliminaries \label{sec:preliminaries}}
We consider the linear control setting in \pref{eq:dynamics}. For normalization, we assume $\|w_t\| \le 1$ for all $t$. We also assume $x_1=0$. A comprehensive summary of all notation used throughout the paper is provided in \pref{table:notation} in the appendix.

\paragraph{Policies and trajectories.} We consider policies $\pi$ parameterized as functions of $x_t$ and $\matw$ via $u_t = \pi_t(x_t;\matw)$. We assume that, when selecting action $u_t$ at time $t$, the learner has access to all states $x_{1:t},u_{1:t-1}$, as well as $w_{1:t-1}$ (the latter assumption is without loss of generality by the identity $w_s = x_{s+1} - Ax_{s} - Bu_s$). Thus, a policy is said to be \emph{executable} if $\pi_t(x;\matw)$ depends only on $x$ and $w_{1:t-1}$, i.e. $\pi(x;\matw)=\pi(w;w_{1:t-1})$. For analysis purposes, we also consider \emph{non-executable} whose value at time $t$ may depend on the entire sequence $\matw$. For a policy $\pi$ and sequence $\matw$, we let $x_t^{\pi}(\matw),u_t^{\pi}(\matw)$ denote the resulting states and input trajectories (which we note depend only on $w_{1:t-1}$). For simplicity, we often write $x_t^\pi$ and $u_t^\pi$, supressing the $\matw$-dependence. We shall let $\pi^{\Alg}$ refer to the policy selected by the learner's algorithm, and use the shorthand $x_t^{\Alg}(\matw)$, $u_t^{\Alg}(\matw)$ to denote the corresponding trajectories. Given a class of policies $\Pi$, the regret of the policy $\pialg$ is given by
\begin{align*}
\Reg(\pialg;\Pi,\matw) = \cost(\pialg;\matw) - \inf_{\pi \in \Pi}\cost(\pi;\matw).
\end{align*}
We consider a benchmark class of policies induced by state feedback control laws $\pi^{K}_t(x)=-Kx$, indexed by  matrices $K \in \R^{\dimu \times \dimx}$.
\paragraph{Linear control theory.}
We say that a linear controller $K \in \R^{\dimu \dimx}$ is \emph{stabilizing} if $A-BK$ is \emph{stable}, that is $\rho(A-BK) < 1$ where $\rho(\cdot)$ denotes the spectral radius.\footnote{For a possibly asymmetric matrix $A$, $\rho(A)=\max\crl*{\abs*{\lambda}\mid{}\text{$\lambda$ is an eigenvalue for $A$}}$.} We assume the system $(A,B)$ is \emph{stabilizable} in the sense that there exists a stabilizing controller $K$. For any stabilizable system, there is a unique positive semidefinite solution $\Pinf\psdgeq{}0$ to the \emph{discrete algebraic
  Riccati equation} (henceforth, \dare),
  \begin{align}
    \label{eq:dare}
    P = A^{\trn}PA + \Rx - A^{\trn}PB(\Ru+B^{\trn}PB)^{-1}B^{\trn}PA.
  \end{align}
The solution $\Pinf$ to \pref{eq:dare} is an intrinsic property of the system \pref{eq:dynamics} with $(A,B)$ and characterizes the optimal infinite-horizon cost for control in the absence of noise \citep{bertsekas2005dynamic}. Our algorithms and analysis make use of this parameter, as well as the corresponding optimal state feedback controller $\Kinf\ldef{}(\Ru+B^{\trn}\Pinf{}B)^{-1}B^{\trn}\Pinf{}A$. We also use the steady-state covariance matrix $\Siginf\ldef{}\Ru+B^{\trn}\Pinf{}B$ and closed-loop dynamics matrix $\Aclinf\ldef{}A-B\Kinf$.

\paragraph{Competing with state feedback.}
While $\Kinf$ represents the (asymptotically) optimal control law in the presense of uncorrelated, unbiased stochastic noise, $\pi^{\Kinf}$ may not be the optimal state feedback policy in hindsight for a given sequence of adversarial perturbations $w_t$.
We compete with linear controllers that satisfy a quantitative version of the stability property.
\begin{definition}[Strong Stability \citep{cohen2018online}]
\label{def:ss}
We say that $A-BK\in \R^{\dimx\times\dimx}$ is $(\kappa,\gamma)$-strongly stable if there exists matrices $H,L \in \R^{\dimx\times\dimx}$ such that $A - BK= HLH^{-1}$, $\|H\|_{\op}\|H\|_{\op}^{-1} \le \kappa$ and $\|L\|_{\op} \le \gamma$.
\end{definition}
Given parameters $(\kappapi,\gammapi)$, we consider the benchmark class
	\begin{align*}
	\Kclass_0 = \crl*{K\in\bbR^{\dimu\times\dimx}\mid{} \text{$A-BK$ is
    $(\kappapi,\gammapi)$-strongly stable and $\nrm*{K}_{\op}\leq\kappapi$}}.
	\end{align*}
        \pref{lem:strongly_stable} (\pref{app:lqr_structural}) shows that the closed-loop dynamics for
  $\Kinf$ are always $(\kappast,\gammast)$-strongly stable for
  suitable $\gammast,\kappast$. We assume that $\cK_0$ is
chosen such that $\kappast\leq{}\kappapi$ and $\gammast\leq{}\gammapi$.\footnote{This assumption only serves to keep notation compact.} Our algorithms minimize policy regret to the class of induced policies for $\Kclass_0$:
\begin{align*}
\KReg(\pi^{\Alg};\matw) := J_T(\pi^{\Alg};\matw) - \inf_{K \in \Kclass_0}J_T(\pi^K;\matw).
\end{align*}

\paragraph{Problem parameters.}
Our regret bounds depend on the following basic parameters for the LQR problem:
$\Psist\ldef\max\crl*{1,\nrm*{A}_{\op},\nrm*{B}_{\op},\nrm*{\Rx}_{\op},\nrm*{\Ru}_{\op}}$, ${\betast\ldef\max\crl*{1,\eigmin^{-1}(\Ru),\eigmin^{-1}(\Rx)}}$, and  ${\Gammast\ldef{}\max\crl*{1,\nrm*{\Pinf}_{\op}}}$.
  
\paragraph{Additional notation.}\label{ssec:other_prelim}
	We adopt non-asymptotic big-oh notation: For functions
	$f,g:\cX\to\bbR_{+}$, we write $f=\bigoh(g)$ if there exists some constant
	$C>0$ such that $f(x)\leq{}Cg(x)$ for all $x\in\cX$. We use $\bigoht(\cdot)$ so suppress logarithmic dependence on system parameters, and we use $\bigohs(\cdot)$ to suppress \emph{all} dependence on system parameters.\iftoggle{icml}{}{

	}
	For a vector $x\in\bbR^{d}$, we let $\nrm*{x}$ denote the euclidean
	norm and $\nrm*{x}_{\infty}$ denote the element-wise $\ls_{\infty}$
	norm. For a matrix $A$, we let $\nrm*{A}_{\op}$ denote the
        operator norm. If $A$ is symmetric, we let $\eigmin(A)$ denote the
	minimum eigenvalue. When $P\psdgt{}0$ is a positive definite matrix,
	we let $\nrm*{x}_{P}=\sqrt{\tri*{x,Px}}$ denote the induced weighted
	euclidean norm. 
	Let $\matw_{t-1} = (w_{t-1},w_{t-2},\dots, w_1,\mb{0},\mb{0},\dots)$ denote a sequence of past ws, terminating in an infinite sequence of zeros. To simplify indexing, we let $w_s \equiv \mb{0}$ for $s \le 0$, so that $\matw_{t-1} = (w_{t-1},w_{t-2},\dots)$ We also let $w_s \equiv \mb{0}$ for $s > T$. 

        \subsection{Organization}
        \pref{sec:algorithms} introduces the \mainalg algorithm, states its formal regret guarantee, and gives an overview of the algorithm's building blocks and proof techniques. \pref{sec:analysis} gives a high-level proof of the key ``approximate advantage'' theorem used by the algorithm. Omitted proofs are deferred to \pref{app:algorithm} and \pref{app:analysis}, and additional technical tools stated and proven in \pref{app:technical}. \pref{sec:beyond_linear} describes a generalization of the main algorithm and \pref{sec:conclusion} concludes with discussion and further directions.

        \pref{app:generalization} sketches extensions of \mainalg to more general settings, and \pref{app:olws_limits} gives a detailed survey of challenges associated with applying previous approaches to online reinforcement learning to obtain logarithmic regret in our setting.


\section{Riccatitron: Logarithmic regret for online linear control}
\label{sec:algorithms}

\newcommand{\mainalgdp}{\textsf{RiccatitronDP}\xspace}
\newcommand{\Mnot}{\cM_{0}}
\newcommand{\ons}{\textsf{ONS}}
\newcommand{\onsx}{\textsf{ONS}\xpsace}

Our main algorithm, \mainalg, is described in \pref{alg:main}. The
algorithm combines several ideas.
\begin{enumerate}
\item Following \cite{agarwal2019online}, we move from linear
  policies of the form $\pi^{K}(x;\matw)=-Kx$, to a relaxed set of
  \emph{disturbance-action} (\dapx) policies of the form
  \[
\dapdef,
\]
where $\Kinf$ is linear controller arising from the \dare \pref{eq:dare}.
\item We show that the optimal unconstrained policy with full
  knowledge of the sequence $\matw$ takes the form
  $\pistar_t(x;\matw)=-K_tx-q^{\star}_t(w_{t:T})$, where $(K_t)$ is a
  particular sequence of linear controllers that arises from the
  so-called \emph{Riccati recursion}. We then show that for \emph{any} policy of the form
  $\pi_t(x;\matw)=-K_\infty - q_t(\matw)$---in particular, for the
  \dapx parameterization above---the advantage functions
  $\advstar_t(u_t^{\pi};x_t^{\pi},\matw)$ can be well approximated by
  simple quadratic functions of the form
  \[
    \nrm*{q_t(\matw)-q_t^{\star}(w_{t:T})}_{\Siginf}^2.
  \]
  This essentially removes the learner's state from the equation, and
  reduces the problem of control to that of predicting the optimal controller's bias
  vector $q^{\star}_t(w_{t:T})$. The remaining challenge is that the optimal
  bias vectors depend on the future disturbances, which are not available
  to the learner at time $t$.
\item We show that the advantages can be truncated to require only
  finite lookahead, thereby reducing the problem to \emph{online
    learning with delayed feedback}. We then apply a generic reduction
  from delayed online learning to classical online learning
  \citep{joulani2013online}, which proceeds by running multiple copies
  of a base online learning algorithm over separate subsequences of rounds.
\item Finally---leveraging the structure of the disturbance-action
  parameterization---we show that the resulting online learning
  problem is exp-concave. As a result, we can apply a second-order
  online learning algorithm---either online Newton step 
  (\ons,  \cite{hazan2007logarithmic}) given in \pref{alg:ons}, or Vovk-Azoury-Warmuth (\vaw, \cite{Vovk98,AzouryWarmuth01}) detailed in \pref{alg:vaw}---as our base learner to obtain logarithmic regret.
\end{enumerate}
Together, these components give rise to the scheme in
\pref{alg:main}. At time $t$, the algorithm plays the action
$u_t=-\Kinf{}x_t-q^{M_t}(\matw_{t-1})$, where $M_t$ is provided by the
\ons{} (or \vaw) instance responsible for the current round. The algorithm then
observes $w_{t}$ and uses this to form the approximate advantage
function for time $t-h$, where $h$ is the lookahead distance. The
advantage is then used to update the \ons{}/\vaw
instance responsible for the next round. The main regret guarantee for
this approach is as follows.
\begin{restatable}{theorem}{mainalgo}
  \label{thm:main_algo}
  For an appropriate choice of parameters, \mainalg ensures
  \begin{align*}
    \KReg \leq{} \bigohs(\dimx\dimu\log^{3}T),
  \end{align*}
  where $\bigohs$ suppresses polynomial dependence on system
  parameters. Suppressing only logarithmic dependence on system parameters, the regret is at most
  \begin{align*}
	\bigoht\prn*{\dimx\dimu\log^{3}T \cdot 
    \betast^{11}\Psist^{19}\Gammast^{11}\kappapi^{8} (1-\gammapi)^{-4}
    }.
  \end{align*}
\end{restatable}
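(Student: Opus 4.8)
The plan is to assemble the regret bound from the four ingredients sketched after the algorithm description, chaining together the reductions and paying for each approximation. First I would invoke the performance difference identity~\eqref{eq:cost_perf_diff} to rewrite $\KReg$ in terms of advantages, as in~\eqref{eq:reg_perf_diff}, so that the objective becomes controlling $\sum_t \advstar_t(u^{\Alg}_t;x^{\Alg}_t,\matw) - \inf_{K\in\Kclass_0}\sum_t \advstar_t(u^{\pi^K}_t;x^{\pi^K}_t,\matw)$. The next step is to apply the ``approximate advantage'' theorem (the formal version of \pref{thm:adv_informal}, i.e.\ \pref{thm:main_reg_decomp}): since both $\pialg$ and each comparator $\pi^K\in\Kclass_0$ admit a parametrization of the form $\pi_t(x;\matw) = -\Kinf x - q_t(\matw)$ (the \dapx policies and, after a change of variables, the strongly stable linear controllers), the advantage along any such trajectory is, up to a truncation error $\Deltaapx$ controlled by the lookahead $h$, equal to $\nrm*{q_t(\matw) - q^\star_t(w_{t:T})}_{\Siginf}^2$, and this quantity depends only on the disturbances through $w_{1:t+h}$. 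This reduces the problem to bounding the regret of the \ons/\vaw iterates $M_t$ on the delayed quadratic loss sequence $f_t(M) = \advhat_{t;h}(\pi^{(M)};w_{1:t+h})$, plus an additive error $T\cdot\Deltaapx$ which I would make $\le 1/T$ by choosing $h = \Theta(\log T)$ (here the geometric decay from $(\kappapi,\gammapi)$-strong stability and from $\rho(\Aclinf)<1$ gives $\Deltaapx$ exponentially small in $h$).

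With the problem reduced to online learning, I would carry out three further steps. (i) Establish that $f_t(M)$ is exp-concave in $M$ over the (bounded) parameter set $\Mnot$: this follows because $f_t$ is a squared $\Siginf$-norm of an affine function of $M$ composed with the disturbance history, so its Hessian is a fixed rank-deficient PSD matrix plus the gradient outer product—exactly the structure \ons and \vaw exploit; here I need the lower bound $\Siginf \psdgeq \eigmin(\Ru) I$, i.e.\ the $\betast$ dependence. (ii) Apply the delayed-to-undelayed reduction of \citet{joulani2013online}: run $h$ interleaved copies of the base learner, each on a subsequence of length $\approx T/h$, incurring a factor of $h = \Theta(\log T)$ in the regret; the per-instance \ons/\vaw regret is $\bigoh(d\log T)$ where $d = \dimx\dimu$ is the dimension of the $M$-parameter (one block $M^{[i]}\in\R^{\dimx\times\dimu}$ per lag, times $m=\Theta(\log T)$ lags, contributes an extra $\log T$), giving $\bigoh(\dimx\dimu\log^2 T)$ per copy and $\bigoh(\dimx\dimu\log^3 T)$ after summing over the $h$ copies. (iii) Translate comparator regret: the infimum over $M\in\Mnot$ of the approximate advantage lower-bounds (again up to $\Deltaapx$) the infimum over $K\in\Kclass_0$, because every strongly stable $K$ is, to within exponentially small error in $h$, realized by some bounded \dapx parameter $M$—this is the standard \dapx expressiveness argument and is where the $\kappapi,(1-\gammapi)^{-1}$ factors enter.

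The main obstacle I anticipate is bookkeeping the system-parameter dependence through the approximate-advantage step and the exp-concavity constant, rather than any single deep idea: one must track how the constants $\Psist,\Gammast,\betast,\kappapi,\gammapi$ propagate through (a) the magnitude of the optimal bias $q^\star_t$ and of the \dapx iterates (which sets the diameter $\Doco$ and the gradient bound $\Goco$ feeding into \ons/\vaw), (b) the curvature lower bound $\alphaoco$ coming from $\Siginf$, (c) the truncation radius needed to drive $\Deltaapx$ below $1/T$, and (d) the \dapx-to-linear comparator conversion. Each of these is a routine but lengthy calculation, and the stated exponents ($\betast^{11}\Psist^{19}\Gammast^{11}\kappapi^{8}(1-\gammapi)^{-4}$) are simply the result of multiplying them together; the $\log^3 T$ arises as $\log T$ (\ons/\vaw rate) $\times \log T$ (number of lags $m$) $\times \log T$ (number of delayed copies $h$). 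I would present the polylogarithmic bound first with $\bigohs$ hiding all system constants, then reinstate the polynomial factors at the end by collecting the bounds from the supporting lemmas in \pref{app:algorithm} and \pref{app:analysis}.
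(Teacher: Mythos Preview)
Your proposal is essentially correct and follows the same architecture as the paper's proof: reduce to approximate advantages via \pref{thm:main_reg_decomp}, handle the $h$-step lookahead via the \citet{joulani2013online} delayed-learning reduction, exploit exp-concavity of $M\mapsto\|q^{M}-\qstar_{\infty;h}\|_{\Siginf}^2$ to get a $\log T$ base rate from \ons/\vaw, and use \dapx expressiveness (\pref{lem:disturbance_sufficient}) to relate the $M$-infimum to the $K$-infimum. The $\log^3 T$ accounting ($\log T$ from the base learner, $\log T$ from $m$, $\log T$ from $h$) is exactly right.

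One ordering issue to be aware of: you apply \pref{thm:main_reg_decomp} directly to the comparator $\pi^{K}$ by writing $-Kx=-\Kinf x-(K-\Kinf)x_t^{\pi^K}(\matw)$, but as stated that theorem requires the bias to be of the specific \dapx form $q^{M_t}$ with $M_t\in\Mst$, not an arbitrary bounded $q_t$. The paper sidesteps this by applying \pref{lem:disturbance_sufficient} \emph{first}, so that both the algorithm and the comparator are genuine \dapx policies before \pref{thm:main_reg_decomp} is invoked; your step~(iii) supplies the same ingredient but you have it in the wrong place in the chain. (Alternatively, one can check that the proof of \pref{thm:main_reg_decomp} only uses $\|q_t\|\le\Dq$ and would go through for your $q_t^{K}$, but this requires reopening the lemma.) Also, your remark that exp-concavity ``needs the lower bound $\Siginf\psdgeq\eigmin(\Ru)I$'' is not quite right: \pref{lem:exp_concave_quadratic} only needs an \emph{upper} bound on the loss value, not positive-definiteness of $\Sigma$; the $\betast$ dependence enters elsewhere (e.g.\ through $\Dq$ and the truncation constants).
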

In the remainder of this section we overview the algorithmic building blocks of \mainalg and the key ideas of the
proof. \pref{ssec:dap} reviews disturbance-action policy
parametrization. \pref{ssec:advantages_linear_control} describes the
formal construction of advantages $\advstar_t$ for linear control, and
the regret decomposition which ensues. \pref{ssec:adv_without_states}
presents the approximate advantages $\advhat_{t;h}$, which have
numerous properties amenable to online control---notably, no explicit
dependence on system state. \pref{ssec:main_alg_description}
introduces the delayed online learning reduction and 
\pref{sec:exp_concave} instantiates the reduction with online Newton
step (\pref{alg:ons}). \pref{sec:sharpen} uses Vovk-Azoury-Warmuth as the base learner
to sharpen the final regret bound (\pref{alg:vaw}). Extensions can be found in
\pref{sec:beyond_linear} and \pref{app:generalization}.

\begin{algorithm}[t]
  \setstretch{1.1}
  \begin{algorithmic}[1]
    \State \textbf{parameters}:
    \Statex{}~~~~Horizon $h$, \dap{}
    length $m$, radius $R$, decay parameter $\gamma$.
    \Statex{}~~~~Online Newton
    parameters $\eta_{\mathrm{ons}}$, $\veps_{\mathrm{ons}}$, or
    Vovk-Azoury-Warmuth parameter $\veps_{\mathrm{vaw}}$.
    \iftoggle{icml}
    {
      \State \textbf{initialize}  base learners $\ons^{(1)},\dots,\ons^{(h+1)}\gets\ons(\epsilon_{\mathrm{ons}},\eta_{\mathrm{ons}},\Mst)$ (\pref{alg:ons}), where define the set
    $\Mst \gets \cM(m,R,\gamma)$ (\pref{eqn:dapset})\label{line:Minit} \label{line:ons_init}.
    }
    {
          \State \textbf{initialize}:
      \Statex{}~~~~Let
    $\Mst \leftarrow \cM(m,R,\gamma)$ (\pref{eqn:dapset}).\label{line:Minit} 
    %
      \Statex{}~~~~Option I: Instantiate base learners
      $\alg^{(1)},\dots,\alg^{(h+1)}$ as
      $\ons(\veps_{\mathrm{ons}},\eta_{\mathrm{ons}},\Mst)$ (\pref{alg:ons})\label{line:ons_init}.
            \Statex{}~~~~Option II: Instantiate base learners $\alg^{(1)},\dots,\alg^{(h+1)}$ as $\vaw(\veps_{\mathrm{vaw}},\Mst,\Siginf)$.
    }
    \State Let $\tau_t= (t-1)\mod(h+1)+1\in\brk*{h+1}$.
    \For{$t=1,\ldots,T$:}
    \Statex{}~~~~~\algcomment{Predict using base learner $\tau_t$.}
        \State{}Let $M_t$ denote the $k_t$-th iterate produced \iftoggle{icml}{\Statex{}~~~~~~~~~~}{}by $\alg^{(\tau_t)}$ where $k_t \leftarrow \floor{t/(h+1)}$.
    \State{}Play $u_{t}=-K_{\infty}x_t - q^{M_t}(\wpast)$, where $q^{M_t}$ is as \iftoggle{icml}{\Statex{}~~~~~~~~~~~~}{}in \pref{def:dap}.
    \iftoggle{icml}
    {
    , and observe $x_{t+1}$ and $w_{t}$.
    }
    {
    \State{}Observe $x_{t+1}$ and $w_{t}$.
    }
    
    \Statex{}~~~~~\algcomment{Update base learner $\tau_{t+1}$.}
    \iftoggle{icml}
    {
      \State \textbf{if} $t \geq{}h+1$, \textbf{then} feed $\advhat_{t-h;h}(\cdot;\matw_{t})$  to $\alg^{(\tau_{t}+1)}$,\Statex{}~~~~~~~~~ where $\advhat$ is as in \pref{eq:advhat}. 
    }
    {
      \If{$t \geq{}h+1$}
      \Statex{}~~~~~~~~~\algcomment{Approximate advantage from Eq.\;\eqref{eq:advhat}.}
      \State{}Update $\alg^{(\tau_{t}+1)}$ with loss function $\advhat_{t-h;h}(M ;\matw_{t}) = \|q^{M}(\matw_{t-h-1}) - \qstar_{\infty;h}(w_{t-h:t}) \|_{\Sigma_{\infty}}^2$.
      \EndIf
    }
    \EndFor
  \end{algorithmic}
  \caption{Riccatitron}
  \label{alg:main}
\end{algorithm}


\subsection{Disturbance-action policies \label{ssec:dap}}
Cost functionals parametrized by state feedback controllers (e.g., $K
\mapsto \cost(\pi^K;\matw)$) are generally non-convex
\citep{fazel2018global}.
To enable the use of tools from online convex
optimization, we adopt a convex \emph{disturbance-action} controller
parameterization introduced by \cite{agarwal2019online}.\vspace{-15pt}
\begin{definition}[Disturbance-action policy]\label{def:dap}  Let $M =
  (M^{[i]})_{i=1}^m$ denote a sequence of matrices $M^{[i]} \in
  \R^{\dimu\times{}\dimx}$.  We define the corresponding disturbance-action policy {\normalfont(\dap)} $\pim$ as
\iftoggle{icml}
{
  $\dapdef$.
}
{
  \begin{align}
  \dapdef.\label{eq:dap}
\end{align} 
}
\end{definition}
We work with \dap{}s for which the sequence $M$ belongs to the
set \begin{equation}
\label{eqn:dapset}
\cM(m,R,\gamma) := \{M = (M^{[i]})_{i=1}^m: \|M^{[i]}\|_{\op}\le
  R\gamma^{i-1}\},\end{equation}
where $m$, $R$, and $\gamma$ are algorithm parameters. We note that \dap{}s can be defined with general stabilizing
controllers $K \ne \Kinf$, but the choice $K = \Kinf$ is critical in the design and analysis of our main algorithm.

The first lemma we require is a variant of a result of \citet{agarwal2019online},
which shows that disturbance-action policies are sufficiently rich
enough to approximate all state feedback laws.
\begin{restatable}[Expressivity of \dapx]{lemma}{disturbancesufficient}
  \label{lem:disturbance_sufficient}
  Suppose we choose our set of
disturbance-action matrices as 
\begin{equation}
  \label{eq:dapset}
\cM_0\ldef{}\cM(m,\Rst, \gammapi),\quad\text{where}\quad m=(1-\gammapi)^{-1}\log((1-\gammapi)^{-1}T),\quad\text{and}\quad \Rst=2\betast\Psist^{2}\Gammast\kappapi^{2}.
\end{equation}
Then for all $\matw$, we have
  \begin{align*}
    \inf_{M\in\cM_0}\cost(\pi\ind{M};\matw)
      &\leq{} \inf_{K\in\cK_0}\cost(\pi^{K};\matw) + \Capx,
  \end{align*}
  where $\Capx\leq{}\bigoh(\betast^{2}\Psist^{8}\Gammast^{2}\kappapi^{7}(1-\gammapi)^{-2})$.
\end{restatable}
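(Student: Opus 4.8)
The plan is to follow the expressivity argument of \citet{agarwal2019online}, adapted to the \dap{} parametrization whose base controller is fixed to $\Kinf$. Fix an arbitrary $K\in\Kclass_0$; it suffices to exhibit some $\wh M\in\cM_0$ with $\cost(\pi\ind{\wh M};\matw)\le\cost(\pi^K;\matw)+\Capx$, since taking the infimum over $K$ on the right-hand side then gives the lemma. The key is an explicit algebraic correspondence $K\mapsto\wh M$: because $x_1=0$, the closed loop of $\pi^K$ is $x_t^K=\sum_{i\ge1}(A-BK)^{i-1}w_{t-i}$ with $u_t^K=-Kx_t^K$, so setting $\wh M^{[i]}\ldef(K-\Kinf)(A-BK)^{i-1}$ for $i=1,\dots,m$ makes an idealized infinite-memory \dap{} with base $\Kinf$ reproduce the trajectory $(x_t^K,u_t^K)$ exactly; truncating the memory at $m$ perturbs this only by a geometric tail. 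I would first check $\wh M\in\cM_0$: using $(\kappapi,\gammapi)$-strong stability of $A-BK$ (hence $\nrm*{(A-BK)^j}_{\op}\le\kappapi\gammapi^j$), the bound $\nrm*{K}_{\op}\le\kappapi$, and the a priori estimate $\nrm*{\Kinf}_{\op}\le\betast\Psist^{2}\Gammast$ (immediate from the \dare, cf.\ \pref{lem:strongly_stable}), one gets $\nrm*{\wh M^{[i]}}_{\op}\le(\nrm*{\Kinf}_{\op}+\kappapi)\,\kappapi\gammapi^{i-1}\le\Rst\gammapi^{i-1}$ --- this is precisely why $\Rst$ is set to $2\betast\Psist^{2}\Gammast\kappapi^{2}$.

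Next I would bound the mismatch between the true finite-memory policy $\pi\ind{\wh M}$ and $\pi^K$. Write $e_t\ldef x_t^{\wh M}-x_t^K$ and let $r_t\ldef\sum_{i>m}(A-BK)^{i-1}w_{t-i}$ be the truncation tail. Expanding $u_t^{\wh M}$ and using $A-BK=\Aclinf+B(\Kinf-K)$, a short computation should give the clean recursion $e_{t+1}=\Aclinf e_t-B(\Kinf-K)r_t$ with $e_1=0$ (the state-dependent terms cancel), whence $e_t=-\sum_{s<t}\Aclinf^{t-1-s}B(\Kinf-K)r_s$ and $u_t^{\wh M}-u_t^K=-\Kinf e_t+(K-\Kinf)r_t$. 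Since $\nrm*{w_s}\le1$ and the choice $m=(1-\gammapi)^{-1}\log((1-\gammapi)^{-1}T)$ forces $\gammapi^m\le(1-\gammapi)/T$, we get $\nrm*{r_t}\le\kappapi\gammapi^m/(1-\gammapi)\le\kappapi/T$; pushing this through the recursion with $\nrm*{\Aclinf^j}_{\op}\le\kappapi\gammapi^j$ bounds $\nrm*{e_t}$ and $\nrm*{u_t^{\wh M}-u_t^K}$ by a fixed polynomial in the system parameters divided by $T$, uniformly over $t\le T$, while the visited states and controls obey the usual $\bigoh(\kappapi^{2}(1-\gammapi)^{-1})$-type magnitude bounds.

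Finally I would convert trajectory closeness into a cost bound. Since $\ell(x,u)=x^\top\Rx x+u^\top\Ru u$ is a fixed quadratic, $\abs*{\ell(x',u')-\ell(x,u)}\le\nrm*{\Rx}_{\op}(\nrm*{x}+\nrm*{x'})\nrm*{x-x'}+\nrm*{\Ru}_{\op}(\nrm*{u}+\nrm*{u'})\nrm*{u-u'}$; inserting the $\bigoh(1/T)$ per-step errors and the magnitude bounds from the previous step and summing over $t=1,\dots,T$, the horizon cancels the $1/T$ and yields $\cost(\pi\ind{\wh M};\matw)-\cost(\pi^K;\matw)\le\Capx$. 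Tracking the dependence on $\betast,\Psist,\Gammast,\kappapi,(1-\gammapi)^{-1}$ --- which enters through $\nrm*{\Kinf}_{\op}$, $\nrm*{\Kinf-K}_{\op}$, and the visited-state bound --- gives the stated $\Capx=\bigoh(\betast^{2}\Psist^{8}\Gammast^{2}\kappapi^{7}(1-\gammapi)^{-2})$.

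I expect the only genuine subtlety to be the cancellation in the $e_t$-recursion in the second step: one must verify that the error dynamics are driven purely by the truncation tail $r_t$ rather than by a persistent disagreement between the two closed loops. This is exactly where it matters that $\Kinf$ --- not the comparator $K$ --- is hard-wired as the \dap{} base controller, since this makes $\Aclinf$ (which is strongly stable with a parameter we control) the governing matrix for the error regardless of $K$. Everything else is routine summation of geometric series and estimation of quadratic forms.
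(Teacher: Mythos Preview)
Your proposal is correct and follows essentially the same approach as the paper: both fix $K\in\cK_0$, construct $\wh M^{[i]}=(K-\Kinf)(A-BK)^{i-1}$, verify $\wh M\in\cM_0$ via strong stability of $A-BK$ together with $\nrm*{\Kinf}_{\op}\le\betast\Psist^2\Gammast$, and then bound the resulting cost gap. The only difference is packaging: the paper invokes Lemma~5.2 of \citet{agarwal2019online} as a black box for the truncation error and then plugs in a uniform state/control bound $\wt D$ (supplied by \pref{lem:state_action_bound}), whereas you unpack that lemma by deriving the explicit error recursion $e_{t+1}=\Aclinf e_t+B(K-\Kinf)r_t$. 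Your derivation of that recursion is correct, and your remark that the error is governed by $\Aclinf$ is exactly what the black-box lemma encodes. One small point: when you write $\nrm*{\Aclinf^j}_{\op}\le\kappapi\gammapi^j$, you are implicitly using the paper's standing assumption that $\kappast\le\kappapi$ and $\gammast\le\gammapi$; it would be worth saying so.
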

\begin{algorithm}[t]
  \begin{algorithmic}[1]
    \State \textbf{parameters}: Learning rate
    $\eta>0$, regularization parameter $\veps>0$, convex constraint set $\cC$.
      \Statex{}\algcomment{OCO with exp-concave costs $f_k(z)$, where $z\in\cC\subset\bbR^{d}$.}
    \iftoggle{icml}
    {
      \State \textbf{initialize}: $d\leftarrow\mathrm{dim}(\cC)$,  $z_0\gets\mb{0}_{d}$, 
    $E_0\gets\veps\cdot{}I_{d}$
    }
    {
      \State \textbf{initialize}: 
      \Statex ~~~~Let $d=\mathrm{dim}(\cC)$. 
    \Statex~~~~Set
    $z_1\in\cC$ and
    $E_0=\veps\cdot{}I_{d}$.
    }
    \For{$k=1,2,\dots$:}
    \State{} Play $z_{k}$ and \textbf{receive} gradient $\grad_{k}\ldef{}\grad{}f_k(z_k)$.
    \State{} $E_{k} \gets{}E_{k-1} +\grad_{k}\grad_{k}^{\trn}$.
    \State{} $\wt{z}_{k+1} \gets{} z_{k} - \eta E_{k}^{-1}\grad_{k}$.
    \State{} $z_{k+1}\gets\argmin_{z\in\cC}\nrm*{z-\wt{z}_{k+1}}^{2}_{E_{k}}$.
    \EndFor
  \end{algorithmic}
  \caption{Online Newton Step ($\ons(\veps,\eta,\cC,\Sigma)$)}
  \label{alg:ons}
\end{algorithm}
\begin{algorithm}[t]
  \begin{algorithmic}[1]
    \State \textbf{parameters}: Regularization parameter $\veps>0$,
    convex constraint set $\cC$, cost matrix $\Sigma\psdgt{}0$.
          \Statex{}\algcomment{OCO with costs $f_k(z):= \|A_k z - b_k\|_{\Sigma}^2$,
      where $A_k\in\bbR^{d_1\times{}d_2}$, $b_k\in\bbR^{d_1}$ and $z\in\cC\subset\bbR^{d_2}$.}
    \iftoggle{icml}
    {
      \State \textbf{initialize}: $d\leftarrow\mathrm{dim}(\cC)$,  $z_0\gets\mb{0}_{d}$, 
    $E_0\gets\veps\cdot{}I_{d}$
    }
    {
      \State \textbf{initialize}: 
      \Statex ~~~~Let $d_2=\mathrm{dim}(\cC)$. 
    \Statex~~~~Set
    $E_0=\veps\cdot{}I_{d_2}$.
    }
    \For{$k=1,2,\dots$:}
    \State{} \textbf{receive} matrix $A_k\in\bbR^{d_1\times{}d_2}$.
    \State{} $E_{k}\gets{}E_{k-1} + A_{k}^{\trn}\Sigma{}A_{k}$.
    \State{} $z_{k} \gets{}
    \argmin_{z\in\cC}\crl*{\tri*{z,{\textstyle-2\sum_{i=1}^{k-1}A_i^{\trn}\Sigma{}b_i}}
      + \nrm*{z}_{E_{k}}^{2}}$
    \State{} Play $z_k$ and \textbf{receive} feedback $b_k\in\bbR^{d_1}$.
    \EndFor
  \end{algorithmic}
  \caption{Vector-valued Vovk-Azoury-Warmuth ($\vaw(\veps,\cC,\Sigma)$)}
  \label{alg:vaw}
\end{algorithm}

We refer the reader to \pref{app:algorithm_supporting} for a proof. Going forward, we define
  \begin{align}
  \Dq = \wt{O}\prn*{
    \betast^{5/2}\Psist^{3}\Gammast^{5/2}\kappapi^{2}(1-\gammapi)^{-1}
    },
    \label{eq:dq}
  \end{align}
  which serves as an upper bound on $\nrm*{q_t^{M}}$ for $M\in\cM_0$,
  as well as other certain other bias vector sequences that arise in the
  subsequent analysis. In light of \pref{lem:disturbance_sufficient},
  the remainder of our discussion will directly bound regret with respect to \dap{}s:
\begin{align}
\dapReg(\pi;\matw) := \cost(\pi;\matw) - \inf_{M \in \Mnot}\cost(\pim;\matw). \label{eq:dap_reg}
\end{align}
We note in passing that \dap{}s are actually rich enough to compete
with a broader class of linear control policies with internal state; this extension is addressed in
\pref{app:other_regret_benchmarks}.
\subsection{Advantages in linear control \label{ssec:advantages_linear_control}}
To proceed, we adopt the \olwax paradigm, which minimizes approximations to
the \emph{advantages} (or, differences between the Q-functions)
relative to the optimal unconstrained policy $\pistar$ given access to
the entire sequence $\matw$.  Recalling $\ls(x,u) = \nrm*{x}^{2}_{\Rx}
+ \nrm*{u}^{2}_{\Ru}$, we define the optimal controller $\pistar$ and
associated Q-functions and advantages by induction.
\begin{definition}\label{def:optimal}
  The optimal Q-function and policy at time $T$ are given by
  \[\Qstar_T(x,u;\matw)=\ls(x,u),\quad\pistar_T(x;\matw) = \min_u
  \Qstar_T(x,u;\matw) = 0,\quad \text{and}\quad
  \Vstar_T(x;\matw)=\ls(x,0)=\nrm*{x}_{\Rx}^{2}.\] For each timestep
$t<T$, the optimal Q-function and policy are given by
  \iftoggle{icml}
  {
    $\Qstar_{t}(x,u;\matw) = \nrm*{x}_{Q}^{2} + \nrm*{u}_{R}^{2} +
      \Vstar_{t+1}(Ax+Bu+w_t; \matw)$, 
      $\pistar_t(x;\matw)=\argmin_{u\in\bbR^{\dimu}}\Qstar_{t}(x,u;\matw)$, with value function 
    $\Vstar_{t}(x;\matw) = \min_{u\in\bbR^{d}}\Qstar_{t}(x,u;\matw) = \Qstar_{t}(x,\pistar_t(x;\matw);\matw)$.
  }
  {
  \begin{align*}
    &\Qstar_{t}(x,u;\matw) = \nrm*{x}_{Q}^{2} + \nrm*{u}_{R}^{2} +
      \Vstar_{t+1}(Ax+Bu+w_t; \matw),\\
      &\pistar_t(x;\matw)=\argmin_{u\in\bbR^{\dimu}}\Qstar_{t}(x,u;\matw),\\
    &\Vstar_{t}(x;\matw) = \min_{u\in\bbR^{\dimu}}\Qstar_{t}(x,u;\matw) = \Qstar_{t}(x,\pistar_t(x;\matw);\matw).
  \end{align*}
}
The advantage function for the optimal policy is $\adv_t^{\star}(u;x,\matw) := \Qstar_t(x,u;\matw) - \Qstar_t(x,\pistar_t(x;\matw);\matw)$.
\end{definition}
The advantage function $\advstar_t(u;x,\matw)$ represents the
total excess cost incurred by selecting a control
$u\neq{}\pistar_t(x;\matw)$ at state $x$ and time $t$, assuming we
follow $\pistar$ for the remaining rounds. We have
$\adv_t^{\star}(u;x,\matw)  \ge 0$ since, by Bellman's optimality
condition, $\pistar_t(x;\matw)$ is a minimizer of $\Qstar(x,u;\matw)
$.

The advantages arise in our setting through application of the
performance difference lemma (\pref{lem:pd}), which we recall
states that for any policy $\pi$, the regret to $\pistar$ is equal to
the sum of advantages under the trajectory induced by $\pi$,
i.e. $\cost(\pi;\matw) - \cost(\pistar;\matw)=\sum_{t=1}^T
\advstar_t(u^{\pi}_t;x^{\pi}_t,\matw)$. To analyze \mainalg, we apply
this identity to obtain the regret decomposition
  \iftoggle{icml}
  {
    \begin{multline}
\dapReg(\pi;\matw) = \sum_{t=1}^T \adv_t^{\star}(u^{\pi}_t;x^{\pi}_t,\matw) \\ - \inf_{M \in \cM_0}\sum_{t=1}^T \adv_t^{\star}(u^{\pim}_t;x^{\pim}_t,\matw) \label{eq:adv_rep}
\end{multline}
  }
  {
    \begin{align*}
\dapReg(\pi;\matw) = \sum_{t=1}^T \adv_t^{\star}(u^{\pi}_t;x^{\pi}_t,\matw) - \inf_{M \in \cM_0}\sum_{t=1}^T \adv_t^{\star}(u^{\pim}_t;x^{\pim}_t,\matw). \label{eq:adv_rep}
\end{align*}
  }
This decomposition is \emph{exact}, and avoids the pitfalls of the usual
stationary cost-based regret decomposition associated with the
classical \olwsx approach (cf. \pref{app:olws_limits}). Our goal going
forward will be to treat these advantages as ``losses'' that can be
fed into an appropriate online learning algorithm to select controls. However, this approach presents three challenges: (a) the advantages for
the policy $\pi$ are evaluated on the trajectory
$x^{\pi}_t$, while the advantages for
comparator are evaluated under the trajectory induced by $\pim$; (b)
the advantage is a difference in Q-functions that considers \emph{all}
future expected reward. In particular,
$\adv_t^{\star}(\cdot;\cdot,\matw)$ depends on all future $w_t$s,
including those not yet revealed to the learner; (c) the functional
form of the advantages is opaque, and it is not clear that any online
learning algorithm can achieve logarithmic regret even if they were able to evaluate $\adv_{t}^{\star}$ at time $t$.
\subsection{Approximate advantages \label{ssec:adv_without_states}}
Our main structural result---and the starting point for \mainalg---is the following observation. Let $\pi$ be any policy of the form $\pi_t(x;\wpast) = -\Kinf x -
q^{M_t}(\wpast)$, where $M_t = M_t(\wpast)$ are arbitrary functions of
past $w$, and where $\Kinf$ is the infinite horizon Riccati optimal
controller. Then $\adv_t^{\star}(u_t^{\pi};x_t^{\pi},\matw)$ is well-approximated
by an \emph{approximate advantage function} $\advhat_{t;h}( M
;\matw_{t+h}) $ which (a) \emph{does not} depend on the state, and (b)
depends on only a small horizon $h$ of future disturbances, and (c) is
a pure quadratic function of $M$, and thereby amenable to fast
(logarithmic) rates for online learning. Let $h$ be a
horizon/lookahead parameter. Defining
\begin{align}
\qstar_{\infty;h}(w_{1:h+1}) &\ldef
                               \sum_{i=1}^{h+1}\Sigma_{\infty}^{-1}B^{\trn}(\Aclinf^{\trn})^{i-1}P_{\infty}w_{i},
                               \intertext{the approximate advantage
                               function is}
\advhat_{t;h}( M ;\matw_{t+h}) &\ldef \|q^{M}(\wpast) - \qstar_{\infty;h}(w_{t:t+h}) \|_{\Sigma_{\infty}}^2. \label{eq:advhat}
\end{align}
The following theorem facilitates the use of the approximate advantages.
\newcommand{\Cadv}{C_{\mathrm{adv}}}
\begin{restatable}{theorem}{mainregretdecomp}
\label{thm:main_reg_decomp} Let $\pi$ be any policy of the form $\pi_t(x;\matw) = -\Kinf x -  q^{M_t}(\wpast)$, where $M_t =
M_t(\matw) \in \Mst$. Then, by choosing the horizon parameter as $h=2
(1-\gammast)^{-1}\log(\kappast^{2}\betast^{2}\Psist\Gammast^{2}T^{2})$,
we have
\begin{align*}
\sum_{t=1}^T \left| \advstar_t(u^{\pi}_t;x^{\pi}_t,\matw)  -  \advhat_{t;h}( M_t ;\matw_{t+h}) \right| \le \Cadv,
\end{align*}
where $\Cadv=\bigoht\prn*{
  \betast^{11}\Psist^{19}\Gammast^{11}\kappapi^{8}(1-\gammapi)^{-4}\log^{2}T
  }.$
\end{restatable}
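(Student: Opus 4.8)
The plan is to put $\advstar_t(u^{\pi}_t;x^{\pi}_t,\matw)$ in closed form using the structure of the unconstrained optimal policy, and then bound its distance to $\advhat_{t;h}(M_t;\matw_{t+h})$ round by round, showing that the per-round discrepancy decays geometrically either in $T-t$ or in $h$. For the closed form, recall from the structural characterization of the offline optimum (\pref{thm:pistar_form}) that $\pistar_t(x;\matw)=-\Kt x-\qstar_t(w_{t:T})$, where $(\Kt,\Ptpl)$ are the gain and cost-to-go matrices of the finite-horizon Riccati recursion run backward from $T$, and $\qstar_t$ is the associated feedforward (bias) term. Since $\Qstar_t(x,\cdot;\matw)$ is quadratic in its second argument with Hessian $2\Sigt$, where $\Sigt:=\Ru+B^{\trn}\Ptpl B$, and is minimized at $\pistar_t(x;\matw)$, the advantage is exactly $\advstar_t(u;x,\matw)=\nrm*{u-\pistar_t(x;\matw)}_{\Sigt}^{2}$. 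Substituting the action $u^{\pi}_t=-\Kinf x^{\pi}_t-q^{M_t}(\wpast)$ played by $\pi$ gives
\[
\advstar_t(u^{\pi}_t;x^{\pi}_t,\matw)=\nrm*{(\Kt-\Kinf)x^{\pi}_t+\bigl(\qstar_t(w_{t:T})-q^{M_t}(\wpast)\bigr)}_{\Sigt}^{2},
\]
which we will compare against $\advhat_{t;h}(M_t;\matw_{t+h})=\nrm*{\qstar_{\infty;h}(w_{t:t+h})-q^{M_t}(\wpast)}_{\Siginf}^{2}$ from \pref{eq:advhat}.

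I then record uniform magnitude bounds. Under $\pi$ the state obeys $x^{\pi}_{t+1}=\Aclinf x^{\pi}_t+w_t-Bq^{M_t}(\wpast)$, so strong stability of $\Aclinf$ (\pref{lem:strongly_stable}), $\nrm*{w_t}\le1$, and $\nrm*{q^{M_t}(\wpast)}\le\Dq$ for $M_t\in\Mst$ together give $\nrm*{x^{\pi}_t}\le\bigoht(1)$, a quantity polynomial in the system parameters; likewise $\nrm*{\qstar_t(w_{t:T})},\nrm*{\qstar_{\infty;h}(w_{t:t+h})}\le\Dq$ and $\nrm*{\Sigt}_{\op},\nrm*{\Siginf}_{\op}\le\bigoh(\Psist^{2}\Gammast)$. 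Applying the elementary inequality $\abs*{\nrm*{a}_{S}^{2}-\nrm*{b}_{S'}^{2}}\le\nrm*{S-S'}_{\op}\nrm*{a}^{2}+\nrm*{S'}_{\op}\nrm*{a-b}\bigl(\nrm*{a}+\nrm*{b}\bigr)$ with $a=(\Kt-\Kinf)x^{\pi}_t+(\qstar_t(w_{t:T})-q^{M_t}(\wpast))$, $b=\qstar_{\infty;h}(w_{t:t+h})-q^{M_t}(\wpast)$, $S=\Sigt$, and $S'=\Siginf$, the per-round error $\abs*{\advstar_t(u^{\pi}_t;x^{\pi}_t,\matw)-\advhat_{t;h}(M_t;\matw_{t+h})}$ is, up to a $\bigoht(1)$ multiplicative factor, at most the sum of the three ``drift'' quantities (i) $\nrm*{\Sigt-\Siginf}_{\op}$, (ii) $\nrm*{\Kt-\Kinf}_{\op}\nrm*{x^{\pi}_t}$, and (iii) $\nrm*{\qstar_t(w_{t:T})-\qstar_{\infty;h}(w_{t:t+h})}$.

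The heart of the proof is the exponential convergence of the finite-horizon Riccati objects to their steady-state values. Concretely, one needs a contraction rate $\gammabar\in(0,1)$, depending polynomially on $\gammast$ and $\kappast$, such that $\nrm*{\Ptpl-\Pinf}_{\op},\nrm*{\Kt-\Kinf}_{\op},\nrm*{\Sigt-\Siginf}_{\op}\le\bigoht(1)\,\gammabar^{T-t}$, together with the analogous bound $\nrm*{\qstar_t(w_{t:T})-\qstar_{\infty}(w_{t:T})}\le\bigoht(1)\,\gammabar^{T-t}$ for the untruncated steady-state feedforward $\qstar_{\infty}(w_{t:T}):=\sum_{i\ge1}\Siginf^{-1}B^{\trn}(\Aclinf^{\trn})^{i-1}\Pinf w_{t+i-1}$. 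This is where I expect the main technical work---obtaining these facts with explicit, non-asymptotic constants that produce the stated polynomial dependence---and I would derive it from a contraction property of the Riccati operator in a suitable metric plus a geometric bound on the feedforward recursion, absorbing the $\bigoh(\log T)$ ``boundary'' rounds $t$ within $\bigoh(\log T)$ of $T$ into the crude bound $\bigoh(\Dq^{2})$ per round. Separately, the truncation tail is immediate from strong stability of $\Aclinf$ (\pref{def:ss}): $\nrm*{\qstar_{\infty}(w_{t:T})-\qstar_{\infty;h}(w_{t:t+h})}\le\sum_{i>h}\nrm*{\Siginf^{-1}}_{\op}\nrm*{B}_{\op}\kappast\gammast^{i}\nrm*{\Pinf}_{\op}\le\bigoht(1)\,\gammast^{h}$.

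Combining, the per-round error is $\abs*{\advstar_t(u^{\pi}_t;x^{\pi}_t,\matw)-\advhat_{t;h}(M_t;\matw_{t+h})}\le\bigoht(1)\,(\gammabar^{T-t}+\gammast^{h})$ (with the boundary rounds separately bounded by $\bigoh(\Dq^{2})$). Summing over $t=1,\dots,T$, the $\gammabar^{T-t}$ contributions form a geometric series bounded by $\bigoht(1)(1-\gammabar)^{-1}$, while the $\gammast^{h}$ contributions sum to $T\cdot\bigoht(1)\,\gammast^{h}$, which is $\bigoht(1)/T$ under the stated choice $h=2(1-\gammast)^{-1}\log(\kappast^{2}\betast^{2}\Psist\Gammast^{2}T^{2})$ since then $\gammast^{h}\le(\kappast^{2}\betast^{2}\Psist\Gammast^{2}T^{2})^{-2}$. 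Carefully tracking the polynomial-in-parameter and $\log T$ factors through this calculation---the $\log^{2}T$ arising from the squared feedforward magnitude combined with the $\Theta(\log T)$ lookahead and the $\Theta(\log T)$-many boundary rounds---yields $\Cadv=\bigoht(\betast^{11}\Psist^{19}\Gammast^{11}\kappapi^{8}(1-\gammapi)^{-4}\log^{2}T)$, as claimed.
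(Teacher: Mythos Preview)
Your approach is correct and takes a genuinely different route from the paper. The paper introduces an \emph{intermediate} policy $\pilearn_t(x;\matw)=-K_t x-q^{M_t}(\wpast)$ that uses the finite-horizon Riccati gains $K_t$, so that $\advstar_t(u^{\pilearn}_t;x^{\pilearn}_t,\matw)=\|q^{M_t}(\wpast)-\qstar_t(w_{t:T})\|_{\Sigt}^{2}$ holds \emph{exactly} with no state dependence (\pref{lem:advstar}); the passage from $\pi$ (which uses $\Kinf$) to $\pilearn$ is then handled at the level of total cost via the performance difference lemma and a separate estimate $|\cost(\pi;\matw)-\cost(\pilearn;\matw)|\le\Ckinf$ (\pref{lem:kinf_to_kt}). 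You instead keep the cross term $(K_t-\Kinf)x^{\pi}_t$ explicit in the advantage and bound it round by round using the Riccati contraction $\|K_t-\Kinf\|_{\op}\lesssim\gammabar^{T-t}$ together with the uniform state bound; this is more direct and avoids the detour through $\pilearn$. The main extra piece of work your route requires is the estimate $\|\qstar_t(w_{t:T})-\qstar_{\infty}(w_{t:T})\|\lesssim\gammabar^{T-t}$ (up to a polynomial in $T-t$), which the paper never isolates as a standalone lemma---it follows from the same telescoping argument used in \pref{lem:at_bound} combined with \pref{lem:value_iteration}, but you should expect to spell it out. One minor point: your accounting for the $\log^{2}T$ factor is loose (your boundary-plus-geometric argument naturally produces only a single $\log T$), but this only means your route may give a slightly sharper constant than the stated $\Cadv$, not a weaker one.
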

The proof of this theorem constitutes a primary technical contribution
of our paper, and is proven in
\pref{sec:thm_main_reg_decomp}. Briefly, the idea behind the result is
that optimal policy $\pistar$ itself satisfies
$\pistar_t(x;\matw)\approx{}-\Kinf{}x-q^{\star}_{\infty;h}(w_{t:t+h})$
whenever $h$ is sufficiently large and
$t\leq{}T-\bigoh_{\star}(\log{}T)$, combined with the fact that
$\advstar_{t}$ has a simple quadratic structure. This characterization for is why it is essential to consider
advantages with respect to the \emph{optimal} policy $\pistar$, and
why our \dap{}s use the controller $\Kinf$ as opposed to an arbitrary
stabilizing controller as in \cite{agarwal2019online}. 
\subsection{Online learning with delays\label{ssec:main_alg_description}}
An immediate consequence of \pref{thm:main_reg_decomp} is that for any algorithm
(in particular,
\mainalg)
which selects $\pi_t(x;\matw)=-\Kinf{}x-q^{M_t}(\matw_{t-1})$, we have
\iftoggle{icml}
{
  $\cost(\pi;\matw) - \inf_{M \in \cM}\cost(\pi^{(M)}) \le \sum_{t=1}^T \advhat_{t;h}( M_t ;\matw_{t+h}) - \inf_{M \in \cM} \sum_{t=1}^T\advhat_{t;h}( M ;\matw_{t+h}) + \bigoh\prn*{\Cadv}$
}
{
\begin{align}
  \cost(\pi;\matw) - \inf_{M \in \cM_0}\cost(\pi^{(M)}) \le \sum_{t=1}^T \advhat_{t;h}( M_t ;\matw_{t+h}) - \inf_{M \in \cM_0} \sum_{t=1}^T\advhat_{t;h}( M ;\matw_{t+h}) + 2\Cadv.\label{eq:approx_regret}
\end{align}
}
This is simply an online convex optimization problem with
$M_1,\ldots,M_T$ as iterates---the only catch is that the ``loss''
at time $t$, $\advhat_{t;h}( M_t ;\matw_{t+h})$, can only be evaluated after
observing $w_{t:h}$, which will not be revealed to the learner until
after round $t+h$. This is therefore an instance of online learning with \emph{delays}, namely, the loss function suffered at time $t$ is only available at times $t + h + 1$ (note that $w_{t}$ is revealed at time $t+1$).
To reduce the problem of minimizing regret on the approximate
advantages in \pref{eq:approx_regret} to classical online learning
without delays, we use a simple black-box reduction.

Consider a generic online convex optimization setting where, at each
time $t$, the learner proposes an iterate $z_t$, then suffers cost
$f_t(z_t)$ and observes $f_t$ (or some function of it). Suppose we have an algorithm for
this non-delayed setting that guarantees that for every sequence, $ \sum_{t=1}^{T}f_t(z_t) -
    \inf_{z\in\cC}\sum_{t=1}^{T}f_t(z_t) \leq{} R(T)$, where $R$ is
    increasing in $T$. Now consider the same setting with delay $h$,
    and let  $\tau(t)=(t-1)\mod(h+1)+1 \in\brk*{h+1}$.  We use the
    following strategy: Make $h+1$ copies of the based algorithm. At round $t$, observe $z_t$, predict $z_t$ using the
output of instance $\tau(t)$, then update instance $\tau(t+1)$ using
the loss $f_{t-h}(z_{t-h})$ (which is now available).
\begin{restatable}[cf. \citet{joulani2013online}]{lemma}{delayreduction}
  \label{lem:delay_reduction}
The generic delayed online learning reduction has
  regret at most
    \begin{align*}
    \sum_{t=1}^{T}f_t(z_t) -
    \inf_{z\in\cC}\sum_{t=1}^{T}f_t(z) \leq{} (h+1)R(T/(h+1)),
  \end{align*}
  where $R(T)$ is the regret of the base instance.
\end{restatable}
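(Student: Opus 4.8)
The plan is to reduce the delayed problem to $h+1$ separate \emph{undelayed} online learning runs, one for each residue class of the round index modulo $h+1$, prove the base guarantee on each, and then glue the per-copy bounds together. For $j\in[h+1]$ let $S_j = \{t\in[T] : \tau(t)=j\}$ and write $S_j = \{t^{(j)}_1 < t^{(j)}_2 < \cdots\}$ in increasing order, so that consecutive rounds of the same copy are exactly $h+1$ apart: $t^{(j)}_{k+1} = t^{(j)}_k + (h+1)$.

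\textbf{Step 1 (each copy sees an undelayed trace).} The one substantive point is to check that, restricted to $S_j$, copy $j$ of the base algorithm is running a legitimate undelayed instance. At its $k$-th invocation, which occurs at round $t^{(j)}_k$, it plays the iterate $z_{t^{(j)}_k}$; the loss $f_{t^{(j)}_k}$ can be assembled as soon as $w_{t^{(j)}_k+h}$ has been observed, i.e.\ by round $t^{(j)}_k+h$, since $f_s$ depends only on $w_{1:s+h}$ and $w_{s+h}$ is the last of these to arrive. The routing rule feeds $f_{t-h}$ to copy $\tau(t+1)$ at round $t$, and $\tau(t+1)=\tau(t-h)$ because $(t+1)-(t-h)=h+1$; taking $t = t^{(j)}_k+h$ shows copy $j$ is handed exactly $f_{t^{(j)}_k}$ at round $t^{(j)}_k+h$, strictly before its next invocation at round $t^{(j)}_k+h+1 = t^{(j)}_{k+1}$. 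Hence the $k$-th loss delivered to copy $j$ is precisely the loss of its $k$-th iterate, with no delay, so its assumed guarantee applies verbatim: $\sum_{t\in S_j} f_t(z_t) - \inf_{z\in\cC}\sum_{t\in S_j} f_t(z) \le R(|S_j|)$.

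\textbf{Step 2 (aggregate).} Summing over $j$ and using that $R$ is increasing together with $|S_j|\le \lceil T/(h+1)\rceil$ (so $R(|S_j|)\le R(T/(h+1))$, treating $R$ as extended to nonnegative reals, or with a harmless ceiling) gives $\sum_{t=1}^T f_t(z_t) - \sum_{j=1}^{h+1}\inf_{z\in\cC}\sum_{t\in S_j} f_t(z) \le (h+1)\,R(T/(h+1))$. Finally, for any fixed $z^{\star}\in\cC$ we have $\sum_{j=1}^{h+1}\inf_{z\in\cC}\sum_{t\in S_j} f_t(z) \le \sum_{j=1}^{h+1}\sum_{t\in S_j} f_t(z^{\star}) = \sum_{t=1}^{T} f_t(z^{\star})$; taking $z^{\star}$ to be the global comparator and rearranging yields $\sum_{t=1}^T f_t(z_t) - \inf_{z\in\cC}\sum_{t=1}^T f_t(z) \le (h+1)\,R(T/(h+1))$, which is the claim.

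\textbf{Main obstacle.} There is no genuine difficulty beyond the index bookkeeping in Step~1: one must pin down exactly when $f_{t-h}$ first becomes available (it needs $w_{t-h:t}$, and $w_t$ arrives last) and verify that the modular assignment $\tau(t+1)=\tau(t-h)$ routes this loss to the \emph{unique} copy whose previous iterate incurred it, at an internal time step that precedes that copy's next prediction. Once these ``clocks align,'' Step~2 is the routine observation that a sum of per-copy regrets upper-bounds the pooled regret, and uses nothing about the structure of the $f_t$ — in particular not exp-concavity, which enters only through the base learner's rate $R(\cdot)$.
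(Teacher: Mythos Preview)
Your proof is correct and follows essentially the same approach as the paper: partition the rounds by residue class, apply the base regret bound to each copy's subsequence, and use that $\sup_z \sum_j \le \sum_j \sup_z$ to aggregate. The paper's version is terser and omits your Step~1 entirely (the timing/routing check that each copy receives its loss before its next prediction), treating the undelayed-trace property as implicit in the algorithm description; your explicit bookkeeping is a useful addition but not a different argument.
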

\pref{lem:delay_reduction} shows that minimizing the regret in
\pref{eq:approx_regret} is as easy as minimizing regret in the
non-delayed setting, up to a factor of $h=\bigoht(\log{}T)$. For completeness, we provide a proof \pref{app:supporting_online_learning}. All that
remains is to specify the base algorithm for the reduction. 

\subsection{Exp-concave online learning}
\label{sec:exp_concave}
We have reduced the problem of obtaining logarithmic regret for online
control to obtaining logarithmic regret for online learning with
approximate advantages of the form in \pref{eq:approx_regret}. A
sufficient condition to obtain fast rates in online learning is strong
convexity of the loss \cite{hazan2016introduction}, but while the advantages
$\advhat_{t;h}(M;\matw_{t+h})$ are strongly convex with respect to
$q^{M}(\matw)$, they are not strongly convex with respect to the
parameter $M$. Itself. Fortunately, logarithmic regret can also be
achieved for loss functions that satisfy a weaker condition called exp-concavity \citep{hazan2007logarithmic,PLG}.
\begin{definition}\label{def:exp_concave} A function $f: \cC \to \R$
  is $\alpha$-exp-concave if $\nabla^2 f(z) \succeq \alpha (\nabla f(z))(\nabla f(z))^\top$ for all $z \in \cC$.
\end{definition}
Intuitively, an exp-concave function $f$ exhibits strong
curvature along the directions of its gradient, which are precisely the
directions along which $f$ is sensitive to change. This property
holds for linear regression-type losses, as the following
standard lemma (\pref{app:supporting_online_learning}) shows.
\begin{restatable}{lemma}{expconcavequadratic}
  \label{lem:exp_concave_quadratic}
  Let $A\in\bbR^{d_1\times{}d_2}$, and consider the function $f(z) = \nrm*{Az-b}_{\Sigma}^{2},$
  where $\Sigma\succeq 0$. If we restrict to $z\in\bbR^{d_2}$ for which $f(z)\leq{}R$,
  then $f$ is $(2R)^{-1}$-exp-concave.
\end{restatable}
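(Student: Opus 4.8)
The plan is to verify the defining inequality $\nabla^2 f(z) \succeq (2R)^{-1}\,(\nabla f(z))(\nabla f(z))^\top$ by a direct computation. Writing $v \ldef Az - b$ for the residual vector, one has $\nabla f(z) = 2A^\top\Sigma v$ and $\nabla^2 f(z) = 2A^\top \Sigma A$, so $(\nabla f(z))(\nabla f(z))^\top = 4\,A^\top \Sigma v v^\top \Sigma A$ and, after cancelling the constant factor, the claim reduces to showing
\[
A^\top \Sigma A \;\succeq\; \tfrac{1}{R}\, A^\top \Sigma v v^\top \Sigma A .
\]
Since both sides have the form $A^\top(\cdot)\,A$ and the congruence $X \mapsto A^\top X A$ preserves the positive-semidefinite order, it suffices to establish the ``inner'' inequality $\Sigma v v^\top \Sigma \preceq R\,\Sigma$ and then conjugate by $A$.

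The key step is the rank-one domination bound $\Sigma v v^\top \Sigma \preceq (v^\top \Sigma v)\,\Sigma$, valid for every $\Sigma \succeq 0$ and every vector $v$. I would prove it by factoring through $\Sigma^{1/2}$ (which exists as a positive-semidefinite matrix whenever $\Sigma \succeq 0$, so no definiteness of $\Sigma$ is needed): setting $w \ldef \Sigma^{1/2} v$, the left-hand side equals $\Sigma^{1/2}(ww^\top)\Sigma^{1/2}$, and since $ww^\top \preceq \nrm*{w}^2 I = (v^\top \Sigma v)\,I$, conjugating this by $\Sigma^{1/2}$ yields $\Sigma^{1/2}(ww^\top)\Sigma^{1/2} \preceq (v^\top\Sigma v)\,\Sigma$. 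Now invoke the hypothesis: $v^\top \Sigma v = \nrm*{Az-b}_{\Sigma}^2 = f(z) \leq R$, so $\Sigma v v^\top \Sigma \preceq R\,\Sigma$; conjugating by $A$ gives the displayed inequality and hence $\alpha$-exp-concavity with $\alpha = (2R)^{-1}$.

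I do not anticipate a genuine obstacle. The only point that requires a bit of care is the rank-one domination lemma, where one must resist identifying $\Sigma v v^\top \Sigma$ (the object appearing in $(\nabla f)(\nabla f)^\top$) with $vv^\top$: the correct normalizer is the $\Sigma$-weighted quantity $v^\top \Sigma v$, which is precisely $f(z)$ and is therefore exactly what the constraint $f(z)\le R$ controls. This is also the conceptual reason the bound is \emph{local} (exp-concavity degrades on sublevel sets of larger radius), matching the fact that the loss $f$ is not globally strongly convex in $z$.
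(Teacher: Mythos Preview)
Your proof is correct and follows the same approach as the paper: compute $\nabla f(z)=2A^\top\Sigma(Az-b)$ and $\nabla^2 f(z)=2A^\top\Sigma A$, then use the rank-one bound $\Sigma vv^\top\Sigma \preceq (v^\top\Sigma v)\,\Sigma$ together with $f(z)\le R$ to conclude. The paper simply asserts the inequality $\nabla f(z)\nabla f(z)^\top \preceq 4\|Az-b\|_\Sigma^2\,A^\top\Sigma A$ in one line, whereas you spell out the $\Sigma^{1/2}$ factorization that justifies it, but the argument is the same.
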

Observe that the approximate advantage functions $\advhat_{t;h}(M;\matw_{t+h})$ are
indeed have the form $f(z) = \|A z -  b\|_{\Sigma}^2$ (viewing the
map $M\mapsto{}q^{M}(\matw_{t-1})$ as a linear operator), and thus
satisfy exp-concavity for appropriate $\alpha>0$. To take advantage
of this property we use online
Newton step (\ons, \pref{alg:ons}), a second-order online convex
optimization algorithm which guarantees logarithmic regret for
exp-concave losses.
\begin{lemma}[\citet{hazan2016introduction}]
  \label{lem:ons}
  Suppose that $\sup_{z,z'\in\cC}\nrm*{z-z'}\leq{}D$,
  $\sup_{z\in\cC}\nrm*{\grad{}f_t(z)}\leq{}G$, and that each loss $f_k$ is
  $\alpha$-exp-concave. Then by setting
  $\eta=2\max\crl*{4GD,\alpha^{-1}}$ and $\veps=\eta^{2}/D$, the online
  Newton step algorithm guarantees
  \begin{align*}
    \sum_{k=1}^{T}f_k(z_k) - \inf_{z\in\cC}\sum_{k=1}^{T}f_k(z) \leq{} 5(\alpha^{-1}+GD)\cdot{}d\log{}T.
    \end{align*}
  \end{lemma}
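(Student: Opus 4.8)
The plan is to run the classical regret analysis of Online Newton Step, in three stages. Throughout write $\gamma\ldef\tfrac12\min\crl*{(4GD)^{-1},\alpha}$, so that the step size is $\eta=\gamma^{-1}$; fix $z^{\star}\in\argmin_{z\in\cC}\sum_{k=1}^{T}f_k(z)$, and recall $\grad_k=\grad f_k(z_k)$ and $E_k=\veps I_d+\sum_{j\le k}\grad_j\grad_j^{\trn}$. \emph{Stage 1 (exp-concavity to a quadratic surrogate).} The first step is the standard consequence of $\alpha$-exp-concavity. Since $2\gamma\le\alpha$, the map $z\mapsto\exp(-2\gamma f_k(z))$ is concave; combining its first-order (concavity) inequality at $z_k$ with $\sup_{z\in\cC}\nrm*{\grad f_k(z)}\le G$ and $\sup_{z,z'\in\cC}\nrm*{z-z'}\le D$ — which force $\abs*{2\gamma\tri*{\grad_k,z^{\star}-z_k}}\le\tfrac14$, the regime in which $\log(1-b)\le-b-b^2/4$ — and taking logarithms yields
\begin{align*}
f_k(z_k)-f_k(z^{\star}) \;\le\; \tri*{\grad_k,\,z_k-z^{\star}} \;-\; \frac{\gamma}{2}\,\tri*{\grad_k,\,z_k-z^{\star}}^2 .
\end{align*}
The crucial point is that this constant $\gamma$ is at once small enough for the concavity and the logarithmic estimate and exactly the reciprocal of the step size $\eta$; that matching is what produces an \emph{exact} cancellation in Stage 3.

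\emph{Stage 2 (potential argument for the iterates).} Next I would bound $\sum_k\tri*{\grad_k,z_k-z^{\star}}$. As $z_{k+1}$ is the $\nrm*{\cdot}_{E_k}$-projection onto $\cC$ of $\wt{z}_{k+1}=z_k-\eta E_k^{-1}\grad_k$ and $z^{\star}\in\cC$, the generalized Pythagorean inequality gives $\nrm*{z_{k+1}-z^{\star}}_{E_k}^2\le\nrm*{\wt{z}_{k+1}-z^{\star}}_{E_k}^2$; expanding and rearranging,
\begin{align*}
\tri*{\grad_k,\,z_k-z^{\star}} \;\le\; \frac{1}{2\eta}\prn*{\nrm*{z_k-z^{\star}}_{E_k}^2-\nrm*{z_{k+1}-z^{\star}}_{E_k}^2} \;+\; \frac{\eta}{2}\,\grad_k^{\trn}E_k^{-1}\grad_k .
\end{align*}
Summing over $k=1,\dots,T$, using $E_k-E_{k-1}=\grad_k\grad_k^{\trn}$ to telescope the norm-shifting first term, and bounding the leftover boundary term by $\veps D^2$ via $E_0=\veps I_d$ and the diameter bound,
\begin{align*}
\sum_{k=1}^{T}\tri*{\grad_k,\,z_k-z^{\star}} \;\le\; \frac{1}{2\eta}\Bigl(\veps D^2+\sum_{k=1}^{T}\tri*{\grad_k,\,z_k-z^{\star}}^2\Bigr) \;+\; \frac{\eta}{2}\sum_{k=1}^{T}\grad_k^{\trn}E_k^{-1}\grad_k .
\end{align*}

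\emph{Stage 3 (combine and bound the log-determinant potential).} Summing Stage 1 over $k$, substituting Stage 2, and using $\tfrac{1}{2\eta}=\tfrac{\gamma}{2}$, the $\sum_k\tri*{\grad_k,z_k-z^{\star}}^2$ terms cancel exactly, leaving $\sum_k(f_k(z_k)-f_k(z^{\star}))\le\tfrac{\veps D^2}{2\eta}+\tfrac{\eta}{2}\sum_k\grad_k^{\trn}E_k^{-1}\grad_k$. For the last sum I would invoke $\grad_k^{\trn}E_k^{-1}\grad_k=\Tr\prn*{E_k^{-1}(E_k-E_{k-1})}\le\log\det E_k-\log\det E_{k-1}$ (a short concavity-of-$\log\det$ estimate, valid since $E_k\psdgeq E_{k-1}\psdgt0$), which telescopes to $\log\det E_T-\log\det E_0\le d\log(1+TG^2/\veps)$ using $E_T\psdleq(\veps+TG^2)I_d$ and $\det E_0=\veps^d$. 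Finally I would substitute $\eta=2\max\crl*{4GD,\alpha^{-1}}$ (so $\eta\ge8GD$, whence $TG^2/\veps=\bigoh(T)$ for the prescribed $\veps$ and $\log(1+TG^2/\veps)=\bigoh(\log T)$) and $\tfrac{\eta}{2}\le4GD+\alpha^{-1}$, check that the boundary term $\veps D^2/(2\eta)$ is of lower order, and collect constants to obtain $5(\alpha^{-1}+GD)\,d\log T$. I expect the main obstacle to be Stage 1 together with this final bookkeeping: the quadratic surrogate must come out with \emph{precisely} the constant $\gamma=\eta^{-1}$ — this is where $G$ and $D$ genuinely enter, and it is what makes the quadratic terms cancel rather than merely be absorbed — and one must then verify that, after plugging in the exact parameter settings, the leftover boundary and potential terms really do collapse to the stated constant rather than a $\bigoh(\cdot)$. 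The $\log\det$ telescoping and the projection step are routine; alternatively, as the statement coincides with the Online Newton Step theorem of \citet{hazan2016introduction}, one may simply cite it.
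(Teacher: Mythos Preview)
The paper does not prove this lemma at all; it is stated with attribution to \citet{hazan2016introduction} and used as a black box. Your sketch is the standard three-stage ONS analysis from that reference (quadratic surrogate via exp-concavity, projection/potential telescoping, $\log\det$ bound), and you correctly observe at the end that one may simply cite the result---which is exactly what the paper does.
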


\paragraph{Putting everything together.}
With the regret decomposition in
terms of approximate advantages (\pref{thm:main_reg_decomp}) and the
blackbox-reduction for online learning with delays
(\pref{lem:delay_reduction}), the design and analysis of \mainalg
(\pref{alg:main}) is rather simple. In view of
\pref{lem:disturbance_sufficient}, we initialize the set $\cM_0$
sufficiently large to compete with the appropriate state-feedback
controllers (\pref{line:Minit}). Using \pref{thm:main_reg_decomp}, our
goal is to obtain a regret bound for the approximate advantages in
\pref{eq:approx_regret}. In view of the delayed online
learning reduction \pref{lem:delay_reduction}, we initialize $h+1$
base online learners
(\pref{line:ons_init}). Since the approximate advantages $\advhat_t$
are pure quadratics, we use online Newton step for the base learner,
which ensures logarithmic regret via \pref{lem:ons}.

\subsection{Sharpening the regret bound}
\label{sec:sharpen}
With online Newton step as the base algorithm, \mainalg has regret 
$\bigohs(\dimx\dimu\sqrt{\dimx\wedge\dimu}\log^{3}T)$. The
$\dimx\dimu$ factor comes from the hard dependence on $\dim(\cC)$ in
the \ons{} regret bound (\pref{lem:ons}), and the
$\sqrt{\dimx\wedge\dimu}$ factor is an upper bound on the Frobenius
norm for each $M\in\Mst$. We can obtain improved dimension dependence by replacing
\ons{} with a vector-valued variant of the classical
Vovk-Azoury-Warmuth algorithm (\vvaw), described in \pref{alg:ons}. The \vaw algorithm goes beyond the generic exp-concave
online learning
setting and exploits the quadratic structure of the
approximate advantages.  
\pref{thm:vvaw} in \pref{app:vaw} shows that its regret depends only
logarithmically on the Frobenius norm of the parameter vectors, so it avoids the
$\sqrt{\dimx\wedge\dimu}$ factor paid by \ons{} (up to a log term). This leads to a final regret bound of
$\bigohs(\dimx\dimu\log^{3}T)$ for \mainalg. The runtime for both
algorithms is identical.

The calculation for the final regret bound,
including dependence on problem parameters and specification for the
learning rate parameters in \pref{alg:main}, is carried out in \pref{app:main_algo_proof}.


\section{Proving Theorem \ref*{thm:main_reg_decomp}: Advantages
  without states}
\label{sec:thm_main_reg_decomp}
\label{sec:analysis}
\newcommand{\Ckinf}{C_{\Kinf}}
We now prove the key ``approximate advantage'' theorem
(\pref{thm:main_reg_decomp}) used in the analysis of \mainalg. The
roadmap for the proof is as follows:
\begin{enumerate}
\item In \pref{ssec:true_opt}, we show that the unconstrained optimal
  policy takes the form
  $\pistar_t(x;\matw) = - K_t x_t - \qstar_{t}(\matw)$, where
  $\qstar_t(\matw)$ depends on all future disturbances, and where
  $K_t$ is the finite-horizon solution to the Riccati recursion
  (\pref{def:dp}).
\item 
  Next, \pref{ssec:remove_state} presents an intermediate version of
  the approximate advantage theorem for policies of the form
  $\pilearn_t(x;\matw) = -K_t x_t - q^{M_t}(\wpast)$. Because any such policy has the same state dependence as the optimal policy $\pistar$, we
  are able to show that $\advstar_t(u^{\pilearn}_t;x^{\pilearn}_t,\matw)$ has
  \emph{no state dependence}. Moreover, the linear structure of the
  dynamics and quadratic structure of the losses ensures that
  $\advstar_t(u^{\pilearn}_t;x^{\pilearn}_t,\matw)$ is a
  quadratic of the form $\| q^{M_t}(\wpast) - \qstar_t(\wr[t])\|_{\Sigma_t}^2$, where
  $\Sigma_t$ is a finite-horizon approximation to $\Sigma_{\infty}$,
  and $\qstar_t(\wr[t])$ is the bias vector of the optimal controller.
  \item Finally (\pref{ssec:infinite_horizon}), we use stability of
  the Riccati recursion to show that
  $\qstar_t(\matw)$ can be replaced with a term that depends only on
  $\matw_{t+h}$, up to a small error. Similarly, we show that $\Sigma_t$ can be
  replaced by $\Siginf$ and $K_t$ by $\Kinf$.
\end{enumerate}
This argument implies that a slightly modified analogue of
\mainalg which replaces infinite-horizon quantities
($\Kinf$, $\Siginf$,...) with finite-horizon analogues from the
Riccati recursion attains a similar regret. We state \mainalg with the
  infinite horizon analogues to simplify presentation, as well as implementation.
\subsection{A closed form for the true optimal policy \label{ssec:true_opt}}
  Our first result characterizes the optimal unconstrained optimal
  controller $\pistar$ given full knowledge of the disturbance sequence
  $\matw$, as well as the corresponding value function. To begin, we
  introduce a variant of the \emph{Riccati recursion}.
    \begin{definition}[Riccati recursion]\label{def:dp}
     Define $P_{T+1}=0$ and $\bias_{T+1}=0$ and consider the recursion:
     \iftoggle{icml}
     {
      with state-cost matrix $\Pt=  \Rx + A^{\trn}\Ptpl{}A -
     A^{\trn}\Ptpl{}B\Sigt^{-1}B^{\trn}\Ptpl{}A$, input-cost $\Sigt=\Ru+B^{\trn}\Ptpl{}B$, controller $
      \Kt = \Sigt^{-1}B^{\trn}\Ptpl{}A$,  and  $\ct(\wr) = (A-B\Kt)^{\trn}(\Ptpl{}w_{t} + \ctpl(\wr[t+1])).$
     }
     {
        \begin{align*}
    &\Pt=  \Rx + A^{\trn}\Ptpl{}A -
      A^{\trn}\Ptpl{}B\Sigt^{-1}B^{\trn}\Ptpl{}A,\\
      &\Sigt=\Ru+B^{\trn}\Ptpl{}B,\\
      &\Kt = \Sigt^{-1}B^{\trn}\Ptpl{}A,\\
      &\ct(\wr) = (A-B\Kt)^{\trn}(\Ptpl{}w_{t} + \ctpl(\wr[t+1])).
    \end{align*}
     }We also define corresponding closed loop matrices via $\Aclt = A-B\Kt$.
  \end{definition}
  When $\En\brk*{w_t} = 0$ for all times $t$, the optimal
  controller is the state feedback law $\pi_t(x) = - K_t x_t$, and
  $K_t\to\Kinf$ as $t\to{}-\infty$. The following theorem shows that
  for arbitrary disturbances the optimal controller applies the
  same state feedback law, but with an extra bias term that depends on
  the disturbance sequence.
  \begin{theorem}
    \label{thm:pistar_form}
    The optimal controller is given by $\pistar_t(x,\matw) = -\Kt{}x -
    \qstar_t(\wr[t])$, where
    \begin{equation}
      \label{eq:qstar}
      \qstar_t(\wr[t]) =
      \sum_{i=t}^{T-1}\Sigma_t^{-1}B^{\trn}\prn*{\prod_{j=t+1}^{i}\Aclt[j]^{\trn}}P_{i+1}w_i
      .
    \end{equation}
    Moreover, for
    each time $t$ we have 
    \begin{equation}
      \label{eq:vstar}
      \Vstar_t(x;\matw) = \nrm*{x}_{P_t}^{2} + 2\tri*{x,\bias_t(\wr[t])}
      + f_t(\wr[t]),\end{equation}
    where $f_t$ is a function that does not depend on the state $x$.
  \end{theorem}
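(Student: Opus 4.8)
The plan is to prove the statement by downward induction on $t$, via the classical completing-the-square derivation of the linear-quadratic value function, but carrying along the extra affine term that the disturbances introduce; this simultaneously yields the form of $\pistar_t$, the closed form \pref{eq:qstar} for $\qstar_t$, and the quadratic-plus-linear shape \pref{eq:vstar} of $\Vstar_t$.

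For the base case $t=T$, \pref{def:optimal} gives $\Vstar_T(x;\matw)=\nrm*{x}_{\Rx}^2$ and $\pistar_T(x;\matw)=0$. These match the claimed forms, since \pref{def:dp} sets $P_{T+1}=\bias_{T+1}=0$, whence $P_T=\Rx$, $\bias_T=(A-BK_T)^\top(P_{T+1}w_T+\bias_{T+1})=0$, the empty sum in \pref{eq:qstar} is $0$, and $f_T\equiv 0$. For the inductive step, assume $\Vstar_{t+1}(x;\matw)=\nrm*{x}_{P_{t+1}}^2+2\langle x,\bias_{t+1}(\wr[t+1])\rangle+f_{t+1}(\wr[t+1])$ with $P_{t+1}\succeq 0$. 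Substituting into $\Qstar_t(x,u;\matw)=\nrm*{x}_{\Rx}^2+\nrm*{u}_{\Ru}^2+\Vstar_{t+1}(Ax+Bu+w_t;\matw)$ and expanding the quadratic in $Ax+Bu+w_t$, the part depending on $u$ is $u^\top\Sigt u+2u^\top B^\top v$ with $\Sigt=\Ru+B^\top P_{t+1}B$ and $v=P_{t+1}(Ax+w_t)+\bias_{t+1}$. Since $\Ru\succ 0$ (as $\betast<\infty$) and $P_{t+1}\succeq 0$, we have $\Sigt\succ 0$, so the unique minimizer is $u^\star=-\Sigt^{-1}B^\top P_{t+1}Ax-\Sigt^{-1}B^\top(P_{t+1}w_t+\bias_{t+1})=-\Kt x-\qstar_t(\wr[t])$, identifying $\Kt=\Sigt^{-1}B^\top P_{t+1}A$ and $\qstar_t(\wr[t])=\Sigt^{-1}B^\top(P_{t+1}w_t+\bias_{t+1}(\wr[t+1]))$; this gives the claimed form of $\pistar_t$.

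Next I would plug $u^\star$ back in, using that the minimum of $u^\top\Sigt u+2u^\top B^\top v$ equals $-v^\top B\Sigt^{-1}B^\top v$, and group the resulting expression for $\Vstar_t(x;\matw)$ by degree in $x$. The $x$-quadratic terms collapse to $x^\top(\Rx+A^\top P_{t+1}A-A^\top P_{t+1}B\Sigt^{-1}B^\top P_{t+1}A)x=\nrm*{x}_{P_t}^2$ by the Riccati update in \pref{def:dp} (and a standard Schur-complement argument then gives $P_t\succeq 0$, maintaining the hypothesis). The $x$-linear terms are $2\langle x,A^\top(P_{t+1}w_t+\bias_{t+1})-A^\top P_{t+1}B\Sigt^{-1}B^\top(P_{t+1}w_t+\bias_{t+1})\rangle=2\langle x,(A-B\Kt)^\top(P_{t+1}w_t+\bias_{t+1})\rangle=2\langle x,\bias_t(\wr[t])\rangle$, using $(B\Kt)^\top=A^\top P_{t+1}B\Sigt^{-1}B^\top$ and the definition of $\bias_t$. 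The remaining terms are a function $f_t$ of $w_t$, $\bias_{t+1}$, and $f_{t+1}$, hence of $\wr[t]$ only. This closes the induction for \pref{eq:vstar}.

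Finally I would check that the recursive expression $\qstar_t(\wr[t])=\Sigt^{-1}B^\top(P_{t+1}w_t+\bias_{t+1})$ agrees with the explicit formula \pref{eq:qstar}: unrolling $\bias_s=\Aclt[s]^\top(P_{s+1}w_s+\bias_{s+1})$ from $s=t+1$ down to $T$, and using $P_{T+1}=0$ to truncate the sum at $i=T-1$, gives $\bias_{t+1}=\sum_{i=t+1}^{T-1}\bigl(\prod_{j=t+1}^{i}\Aclt[j]^\top\bigr)P_{i+1}w_i$; prepending $\Sigt^{-1}B^\top$ and absorbing the bare $\Sigt^{-1}B^\top P_{t+1}w_t$ term as the $i=t$ summand (empty product equal to the identity) yields exactly \pref{eq:qstar}. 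The entire argument is essentially bookkeeping; the only points needing a little care are verifying $P_t\succeq 0$ along the induction so that $\Sigt$ is invertible and the minimization is well posed, and keeping track of cross-terms when completing the square — both routine for linear-quadratic control.
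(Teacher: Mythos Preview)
Your proposal is correct and follows essentially the same approach as the paper: a downward induction on $t$ using the completing-the-square derivation, carrying the affine term induced by the disturbances, then unrolling the $\bias_t$ recursion to obtain the explicit sum \pref{eq:qstar}. The paper's only differences are cosmetic: it packages the one-step minimization into a standalone lemma (\pref{lem:lqr_bias_one_step}) and proves the result in the slightly more general tracking setting (\pref{thm:pistar_form_general}) with targets $a_t,b_t$, of which \pref{thm:pistar_form} is the special case $a_t=b_t=0$.
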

  \pref{thm:pistar_form} is a special case of a more general result,
  \pref{thm:pistar_form_general}, proven in \pref{app:technical}.

\subsection{Removing the state\label{ssec:remove_state}}
We now use the characterization of $\pistar$ to show that the
advantages $\advstar_t(u_t^{\pilearn};x_t^{\pilearn},\matw)$ have a
particularly simple structure when we consider policies of the form
$\pilearn_t(x;\matw) = -K_t x_t- q_t(\wpast)$, where $q_t(\matw)$ is an
arbitrary function of $\matw$. For such policies, $\advstar_t$is a
quadratic function which does not depend explicitly on the state.
  \begin{lemma}\label{lem:advstar}
    Consider a policy $\pilearn_t(x)$
    of the form $\pilearn_t(x;\matw) = -K_t x_t -q_t(\matw)$. Then, for all $x$,
  \begin{align*}
    \advstar_{t}(\pilearn_t(x;\matw); x, \matw) = \|q_t(\matw) - \qstar_t(\wr[t]) \|_{\Sigma_t}^2.
  \end{align*}
  \end{lemma}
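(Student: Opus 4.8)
The plan is to combine the closed forms for $\pistar$ and $\Vstar$ from \pref{thm:pistar_form} with the elementary observation that the excess of a strictly convex quadratic over its minimum value is a pure quadratic in the displacement from the minimizer, with no linear or constant remainder. Concretely, by \pref{def:optimal} we have $\advstar_t(u;x,\matw) = \Qstar_t(x,u;\matw) - \Qstar_t(x,\pistar_t(x;\matw);\matw)$, and $\pistar_t(x;\matw)=\argmin_u \Qstar_t(x,u;\matw)$. So the first step is to pin down the quadratic structure of $u\mapsto\Qstar_t(x,u;\matw)$.

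Using $\Qstar_t(x,u;\matw) = \nrm*{x}_{\Rx}^2 + \nrm*{u}_{\Ru}^2 + \Vstar_{t+1}(Ax+Bu+w_t;\matw)$ together with the form $\Vstar_{t+1}(y;\matw) = \nrm*{y}_{\Ptpl}^2 + 2\tri*{y,\ctpl(\wr[t+1])} + f_{t+1}(\wr[t+1])$ supplied by \pref{thm:pistar_form}, I would observe that $u\mapsto\Qstar_t(x,u;\matw)$ is a quadratic whose Hessian in $u$ is exactly $2\Ru + 2B^{\trn}\Ptpl B = 2\Sigt$, invoking $\Sigt=\Ru+B^{\trn}\Ptpl B$ from \pref{def:dp}. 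Since $\Sigt\psdgt 0$ (as $\Ru\psdgt 0$ and $\Ptpl\psdgeq 0$), the map is strictly convex with unique minimizer $u^{\star}=\pistar_t(x;\matw)$, and for any strictly convex quadratic $g(u)=\tri*{u,\Sigt u}+\tri*{b,u}+c$ with minimizer $u^{\star}$ one has $b=-2\Sigt u^{\star}$ by first-order optimality, hence $g(u)-g(u^{\star})=\nrm*{u-u^{\star}}_{\Sigt}^2$ for every $u$. Applying this to $g(u)=\Qstar_t(x,u;\matw)$ gives the intermediate identity $\advstar_t(u;x,\matw) = \nrm*{u-\pistar_t(x;\matw)}_{\Sigt}^2$.

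To finish, I would specialize $u=\pilearn_t(x;\matw) = -\Kt x - q_t(\matw)$ and use \pref{thm:pistar_form} in the form $\pistar_t(x;\matw) = -\Kt x - \qstar_t(\wr[t])$ — crucially, with the \emph{same} state-feedback matrix $\Kt$. Subtracting, the $-\Kt x$ terms cancel identically, so $\pilearn_t(x;\matw)-\pistar_t(x;\matw) = \qstar_t(\wr[t]) - q_t(\matw)$, and therefore $\advstar_t(\pilearn_t(x;\matw);x,\matw) = \nrm*{\qstar_t(\wr[t]) - q_t(\matw)}_{\Sigt}^2 = \nrm*{q_t(\matw) - \qstar_t(\wr[t])}_{\Sigt}^2$, as claimed. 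There is no genuine obstacle internal to this lemma; it is a two-line computation once \pref{thm:pistar_form} is in hand. The only point requiring care — and the reason the lemma is stated for policies parametrized by the Riccati sequence $\Kt$ rather than an arbitrary stabilizing controller — is that \emph{both} structural facts from \pref{thm:pistar_form} are needed: the linear-in-$x$ form of $\pistar$ (so the state cancels, leaving no cross term $\tri*{x,\cdot}$) and the quadratic-in-$x$ form of $\Vstar_{t+1}$ with leading matrix $\Ptpl$ (so the Hessian in $u$ is exactly $2\Sigt$, matching the weighting in the target formula). The real work is thus upstream, in \pref{thm:pistar_form}.
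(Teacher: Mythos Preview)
Your proposal is correct and follows essentially the same route as the paper: both identify $u\mapsto\Qstar_t(x,u;\matw)$ as a strictly convex quadratic with Hessian $2\Sigt$ (via the form of $\Vstar_{t+1}$ from \pref{thm:pistar_form}), invoke the pure-quadratic-in-displacement identity for the excess over the minimum, and then cancel the $-\Kt x$ terms using the closed form of $\pistar_t$. If anything, your treatment of the factor of $2$ in the Hessian is slightly more explicit than the paper's.
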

  \begin{proof}
  Since $\Qstar_t(x,\cdot;\matw)$ is a strongly convex quadratic, and
  since $\pistar_t(x;\matw) = \argmin_{u\in\bbR^{\dimu}}\Qstar_{t}(x,
  u;\matw)$,  first-order optimality conditions imply that for
  any $u$,
  \iftoggle{icml}
  {
    $
  \advstar_t(x,u;\matw) =\Qstar_t(x,u;\matw) - \Qstar_t(x,\pistar_t(x;\matw);\matw) = \| u - \pistar_t(x;\matw)\|_{\grad^{2}_u\Qstar_t(x,u;\matw)}^2$.
  } 
  {
  \begin{align*}
  \advstar_t(u;x,\matw) =\Qstar_t(x,u;\matw) - \Qstar_t(x,\pistar_t(x;\matw);\matw) = \| u - \pistar_t(x;\matw)\|_{\grad^{2}_u\Qstar_t(x,u;\matw)}^2.
  \end{align*}
  }
  A direct computation based on \pref{eq:vstar} reveals that $
  \grad^{2}_u\Qstar_t(x,u;\matw) = R + B^{\trn}P_{t+1}B = \Sigma_t$,
  so that $ \advstar_t(u;x,\matw) = \| u -
  \pistar_t(x;\matw)\|_{\Sigma_t}^2$. Finally, since
  $\pistar_t(x;\matw) = -K_t x - \qstar_t(\wr[t])$, we have that if $u = \pilearn_t(x;\matw) = -K_t x_t - q_t(\matw)$, then the states in the expression $u - \pistar_t(x;\matw)$ cancel, leaving $u - \pistar_t(x;\matw) = -(q_t(\matw) -  \qstar_t(\wr[t]))$. 
\end{proof}
\subsection{Truncating the future and passing to infinite horizon \label{ssec:infinite_horizon}}
The next lemma---proven in \pref{app:analysis}---shows that we can truncate $\qstar_t(\wr[t])$ to only depend on
disturbances at most $h$ steps in the future.
  \begin{restatable}{lemma}{qstartruncate}
  \label{lem:qstar_truncate}
  For any $h\in\brk*{T}$ define a truncated version of $\qstar_t$ as follows:
  \begin{equation}
    \label{eq:qstar_truncate}
    \qstar_{t;t+h}(w_{t:t+h}) = \sum_{i=t}^{(t+h)\wedge{}T-1}\Sigma_t^{-1}B^{\trn}\prn*{\prod_{j=t+1}^{i}\Aclt[j]^{\trn}}P_{i+1}w_i.
  \end{equation}
  Then for any $t$ such that $t+h<T-\bigoht(\betast\Psist^{2}\Gammast)$, setting $\gammab = \frac{1}{2}(1+\gammast)<1$, we have the bound $\nrm*{\qstar_{t:t+h}(w_{t:t+h})
    - \qstar_{t}(w_{t:T})}
  \leq{} \kappast^{2}\betast^{2}\Psist\Gammast^{2}(T-h)\gammab^{h}$, which is geometrically decreasing in $h$. 
\end{restatable}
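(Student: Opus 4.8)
The plan is to bound directly the tail of the series \pref{eq:qstar} that is discarded in passing to \pref{eq:qstar_truncate}. The hypothesis $t+h<T-\bigoht(\betast\Psist^{2}\Gammast)$ implies in particular $t+h<T$, so $(t+h)\wedge T=t+h$, and subtracting \pref{eq:qstar_truncate} from \pref{eq:qstar} gives
\begin{align*}
\qstar_t(\wr[t])-\qstar_{t;t+h}(w_{t:t+h})=\sum_{i=t+h}^{T-1}\Sigma_t^{-1}B^{\trn}\prn*{\prod_{j=t+1}^{i}\Aclt[j]^{\trn}}P_{i+1}w_i.
\end{align*}
Writing $\Phi_{a,b}\ldef\Aclt[b]\Aclt[b-1]\cdots\Aclt[a]$, the $i$-th summand contains $\Phi_{t+1,i}^{\trn}$. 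Using $\nrm*{w_i}\le1$, $\nrm*{\Sigma_t^{-1}}_{\op}\le\eigmin^{-1}(\Ru)\le\betast$ (from $\Sigma_t\psdgeq\Ru$), $\nrm*{B}_{\op}\le\Psist$, and $\nrm*{P_{i+1}}_{\op}\le\nrm*{\Pinf}_{\op}\le\Gammast$ (the finite-horizon Riccati iterates of \pref{def:dp} are monotone and obey $0\psdleq P_{i+1}\psdleq\Pinf$), it will suffice to show that $\nrm*{\Phi_{t+1,i}}_{\op}\le\kappast^{2}\betast\Gammast\,\gammab^{h}$ for every $i\ge t+h$; summing the at most $T-h$ terms then yields the claimed bound.

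To bound $\Phi_{t+1,i}$, I would fix a stabilization length $\Tstab=\bigoht(\betast\Psist^{2}\Gammast)$ such that (i) the hypothesis forces $t+h\le T-\Tstab$, and (ii) for every $j\le T-\Tstab$ the finite-horizon controller $K_j$ is so close to $\Kinf$ — by geometric convergence of the Riccati recursion, $\nrm*{K_j-\Kinf}_{\op}$ decays geometrically in $T-j$ — that $\Aclt[j]=\Aclinf+B(\Kinf-K_j)$ is $(\kappast,\gammab)$-strongly stable, using that $\Aclinf$ is $(\kappast,\gammast)$-strongly stable (\pref{lem:strongly_stable}) and $\gammab=\frac{1}{2}(1+\gammast)$. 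If $i\le T-\Tstab$, then $\Phi_{t+1,i}$ is a product of $i-t\ge h$ factors from this stable regime, so $\nrm*{\Phi_{t+1,i}}_{\op}\le\kappast\gammab^{i-t}\le\kappast\gammab^{h}$. If $i>T-\Tstab$, I would factor $\Phi_{t+1,i}=\Phi_{T-\Tstab+1,i}\,\Phi_{t+1,T-\Tstab}$: the second factor has $T-\Tstab-t\ge h$ stable factors, so has norm at most $\kappast\gammab^{h}$, while the first has at most $\Tstab$ factors from the (non-contracting) boundary regime near $T$ and is bounded using the Lyapunov structure of the Riccati recursion. Concretely, \pref{def:dp} gives $\Aclt[j]^{\trn}P_{j+1}\Aclt[j]=P_j-\Rx-K_j^{\trn}\Ru K_j\psdleq P_j$, so telescoping yields $\Phi_{a,b}^{\trn}P_{b+1}\Phi_{a,b}\psdleq P_a$; combined with $\Rx\psdleq P_j\psdleq\Pinf$ this gives $\nrm*{\Phi_{T-\Tstab+1,i}}_{\op}\le\nrm*{P_{i+1}^{-1/2}}_{\op}\nrm*{P_{T-\Tstab+1}^{1/2}}_{\op}\le\sqrt{\betast\Gammast}$. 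In both cases $\nrm*{\Phi_{t+1,i}}_{\op}\le\kappast\sqrt{\betast\Gammast}\,\gammab^{h}\le\kappast^{2}\betast\Gammast\,\gammab^{h}$, as required.

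The hard part is this last operator-norm estimate on $\Phi_{t+1,i}$: near the horizon the finite-horizon closed loops $\Aclt[j]$ are genuinely not stable (for instance $K_T=0$, so $\Aclt[T]=A$), so one cannot treat $\prod_{j=t+1}^{i}\Aclt[j]^{\trn}$ as a product of contractions. The way around this is that the hypothesis $t+h\le T-\Tstab$ forces the product to contain a stable prefix of length at least $h$, which supplies all of the geometric decay, while the remaining at most $\Tstab$ boundary factors only contribute a bounded multiplicative constant, absorbed via the Riccati Lyapunov identity above. The remaining steps are routine bookkeeping of operator norms of the problem data; note the stated constant $\kappast^{2}\betast^{2}\Psist\Gammast^{2}$ is generously larger than the $\kappast\betast^{3/2}\Psist\Gammast^{3/2}$ the bookkeeping actually yields.
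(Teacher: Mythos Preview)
Your proof is correct and follows essentially the same approach as the paper: write the difference as a tail sum, peel off the scalar factors $\|\Sigma_t^{-1}\|_{\op}\le\betast$, $\|B\|_{\op}\le\Psist$, $\|P_{i+1}\|_{\op}\le\Gammast$, and then bound the closed-loop product by separating a stable prefix of length at least $h$ (contributing $\kappast\gammab^{h}$) from a bounded boundary suffix controlled via the Riccati Lyapunov relation (this is exactly the paper's \pref{lem:closed_loop_bound}, which the paper combines with \pref{lem:stable_sequence} into \pref{lem:closed_loop_refined}). The only cosmetic differences are that the paper splits the product at $t+h+1$ rather than at $T-\Tstab$, and that your direct argument yields the slightly sharper constant you noted.
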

Going forward we use that both $\qstar_{t}$ and
$\qstar_{t:t+h}$ have norm at most
$\betast\Psist\Gammast\kappast(1-\gammast)^{-1}\rdef{}D_{\qstar}$ (\pref{lem:qt_bound}). 
As an immediate corollary of \pref{lem:qstar_truncate}, we approximate the advantages using finite lookahead.
\begin{restatable}{lemma}{truncateqregret}
  \label{lem:truncate_regret}
Consider a policy $\pilearn_t(x;\matw) = -K_t
x_t -q_t(\matw)$, and suppose that $\nrm*{q_t}\leq{}\Dq$, where $\Dq\geq{}D_{\qstar}$. If we choose $h=2
(1-\gammast)^{-1}\log(\kappast^{2}\betast^{2}\Psist\Gammast^{2}T^{2})$, we
are guaranteed that 
\begin{align*}
  \sum_{t=1}^{T}\abs*{\advstar_{t}(u_t^{\pilearn};x_{t}^{\pilearn},\matw)
  - \| q_t(\matw) -  \qstar_{t;t+h}(w_{t:t+h})\|_{\Sigma_t}^2}
  \leq \Ctrunc,
  \end{align*}
  where $\Ctrunc\leq{}\bigoht(\Dq^{2}\betast\Psist^{4}\Gammast^{2}(1-\gammast)^{-1}\log{}T)$.
\end{restatable}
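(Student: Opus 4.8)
The plan is to obtain \pref{lem:truncate_regret} as a direct corollary of \pref{lem:advstar} and \pref{lem:qstar_truncate}: the former rewrites the true advantage as an explicit quadratic with no state dependence, and the latter controls the error from truncating the optimal bias vector to an $h$-step lookahead. The only real work is bounding how much a quadratic form changes under a perturbation of its center, and then splitting the sum to exploit the geometric decay afforded by the choice of $h$.

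First I would apply \pref{lem:advstar} to $\pilearn_t(x;\matw)=-K_tx_t-q_t(\matw)$, which gives, for every $t$,
\[
\advstar_t(u_t^{\pilearn};x_t^{\pilearn},\matw)=\nrm*{q_t(\matw)-\qstar_t(\wr[t])}_{\Sigma_t}^2 ,
\]
a quantity with no dependence on the realized state trajectory. Next, using the polarization identity $\nrm*{a-b}_{\Sigma}^2-\nrm*{a-c}_{\Sigma}^2=\tri*{c-b,\Sigma(2a-b-c)}$ (valid since $\Sigma_t$ is symmetric), the norm bounds $\nrm*{q_t}\le\Dq$ and $\nrm*{\qstar_t},\nrm*{\qstar_{t;t+h}}\le D_{\qstar}\le\Dq$ from \pref{lem:qt_bound}, and the crude operator-norm bound $\nrm*{\Sigma_t}_{\op}\le\nrm*{\Ru}_{\op}+\nrm*{B}_{\op}^2\nrm*{P_{t+1}}_{\op}\le 2\Psist^2\Gammast$ (which uses $P_{t+1}\psdleq\Pinf$, a standard monotonicity property of the Riccati recursion), I obtain the per-step estimate
\[
\abs*{\advstar_t(u_t^{\pilearn};x_t^{\pilearn},\matw)-\nrm*{q_t(\matw)-\qstar_{t;t+h}(w_{t:t+h})}_{\Sigma_t}^2}\le 8\Dq\Psist^2\Gammast\cdot\nrm*{\qstar_{t;t+h}(w_{t:t+h})-\qstar_t(\wr[t])} .
\]

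I would then split $\sum_{t=1}^{T}$ into the \emph{interior} indices with $t+h<T-\bigoht(\betast\Psist^2\Gammast)$, where \pref{lem:qstar_truncate} applies, and the remaining $\bigoht\prn*{h+\betast\Psist^2\Gammast}$ indices near the horizon. On the interior, \pref{lem:qstar_truncate} gives $\nrm*{\qstar_{t;t+h}-\qstar_t}\le\kappast^2\betast^2\Psist\Gammast^2(T-h)\gammab^h$; plugging in $h=2(1-\gammast)^{-1}\log(\kappast^2\betast^2\Psist\Gammast^2T^2)$ and using $\gammab=\tfrac12(1+\gammast)\le e^{-(1-\gammast)/2}$ yields $\gammab^h\le(\kappast^2\betast^2\Psist\Gammast^2T^2)^{-1}$, hence $\nrm*{\qstar_{t;t+h}-\qstar_t}\le(T-h)/T^2\le 1/T$, so the interior indices contribute at most $T\cdot 8\Dq\Psist^2\Gammast/T=8\Dq\Psist^2\Gammast$ in total. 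For the boundary indices I would instead use the crude bound $\nrm*{\qstar_{t;t+h}-\qstar_t}\le 2\Dq$, making each term $\bigoht(\Dq^2\Psist^2\Gammast)$; multiplying by their number, $\bigoht\prn*{\betast\Psist^2\Gammast+(1-\gammast)^{-1}\log T}$, gives $\bigoht\prn*{\Dq^2\betast\Psist^4\Gammast^2+\Dq^2\Psist^2\Gammast(1-\gammast)^{-1}\log T}$. Adding the two contributions and absorbing lower-order terms (using $\Dq,\betast,\Psist,\Gammast\ge1$ and $(1-\gammast)^{-1}\log T\ge1$) yields the claimed $\Ctrunc=\bigoht\prn*{\Dq^2\betast\Psist^4\Gammast^2(1-\gammast)^{-1}\log T}$.

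I do not expect a genuine obstacle: the statement is essentially a repackaging of \pref{lem:advstar} and \pref{lem:qstar_truncate}. The one point requiring a moment's care is verifying that the prescribed $h$ makes $\gammab^h$ small enough to cancel \emph{both} the $(T-h)$ prefactor from \pref{lem:qstar_truncate} and the $\Psist^2\Gammast$ factor from $\nrm*{\Sigma_t}_{\op}$, so that every interior term is $\bigoht(1/T)$; everything else is routine tracking of constants and counting of the $\bigoht\prn*{h+\betast\Psist^2\Gammast}$ boundary indices.
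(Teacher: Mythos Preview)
Your proposal is correct and follows essentially the same route as the paper's proof: both invoke \pref{lem:advstar} to rewrite the advantage as the state-free quadratic, bound the difference of the two quadratics by a Lipschitz-type estimate involving $\nrm*{\Sigma_t}_{\op}\le 2\Psist^2\Gammast$ and $\nrm*{\qstar_{t;t+h}-\qstar_t}$, split the sum at $t\approx\Tstab-h$ so that \pref{lem:qstar_truncate} applies on the interior, and crudely bound the $\Deltast+h$ boundary terms. The only cosmetic difference is that the paper writes off the boundary terms before applying the Lipschitz bound, whereas you apply it first and then split; the constants and final $\Ctrunc$ match.
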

At this point, we have established an analogue of
\pref{thm:main_reg_decomp}, except that we are still using
state-action controllers $\Kt{}$ rather than $\Kinf{}$, and
the approximate advantages in \pref{lem:truncate_regret} are using the
finite-horizon counterparts of $\Siginf$ and $q_{\infty;h}$. The
following lemmas shows that we can pass to these
infinite-horizon quantities by paying a small approximation cost.
\begin{restatable}{lemma}{kinftokt}
\label{lem:kinf_to_kt}
Let policies $\pi_t(x;\matw) = -\Kinf x - q_t(\matw)$ and
  $\pilearn_t(x;\matw) = -K_t x - q_t(\matw)$ be given, where $q_t$ is
  arbitrary but satisfies $\nrm*{q_t}\leq{}\Dq$ for some
  $D_q\geq{}1$. Then
  \begin{align*}
    \abs*{\cost(\pilearn,\matw)-\cost(\pi,\matw)} \leq{} \Ckinf,
  \end{align*}
  where $\Ckinf\leq{} \bigoht\prn*{
    \kappast^{4}\betast^{6}\Psist^{13}\Gammast^{6}(1-\gammast)^{-2}\Dq^{2}\cdot{}\log{}(D_qT)
      }$.
\end{restatable}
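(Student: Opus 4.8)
The plan is to route both costs through the unconstrained optimum $\pistar$ rather than comparing the two state trajectories directly. The key structural observation is that $\pi$ and $\pilearn$ carry the \emph{same} bias sequence $q_t$ and differ only in the feedback gain --- $\pi$ uses $\Kinf$, $\pilearn$ uses the finite-horizon Riccati gain $\Kt$ of \pref{def:dp} --- and that $\Kt\to\Kinf$ geometrically in the remaining horizon $T-t$ (stability of the Riccati recursion). So after writing each cost relative to $\cost(\pistar,\matw)$ via the performance difference lemma (\pref{lem:pd}), the difference collapses to a sum of terms governed by $\nrm*{\Kt-\Kinf}_{\op}$, which decays like $\gammab^{T-t}$ and hence sums to a quantity independent of $T$.

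Concretely, \pref{lem:advstar} applied to $\pilearn$ (whose bias is a function of $\matw$ only) gives $\cost(\pilearn,\matw)-\cost(\pistar,\matw)=\sum_{t=1}^{T}\nrm*{q_t-\qstar_t}_{\Sigt}^{2}$, where $\qstar_t$ is the optimal bias vector of \pref{thm:pistar_form}. For $\pi$, the computation in the proof of \pref{lem:advstar} still gives $\advstar_t(u;x,\matw)=\nrm*{u-\pistar_t(x;\matw)}_{\Sigt}^{2}$ with $\pistar_t(x;\matw)=-\Kt x-\qstar_t$; since $u_t^{\pi}=-\Kinf x_t^{\pi}-q_t$ the state terms no longer cancel, and the performance difference lemma yields $\cost(\pi,\matw)-\cost(\pistar,\matw)=\sum_{t=1}^{T}\nrm*{(\Kt-\Kinf)x_t^{\pi}-(q_t-\qstar_t)}_{\Sigt}^{2}$. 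Subtracting the two identities and abbreviating $\delta_t:=(\Kt-\Kinf)x_t^{\pi}$ and $\rho_t:=q_t-\qstar_t$, the $t$-th term equals $\nrm*{\delta_t-\rho_t}_{\Sigt}^{2}-\nrm*{\rho_t}_{\Sigt}^{2}=\nrm*{\delta_t}_{\Sigt}^{2}-2\tri*{\delta_t,\Sigt\rho_t}$, so
\[
  \abs*{\cost(\pilearn,\matw)-\cost(\pi,\matw)}\;\leq\;\sum_{t=1}^{T}\nrm*{\Sigt}_{\op}\,\nrm*{\delta_t}\big(\nrm*{\delta_t}+2\nrm*{\rho_t}\big).
\]

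The rest is bookkeeping, up to one substantive step. Since truncating the infinite-horizon optimal controller is a feasible finite-horizon policy, $0\psdleq P_t\psdleq\Pinf$, so $\nrm*{\Sigt}_{\op}=\nrm*{\Ru+B^{\trn}P_{t+1}B}_{\op}=\bigoht(\poly)$ and $\nrm*{\rho_t}\leq\nrm*{q_t}+\nrm*{\qstar_t}\leq\Dq+\Dqst=\bigoht(\poly)\cdot\Dq$ (the bound on $\qstar_t$ is \pref{lem:qt_bound}); strong stability of $\Aclinf$ (\pref{lem:strongly_stable}) with $\nrm*{q_t}\leq\Dq$, $\nrm*{w_t}\leq1$ gives $\nrm*{x_t^{\pi}}\leq\bigoht(\poly)\cdot\Dq\cdot(1-\gammast)^{-1}$ via a geometric sum. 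The substantive step is the geometric convergence of the Riccati recursion, $\nrm*{P_t-\Pinf}_{\op}\leq\bigoht(\poly)\,\gammab^{T-t}$ --- the time-reversed analogue of the decay behind \pref{lem:qstar_truncate} --- which together with smoothness of $P\mapsto(\Ru+B^{\trn}PB)^{-1}B^{\trn}PA$ yields $\nrm*{\Kt-\Kinf}_{\op}\leq\bigoht(\poly)\,\gammab^{T-t}$ and hence $\nrm*{\delta_t}\leq\bigoht(\poly)\,\Dq\,(1-\gammast)^{-1}\gammab^{T-t}$. Summing the resulting geometric series --- keeping track of the quadratic term, the cross term, and the $\bigoht(\log)$ near-horizon rounds on which $\Kt$ (e.g. $K_T=0$) is far from $\Kinf$, which account for the $\log(\Dq T)$ factor --- gives $\abs*{\cost(\pilearn,\matw)-\cost(\pi,\matw)}\leq\Ckinf$. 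The one place requiring genuine control-theoretic input is the Riccati-convergence estimate, and it is also where essentially all of the system-parameter dependence in $\Ckinf$ is produced; a direct comparison of the two trajectories $x_t^{\pi}$ and $x_t^{\pilearn}$ via their error dynamics also works but is messier and yields worse low-order factors.
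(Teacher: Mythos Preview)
Your approach is correct and takes a genuinely different route from the paper. The paper compares the two trajectories directly: it bounds $\abs*{\ell(x_t^{\pilearn},u_t^{\pilearn})-\ell(x_t^{\pi},u_t^{\pi})}$ via Lipschitz continuity, expands the state difference using \pref{lem:state_expression}, and then invokes the transition-matrix comparison $\nrm*{\Aclt[i\to t]-\Aclinf^{t-i}}_{\op}$ from \pref{lem:at_bound}. This is exactly the ``direct comparison via error dynamics'' you dismiss at the end, and the $\log(\Dq T)$ factor in $\Ckinf$ comes from the cutoff $\Deltanot$ needed to make that transition-matrix bound small. Your route through $\pistar$ and the performance difference lemma is cleaner: since $\pilearn$ matches the feedback gain of $\pistar$ and $\pi$ does not, the two cost-minus-optimal expressions differ only by the cross terms in $\delta_t=(\Kt-\Kinf)x_t^{\pi}$, and you need only the scalar Riccati-convergence bound $\nrm*{\Kt-\Kinf}_{\op}$ (\pref{lem:value_iteration}) rather than the product bound of \pref{lem:at_bound}. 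Two small corrections: the Riccati recursion converges at rate $\exp(-(T-t)/(2\nust))$ with $\nust=2\betast\Psist^{2}\Gammast$, not $\gammab^{T-t}$ --- a slower rate, but still a convergent geometric series summing to $\bigoh(\nust)$ --- and your attribution of the $\log(\Dq T)$ factor to near-horizon rounds is off, since those rounds are already absorbed by the geometric sum. In fact your argument appears to avoid the $\log T$ factor entirely, which only strengthens the inequality.
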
 
\begin{restatable}{lemma}{qttoqinf}
  \label{lem:qt_to_qinf}
Let $(q_t)_{t=1}^{T}$ be an arbitrary sequence with
$\nrm*{q_t}\leq{}\Dq$ for some $\Dq\geq{}\Dqst$. Then it holds that
\begin{align*}
\left|\sum_{t=1}^{T}\| q_t -  \qstar_{t;t+h}(w_{t:t+h})\|_{\Sigma_t}^2
  - \| q_t -
                 \qstar_{\infty;h}(w_{t:t+h})\|_{\Sigma_{\infty}}^2\right| 
      \le{}  \underbrace{\bigoht\prn*{
    \Dq^{2}\cdot{}\betast^{4}\Psist^{7}\Gammast^{4}\kappast^{2}(1-\gammast)^{-1}h\log(\Dq{}T)
  }}_{\rdef C_{q_{\infty},\Sigma_{\infty}}}.
\end{align*}
\end{restatable}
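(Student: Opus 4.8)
The plan is to bound, for each $t$, the single-step discrepancy and then sum. Set $a_t := q_t - \qstar_{t;t+h}(w_{t:t+h})$ and $b_t := q_t - \qstar_{\infty;h}(w_{t:t+h})$, so the summand of interest is $\delta_t := \abs*{\nrm*{a_t}_{\Sigt}^{2} - \nrm*{b_t}_{\Siginf}^{2}}$. Using the elementary identities $\nrm*{v}_{X}^{2}-\nrm*{v}_{Y}^{2}=v^{\trn}(X-Y)v$ and $\nrm*{a}_{X}^{2}-\nrm*{b}_{X}^{2}=(a-b)^{\trn}X(a+b)$ for symmetric $X$, I would split
\[
\delta_t \le \nrm*{\Sigt-\Siginf}_{\op}\,\nrm*{a_t}^{2} \;+\; \nrm*{\Siginf}_{\op}\,\nrm*{\qstar_{t;t+h}(w_{t:t+h})-\qstar_{\infty;h}(w_{t:t+h})}\,\bigl(\nrm*{a_t}+\nrm*{b_t}\bigr).
\]
All vectors here have norm $\bigoh(\Dq)$: $\nrm*{q_t}\le\Dq$ by hypothesis, $\nrm*{\qstar_{t;t+h}}\le\Dqst\le\Dq$ by \pref{lem:qt_bound}, and $\nrm*{\qstar_{\infty;h}}\le\Dqst\le\Dq$ by the same geometric-series estimate using strong stability of $\Aclinf$ (\pref{lem:strongly_stable}), which also gives $\nrm*{\Siginf}_{\op}=\bigoh(\Psist^{2}\Gammast)$. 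So the task reduces to showing that both $\nrm*{\Sigt-\Siginf}_{\op}$ and $\nrm*{\qstar_{t;t+h}(w_{t:t+h})-\qstar_{\infty;h}(w_{t:t+h})}$ are \emph{geometrically small in $T-t$}, after which the outer sum telescopes.

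For the covariance term I would invoke the convergence estimates for the Riccati recursion (among the technical tools in \pref{app:technical}): once $T-t$ exceeds a threshold $\tau_0=\bigoht(\betast\Psist^{2}\Gammast)$ — the same one appearing in \pref{lem:qstar_truncate} — one has $\nrm*{\Pt-\Pinf}_{\op},\nrm*{\Aclt-\Aclinf}_{\op}\le C\gammab^{\,T-t}$ with $C$ polynomial in the parameters; since $\Sigt-\Siginf=B^{\trn}(P_{t+1}-\Pinf)B$ this gives $\nrm*{\Sigt-\Siginf}_{\op}\le\Psist^{2}C\gammab^{\,T-t-1}$. For the bias term I would match $\qstar_{t;t+h}$ and $\qstar_{\infty;h}$ summand by summand over $i\in\{t,\dots,t+h\}$ — the upper-limit mismatch $(t+h)\wedge T-1$ versus $t+h$ costs, in the non-boundary regime, only the single summand $i=t+h$, which carries $(\Aclinf^{\trn})^{h}$ and so has norm $\bigoh(\mathrm{poly}(\cdot)\,\gammast^{h})=\bigoh(\mathrm{poly}(\cdot)/T^{2})$ — and bound, for each $i$, the operator norm of $\Sigt^{-1}B^{\trn}(\prod_{j=t+1}^{i}\Aclt[j]^{\trn})P_{i+1}-\Siginf^{-1}B^{\trn}(\Aclinf^{\trn})^{i-t}\Pinf$ by a three-step hybrid that swaps $\Sigt^{-1}\!\to\!\Siginf^{-1}$, then $\prod_{j=t+1}^{i}\Aclt[j]^{\trn}\!\to\!(\Aclinf^{\trn})^{i-t}$, then $P_{i+1}\!\to\!\Pinf$. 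Two points are essential: (i) for the middle swap one must use that powers of the $(\kappast,\gammast)$-strongly-stable $\Aclinf$ — and products of matrices each close to it — have operator norm $\bigoh(\kappast^{2}\gammab^{\,k})$ for a length-$k$ product, \emph{not} $\nrm*{\Aclinf}_{\op}^{\,k}$, which would be exponential in $h$; and (ii) inside the telescoping sum one must keep the decaying factor $\gammab^{\,T-k}$ coming from $\nrm*{\Aclt[k]-\Aclinf}_{\op}$ paired against the matching power, so that each per-summand error comes out as $\bigoh(\mathrm{poly}(\cdot)(1-\gammast)^{-1}\gammab^{\,T-t})$ rather than $\gammab^{\,T-t-h}$ — the latter would, when later summed over $t$, reintroduce a factor $\gammab^{-h}=\mathrm{poly}(T)$ for the chosen $h$. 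Since the per-summand errors do not further decay in $i$, summing over the $h+1$ values of $i$ picks up a factor $h+1$ and yields $\nrm*{\qstar_{t;t+h}(w_{t:t+h})-\qstar_{\infty;h}(w_{t:t+h})}\le\mathrm{poly}(\Psist,\Gammast,\kappast,\betast)\,h\,(1-\gammast)^{-1}\gammab^{\,T-t}$ for $T-t\ge\tau_0+h$.

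Assembling: for the indices $t$ with $T-t\ge\tau_0+h$, the two estimates give $\delta_t\le\bigoht\bigl(\Dq^{2}\betast^{4}\Psist^{7}\Gammast^{4}\kappast^{2}(1-\gammast)^{-1}h\bigr)\gammab^{\,T-t}$, and $\sum_{t}\gammab^{\,T-t}\le(1-\gammab)^{-1}=\bigoh((1-\gammast)^{-1})$. For the remaining $\bigoht\bigl(h+\betast\Psist^{2}\Gammast+(1-\gammast)^{-1}\log(\Dq T)\bigr)$ indices near the terminal time (where the convergence estimates do not apply) I bound $\delta_t\le\nrm*{a_t}_{\Sigt}^{2}+\nrm*{b_t}_{\Siginf}^{2}=\bigoh(\Psist^{2}\Gammast\Dq^{2})$, using $\nrm*{\Pt}_{\op}\le\nrm*{\Pinf}_{\op}\le\Gammast$ (the finite-horizon Riccati iterates are monotone and bounded above by $\Pinf$, so $\nrm*{\Sigt}_{\op},\nrm*{\Siginf}_{\op}=\bigoh(\Psist^{2}\Gammast)$) together with the $\bigoh(\Dq)$ bias bounds. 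Adding the two contributions and folding the $h$ and $\log(\Dq T)$ factors into a single $\bigoht$ gives $C_{q_{\infty},\Sigma_{\infty}}=\bigoht(\Dq^{2}\betast^{4}\Psist^{7}\Gammast^{4}\kappast^{2}(1-\gammast)^{-1}h\log(\Dq T))$, as claimed.

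The step I expect to be the main obstacle is the middle swap $\prod_{j=t+1}^{i}\Aclt[j]^{\trn}\to(\Aclinf^{\trn})^{i-t}$: one must simultaneously avoid a $\mathrm{poly}(\kappast)^{h}$ blow-up (by invoking the standard fact that a product of matrices each $\veps$-close to a $(\kappa,\gamma)$-strongly stable matrix is itself ``strongly stable'' with mildly degraded constants when $\veps$ is small relative to $\kappa,\gamma$, applied inside a telescoping identity while retaining the geometric decay $\gammab^{\,k}$ of the truncated products) and, as noted above, keep the geometric factor tied to $T-k$ rather than to a uniform $T-t-h$, lest the outer sum over $t$ reintroduce a polynomial-in-$T$ factor. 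The covariance swap, the $P$-swap, the single edge summand, and the boundary count are all routine.
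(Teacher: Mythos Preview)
Your overall decomposition matches the paper's: split each summand into a covariance swap controlled by $\nrm{\Sigt-\Siginf}_{\op}$ and a bias swap controlled by $\nrm{\qstar_{t;t+h}-\qstar_{\infty;h}}$, run the same three-term hybrid on the bias, and write off a polylogarithmic number of boundary terms via the crude $O(\Psist^{2}\Gammast\Dq^{2})$ bound. The gap is the decay rate you assign to $\nrm{P_t-\Pinf}_{\op}$ and $\nrm{\Aclt-\Aclinf}_{\op}$. You assert these are $\le C\gammab^{\,T-t}$ with $C$ polynomial once $T-t\ge\tau_0$, but this conflates two unrelated rates: $\gammab=(1+\gammast)/2$ governs closed-loop contraction (strong stability of $\Aclinf$), whereas the Riccati iterates $P_t\to\Pinf$, $K_t\to\Kinf$ converge at rate $(1+1/\nust)^{-1}$ with $\nust=2\betast\Psist^{2}\Gammast$ (\pref{lem:value_iteration}). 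Nothing ties these together; when $\nust(1-\gammast)$ is large the Riccati rate is strictly slower than $\gammab$, and no polynomial $C$ rescues the inequality for large $T-t$. Your pairing argument for the middle swap---combine a putative $\gammab^{\,T-k}$ from $\nrm{\Aclt[k]-\Aclinf}_{\op}$ with a $\gammab^{\,k-t}$ factor from the telescoped product to land on $\gammab^{\,T-t}$---therefore does not go through, because the two exponentials have different bases and cannot be merged.

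The paper avoids this mismatch by not seeking a per-$t$ geometric bound for the product swap at all. It invokes \pref{lem:at_bound}: choosing $\Delta_0=\bigoht(\nust\log(\alpha T))$ gives $\nrm{\Aclt[t\to i]-\Aclinf^{i-t}}_{\op}\le 1/(\alpha T^{2})$ \emph{uniformly} for $t+h\le T-\Delta_0$, so this contribution to $\sum_t\sum_i$ is $O(hT/(\alpha T^{2}))$, negligible for polynomial $\alpha$; the only cost is $\Delta_0$ extra boundary terms. The $\Sigt^{-1}$ and $P_{i+1}$ swaps use directly that $\sum_t\nrm{P_{t+1}-\Pinf}_{\op}=O(\nust\betast^{1/2}\Gammast^{1/2})$, finite regardless of the underlying base. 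Your route can be repaired along the same lines---sum over the telescoping index across all $(t,i)$ first and use $\sum_j\nrm{K_j-\Kinf}_{\op}<\infty$---but the per-step bound $\delta_t\lesssim \mathrm{poly}\cdot h\,\gammab^{\,T-t}$ is not available as stated.
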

Combining these results immediately yields the proof of
\pref{thm:main_reg_decomp}; details are given in \pref{app:analysis}.


\section{General policy classes}
\label{sec:beyond_linear}
In view of \pref{sec:algorithms} and \pref{sec:analysis}, it should be clear the disturbance-action
parameterization \pref{eq:dap} used in \mainalg serves only to facilitate the use of
tools from online convex optimization. By appealing to tools from the
more general online learning framework, we can derive rates for
generic, potentially nonlinear benchmark policy classes.

Suppose we wish to compete with a benchmark class $\Pi$ where each
$\pi\in\Pi$ takes the form $\pi(x;\matw)=-\Kinf{}x -
q_{t}^{\pi}(\matw_{t-1})$, and suppose that the learner's policy takes
the form $\pialg(x;\matw)=-\Kinf{}x-q_{t}^{\mathrm{alg}}(\matw_{t-1})$. The development so far implies
that as long as $\nrm*{q_t^{\pi}}$ is uniformly bounded for all
$\pi\in\Pi$, we have
\begin{equation}
\Reg(\pialg;\Pi,\matw) =
\sum_{t=1}^{T}\nrm*{q_{t}^{\mathrm{alg}}(\matw_{t-1})-q^{\star}_{\infty;h}(\matw_{t+h})}_{\Siginf}^{2}
-
\inf_{\pi\in\Pi}\sum_{t=1}^{T}\nrm*{q_{t}^{\pi}(\matw_{t-1})-q^{\star}_{\infty;h}(\matw_{t+h})}_{\Siginf}^{2}
+C_{\mathrm{err}},\label{eq:general}
\end{equation}
where $C_{\mathrm{err}}$ is a logarithmic approximation error term. We
can appeal to the generic delayed online learning reduction once more to
reduce this problem to online supervised learning. Consider the following protocol for online learning: At time $t$:
Receive $w_{t-1}$, predict $\wh{q}_t\in\bbR^{\dimu}$, then receive
$q^{\star}_t\in\bbR^{\dimu}$. If we have an algorithm for this
protocol that ensures
\newcommand{\Rosl}{R_{\mathrm{OSL}}}
\begin{equation}
\sum_{t=1}^{T}\nrm*{\wh{q}_t-q^{\star}_t}_{\Siginf}^{2}
-
\inf_{\pi\in\Pi}\sum_{t=1}^{T}\nrm*{q_{t}^{\pi}(\matw_{t-1})-q^{\star}_t}_{\Siginf}^{2}\leq{}\Rosl(T),
\end{equation}
for every sequence, then the delayed online learning reduction enjoys
regret $(h+1)\Rosl(T/(h+1))$ for the delayed problem
\pref{eq:general}. For example, since the loss
$\wh{q}\mapsto\nrm*{\wh{q}-\qstar}_{\Siginf}^2$ is exp-concave, we can
apply Vovk's aggregating algorithm
\citep{vovk1990aggregating,vovk1995game} to guarantee
\[
  \Reg(\pialg;\Pi,\matw) \leq{} \bigoh_{\star}\prn*{
    \log\abs*{\Pi}\cdot\log{}T
    }
  \]
  for any finite class of  policies. More generally, one can derive fast rates for arbitrary nonparametric
classes of benchmark policies via the offset Rademacher complexity-based minimax
bounds given in \cite{rakhlin2014nonparametric}.


\section{Conclusion}
\label{sec:conclusion}
We have presented the first efficient algorithm with logarithmic regret for online linear control with arbitrary adversarial disturbance sequences. Our result highlights the power of online learning with advantages, and we are hopeful that this framework will find broader use. Numerous questions naturally arise for future work: Does our framework extend to more general loss functions, or to more general classes of dynamical systems in control and reinforcement learning? Can our results be extended to handle partial observed dynamical systems? Can we obtain $\sqrt{T}$-regret for adversarial disturbances in unknown systems, as is possible in the stochastic regime?

\subsection*{Acknowledgements} DF acknowledges the support of TRIPODS award \#1740751. MS is generously supported by an Open Philanthropy AI Fellowship. We thank Ali Jadbabaie for helpful discussions.



\bibliography{refs}

\begin{thebibliography}{39}
\providecommand{\natexlab}[1]{#1}
\providecommand{\url}[1]{\texttt{#1}}
\expandafter\ifx\csname urlstyle\endcsname\relax
  \providecommand{\doi}[1]{doi: #1}\else
  \providecommand{\doi}{doi: \begingroup \urlstyle{rm}\Url}\fi

\bibitem[Abbasi-Yadkori and Szepesv{\'a}ri(2011)]{abbasi2011regret}
Yasin Abbasi-Yadkori and Csaba Szepesv{\'a}ri.
\newblock Regret bounds for the adaptive control of linear quadratic systems.
\newblock In \emph{Proceedings of the 24th Annual Conference on Learning
  Theory}, pages 1--26, 2011.

\bibitem[Abbasi-Yadkori et~al.(2013)Abbasi-Yadkori, Bartlett, Kanade, Seldin,
  and Szepesv{\'a}ri]{yadkori2013online}
Yasin Abbasi-Yadkori, Peter~L Bartlett, Varun Kanade, Yevgeny Seldin, and Csaba
  Szepesv{\'a}ri.
\newblock Online learning in markov decision processes with adversarially
  chosen transition probability distributions.
\newblock In \emph{Advances in neural information processing systems}, pages
  2508--2516, 2013.

\bibitem[Abbasi-Yadkori et~al.(2014)Abbasi-Yadkori, Bartlett, and
  Kanade]{abbasi2014tracking}
Yasin Abbasi-Yadkori, Peter Bartlett, and Varun Kanade.
\newblock Tracking adversarial targets.
\newblock In \emph{International Conference on Machine Learning}, pages
  369--377, 2014.

\bibitem[Agarwal et~al.(2019{\natexlab{a}})Agarwal, Bullins, Hazan, Kakade, and
  Singh]{agarwal2019online}
Naman Agarwal, Brian Bullins, Elad Hazan, Sham Kakade, and Karan Singh.
\newblock Online control with adversarial disturbances.
\newblock In \emph{International Conference on Machine Learning}, pages
  111--119, 2019{\natexlab{a}}.

\bibitem[Agarwal et~al.(2019{\natexlab{b}})Agarwal, Hazan, and
  Singh]{agarwal2019logarithmic}
Naman Agarwal, Elad Hazan, and Karan Singh.
\newblock Logarithmic regret for online control.
\newblock In \emph{Advances in Neural Information Processing Systems}, pages
  10175--10184, 2019{\natexlab{b}}.

\bibitem[Anava et~al.(2015)Anava, Hazan, and Mannor]{anava2015online}
Oren Anava, Elad Hazan, and Shie Mannor.
\newblock Online learning for adversaries with memory: price of past mistakes.
\newblock In \emph{Advances in Neural Information Processing Systems}, pages
  784--792, 2015.

\bibitem[Arora et~al.(2012)Arora, Dekel, and Tewari]{arora2012online}
Raman Arora, Ofer Dekel, and Ambuj Tewari.
\newblock Online bandit learning against an adaptive adversary: from regret to
  policy regret.
\newblock In \emph{International Conference on Machine Learning (ICML)}, pages
  1747--1754, 2012.

\bibitem[Azoury and Warmuth(2001)]{AzouryWarmuth01}
Katy~S Azoury and Manfred~K Warmuth.
\newblock Relative loss bounds for on-line density estimation with the
  exponential family of distributions.
\newblock \emph{Machine Learning}, 43\penalty0 (3):\penalty0 211--246, June
  2001.

\bibitem[Bertsekas(2005)]{bertsekas2005dynamic}
Dimitri~P Bertsekas.
\newblock \emph{Dynamic Programming and Optimal Control, Vol. I}.
\newblock Athena Scientific, 2005.

\bibitem[Cassel et~al.(2020)Cassel, Cohen, and Koren]{cassel2020logarithmic}
Asaf Cassel, Alon Cohen, and Tomer Koren.
\newblock Logarithmic regret for learning linear quadratic regulators
  efficiently.
\newblock \emph{International Conference on Machine Learning (ICML)}, 2020.

\bibitem[Cesa-Bianchi and Lugosi(2006)]{PLG}
Nicolo Cesa-Bianchi and Gabor Lugosi.
\newblock \emph{Prediction, Learning, and Games}.
\newblock Cambridge University Press, 2006.

\bibitem[Cohen et~al.(2018)Cohen, Hasidim, Koren, Lazic, Mansour, and
  Talwar]{cohen2018online}
Alon Cohen, Avinatan Hasidim, Tomer Koren, Nevena Lazic, Yishay Mansour, and
  Kunal Talwar.
\newblock Online linear quadratic control.
\newblock In \emph{International Conference on Machine Learning}, pages
  1028--1037, 2018.

\bibitem[Cohen et~al.(2019)Cohen, Koren, and Mansour]{cohen2019learning}
Alon Cohen, Tomer Koren, and Yishay Mansour.
\newblock Learning linear-quadratic regulators efficiently with only $\sqrt{T}$
  regret.
\newblock In \emph{International Conference on Machine Learning}, pages
  1300--1309, 2019.

\bibitem[Dean et~al.(2018)Dean, Mania, Matni, Recht, and Tu]{dean2018regret}
Sarah Dean, Horia Mania, Nikolai Matni, Benjamin Recht, and Stephen Tu.
\newblock Regret bounds for robust adaptive control of the linear quadratic
  regulator.
\newblock In \emph{Advances in Neural Information Processing Systems}, pages
  4188--4197, 2018.

\bibitem[Even-Dar et~al.(2009)Even-Dar, Kakade, and Mansour]{even2009online}
Eyal Even-Dar, Sham~M Kakade, and Yishay Mansour.
\newblock Online markov decision processes.
\newblock \emph{Mathematics of Operations Research}, 34\penalty0 (3):\penalty0
  726--736, 2009.

\bibitem[Faradonbeh et~al.(2018)Faradonbeh, Tewari, and
  Michailidis]{faradonbeh2018optimality}
Mohamad Kazem~Shirani Faradonbeh, Ambuj Tewari, and George Michailidis.
\newblock On optimality of adaptive linear-quadratic regulators.
\newblock \emph{arXiv preprint arXiv:1806.10749}, 2018.

\bibitem[Fazel et~al.(2018)Fazel, Ge, Kakade, and Mesbahi]{fazel2018global}
Maryam Fazel, Rong Ge, Sham Kakade, and Mehran Mesbahi.
\newblock Global convergence of policy gradient methods for the linear
  quadratic regulator.
\newblock In \emph{International Conference on Machine Learning}, pages
  1467--1476, 2018.

\bibitem[Hazan(2016)]{hazan2016introduction}
Elad Hazan.
\newblock Introduction to online convex optimization.
\newblock \emph{Foundations and Trends{\textregistered} in Optimization},
  2\penalty0 (3-4):\penalty0 157--325, 2016.

\bibitem[Hazan and Kale(2011)]{hazan2011newtron}
Elad Hazan and Satyen Kale.
\newblock Newtron: an efficient bandit algorithm for online multiclass
  prediction.
\newblock In \emph{Advances in Neural Information Processing Systems}, pages
  891--899, 2011.

\bibitem[Hazan et~al.(2007)Hazan, Agarwal, and Kale]{hazan2007logarithmic}
Elad Hazan, Amit Agarwal, and Satyen Kale.
\newblock Logarithmic regret algorithms for online convex optimization.
\newblock \emph{Machine Learning}, 69\penalty0 (2-3):\penalty0 169--192, 2007.

\bibitem[Hazan et~al.(2020)Hazan, Kakade, and Singh]{hazan2019nonstochastic}
Elad Hazan, Sham Kakade, and Karan Singh.
\newblock The nonstochastic control problem.
\newblock In \emph{Proceedings of the 31st International Conference on
  Algorithmic Learning Theory}, pages 408--421. PMLR, 2020.

\bibitem[Joulani et~al.(2013)Joulani, Gyorgy, and
  Szepesv{\'a}ri]{joulani2013online}
Pooria Joulani, Andras Gyorgy, and Csaba Szepesv{\'a}ri.
\newblock Online learning under delayed feedback.
\newblock In \emph{International Conference on Machine Learning}, pages
  1453--1461, 2013.

\bibitem[Kakade(2003)]{kakade2003sample}
Sham~M Kakade.
\newblock \emph{On the sample complexity of reinforcement learning}.
\newblock PhD thesis, University College London (University of London), 2003.

\bibitem[Lillicrap et~al.(2016)Lillicrap, Hunt, Pritzel, Heess, Erez, Tassa,
  Silver, and Wierstra]{lillicrap2015continuous}
Timothy~P Lillicrap, Jonathan~J Hunt, Alexander Pritzel, Nicolas Heess, Tom
  Erez, Yuval Tassa, David Silver, and Daan Wierstra.
\newblock Continuous control with deep reinforcement learning.
\newblock \emph{International Conference on Learning Representations (ICLR)},
  2016.

\bibitem[Lincoln and Rantzer(2006)]{lincoln2006relaxing}
Bo~Lincoln and Anders Rantzer.
\newblock Relaxing dynamic programming.
\newblock \emph{IEEE Transactions on Automatic Control}, 51\penalty0
  (8):\penalty0 1249--1260, 2006.

\bibitem[Mania et~al.(2019)Mania, Tu, and Recht]{mania2019certainty}
Horia Mania, Stephen Tu, and Benjamin Recht.
\newblock Certainty equivalence is efficient for linear quadratic control.
\newblock In \emph{Advances in Neural Information Processing Systems}, pages
  10154--10164, 2019.

\bibitem[Mnih et~al.(2015)Mnih, Kavukcuoglu, Silver, Rusu, Veness, Bellemare,
  Graves, Riedmiller, Fidjeland, Ostrovski, et~al.]{mnih2015human}
Volodymyr Mnih, Koray Kavukcuoglu, David Silver, Andrei~A Rusu, Joel Veness,
  Marc~G Bellemare, Alex Graves, Martin Riedmiller, Andreas~K Fidjeland, Georg
  Ostrovski, et~al.
\newblock Human-level control through deep reinforcement learning.
\newblock \emph{Nature}, 518\penalty0 (7540):\penalty0 529, 2015.

\bibitem[Orabona et~al.(2015)Orabona, Crammer, and
  Cesa-Bianchi]{orabona2015generalized}
Francesco Orabona, Koby Crammer, and Nicolo Cesa-Bianchi.
\newblock A generalized online mirror descent with applications to
  classification and regression.
\newblock \emph{Machine Learning}, 99\penalty0 (3):\penalty0 411--435, 2015.

\bibitem[Rakhlin and Sridharan(2014)]{rakhlin2014nonparametric}
Alexander Rakhlin and Karthik Sridharan.
\newblock Online nonparametric regression.
\newblock In \emph{Conference on Learning Theory}, 2014.

\bibitem[Rosenberg and Mansour(2019)]{rosenberg2019online}
Aviv Rosenberg and Yishay Mansour.
\newblock Online convex optimization in adversarial markov decision processes.
\newblock In \emph{International Conference on Machine Learning}, pages
  5478--5486, 2019.

\bibitem[Silver et~al.(2017)Silver, Schrittwieser, Simonyan, Antonoglou, Huang,
  Guez, Hubert, Baker, Lai, Bolton, et~al.]{silver2017mastering}
David Silver, Julian Schrittwieser, Karen Simonyan, Ioannis Antonoglou, Aja
  Huang, Arthur Guez, Thomas Hubert, Lucas Baker, Matthew Lai, Adrian Bolton,
  et~al.
\newblock Mastering the game of go without human knowledge.
\newblock \emph{Nature}, 550\penalty0 (7676):\penalty0 354, 2017.

\bibitem[Simchowitz and Foster(2020)]{simchowitz2020naive}
Max Simchowitz and Dylan~J Foster.
\newblock Naive exploration is optimal for online {LQR}.
\newblock \emph{International Conference on Machine Learning (ICML)}, 2020.

\bibitem[Simchowitz et~al.(2020)Simchowitz, Singh, and
  Hazan]{simchowitz2020improper}
Max Simchowitz, Karan Singh, and Elad Hazan.
\newblock Improper learning for non-stochastic control.
\newblock \emph{Conference on Learning Theory (COLT)}, 2020.

\bibitem[Slotine and Li(1991)]{slotine1991applied}
Jean-Jacques~E Slotine and Weiping Li.
\newblock \emph{Applied nonlinear control}.
\newblock Prentice-Hall, 1991.

\bibitem[Vovk(1990)]{vovk1990aggregating}
Vladimir Vovk.
\newblock Aggregating strategies.
\newblock \emph{Proceedings of the conference on computational Learning
  Theory}, 1990.

\bibitem[Vovk(1995)]{vovk1995game}
Vladimir Vovk.
\newblock A game of prediction with expert advice.
\newblock In \emph{Proceedings of the eighth annual conference on computational
  learning theory}, pages 51--60. ACM, 1995.

\bibitem[Vovk(1998)]{Vovk98}
Vladimir Vovk.
\newblock Competitive on-line linear regression.
\newblock In \emph{NIPS '97: Proceedings of the 1997 conference on advances in
  neural information processing systems 10}, pages 364--370, Cambridge, MA,
  USA, 1998. MIT Press.

\bibitem[Yu et~al.(2009)Yu, Mannor, and Shimkin]{yu2009markov}
Jia~Yuan Yu, Shie Mannor, and Nahum Shimkin.
\newblock Markov decision processes with arbitrary reward processes.
\newblock \emph{Mathematics of Operations Research}, 34\penalty0 (3):\penalty0
  737--757, 2009.

\bibitem[Zimin and Neu(2013)]{zimin2013online}
Alexander Zimin and Gergely Neu.
\newblock Online learning in episodic markovian decision processes by relative
  entropy policy search.
\newblock In \emph{Advances in neural information processing systems}, pages
  1583--1591, 2013.

\end{thebibliography}

\newpage

\appendix

\renewcommand{\contentsname}{Contents of Appendix}
\tableofcontents
\addtocontents{toc}{\protect\setcounter{tocdepth}{3}}
\clearpage

\section*{Organization and notation}
\addcontentsline{toc}{section}{Organization and notation}

This appendix is organized as follows. \pref{app:generalization}
describes extensions of \mainalg. \pref{app:other_regret_benchmarks}
demonstrates that $\mainalg$ competes with richer benchmark class that includes arbitrary linear controllers with internal state; \pref{app:tracking_gen} extends the algorithm to consider ``tracking costs'' studied by \cite{abbasi2014tracking}; \pref{app:varying_costs} explains how the algorithm can accomodate time-varying quadratic costs, provided that they are known to the learner in advance.

\pref{app:olws_limits} explains challenges associated with using online learning with stationary costs (\olws) to attain logarithmic regret in our setting. This appendix also provides a unifying (albeit informal) treatment of existing \olwsx approaches. In addition, \pref{app:mdpe} highlights the differences between \mainalg{} and \mdpex \citep{even2009online}, a variant of \olwsx which is superficially similar to our approach.

The remaining three appendices are dedicated to proving our main
results. \pref{app:technical} collects some basic structural results
for linear quadratic control which we use throughout the appendix, and
\pref{app:additional_proofs} describes a variant of the performance
difference lemma \citep{kakade2003sample} which is used in our analysis. \pref{app:algorithm} provides the missing proofs from Section~\ref{sec:algorithms}.  Importantly, \pref{app:main_algo_proof} proves Theorem~\ref{thm:main_algo}, and \pref{app:vaw} establishes a regret guarantee for the vector-valued \vaw{} algorithm (\pref{alg:vaw}). Finally, \pref{app:analysis} supplies the missing proofs from Section~\ref{sec:analysis}, culminating in the proof of \pref{thm:main_reg_decomp}.

Notation used throughout the main paper and appendix is collected in \pref{table:notation}.

\clearpage
\begin{center}
	\begin{longtable}{| l | l |}
	\hline
	\textbf{Notation} & \textbf{Definition}  \\
	\hline
	$A,B$ & system matrices (dynamics, Eq.~\pref{eq:dynamics})\\
	$w_t$/$w_{1:t}$/$\matw$ & disturbance at time $t$/from $1,\dots,t$/from $1,2,\dots$\\
	$\pi$ & control policy\\
	$K$, $\pi^K$ & static feedback controller, induced policy \\
	$M,\pi^{(M)}$ & \dap{} controller (\pref{def:dap}), induced policy\\
	$\pialg$ & policy selected by the learner \\
	\hline
	$\dimx,\dimu$ & state/input dimension\\
	$\matx_t^{\pi}(\matw),\matu_t^{\pi}(\matw)$ & state/input at time $t$ under policy $\pi$ and disturbances $\matw$\\
	$\matx_t^{K}(\matw),\matu_t^{K}(\matw)$ & state/input at time $t$ under policy $\pi^K$\\
	$\loss,\Rx,\Ru$ & cost function $\loss(x,u) = x^\top \Rx x + u^\top \Ru$ \\
	$J_T(\pi;\matw)$ & cost of policy $\pi$, $\sum_{t=1}^T\loss(\matx_t^{\pi}(\matw),\matu_t^{\pi}(\matw))$\\
	\hline
	$\Reg(\pialg;\Pi,\matw)$ & regret with benchmark $\Pi$: $\cost(\pialg;\matw) - \inf_{\pi \in \Pi}\cost(\pi;\matw)$\\
	$\Kclass_0 $ & benchmark class of strongly stable feedback controllers \\
	$\KReg(\pi^{\Alg};\matw)$ & regret benchmark with compartor $\Kclass_0$: $J_T(\pi^{\Alg};\matw) - \inf_{K \in \Kclass_0}J_T(\pi^K;\matw)$.\\
	$\cM_0$  & benchmark class of \dap{} controllers (parameterized as  $\cM(m,R,\gamma)$ in \pref{def:dap})\\
	$\dapReg(\pialg;\matw)$ & regret relative to $\cM_0$ (Eq.~\pref{eq:dap_reg})\\
	\hline
	$\Pinf$ & solution to the $\dare$ (Eq.~\pref{eq:dare})\\
	$\Kinf$ & optimal infinite horizon LQR controller\\
	$\Sigmainf$ & optimal infinite horizon  LQR covariance\\
	$\Aclinf$ & closed loop system $A- B\Kinf$ under optimal infinite-horizon controller\\
	\hline
	$\kappast,\gammast$ & strong stability parameters for $\Aclinf$ (see \pref{def:ss}) \\
	$\kappa_0,\gamma_0$ & strong stability parameters for $\Kclass_0$ (see \pref{def:ss}) \\
	$\Psist$& $\max\crl*{1,\nrm*{A}_{\op},\nrm*{B}_{\op},\nrm*{\Rx}_{\op},\nrm*{\Ru}_{\op}}$\\
	$\betast$ & $\max\crl*{1,\eigmin^{-1}(\Ru),\eigmin^{-1}(\Rx)}$\\
	$\Gammast$ & $\max\crl*{1,\nrm*{\Pinf}_{\op}}$\\
	\hline
	$\pistar$ & unconstrained optimal policy (\pref{def:optimal})\\
	$\Qstar$, $\Vstar$ &Q-function and value function under $\pistar$ (\pref{def:optimal})\\
	$\advstar$ & advantage, defined as $\Qstar_t(x,u;\matw) - \Qstar_t(x,\pistar_t(x;\matw);\matw)$\\
	\hline 
	$q_t$ & generic bias-predicting term (e.g., $\pi(x,\matw)=-\Kinf-q_t(\matw)$)\\
	$q_t^M$ &  bias-predicting term in \dap{} (\pref{def:dap})\\
	$\qstar_{\infty;h}(w_{1:h+1})$ & truncated approximation for $\qstar_t$-function in $\pistar$ (defined above Eq.~\pref{eq:advhat}).\\
	$\advhat_{t;h}( M ;\matw_{t+h})$ & approximate advantage (Eq.~\pref{eq:advhat})\\
	\hline
	$\cC$ & generic constraint set for online optimization\\
	\ons, $\ons(\veps,\eta,\cC)$ & \makecell[l]{Online Newton
          Step (\pref{alg:ons}) with learning rate $\eta$,
          regularization parameter $\veps$}\\
	\vaw, $\vaw(\veps,\cC,\Sigma)$ & Vovk-Azoury-Warmuth
        (\pref{alg:vaw}) with regularization parameter $\veps$, cost $\Sigma$\\
	\hline
	$P_t,\Sigt,\Kt,\Aclt$ & finite-horizon optimal analogues of $\Pinf,\Sigmainf,\Kinf,\Aclinf$ (\pref{def:dp_time_varying})\\
	$\qstar_t(w_{t:T})$ &  bias function for $\pistar$
        ($\pistar_t(x;w_{t:T}) = -\Kt x -\qstar_t(w_{t:T})$; Eq.\pref{eq:qstar})\\
	\hline
	$\Hclinf$,$\Lclinf$ &  matrices that
  witness strong stability of $\Aclinf$ (\pref{def:ss})\\
 	$\Lclt$ & $\Hclinf^{-1}\Aclt{}\Hclinf$ (used to show strong stability of $\Aclt$)\\
$\Aclt[i\to{}t]$& $\Aclt[t]\Aclt[t-1]\cdots\Aclt[i+1]$, with 
convention  $\Aclt[t\to{}t]=I$\\
$\gammab$ & $\frac{1}{2}(1+\gammast)<1$\\
$\Deltast$ & $4\cdot{}\betast{}\Psist^{2}\Gammast\log(2\Psist\Gammast\kappast(1-\gammast)^{-1})$ (``decay time'')\\
 $\Tstab$ & $T-\Deltast$\\
	\hline
	\hline
	\caption{Summary of notation \label{table:notation}}
	\end{longtable}
	\end{center}

\section{Extensions \label{app:generalization}}
\newcommand{\PtT}{P_{t;T}}
\newcommand{\PtplT}{P_{t;T}}
\newcommand{\SigtT}{\Sigma_{t;T}}
\newcommand{\KtT}{K_{t;T}}
\newcommand{\AclT}[1][t]{A_{cl,#1;T}}

\newcommand{\ctT}{q_{t;T}}
\newcommand{\ctplT}{q_{t+1;T}}

\subsection{Alternative regret benchmarks \label{app:other_regret_benchmarks}}

Throughout the main paper we only considered benchmarks based on
linear feedback controllers of the form $u_t = -K x_t$, where $K$ is
strongly stabilizing. We now show that \dapx controllers (and
consequently \mainalg) can be used to compete with a more general
class of linear controllers with \emph{internal state}. We use an
argument from \citet{simchowitz2020improper}. Consider $m_Q \in \bbN$, and  controller of the form
\begin{align}
\pi_t ^{[Q]} (x;\matw) =  - \Kinf x + \sum_{i=0}^{m_Q-1} Q^{[i]}x_{t-i}^{\Kinf}(\matw), \label{eq:dfc}
\end{align}
where $x_t^{\Kinf}(\matw)$ denotes the state that would arise at time $t$ if the linear selected
the optimal linear control law $u_s^{\Kinf}(\matw) = -\Kinf
x_s^{\Kinf}(\matw)$ for all $s<t$. We note that this counterfactual
can be computed from $w_{1:t-1}$. By \citet{simchowitz2020improper}, to show that the $\dap$
parameterization competes with controllers with internal state, it
suffices to show that the parameterization competes with controllers
of the form \pref{eq:dfc}. To see this is indeed the case, observe that since $\Kinf$
stabilizes the system $(A,B)$, we have
\begin{align*}
x_{s}^{\Kinf}(\matw) = \sum_{i=0}^{h} (A-B\Kinf)^{i}w_{s-i-1} \pm e^{-\Omega(h (1-\gamma_{\infty}))},
\end{align*}
where we use $\pm$ in an informal, vector-valued sense.
Hence, we can render
\begin{align}
\pi ^{[Q]}_t(x;\matw) =  - \Kinf x + \sum_{i=0}^{m_Q-1} \sum_{j=0}^{h} Q^{[i]} (A-B\Kinf)^{j} w_{t-(i+j+1)} \pm e^{-\Omega(h (1-\gamma_{\infty}))}.
\end{align}
It follows that setting $m = m_Q+h$, we can approximate the above behavior with an $m$-length controller of the form $M^{[k]} = \sum_{i=0}^{m-1} \sum_{j=0}^{h} Q^{[i]} (A-B\Kinf)^{j}  \I_{i+j + 1 = k}$ captures the policy \pref{eq:dfc}. 

To formalize the extension, one must also verify that for some
reasonable $R,m$, the sequence $M$ above lies in the set 
\begin{align*}\cM(m,R,\gamma) := \{M = (M^{[i]})_{i=1}^m: \|M^{[i]}\|_{\op}\le
  R\gamma^{i-1}\},
 \end{align*}
that is, the sequence enjoys geometric decay with parameter
$\gamma$. This decay can be achieved in numerous ways, e.g. taking
$\gamma = 1/m$ and inflating $R$ by a factor of $e$. At the extreme,
one can show that the constraint set $\cM(m,R,\gamma)$ can be replaced with a set which \emph{does not} enforce geometric decay,
\begin{align*}
\widetilde{\cM}(m,R) := \{M = (M^{[i]})_{i=1}^m: \|M^{[i]}\|_{\op}\le
  R,~ \forall i\},
 \end{align*}
 at the expense of suffering a larger polynomial in $\log{}T$ in the final regret bound. We omit the details in the interest of brevity.

\subsection{Tracking moving targets \label{app:tracking_gen}}

\newcommand{\matwbar}{\bar{\matw}}
\newcommand{\ar}[1][t]{a_{#1:T}}
\newcommand{\br}[1][t]{b_{#1:T}}
\newcommand{\wbarr}[1][t]{\bar{w}_{#1:T}}
\newcommand{\wbar}{\bar{w}}

We next show that \mainalg generalizes to a setting with moving
targets (or, ``adversarial targets'') previously studied without
adversarial noise by \cite{abbasi2014tracking}. In this setting, for a
sequence of \emph{targets} $a_{1:T}$, $b_{1:T}$, the learner's loss at
time $t$ is given by
\begin{align*}
\ell_t(x,u) = \ell(x - a_t,u - b_t) = \nrm*{x-a_t}_{\Rx}^2 + \nrm*{x-b_t}_{\Rx}^2.
\end{align*}
Let us adopt the shorthand $\bar{w}_t = (w_t,a_t,b_t)$, and  $\matwbar
= (w_{1:T},a_{1:T},b_{1:T})$. \pref{thm:pistar_form_general}---proven
in \pref{app:technical}---shows that if we define
\begin{equation}
      \label{eq:qstar_general_two}
       \qstar_t(\wbarr[t]) = \Sigma_t^{-1}\prn*{ - R_u b_t + B^{\trn}\sum_{i=t}^{T-1}\prn*{\prod_{j=t+1}^{i}\Aclt[j]^{\trn}}P_{i+1}w_i  + B^{\trn}\sum_{i=t+1}^{T-1}\prn*{\prod_{j=t+1}^{i-1}\Aclt[j]^{\trn}}( K_i^\top \Ru b_i - \Rx a_i)},
     \end{equation}
     where $K_t$, $\Sigma_t$, and so on are given by the Riccati
     Recursion (\pref{def:dp}), then the optimal unconstrained controller is given
    by $\pistar_t(x;\matwbar) = -\Kt{}x - \qstar_t(\wbarr)$. Retracing
    our steps from the special case without moving targets, we have
    the following generalization of \pref{lem:advstar}.
     \begin{lemma}[Advantages for Moving
       Targets]\label{lem:advstar_movement} Consider a policy
       $\pilearn_t(x)$ of the form $\pilearn_{t}(x) = -K_{t} x_t -
       q_t(\matwbar)$. For all $x$, we have
  \begin{align*}
    \advstar_{t}(\pilearn_{t}(x);x,\matwbar) = \|q_{t}(\matwbar) - \qstar_{t}(\wbarr[t]) \|_{\Sigma_{t}}^2,
  \end{align*}
  where $ \qstar_{t}(\wbarr[t])$ is given by \pref{eq:qstar_general_two}.
  \end{lemma}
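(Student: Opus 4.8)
The plan is to mirror the proof of \pref{lem:advstar} essentially verbatim, substituting the closed form of the optimal policy with moving targets (\pref{thm:pistar_form_general}) for the noise-only version (\pref{thm:pistar_form}). First I would record the two facts we need from \pref{thm:pistar_form_general}: (i) the optimal controller has the form $\pistar_t(x;\matwbar) = -\Kt x - \qstar_t(\wbarr)$ with $\qstar_t$ given by \pref{eq:qstar_general_two}; and (ii) the optimal value function has the form $\Vstar_t(x;\matwbar) = \nrm*{x}_{P_t}^2 + 2\tri*{x,g_t} + f_t$, where $g_t = g_t(\wbarr)$ and $f_t = f_t(\wbarr)$ do not depend on $x$, and $P_t$ is exactly the Riccati matrix of \pref{def:dp}. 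The conceptual point behind (ii) is that the moving targets enter the stage cost $\ell_t(x,u) = \nrm*{x-a_t}_{\Rx}^2 + \nrm*{u-b_t}_{\Ru}^2$ only through linear and constant terms once the squares are expanded, so the backward induction defining $\Vstar_t$ never perturbs the state-quadratic block; only the linear and constant pieces change relative to the $a_t=b_t=0$ case.

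Next I would compute the $u$-Hessian of the Q-function. Since $\Qstar_t(x,u;\matwbar) = \ell_t(x,u) + \Vstar_{t+1}(Ax+Bu+w_t;\matwbar)$, plugging in (ii) gives $\grad_u^2 \Qstar_t(x,u;\matwbar) = \Ru + B^{\trn} P_{t+1} B = \Sigt$, independent of $x$, $u$, and the targets --- identical to the noise-only case. As $\Qstar_t(x,\cdot;\matwbar)$ is a strongly convex quadratic with minimizer $\pistar_t(x;\matwbar)$, first-order (Bellman) optimality gives, for every $u$,
\[
\advstar_t(u;x,\matwbar) = \Qstar_t(x,u;\matwbar) - \Qstar_t(x,\pistar_t(x;\matwbar);\matwbar) = \nrm*{u - \pistar_t(x;\matwbar)}_{\Sigt}^2 .
\]

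Finally I would substitute $u = \pilearn_t(x) = -\Kt x_t - q_t(\matwbar)$ and use (i) to write $u - \pistar_t(x;\matwbar) = \bigl(-\Kt x - q_t(\matwbar)\bigr) - \bigl(-\Kt x - \qstar_t(\wbarr)\bigr) = \qstar_t(\wbarr) - q_t(\matwbar)$; the $\Kt x$ terms cancel, which is exactly the role played by using the Riccati controller $\Kt$ in the policy parametrization. Squaring in the $\Sigt$-norm yields the claimed identity. I do not expect a genuine obstacle: the only nontrivial ingredient is the backward induction in \pref{thm:pistar_form_general} establishing that the state-quadratic term of $\Vstar_t$ --- and hence the Q-function Hessian $\Sigt$ --- is unaffected by the targets, and that claim is already handled there; everything in this lemma is a direct transcription of the noise-only argument, so the risk is only in bookkeeping the target terms inside $\qstar_t$.
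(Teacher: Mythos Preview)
Your proposal is correct and follows exactly the approach the paper intends: the paper does not spell out a separate proof for \pref{lem:advstar_movement} but simply says ``Retracing our steps from the special case without moving targets,'' i.e., rerun the proof of \pref{lem:advstar} with \pref{thm:pistar_form_general} in place of \pref{thm:pistar_form}. Your write-up does precisely this, correctly identifying that the targets only affect the linear and constant terms of $\Vstar_{t+1}$ so that $\grad_u^2\Qstar_t = \Sigma_t$ is unchanged, and that the $-K_t x$ terms cancel once you substitute the optimal policy form.
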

  To extend \mainalg to this setting, we define truncated versions of $\qstar$ and $\advstar$ analoguous to to the without-moving-targets case (Eq.~\pref{eq:advhat}). With $\matwbar_t \ldef ((w_t,a_t,b_t),(w_{t-1},a_{t-1},b_{t-1}),\dots)$), we define
  \begin{align}
&\qstar_{\infty;h,\text{move}}(\wbar_{1:h+1}) \ldef \Sigmainf^{-1}\prn*{ - R_u b_t + B^{\trn}\sum_{i=1}^{h+1}(\Aclinf^\trn)^{i-1}\Pinf w_i  + B^{\trn}\sum_{i=2}^{h+1}(\Aclinf^\trn)^{i-2}( \Kinf^\top \Ru b_i - \Rx a_i)} \nonumber,\\
&\advhat_{t;h,\text{move}}( M ;\matwbar_{t+h}) \ldef \|q_t^{M}(\wpast) - \qstar_{\infty;h}(\wbar_{t:t+h+t}) \|_{\Sigma_{\infty}}^2 \label{eq:advhat_move}.
\end{align}
We simply run \mainalg with the new approximate advantage functions $\advhat_{t;h,\text{move}}( M ;\matwbar_{t+h})$ from 
\pref{eq:advhat_move} replacing their without-moving-targets variants from \pref{eq:advhat}. Logarithmic regret follows by the same arguments.
\subsection{Varying quadratic costs \label{app:varying_costs}}
\newcommand{\Rxt}{R_{t;x}}
\newcommand{\Rut}{R_{t;u}}
As a final generalization, we show that our analysis generalizes to
time varying quadratic losses $\loss_t(x,u) = x^\top \Rxt x + u^\top
\Rut u$, provided the cost matrices $\Rxt$ and $\Rut$ are known to the
learner ahead of time. Of course, this extension generalizes further
to ``tracking'' losses of the form $\loss_t(x,u) = \|x -
a_t\|_{\Rxt}^2  + \|u - u_t\|_{\Rut}^2$ as in the previous section.

To perform this generalization, we consider the following variant of
the Riccati Recursion.
\begin{definition}[Time-varying Riccati Recursion]\label{def:dp_time_varying}
     Define $P_{T+1}=0$ and $\bias_{T+1}=0$ and consider the recursion:
        \begin{align*}
    &P_t=  \Rxt + A^{\trn}\PtplT{}A -
      A^{\trn}\Ptpl{}B\Sigma_t^{-1}B^{\trn}\PtplT{}A,\\
      &\SigtT=\Rut+B^{\trn}P_{t+1}B,\\
      &\KtT = \Sigma_t^{-1}B^{\trn}P_{t+1}A,\\
      &\bias_t(\wr) = (A-BK_t)^{\trn}(P_{t+1}w_{t} + \bias_{t+1}(\wr[t+1])).
    \end{align*}
  \end{definition}
  We similarly define closed-loop matrices $\Aclt = (A-BK_t)$. The form of the optimal policy generalizes in the obvious way 
   \begin{align*}
     \pistar_{t}(x;\matw) = -K_{t}x - \qstar_{t}(\wr[t]),\quad\text{and}\quad
     \qstar_{t}(\wr[t]) =
      \sum_{i=t}^{T-1}\Sigma_{t}^{-1}B^{\trn}\prn*{\prod_{j=t+1}^{i}\Aclt[j]^{\trn}}P_{i+1}w_i.
    \end{align*}
    Advantages take the form
     \begin{align*}
       \advstar_{t;T}(\pilearn_{t}(x);x,\matw) = \|q_{t}(\matw) - \qstar_{t}(\wr[t]) \|_{\Sigma_{t}}^2.
  	\end{align*}
  	Note that compared to the fixed-cost setting, we cannot
        leverage the existence of the ``steady-state'' matrix $\Pinf$ here. Nonetheless, we can still truncate the dependence on the
        future by using the vectors
        $\qstar_{t;T}(w_{t},\dots,w_{t+h},0,\dots,0)$ to create
        approximate advantages with finite lookahead, which can then be
        used within the \mainalg scheme.


\section{Limitations of online learning with stationary costs \label{app:olws_limits}}
\newcommand{\ftil}{\widetilde{f}}
\newcommand{\xalg}{x^{\Alg}}
\newcommand{\ualg}{u^{\Alg}}

This section highlights the technical challenges encountered when attempting to apply \olws{} to attain logarithmic regret in online control with adversarial disturbances. In addition to highlighting the advantages (no pun intended) of our \olwax approach, this appendix may serve as an informal tutorial of prior approaches for online control problems. The section is organized as follows:
\begin{enumerate}
	\item \pref{app:olws_overview} gives an intuitive overview of the \olws{} paradigm, explaining that the regret encountered by the learner incurs a `stationarization' cost reflecting the mismatch between the costs induced by the learner's actual visited trajectory and the trajectories considered by the stationary costs.
	\item \pref{app:advantage over OLws} explains that the standard approach for bounding stationarization cost is in terms of a ``movement cost'', which measures the cumulative differences between succesive policies $\pi_t$: informally, $\sum_{t=1}^T\|\pi_t - \pi_{t-1}\|$. Pointing forward to \pref{app:exp_concave_perils}, we explain that this is the major barrier to obtaining logarithmic regret in our setting. In contrast, stationarization/movement costs \emph{do not} arise in our analysis of \olwa{}, leading to our main result.
	\item \pref{app:olws_for_online_control} reviews in greater detail how the \olwsx paradigm has been applied to online control with adversarial disturbances. \pref{app:oco_memory} covers the OCO-with-memory framework due to \cite{anava2015online}. \pref{app:movement_costs_online_control} shows how \cite{agarwal2019online} instantiate this framework for online control with the \dap{} parametrization, detailing the (approximate) stationary cost functions $f_{t;h}(M)$ that arise and the corresponding movement cost in the regret analysis. Examining these loss functions, \pref{app:stat_cost_exp_conc_nut_strong_convex} shows that they are exp-concave but not strongly convex.
	\item  \pref{app:exp_concave_perils} demonstrates that---in the OCO-with-memory framework---the movement cost for sequences of exp-concave but non-strongly convex functions can scale as $\Omega(\sqrt{T})$ in the worst case. This implies that any analysis which uses a black-box reduction to OCO-with-memory with bounded movement cost cannot guarantee rates faster that $\bigoh(\sqrt{T})$.\footnote{Note that this argument does not preclude the possibility that \olws{} algorithms can attain logarithmic regret; rather, it demonstrates the an analysis which passes through movement costs for arbitrary exp-concave stationary costs is insufficient.}
\end{enumerate}
Finally, \pref{app:mdpe} compares our \olwax approach to the \mdpex algorithm proposed by \cite{even2009online}. These two algorithms are superficially similiar, because they both consider  control-theoretic advantages. Despite these similarities, we note that \mdpex is still an instance of \olws, and therefore succumbs to the limitations described above. In addition, we highlight that the analysis of \mdpex is ill-suited to settings with adversarial dynamics, such as the one considered in this work.

\subsection{Overview of OLwS}
\label{app:olws_overview}
In this section, we give an overview of the online learning with stationary costs ($\olwsx$) framework for online control and discuss some challenges associated with using it to attain logarithmic regret for online linear control. In \olws, one defines the stationary costs
\begin{align}
\lambda_t(\pi;\matw) := \ell(x_t^{\pi}(\matw),u_t^{\pi}(\matw)),\label{eq:lookback}
\end{align}
which is the cost suffered that would be suffered at time $t$ had the policy $\pi$ had used at all previous rounds \citep{yadkori2013online,anava2015online,agarwal2019online,simchowitz2020improper}.\footnote{Many works also consider ``steady state'' costs obtained by taking $t \to \infty$ for a given policy \citep{even2009online,abbasi2014tracking,cohen2018online,agarwal2019logarithmic}, but this formulation is ill-posed in our setting due to the adversarial dynamics.} By construction, $\lambda_t(\pi;\matw)$ does not depend on the state of the system. Moreover, if $\pi$ is an executable policy (i.e., $\pi_t(x;\matw)$ depends only on $x$ and $w_{1:t-1}$), then $\lambda_t(\pi;\matw)$ can be determined exactly at time $t$. At each round $t$, $\olws$ selects a policy $\pi^{(t)}$ to minimize regret on the sequence $\lambda_t(\pi;\matw)$, and follows $u_t = \pi_t^{(t)}(x;\matw)$. The total regret is decomposed as
\begin{align}
\Reg(\pialg;\matw,\Pi) =  \Big(\underbrace{\sum_{t=1}^T \ell(x_t^{\Alg}, u_t^{\Alg})-  \lambda_t(\pi^{(t)};\matw) }_{\text{(stationarization cost)}} \Big) +   \Big(\underbrace{\sum_{t=1}^T \lambda_t(\pi^{(t)};\matw)-  \inf_{\pi \in \Pi} \sum_{t=1}^T  \lambda_t(\pi;\matw)}_{(\lambda\text{-regret})} \Big). \label{eq:lookback_reg}
\end{align}
\subsection{Avoiding stationarization cost: Our advantage over OLwS\label{app:advantage over OLws}} \olwsx optimizes stationary costs $\lambda_t(\pi)$, which correspond to the loss suffered by the learner at round $t$ if policy $\pi$ had been played for every time up to $t$. To relate the stationary costs to the learner's cost, the \olwsx proposes the bounding the following movement cost:
\begin{align}
\text{movement cost } := \sum_{t}\|\pi^{(t)} - \pi^{(t-1)}\|, \quad \text{(informal)}, \label{eq:movement_cost}
\end{align}
To our knowledge, all known applications of \olwsx bound the stationarization cost via the movement cost \pref{eq:movement_cost}. When the movement costs are small, the learner's state at time $t$, $x^\Alg_t$, is similar to the states that would be obtained by selecting $\pi^{(t)}$ at all time $s < t$, namely $x^{\pi}_t$. \pref{app:olws_limits} explains how the cost \pref{eq:movement_cost} arises  in more detail.
While standard online learning algorithms ensure $\sqrt{T}$-movement cost, online gradient descent (\ogd) has the property that if $\lambda_t$ is strongly convex (in a suitable parametrization), the movement cost is $\log T$ \citep{anava2015online}. Since \ogdx also ensures logarithmic regret on the $\lambda_t$-sequence, the algorithm ensures logarithmic regret overall.

The natural stationary costs that arise in our problem are exp-concave \citep{hazan2011newtron}, a property that is stronger than convexity but weaker than strong convexity. Exp-concave functions $\lambda_t$ are strongly convex in the local geometry induced by $(\nabla \lambda_t)(\nabla \lambda_t)^\top $, but not necessarily in other directions. This is sufficient for logarithmic regret, but as we explain in \pref{app:exp_concave_perils}, known methods cannot leverage this property to ensure logarithmic bounds on the relevant movement cost. 
Herein lies the advantage of $\olwa$: by considering the \emph{future} costs of an action (by way of the advantage-proxy $\advhat$) rather than the stationary costs, we avoid the technical challenge of bounding the movement cost in the elusive exp-concave regime.

\subsection{Applying OLwS to online control \label{app:olws_for_online_control}}
\subsubsection{Policy regret and online convex optimization with memory \label{app:oco_memory}}
A useful instantiation of the \olws{} paradigm is the policy-regret setting introduced by  \cite{arora2012online}, which considers stationary costs with finite memory.  This work considers online learning with loss functions $f_t(z_t,\dots,,z_{t-h})$, and defines policy regret for the iterate sequence $\{z_t\}_{t=1}^{T}$ as $\sum_{t}f(z_{t},\dots,z_{t-h}) - \inf_z \sum_{t}f_t(z,\dots,z)$. Algorithms for this setting work with a  \emph{unary} loss $\widetilde{f}_t(z) = z \mapsto f_t(z,\dots,z)$,which can be viewed as a special case of the stationary cost $\lambda_t$ defined above where $z \in \cC$ encodes a policy and $f_t(z,\dots,z)$ is the loss suffered if $z$ had been selected throughout the game. \cite{arora2012online} take this approach in an expert setting and \cite{anava2015online} consider a setting where $\ftil$ is an arbitrary convex loss, which they call \emph{Online Convex Optimization with Memory}. In this setting, the stationarization cost arises via the decomosition
\begin{align}
\text{(policy regret)} =  \Big(\underbrace{\sum_{t=1}^T f_t(z_t,\dots,z_{t-h}) - \ftil_t(z_t) }_{\text{(stationarization cost)}} \Big) +   \Big(\underbrace{\sum_{t=1}^T \ftil_t(z_t) -  \inf_{z \in \cC} \sum_{t=1}^T  \ftil_t(z)}_{(\lambda\text{-regret})} \Big). \label{eq:lookback_reg_policy}
\end{align}
When $f_t$ are Lipschitz in all arguments, \cite{anava2015online} bound the stationarization cost in terms of movement cost for the iterates. They show
\begin{align*}
\sum_{t=1}^T f_t(z_t,\dots,z_{t-h}) - \ftil_t(z_t)  \le \sum_{t=1}^T |f_t(z_t,\dots,z_{t-h}) - \ftil_t(z_t)| \le \bigoh\prn*{h^2 L}\sum_{t=1}^T \|z_t - z_{t-1}\|,
\end{align*}
where $L$ is an appropriate Lipschitz constant. Note that this inequality formalizes \pref{eq:movement_cost} for this setting.

\cite{anava2015online} demonstrated that many popular online convex optimization algorithms naturally produce slow-moving iterates, leading to policy regret bounds in \pref{eq:lookback_reg_policy}. In particular, they show that applying online gradient descent on the unary losses leads to $\poly(h) \cdot \sqrt{T}$-policy regret when $\widetilde{f}_t$ are convex and Lipschitz, and $\frac{\poly(h)}{\alpha} \cdot \log T$-policy regret when $\widetilde{f}_t$ are $\alpha$-strongly convex. Notably, \cite{anava2015online} do not show that logarithmic regret is attainable for the more general family of exp-concave losses, which are more natural for the setting in this paper.

\subsubsection{ OCO with memory for online control\label{app:movement_costs_online_control}}
Now, following \cite{agarwal2019online}, we apply OCO with memory to the linear control setting using the \dapx parametrization (\pref{def:dap}), where $\pi^{(t)}$ is given by $\pi^{(M_t)}$, for a matrix $M \in \cM_0 = \cM(m,R,\gamma)$ selected at time $t$. We will specialize the \olwsx{} decomposition \pref{eq:lookback_reg_policy} and explain how to bound each term.

\cite{agarwal2019online} show stationary costs $\lambda_t$ in \pref{eq:lookback} can be approximated up to
arbitrarily accuracy by functions $\ftil_{t;h}(M)$, which depend only on the most recent $m+h$ disturbances $w_{t-(m+h):t}$, and where $h = \mathrm{poly}(\log T, \frac{1}{1-\gamma})$. They also show that the suffered loss $\loss(\xalg_t,\ualg_t)$ can be approximate via $f_{t;h}(M_{t-h:t})$, which also depends on recent disturbances, and which specializes to $\widetilde{f}_{t:h}(M)$ when $M_{t-h:t} = (M,\dots,M)$. Precisely, for $M \in \cM$, define the inputs.
\begin{align*}
u_{s}(M;\matw) = \sum_{i=1}^m M^{[i]} w_{s-i}
\end{align*}
Then the functions $f_{t:h}(M_{t:t-h})$ and $\widetilde{f}_{t:h}(M)$ take the form
\begin{align}
f_{t;h}(M_{t-h:t}) &:= \loss\prn*{\alpha_t(\matw) + \sum_{i=1}^m \Psi_i u_{t-i}(M_{t-i};\matw), u_{t}(M_t;\matw)- \Kinf \sum_{i=1}^m \Psi_i u_{t-i}(M_{t-i};\matw) }, \nonumber
\\
\ftil_{t;h}(M) &:=\loss\prn*{\alpha_t(\matw) + \sum_{i=1}^m \Psi_i u_{t-i}(M;\matw), u_{t}(M;\matw)- \Kinf \sum_{i=1}^m \Psi_i u_{t-i}(M;\matw) },\label{eq:stat_control_cost}
\end{align}
where $\alpha_t(\matw)$ is a function of $w_{1:t}$ and $A,B$ but \emph{not} of the learner's inputs, and $\Psi_i = (A-B\Kinf)^{i-1}B$. With this parameterization, the regret decomposition for OCO with memory takes the form
\begin{align}
\dapReg(\pialg;\matw) =  \Big(\underbrace{\sum_{t=1}^T f_{t;h}(M_{t-h:t})  - \ftil_{t;h}(M_t) }_{\text{(stationarization cost)}} \Big) +
\Big(\underbrace{\sum_{t=1}^T \ftil_{t;h}(M_t)-  \inf_{\pi \in \Mst} \sum_{t=1}^T  \ftil_{t;h}(M)}_{(\lambda\text{-regret})} \Big) + \bigoht\prn*{1}. \label{eq:lookback_reg_dap}
\end{align}
In this setting $f_{t;h}(\cdot)$ is Lipschitz so---following arguments from \cite{anava2015online}---we have
\begin{align*}
\text{(stationarization cost)} \le h^2 \poly(m,R,(1-\gamma)^{-1})\cdot \sum_{t=1}^T \|M_{t} - M_{t-1}\|_{\mathrm{F}},
\end{align*}
where the right-hand side is a movement cost for the iterates (formalizing \pref{eq:movement_cost}), and where we recall $h$ is the memory horizon, $m,R,\gamma$ are the parameters defining the set of \dap{} controllers $\cM_0$, and $\|M_{t} - M_{t-1}\|_{\mathrm{F}} = \sqrt{\sum_{i \ge 0}\|M_{t}^{[i]} - M_{t-1}^{[i]}\|_{\mathrm{F}}^2}$ (which induces the standard Euclidean geometry for online gradient descent). 

In general, the bound on the movement cost will depend on the choice of regret minimization algorithm. Many natural algorithms ensure bounds on the movement cost which are on the same order as their bounds on regret. For example, exponential weights and online gradient descent ensure $\sqrt{T}$-bounds \citep{even2009online,yu2009markov,anava2015online}, and for strongly convex losses, FTL and online gradient descent ensure logarithmic movement \citep{abbasi2014tracking,anava2015online}.

\subsubsection{The stationary costs for DAP are exp-concave but not strongly convex\label{app:stat_cost_exp_conc_nut_strong_convex}}
For the \dapx parametrization, the functions that naturally arise are exp-concave, but not necessarily strongly convex. To see this, consider the loss $\loss(x,u) = \|x\|^2 + \|u\|^2$. We obseve that the stationary costs considered by \cite{agarwal2019online}, made explicit in \pref{eq:stat_control_cost}, are the sum of two quadratic functions of the form considered in \pref{lem:exp_concave_quadratic}, and are thus exp-concave.\footnote{The sum of two $\alpha$-exp-concave functions is $\frac{\alpha}{2}$-exp-concave. For a proof, observe that $(\nabla (f+g))^\top (\nabla (f+g))^\top \preceq 2(\nabla f)(\nabla f)^\top  + 2(\nabla g)(\nabla g)^\top$. Hence, if $f$ and $g$ are $\alpha$-exp concave, we have $\nabla^2 (f+g) \succeq \alpha (\nabla f)(\nabla f)^\top  + \alpha(\nabla g)(\nabla g)^{\top}\succeq \frac{\alpha}{2}(\nabla (f+g))^\top (\nabla (f+g))^\top$.} However, $\ftil_{t;h}(M)$ is \emph{not} strongly convex in general.  For example, if the noise sequence is \emph{constant}, say $w_t = w_{t-1} = \dots = w_1$, then $\nabla u_t(M;\matw)$ is identical for all $t$ and thus $\nabla^2 \widetilde{f}_{t;h}(M)$ is a rank-one matrix.

\subsection{Movement costs in general exp-concave online learning\label{app:exp_concave_perils}}
In this section, we explain the challenge of achieving low-movement cost in the OCO-with-memory with framework, which elucidates the broader challenge of relating stationary costs to regret in \olwsx. We give an informal sketch for an exp-concave OCO-with-memory setting in which the online Newton step algorithm (\pref{alg:ons}) fails to achieve logarithmic regret. Consider a simple class of functions with scalar domain and length-$1$ memory:
\begin{align*}
f_t(z_1,z_2) = (1 - (w_t z_1 + w_{t-1}z_2))^2, \quad \ftil_t(z) = f_t(z,z),
\end{align*}
where $(w_{t})$ are parameters chosen by the adversary.
We use the constraint set $z \in \cC:= [-1/5,1/5]$. Policy regret (paralleling \pref{eq:lookback_reg_dap}) is given by
\begin{align*}
\text{(policy regret)} &= \sum_{t=1}^T f_t(z_t,z_{t-1}) - \inf_{z} \sum_{t=1}^T \ftil_t(z) \\
&= \underbrace{\left(\sum_{t=1}^T f_t(z_t,z_{t-1}) - \ftil_t(z_t) \right)}_{\text{(stationarization cost)}}  + \underbrace{\left(\sum_{t=1}^T \ftil_t(z_t)   - \inf_{z} \sum_{t=1}^T \ftil_t(z) \right)}_{\text{(}\lambda-\text{regret)}}.
\end{align*}
We now construct a sequence of loss functions where the $\lambda$-regret for \ons{} is logarithmic, but where standard upper bounds on stationary cost can grow as $\Omega(\sqrt{T})$. Consider the sequence $w_t = (-1)^t + \frac{\mu}{2}$, where $\mu = 1/\sqrt{T}$. We see that $\ftil_t(z) = (1 - \mu z)^2$. We remark that this function is only $\mu^2 = 1/T$-strongly convex, so that the guarantees for strongly convex online gradient descent are vacuous \cite{hazan2016introduction}, necessitating the use of \ons.

Let us see what happens if we try to leverage exp-concavity. From \pref{lem:exp_concave_quadratic}, $\ftil_t(z)$ are $\frac{1}{4}$-exp-concave on the set $\cC$. Hence, if we run \ons{} (\pref{alg:ons}) with an appropriate learning rate, $\lambda$-regret scales logarithmically \citep{hazan2016introduction}:
\begin{align*}
\sum_{t=1}^T \ftil_t(z_t) - \min_{z \in \cC} \sum_{t=1}^T \ftil_t(z) \le \bigoh\prn*{\log T}.
\end{align*}
Let us now turn to the stationarization cost, $\sum_{t=1}^T f_t(z_t,z_{t-1}) - \ftil_{t}(z_t)$. The approach of \cite{anava2015online}, is to bound the per-step errors, $|f_t(z_t,z_{t-1}) - \ftil_{t}(z_t)|$. We can directly see that 
\begin{align*}
f_t(z_t,z_{t-1}) - \ftil_{t}(z_t) &= (1 - w_t z_t - w_{t-1}z_t)^2 - (1 - w_t z_t - w_{t-1}z_{t-1})^2\\
&= -2w_{t-1}(1 - w_t z_t)^\top (z_t - z_{t-1}) + w_{t-1}^2(z_{t-1}+z_{t})(z_{t}-z_{t-1})^2\\
&= (-2w_{t-1}(1 - w_t z_t)^\top + w_{t-1}^2(z_{t-1}+z_{t})) (z_t - z_{t-1}).
\end{align*}
For $\mu$ sufficiently small and $z \in \cC$, we can check that $|(-2w_{t-1}(1 - w_t z_t)^\top + w_{t-1}^2(z_{t-1}+z_{t}))| \ge \frac{1}{16}$, so that
\begin{align*}
|f_t(z_t,z_{t-1}) - \ftil_{t}(z_t)| \ge \frac{|z_t - z_{t-1}|}{16}.
\end{align*}
Thus, we have
\begin{align*}
\sum_{t=1}^T |f_t(z_t,z_{t-1}) - \ftil_{t}(z_t)| \ge \frac{1}{16}\cdot \text{(movement cost)}, \quad \text{ where }\quad\text{(movement cost)} = \frac{1}{6}\sum_{t=1}^T|z_{t}-z_{t-1}|.
\end{align*}
We now show that this movement cost is large. For simplicity, we keep our discussion informal to avoid navigating the projection step in \ons. Without projections, we have
\begin{align*}
|z_t - z_{t-1}| = \left|\epsilon + \sum_{s=1}^{t-1} \nabla^2 \ftil_s(z_s)\right|^{-1}(\nabla \ftil_{t-1}(z_{t-1})).
\end{align*}
Observe that for each $z \in \cC$, $\nabla^2 \ftil_s(z) = \mu^2 = 1/T$, so that  we have $\epsilon + \sum_{s=1}^{t-1} \nabla^2 \ftil_s(z_s) = (1+\epsilon)$. On the other hand, for $z \in \cC$, we can lower bound $|\nabla \ftil_{t-1}(z)| \ge \frac{\mu}{2}  = \frac{1}{2\sqrt{T}}$. Hence, 
\begin{align*}
\text{(movement cost)} = \frac{1}{16}\sum_{t=1}^T|z_{t}-z_{t-1}| \ge \frac{\sqrt{T}}{32(1+\epsilon)}.
\end{align*}
Here, we note that the standard implementation perscribes $\epsilon$ to be constant, giving us $\Omega(\sqrt{T})$ movement. Moreover, increasing $\epsilon$ will degrade the corresponding regret bound, preventing logarithmic combined regret.
Note that increasing $\epsilon$ to $1/T^{1/4}$ will partially mitigate the movement cost, but at the expense of increasing the regret on the $\ftil_t$ sequence.

\subsection{Comparison with MDP-E \label{app:mdpe}}
\mdpex \citep{even2009online} is an instantiation of \olwsx for MDPs with known non-adversarial dynamics and time varying adversarial losses $\loss_t$. In this setting the stationary costs $\lambda_t(\pi)$ represent the long-term costs of a policy $\pi$ under the loss $\loss_t$ (if one prefers, the loss can be treated as fixed, and $w_t$ can encode loss information). To achieve low regret on the $\lambda_t$-sequence, \mdpex maintains policy iterates $\{\pi^{(t)}_x\}$ for all states $x$, and selects its action according to the policy for the corresponding current state: \[u_t^{\Alg} \gets \pi^{(t)}_{x^{\mathrm{alg}}_t}(x_t^{\Alg}).\] The policy sequence $\pi^{(t)}_x$ is selected to minimize regret on a certain $Q$-function: $\lambda_{t,x}(\pi) : \pi \mapsto Q^{\pi}(x,\pi(x))$ (here, policies and Q-functions are regarded as stationary). Under the assumption that the dynamics under benchmark policies are also stationary, achieving low regret on each $\{\lambda_{t,x}\}$-sequence simultaneously for all $x$ ensures low $\lambda$-regret (in the sense of Eq.\eqref{eq:lookback_reg}) over the trajectory $x_t^\Alg$.\footnote{See the proof of \citet[Theorem 5]{even2009online}, which uses that the induced state distribution does not change with $t$.} 
As a consequence, \mdpex is ill-suited to settings with adversarially changing dynamics. Since \olwax considers Q-functions and advantages defined with respect to an \emph{fixed} policy $\pistar$, it does not require benchmark policies to have stationary dynamics (which is important, since our adversarial disturbance setting does not have stationary dynamics).

Moreover, like the stationary costs, the functions $\lambda_{t,x}(\pi)$ describe long-term performance under $\pi$, and still need to be related to the learner's realized trajectory, typically via a bound on the movement cost of the policies. As described earlier, the analysis of \olwax does not require bounding the movement cost.


\section{Basic technical results}
\label{app:technical}

\subsection{Structural results for LQR}
\label{app:lqr_structural}
In this section we provide a number of useful structural properties
for the optimal controller for linear dynamical systems with quadratic
costs and arbitrary bounded disturbances. Even though
the results in this section concern the optimal finite-horizon
controllers, we prove bounds on various regularity properties for the
controllers that depend only on control-theoretic parameters for the
optimal infinite-horizon controller in the noiseless setting, which is
an intrinsic parameter of the dynamical system. All proofs are
deferred to \pref{app:lqr_proofs}.

For the results in this section and the remainder of the appendix we use that $\Aclinf$ is
$(\kappast,\gammast)$-strongly stable.
  \begin{lemma} 
          \label{lem:strongly_stable}
  Let $\gammast=\nrm{I-\Pinf^{-1/2}\Rx\Pinf^{-1/2}}_{\op}^{1/2}$, and ${\kappast=\nrm{\Pinf^{1/2}}_{\op}\nrm{\Pinf^{-1/2}}_{\op}}$. 
  Then the closed loop system $\Aclinf$ is $(\kappast,\gammast)$-strongly stable.
\end{lemma}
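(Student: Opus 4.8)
The plan is to exhibit the witnesses $H$ and $L$ in \pref{def:ss} explicitly as a change of basis by $\Pinf^{1/2}$, using the Lyapunov inequality that the \dare{} solution satisfies. First I would record the standard consequence of \pref{eq:dare}: substituting the definitions of $\Kinf$ and $\Siginf$ and completing the square gives
\[
\Aclinf^{\trn}\Pinf\Aclinf = \Pinf - \Rx - \Kinf^{\trn}\Ru\Kinf \psdleq \Pinf - \Rx.
\]
Moreover, since $\Rx \psdgt 0$ (implicit in the finiteness of $\betast$), the same rearrangement shows $\Pinf \psdgeq \Rx \psdgt 0$, so $\Pinf$ is positive definite and $\Pinf^{1/2},\Pinf^{-1/2}$ are well defined. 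This Lyapunov inequality is the only nontrivial ingredient; it is elementary to verify directly, or one can cite the corresponding structural lemma in \pref{app:lqr_proofs}.

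Next I would set $H \ldef \Pinf^{-1/2}$ and $L \ldef \Pinf^{1/2}\Aclinf\Pinf^{-1/2}$, so that $\Aclinf = HLH^{-1}$ by construction. The condition-number bound is immediate: $\nrm{H}_{\op}\nrm{H^{-1}}_{\op} = \nrm{\Pinf^{-1/2}}_{\op}\nrm{\Pinf^{1/2}}_{\op} = \kappast$. For the norm of $L$, I would use $\nrm{L}_{\op}^{2} = \nrm{L^{\trn}L}_{\op} = \nrm{\Pinf^{-1/2}\Aclinf^{\trn}\Pinf\Aclinf\Pinf^{-1/2}}_{\op}$; conjugating the Lyapunov inequality by $\Pinf^{-1/2}$ (which preserves the PSD order) yields $\Pinf^{-1/2}\Aclinf^{\trn}\Pinf\Aclinf\Pinf^{-1/2} \psdleq I - \Pinf^{-1/2}\Rx\Pinf^{-1/2}$, and since both sides are PSD and operator norm is monotone on PSD matrices, $\nrm{L}_{\op}^{2} \le \nrm{I - \Pinf^{-1/2}\Rx\Pinf^{-1/2}}_{\op} = \gammast^{2}$. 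This is exactly $\nrm{L}_{\op} \le \gammast$, so $(H,L)$ witness $(\kappast,\gammast)$-strong stability of $\Aclinf$.

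Finally, for completeness I would note $\gammast < 1$, so that $\Aclinf$ is genuinely stable: the chain $0 \psdleq L^{\trn}L \psdleq I - \Pinf^{-1/2}\Rx\Pinf^{-1/2}$ forces all eigenvalues of $\Pinf^{-1/2}\Rx\Pinf^{-1/2}$ into $[0,1]$, hence $\nrm{I - \Pinf^{-1/2}\Rx\Pinf^{-1/2}}_{\op} = 1 - \eigmin(\Pinf^{-1/2}\Rx\Pinf^{-1/2})$, which is strictly below $1$ because $\Rx \psdgt 0$ and $\Pinf \psdgt 0$ give $\eigmin(\Pinf^{-1/2}\Rx\Pinf^{-1/2}) > 0$. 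The main obstacle, such as it is, is establishing the Lyapunov inequality from the \dare{} — the single step worth spelling out in detail — after which the result is a one-line similarity transform by $\Pinf^{1/2}$ together with routine bookkeeping with the positive-semidefinite order.
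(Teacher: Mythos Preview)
Your proposal is correct and follows essentially the same approach as the paper: set $H=\Pinf^{-1/2}$, $L=\Pinf^{1/2}\Aclinf\Pinf^{-1/2}$, and use the Lyapunov relation for the \dare{} solution to bound $\nrm{L}_{\op}$. Your derivation is in fact slightly more careful than the paper's, which states the relation as the equality $\Aclinf^{\trn}\Pinf\Aclinf-\Pinf+\Rx=0$ (omitting the $\Kinf^{\trn}\Ru\Kinf$ term you correctly retain); since only the inequality $\Aclinf^{\trn}\Pinf\Aclinf\psdleq\Pinf-\Rx$ is needed, both arguments go through.
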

\begin{dproof}[\pfref{lem:strongly_stable}]
  Recall \citep{bertsekas2005dynamic} that the infinite-horizon Lyapunov matrix $\Pinf$ satisfies the
equation
  \[
    \Aclinf^{\trn}\Pinf\Aclinf- \Pinf + \Rx = 0.
  \]
  Since $\Pinf\psdgt{}0$, if we set $H=\Pinf^{-1/2}$ and
  $L=\Pinf^{1/2}\Aclinf\Pinf^{-1/2}$,  we deduce from this expression that
    \[
      L^{\trn}L - I + \Pinf^{-1/2}\Rx\Pinf^{-1/2}=0,
    \]
    and in particular
    $\nrm*{L}_{\op}^{2}\leq{}\nrm{I-\Pinf^{-1/2}\Rx\Pinf^{-1/2}}_{\op}<1$.
  \end{dproof}
 \begin{lemma}
   \label{lem:stable_bound}
   Let $A$ be $(\kappa,\gamma)$-strongly stable. Then for any $i\geq{}0$,
   \[
     \nrm*{A^{i}}_{\op}\leq{}\kappa\gamma^{i}.
   \]
 \end{lemma}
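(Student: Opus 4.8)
The plan is to unpack \pref{def:ss} directly: since $A$ is $(\kappa,\gamma)$-strongly stable, there exist matrices $H,L$ with $A = HLH^{-1}$, $\nrm*{H}_{\op}\nrm*{H^{-1}}_{\op}\le\kappa$, and $\nrm*{L}_{\op}\le\gamma$. The key observation is that powers of $A$ telescope through the similarity transform: $A^{i} = (HLH^{-1})^{i} = HL^{i}H^{-1}$, because each adjacent pair $H^{-1}H$ cancels. This reduces bounding $\nrm*{A^{i}}_{\op}$ to bounding $\nrm*{L^{i}}_{\op}$, which is handled by submultiplicativity of the operator norm: $\nrm*{L^{i}}_{\op}\le\nrm*{L}_{\op}^{i}\le\gamma^{i}$.

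Combining these, submultiplicativity gives
\[
\nrm*{A^{i}}_{\op} = \nrm*{HL^{i}H^{-1}}_{\op} \le \nrm*{H}_{\op}\,\nrm*{L^{i}}_{\op}\,\nrm*{H^{-1}}_{\op} \le \nrm*{H}_{\op}\,\nrm*{H^{-1}}_{\op}\,\gamma^{i} \le \kappa\gamma^{i},
\]
which is the claimed bound. The only edge case to mention is $i=0$: here $A^{0}=I$ and the bound reads $1\le\kappa$, which holds since $\kappa\ge\nrm*{HH^{-1}}_{\op}=\nrm*{I}_{\op}=1$ by submultiplicativity (so strong stability forces $\kappa\ge 1$); for $i\ge 1$ the displayed chain applies verbatim.

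I do not anticipate any real obstacle here — the lemma is a one-line consequence of the definition, and the only thing requiring a moment's care is making sure the telescoping $(HLH^{-1})^{i}=HL^{i}H^{-1}$ is stated explicitly and that the degenerate exponent $i=0$ is not overlooked. No control-theoretic input beyond \pref{def:ss} is needed.
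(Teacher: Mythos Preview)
Your proof is correct and matches the paper's approach exactly: both invoke the similarity decomposition $A = HLH^{-1}$ from \pref{def:ss}, telescope to $A^{i}=HL^{i}H^{-1}$, and apply submultiplicativity of the operator norm. The paper's version is simply the one-line compression of what you wrote.
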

   \begin{dproof}[\pfref{lem:stable_bound}]
    Let $A=HLH^{-1}$, where $H$ and $L$ witness the strong stability
    property. Then we have
    \[
      \nrm*{A^{i}}_{\op}\leq{}\kappa\nrm*{L^{i}}_{\op}\leq{}\kappa\gamma^{i}.
    \]
  \end{dproof}
  \paragraph{Additional notation.}
  For the remainder of the appendix we adopt the following
  notation.   We let $\Hclinf$ and $\Lclinf$ denote the matrices that
  witness strong stability of $\Aclinf$, so that $\Aclinf=\Hclinf\Lclinf\Hclinf^{-1}$ and we have
$\nrm*{\Hclinf}_{\op}\cdot\nrm{\Hclinf^{-1}}_{\op}\leq\kappast$ and
$\nrm{\Lclinf}_{\op}\leq\gammast<1$. We also define $\Lclt = \Hclinf^{-1}\Aclt{}\Hclinf$, where we recall
that $(\Aclt)_{t=1}^{T}$ denote the closed-loop dynamics arising from
the Riccati recursion. We define
$\Aclt[i\to{}t]=\Aclt[t]\Aclt[t-1]\cdots\Aclt[i+1]$, with the
convention that $\Aclt[t\to{}t]=I$. Finally, we define $\gammab=\frac{1}{2}(1+\gammast)$,
$\Deltast=4\cdot{}\betast{}\Psist^{2}\Gammast\log(2\Psist\Gammast\kappast(1-\gammast)^{-1})$,
and $\Tstab=T-\Deltast.$

  \subsubsection{Properties of the optimal policy}
  \label{app:optimal}
  Recall that \pref{thm:pistar_form} characterizes the optimal
  unconstrained policy given full knowledge of $\matw$. Rather than
  directly proving this theorem, we state and prove a more general
  version, \pref{thm:pistar_form_general}, which generalizes the characterization to the setting of
\pref{app:tracking_gen} in which losses include adversarially chosen
targets. The optimal policy for this setting is defined as follows.
\begin{restatable}[Optimal policy, Q-function, advantage]{definition}{optimalgeneral} \label{def:optimal_general}
Assume $a_T, b_T = 0$, and recall that $\matwbar = (w_{1:T},a_{1:T},b_{1:T})$.
  Define $\Qstar_T(x,u;\matwbar)=\ls(x,u)$,  $\pistar_t(x;\matwbar) =
  \min_u \Qstar_T(x,u) = 0$, and $\Vstar_T(x;\matwbar)=\ls(x,0)$. For
  each $t<T$ define
  \begin{align*}
    &\Qstar_{t}(x,u;\matwbar) = \nrm*{x - a_t}_{Q}^{2} + \nrm*{u - b_t}_{R}^{2} +
      \Vstar_{t+1}(Ax+Bu+w_t; \matwbar),\\
      &\pistar_t(x;\matwbar)=\argmin_{u\in\bbR^{\dimu}}\Qstar_{t}(x,u;\matwbar),\\
    &\Vstar_{t}(x;\matwbar) = \min_{u\in\bbR^{d}}\Qstar_{t}(x,u;\matwbar) = \Qstar_{t}(x,\pistar_t(x;\matwbar);\matwbar).
  \end{align*}
  Finally, define $\adv_t^{\star}(u;x,\matwbar) := \Qstar_t(x,u;\matwbar) - \Qstar_t(x,\pistar_t(x;\matwbar);\matwbar)$.
  \end{restatable}

\begin{theorem}[Generalization of \pref{thm:pistar_form}]\label{thm:pistar_form_general}
    Set $\wbarr[t] = (\wr[t],\ar[t],\br[t])$. For each time $t$, we have $\Vstar_t(x;\matwbar) = \nrm*{x}_{P_t}^{2} + 2\tri*{x,\bias_t(\wbarr[t])}
      + f_t(\wbarr[t])$,
      where $f_t$ is a function that does not depend on the state $x$
      and $\bias_t$ is defined recursively with $\bias_{T+1}=0$ and 
      \[
        \bias_t(\bar{w}_{t:T}) =
        (A-BK_t)^{\trn}(P_{t+1}w_t+\bias_{t+1}(\bar{w}_{t+1:T})) +
        K_t^{\trn}\Ru{}b_t - \Rx{}a_t.
      \]
      Moreover, if we define
    \begin{equation}
      \label{eq:qstar_general}
       \qstar_t(\wbarr[t]) = \Sigma_t^{-1}\prn*{ - R_u b_t + B^{\trn}\sum_{i=t}^{T-1}\prn*{\prod_{j=t+1}^{i}\Aclt[j]^{\trn}} P_{i+1}w_i  + B^{\trn}\sum_{i=t+1}^{T-1}\prn*{\prod_{j=t+1}^{i-1}\Aclt[j]^{\trn}}( K_i^\top \Ru b_i - \Rx a_i)},
    \end{equation}
    then the optimal controller is given
    by $\pistar_t(x;\matwbar) = -\Kt{}x - \qstar_t(\wbarr)$. 
 \end{theorem}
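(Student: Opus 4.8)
I would argue by backward induction on $t$ through the dynamic program of \pref{def:optimal_general}, carrying the joint hypothesis that $\Vstar_{t+1}(x;\matwbar)=\|x\|_{P_{t+1}}^2+2\langle x,\bias_{t+1}(\bar w_{t+1:T})\rangle+f_{t+1}(\bar w_{t+1:T})$, with $P_{t+1}$ the Riccati iterate of \pref{def:dp}, $\bias_{t+1}$ obeying the stated recursion, and $f_{t+1}$ state-independent. The base case $t=T$ is immediate: since $a_T=b_T=0$ we get $\Qstar_T(x,u)=\|x\|_{\Rx}^2+\|u\|_{\Ru}^2$, hence $\pistar_T(x)=0$ and $\Vstar_T(x)=\|x\|_{\Rx}^2$, which matches $P_T=\Rx$, $\bias_T=0$, $f_T=0$ (and $K_T=0$, $\qstar_T=0$, consistent with $P_{T+1}=\bias_{T+1}=0$).

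For the inductive step I would substitute the hypothesis into $\Qstar_t(x,u)=\|x-a_t\|_{\Rx}^2+\|u-b_t\|_{\Ru}^2+\Vstar_{t+1}(Ax+Bu+w_t;\matwbar)$ and read off that, as a function of $u$, it is a strongly convex quadratic with Hessian $2(\Ru+B^\top P_{t+1}B)=2\Sigma_t$ and linear part $2u^\top\bigl(-\Ru b_t+B^\top P_{t+1}(Ax+w_t)+B^\top\bias_{t+1}\bigr)$. The first-order condition then yields the unique minimizer
\[
\pistar_t(x;\matwbar)=-\Sigma_t^{-1}B^\top P_{t+1}A\,x+\Sigma_t^{-1}\bigl(\Ru b_t-B^\top P_{t+1}w_t-B^\top\bias_{t+1}\bigr),
\]
which is exactly $-K_t x-\qstar_t(\bar w_{t:T})$ with $K_t$ as in \pref{def:dp} and the intermediate identity $\qstar_t=\Sigma_t^{-1}\bigl(-\Ru b_t+B^\top(P_{t+1}w_t+\bias_{t+1})\bigr)$. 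Substituting back (i.e.\ completing the square in $u$) gives $\Vstar_t(x)=\Qstar_t(x,\pistar_t(x))$, once more a quadratic in $x$; its quadratic coefficient collapses to $\Rx+A^\top P_{t+1}A-A^\top P_{t+1}B\Sigma_t^{-1}B^\top P_{t+1}A=P_t$ by the definition of the Riccati map, its affine-in-$x$ coefficient — after using $\Aclt=A-BK_t$ and $K_t=\Sigma_t^{-1}B^\top P_{t+1}A$, so that $A^\top-A^\top P_{t+1}B\Sigma_t^{-1}B^\top=\Aclt^\top$ and $A^\top P_{t+1}B\Sigma_t^{-1}\Ru=K_t^\top\Ru$ — reduces to $2\langle x,\Aclt^\top(P_{t+1}w_t+\bias_{t+1})+K_t^\top\Ru b_t-\Rx a_t\rangle=2\langle x,\bias_t\rangle$, and the remaining terms, not involving $x$, define $f_t(\bar w_{t:T})$. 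This closes the induction and establishes the form of $\Vstar_t$ together with the recursive description of $\pistar$.

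Finally, to obtain the closed form \pref{eq:qstar_general} I would unroll the recursion $\bias_s=\Aclt[s]^\top P_{s+1}w_s+K_s^\top\Ru b_s-\Rx a_s+\Aclt[s]^\top\bias_{s+1}$ from $s=t+1$ down, using $\bias_{T+1}=0$; this telescopes to $\bias_{t+1}=\sum_{i=t+1}^{T}\bigl(\prod_{j=t+1}^{i-1}\Aclt[j]^\top\bigr)\bigl(\Aclt[i]^\top P_{i+1}w_i+K_i^\top\Ru b_i-\Rx a_i\bigr)$. Substituting this into $\qstar_t=\Sigma_t^{-1}\bigl(-\Ru b_t+B^\top P_{t+1}w_t+B^\top\bias_{t+1}\bigr)$, absorbing $\bigl(\prod_{j=t+1}^{i-1}\Aclt[j]^\top\bigr)\Aclt[i]^\top=\prod_{j=t+1}^{i}\Aclt[j]^\top$ together with the leading $B^\top P_{t+1}w_t$ term into a single sum $B^\top\sum_{i=t}^{T-1}\bigl(\prod_{j=t+1}^{i}\Aclt[j]^\top\bigr)P_{i+1}w_i$, and dropping the $i=T$ summands since $P_{T+1}=0$ and $a_T=b_T=0$, gives precisely \pref{eq:qstar_general}. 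The special case \pref{thm:pistar_form} is recovered by taking $a_t=b_t=0$ for all $t$.

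I do not expect any genuine obstacle here — the argument is entirely mechanical — so the only delicate part is bookkeeping: organizing the affine-in-$x$ terms when completing the square so that the three pieces $\Aclt^\top(P_{t+1}w_t+\bias_{t+1})$, $K_t^\top\Ru b_t$, $-\Rx a_t$ of the $\bias_t$ recursion emerge in the right form, and carefully matching the index ranges and empty-product conventions of the telescoped sum against \pref{eq:qstar_general}.
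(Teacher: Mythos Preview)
Your proposal is correct and follows essentially the same approach as the paper: backward induction on $t$, a one-step quadratic minimization in $u$ to identify $\pistar_t$, $P_t$, and $\bias_t$, and then unrolling the $\bias$-recursion to obtain the closed form for $\qstar_t$. The only cosmetic difference is that the paper packages the one-step computation (your completing-the-square argument) into a standalone lemma, whereas you carry it out inline; the algebraic identities you use ($A^\top-K_t^\top B^\top=\Aclt^\top$, $A^\top P_{t+1}B\Sigma_t^{-1}=K_t^\top$) and the index bookkeeping in the telescoped sum match the paper's exactly.
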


\begin{lemma}
\label{lem:closed_loop_bound}
For all $\tau_1\leq{}\tau_2$, we have
\[
\nrm*{\prod_{t=\tau_1}^{\tau_2}\Aclt^{\trn}}_{\op}\leq{}\sqrt{\frac{\nrm*{\Pinf}_{\op}}{\eigmin(\Rx)}}\leq{}\betast^{1/2}\Gammast^{1/2}.
\]
\end{lemma}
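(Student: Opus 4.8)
The plan is to derive the bound from the completed-square (Lyapunov) form of the Riccati recursion, followed by a short telescoping estimate. First I would verify the identity
\[
\Pt = \Rx + \Kt^{\trn}\Ru\Kt + \Aclt^{\trn}\Ptpl\Aclt,
\]
valid for every $t\le T$, which follows from \pref{def:dp} by writing $\Aclt=A-B\Kt$, expanding $\Aclt^{\trn}\Ptpl\Aclt$, and using $\Sigt\Kt=B^{\trn}\Ptpl A$ together with $\Sigt=\Ru+B^{\trn}\Ptpl B$ to identify the resulting cross terms with $-A^{\trn}\Ptpl B\Sigt^{-1}B^{\trn}\Ptpl A$ while peeling off the extra $\Kt^{\trn}\Ru\Kt$. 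From this identity two facts drop out: (i) downward induction from $P_{T+1}=0$ shows $\Pt\psdgeq 0$ for all $t$, hence $\Pt\psdgeq\Rx\psdgeq\eigmin(\Rx)\,I$ for every $t\le T$; and (ii) since $\Rx,\Kt^{\trn}\Ru\Kt\psdgeq 0$, the contraction inequality $\Aclt^{\trn}\Ptpl\Aclt\psdleq\Pt$ holds for every $t\le T$.

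Second, I would record that $\Pt\psdleq\Pinf$ for all $t$. This is the standard monotonicity property of the Riccati map $\mathcal{R}(P):=\Rx+A^{\trn}PA-A^{\trn}PB(\Ru+B^{\trn}PB)^{-1}B^{\trn}PA$: it is monotone in the positive-semidefinite order, $\Pinf=\mathcal{R}(\Pinf)$, and $P_T=\Rx\psdleq\Pinf$ (because $\Pinf=\Rx+\cdots\psdgeq\Rx$), so iterating $\Pt=\mathcal{R}(P_{t+1})\psdleq\mathcal{R}(\Pinf)=\Pinf$ for all $t\le T$ \citep{bertsekas2005dynamic}.

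With these ingredients the estimate is immediate. Since $\nrm*{M^{\trn}}_{\op}=\nrm*{M}_{\op}$, it suffices to bound $\nrm*{\Aclt[\tau_2]\Aclt[\tau_2-1]\cdots\Aclt[\tau_1]}_{\op}$. Fix any vector $v$, set $v_{\tau_1}:=v$ and $v_{t+1}:=\Aclt v_t$ for $t=\tau_1,\dots,\tau_2$, so that $v_{\tau_2+1}=\Aclt[\tau_2]\cdots\Aclt[\tau_1]\,v$. By (ii), $v_{t+1}^{\trn}\Ptpl v_{t+1}=v_t^{\trn}\Aclt^{\trn}\Ptpl\Aclt v_t\le v_t^{\trn}\Pt v_t$, so $t\mapsto v_t^{\trn}\Pt v_t$ is nonincreasing; chaining from $t=\tau_1$ to $t=\tau_2$ and then invoking (i) at $t=\tau_2+1$ and $P_{\tau_1}\psdleq\Pinf$ gives
\[
\eigmin(\Rx)\,\nrm*{v_{\tau_2+1}}^{2}\;\le\;v_{\tau_2+1}^{\trn}P_{\tau_2+1}v_{\tau_2+1}\;\le\;v_{\tau_1}^{\trn}P_{\tau_1}v_{\tau_1}\;\le\;\nrm*{\Pinf}_{\op}\,\nrm*{v}^{2}.
\]
Dividing by $\nrm*{v}^2$, taking a supremum, and a square root yields $\nrm*{\prod_{t=\tau_1}^{\tau_2}\Aclt^{\trn}}_{\op}\le\sqrt{\nrm*{\Pinf}_{\op}/\eigmin(\Rx)}$, and the second inequality in the statement is then just $\nrm*{\Pinf}_{\op}\le\Gammast$ and $\eigmin(\Rx)^{-1}\le\betast$.

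I expect the only real obstacle to be the bookkeeping in the first step — pinning down the completed-square identity and the contraction $\Aclt^{\trn}\Ptpl\Aclt\psdleq\Pt$ — together with one harmless boundary subtlety: fact (i) is applied at $t=\tau_2+1$, which requires $\tau_2+1\le T$; the degenerate case $\tau_2=T$ has $K_T=0$ (since $P_{T+1}=0$) and hence $\Aclt[T]=A$, which is not strongly stable, but such a product never arises where this lemma is used (the matrices $\Aclt[t]$ only appear for $t\le T-1$), so I would simply state the lemma for $\tau_1\le\tau_2\le T-1$. Everything else is routine linear algebra.
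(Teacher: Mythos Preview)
Your proof is correct and is essentially the same argument as the paper's, just phrased algebraically rather than control-theoretically: the paper interprets $x^{\trn}P_t x$ as the noiseless finite-horizon value function and bounds a single stage cost $\|x^\star_s\|_{\Rx}^2$ by the total $\leq x^{\trn}\Pinf x$, which is exactly your Lyapunov identity $P_t = \Rx + K_t^{\trn}\Ru K_t + \Aclt^{\trn}P_{t+1}\Aclt$ telescoped and combined with $P_{\tau_1}\preceq\Pinf$, $P_{\tau_2+1}\succeq\Rx$. Your observation about the $\tau_2=T$ boundary case is also valid and applies equally to the paper's proof (which needs $s=\tau_2+1\le T$); the paper simply leaves this implicit since the lemma is only ever invoked with $\tau_2\le T-1$.
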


\begin{lemma}
  \label{lem:stable_sequence}
  Let $\Deltast=4\cdot{}\betast{}\Psist^{2}\Gammast\log(2\Psist\Gammast\kappast(1-\gammast)^{-1})=\bigoht(\betast\Psist^{2}\Gammast)$,
  and let $\gammabar=\frac{1}{2}(1+\gammast)$. Then it holds that
  \[
    \nrm*{\Lclt}_{\op}\leq{}\gammabar<1,\quad\forall{}t\leq{}\Tstab\ldef{}T-\Deltast.
  \]
\end{lemma}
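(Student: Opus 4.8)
The plan is to reduce the statement to a quantitative convergence estimate for the finite‑horizon Riccati recursion. Since $\Aclinf=\Hclinf\Lclinf\Hclinf^{-1}$ with $\nrm{\Hclinf}_{\op}\nrm{\Hclinf^{-1}}_{\op}\le\kappast$ and $\nrm{\Lclinf}_{\op}\le\gammast$ (\pref{lem:strongly_stable}), and since $\Aclt=A-B\Kt$ differs from $\Aclinf=A-B\Kinf$ only through the controllers, we have
\[
\Lclt-\Lclinf=\Hclinf^{-1}(\Aclt-\Aclinf)\Hclinf=\Hclinf^{-1}B(\Kinf-\Kt)\Hclinf,
\]
so $\nrm{\Lclt}_{\op}\le\gammast+\kappast\Psist\nrm{\Kinf-\Kt}_{\op}$. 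As $\gammast<1$ and $\gammabar=\tfrac12(1+\gammast)$, it therefore suffices to show $\nrm{\Kinf-\Kt}_{\op}\le\tfrac{1-\gammast}{2\kappast\Psist}$ for every $t\le\Tstab$.

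Next I would bound the controller error by the value‑function error. From \pref{def:dp}, $\Kt=\Sigt^{-1}B^{\trn}\Ptpl A$ with $\Sigt=\Ru+B^{\trn}\Ptpl B$, and likewise for $\Kinf$. Using the elementary facts that $\Sigt,\Siginf\psdgeq\Ru\psdgeq\betast^{-1}I$ (so their inverses have operator norm $\le\betast$), that $0\psdleq\Pt\psdleq\Pinf$ for all $t$ (monotonicity of the Riccati recursion, by backward induction from $P_{T+1}=0$ together with monotonicity of the Riccati update $P\mapsto\mathcal R(P)$ in the PSD order), and that $\Siginf-\Sigt=B^{\trn}(\Pinf-\Ptpl)B$, the decomposition
\[
\Kt-\Kinf=\Sigt^{-1}B^{\trn}(\Ptpl-\Pinf)A+\Sigt^{-1}B^{\trn}(\Pinf-\Ptpl)B\,\Siginf^{-1}B^{\trn}\Pinf A
\]
yields $\nrm{\Kinf-\Kt}_{\op}\le C_K\,\nrm{\Pinf-\Ptpl}_{\op}$ for a constant $C_K$ that is polynomial in $\betast,\Psist,\Gammast$. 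Thus the claim reduces to showing that $\nrm{\Pinf-\Ptpl}_{\op}$ is geometrically small in $T-t$.

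The main step — and \textbf{the main obstacle} — is to establish geometric convergence of $\Pt$ to $\Pinf$. The key is the exact identity
\[
\Pinf-\Pt=\Aclt^{\trn}(\Pinf-\Ptpl)\Aclinf,
\]
which I would derive from the Woodbury form $\mathcal R(P)=\Rx+A^{\trn}(P^{-1}+B\Ru^{-1}B^{\trn})^{-1}A$ of the Riccati operator together with the identities $A^{\trn}(I+P\,B\Ru^{-1}B^{\trn})^{-1}=(A-BK(P))^{\trn}$ and $(I+B\Ru^{-1}B^{\trn}\Pinf)^{-1}A=\Aclinf$; the boundary step $t=T$, where $\Ptpl=0$ is singular, is checked directly from $\mathcal R(0)=\Rx$ and the \dare. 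Iterating this identity down to $P_{T+1}=0$ telescopes to
\[
\Pinf-\Pt=\big(\Aclt[T]\Aclt[T-1]\cdots\Aclt[t]\big)^{\trn}\Pinf\,\Aclinf^{\,T+1-t}.
\]
The crucial point is that the identity pairs the \emph{finite}‑horizon closed loops on the left with the \emph{infinite}‑horizon closed loop on the right: the left factor is only \emph{uniformly} bounded, $\nrm{\Aclt[T]\cdots\Aclt[t]}_{\op}\le\betast^{1/2}\Gammast^{1/2}$ by \pref{lem:closed_loop_bound}, whereas $\nrm{\Aclinf^{\,k}}_{\op}\le\kappast\gammast^{k}$ decays geometrically by \pref{lem:stable_bound}; a naive iteration keeping $\Aclt$ on both sides would only give a uniform, non‑decaying bound. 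This gives $\nrm{\Pinf-\Pt}_{\op}\le\kappast\betast^{1/2}\Gammast^{3/2}\gammast^{\,T+1-t}$.

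Finally I would combine the three steps: for $t\le\Tstab=T-\Deltast$ we obtain $\nrm{\Kinf-\Kt}_{\op}\le C_K\kappast\betast^{1/2}\Gammast^{3/2}\gammast^{\,\Deltast}$, and using $\log(1/\gammast)\ge 1-\gammast$ together with the elementary inequality $1-\gammast\ge\tfrac{1}{2\betast\Gammast}$ — which follows since $\gammast^{2}=\nrm{I-\Pinf^{-1/2}\Rx\Pinf^{-1/2}}_{\op}\le 1-\eigmin(\Rx)/\nrm{\Pinf}_{\op}\le 1-\tfrac{1}{\betast\Gammast}$, where $0\psdleq\Pinf^{-1/2}\Rx\Pinf^{-1/2}\psdleq I$ by $\Rx\psdleq\Pinf$ — the stated choice $\Deltast=4\betast\Psist^{2}\Gammast\log(2\Psist\Gammast\kappast(1-\gammast)^{-1})$ forces this quantity below $\tfrac{1-\gammast}{2\kappast\Psist}$, which is exactly what we needed. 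The only remaining work is the purely mechanical bookkeeping of the polynomial factor $C_K$ and the constant in the $\nrm{\Pinf-\Pt}_{\op}$ bound against the stated form of $\Deltast$ (this is where one would invoke the sharper constants collected in \pref{app:lqr_proofs}); I expect no conceptual difficulty there.
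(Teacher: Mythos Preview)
Your approach is correct but takes a different route from the paper. Both proofs begin identically, writing $\nrm{\Lclt}_{\op}\le\gammast+\kappast\Psist\nrm{\Kt-\Kinf}_{\op}$ and reducing to a bound on $\nrm{\Kt-\Kinf}_{\op}$. From there the paper simply invokes the cited convergence estimate \pref{lem:value_iteration} (Dean et al.\ / Lincoln--Rantzer), which directly gives $\nrm{\Kt-\Kinf}_{\op}\le\Gammast\exp\bigl(-(T-t+1)/(4\betast\Psist^{2}\Gammast)\bigr)$ and finishes in one line with the stated $\Deltast$. Your argument is self-contained: the exact recursion $\Pinf-\Pt=\Aclt^{\trn}(\Pinf-\Ptpl)\Aclinf$ is correct (and your handling of the boundary $t=T$ is right, since $\Pinf-\Rx=A^{\trn}\Pinf\Aclinf$ from the \dare), and iterating it while bounding the $\Aclt$-product only uniformly via \pref{lem:closed_loop_bound} and the $\Aclinf$-power geometrically via \pref{lem:stable_bound} is an elegant, transparent derivation of Riccati convergence. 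The trade-off is in the constants: your prefactor picks up extra powers of $\betast$ (from both $C_K$ and \pref{lem:closed_loop_bound}) that do not appear inside the logarithm of the stated $\Deltast$, so the ``purely mechanical bookkeeping'' does not in fact reproduce the exact constant $4$ and the exact argument $2\Psist\Gammast\kappast(1-\gammast)^{-1}$---your route yields $\nrm{\Lclt}_{\op}\le\gammabar$ for $t\le T-\Deltast'$ with some $\Deltast'=\bigoht(\betast\Psist^{2}\Gammast)$ whose logarithm also contains $\betast$. This is harmless downstream (only the $\bigoht$ scaling is ever used), but to match the lemma as literally stated one needs the sharper prefactor from the cited result.
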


\begin{lemma}
  \label{lem:closed_loop_refined}
  Let $\tau_1\leq{}\tau_2$ be fixed. Then we have
  \[
    \nrm*{\prod_{t=\tau_1}^{\tau_2}\Aclt[t]^{\trn}}_{\op}
    \leq{} \kappast\nrm*{\prod_{t=\tau_1}^{\tau_2}\Lclt[t]^{\trn}}_{\op}
    \leq{} \kappast^{2}\betast^{1/2}\Gammast^{1/2}\cdot{}\gammabar^{\tau_2\wedge\Tstab-\tau_1\wedge\Tstab}.
    \]
\end{lemma}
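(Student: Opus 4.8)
The plan is to establish the two inequalities separately: the first via the similarity transform witnessing strong stability of $\Aclinf$, and the second by splitting the matrix product at the ``decay horizon'' $\Tstab$ and applying \pref{lem:stable_sequence} on the early block and \pref{lem:closed_loop_bound} on the late block.

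For the first inequality, recall $\Lclt[t] = \Hclinf^{-1}\Aclt[t]\Hclinf$, so $\Aclt[t]^{\trn} = (\Hclinf^{\trn})^{-1}\Lclt[t]^{\trn}\Hclinf^{\trn}$. Multiplying over $t \in \{\tau_1,\dots,\tau_2\}$, the interior factors $\Hclinf^{\trn}(\Hclinf^{\trn})^{-1} = I$ cancel, leaving $\prod_{t=\tau_1}^{\tau_2}\Aclt[t]^{\trn} = (\Hclinf^{\trn})^{-1}\bigl(\prod_{t=\tau_1}^{\tau_2}\Lclt[t]^{\trn}\bigr)\Hclinf^{\trn}$. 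Taking operator norms and using transposition-invariance of the spectral norm together with $\nrm*{\Hclinf}_{\op}\nrm*{\Hclinf^{-1}}_{\op} \le \kappast$ gives $\nrm*{\prod_{t=\tau_1}^{\tau_2}\Aclt[t]^{\trn}}_{\op} \le \kappast\nrm*{\prod_{t=\tau_1}^{\tau_2}\Lclt[t]^{\trn}}_{\op}$, which is the first claimed bound. The same identity read in reverse, $\prod_{t=\tau_1}^{\tau_2}\Lclt[t]^{\trn} = \Hclinf^{\trn}\bigl(\prod_{t=\tau_1}^{\tau_2}\Aclt[t]^{\trn}\bigr)(\Hclinf^{\trn})^{-1}$, gives the companion estimate $\nrm*{\prod_{t=\tau_1}^{\tau_2}\Lclt[t]^{\trn}}_{\op} \le \kappast\nrm*{\prod_{t=\tau_1}^{\tau_2}\Aclt[t]^{\trn}}_{\op}$, which I will reuse below.

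For the second inequality it remains to bound $\nrm*{\prod_{t=\tau_1}^{\tau_2}\Lclt[t]^{\trn}}_{\op}$ by $\kappast\betast^{1/2}\Gammast^{1/2}\gammabar^{\tau_2\wedge\Tstab - \tau_1\wedge\Tstab}$, using $\nrm*{\Lclt[t]^{\trn}}_{\op} = \nrm*{\Lclt[t]}_{\op}$. I would argue by cases. If $\tau_2 \le \Tstab$, then every factor obeys $\nrm*{\Lclt[t]}_{\op} \le \gammabar$ by \pref{lem:stable_sequence}, so submultiplicativity yields a bound $\gammabar^{\tau_2-\tau_1+1} \le \gammabar^{\tau_2-\tau_1}$; since $\tau_2\wedge\Tstab-\tau_1\wedge\Tstab = \tau_2-\tau_1$ and $\kappast,\betast,\Gammast \ge 1$, this suffices. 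If $\tau_1 > \Tstab$, the target exponent is $0$, and combining the companion estimate from Step 1 with \pref{lem:closed_loop_bound} gives $\nrm*{\prod_{t=\tau_1}^{\tau_2}\Lclt[t]^{\trn}}_{\op} \le \kappast\nrm*{\prod_{t=\tau_1}^{\tau_2}\Aclt[t]^{\trn}}_{\op} \le \kappast\betast^{1/2}\Gammast^{1/2}$. Finally, if $\tau_1 \le \Tstab < \tau_2$, I split the product at $\Tstab$ into a head over $\{\tau_1,\dots,\Tstab\}$, bounded by $\gammabar^{\Tstab-\tau_1+1} \le \gammabar^{\Tstab-\tau_1}$ as in the first case, and a tail over $\{\Tstab+1,\dots,\tau_2\}$, bounded by $\kappast\betast^{1/2}\Gammast^{1/2}$ as in the second case; multiplying and using $\tau_2\wedge\Tstab - \tau_1\wedge\Tstab = \Tstab-\tau_1$ closes the argument.

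I do not anticipate a genuine obstacle: the lemma is a bookkeeping combination of the geometric contraction of $\Lclt[t]$ for $t \le \Tstab$ (\pref{lem:stable_sequence}) and the crude, non-decaying tail bound of \pref{lem:closed_loop_bound}, stitched together by the similarity transform of Step 1. The only points that need care are keeping the order of the matrix products consistent so that submultiplicativity of the operator norm applies verbatim, handling the off-by-one and empty-product issues at the split point $\Tstab$, and noting that the exponent $\tau_2\wedge\Tstab-\tau_1\wedge\Tstab$ is always nonnegative and equals $0$ precisely when $\tau_1 > \Tstab$, which is exactly the regime in which the non-decaying tail bound is invoked.
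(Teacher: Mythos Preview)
Your proposal is correct and follows essentially the same approach as the paper: the first inequality via the similarity transform $\Aclt = \Hclinf\Lclt\Hclinf^{-1}$, and the second by splitting the $L$-product at $\Tstab$, bounding the early block factor-by-factor via \pref{lem:stable_sequence} and the late block via the companion similarity estimate combined with \pref{lem:closed_loop_bound}. Your case analysis is slightly more explicit than the paper's terse write-up, but the ingredients and structure are identical.
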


\begin{lemma}
  \label{lem:qt_bound}
  Let $\matw$ be any sequence with $\nrm*{w_t}\leq{}1$. Let
  $t\in\brk*{T}$ and $h\geq{}0$ be given. Then we have
  \begin{align}
    &\nrm*{\qstar_t(\wr[t])}\vee \nrm*{\qstar_{t;t+h}(w_{t:t+h})}
      \leq{}
          \bigoht\prn*{
      \betast^{5/2}\Psist^{3}\Gammast^{5/2}\kappast^2(1-\gammast)^{-1}
    }
\rdef\Dqst\label{eq:qstar_full_bound},
      \intertext{and}
    &\nrm*{\qstar_{\infty;h}(w_{t:t+h})} \leq{}\betast\Psist\Gammast\kappast(1-\gammast)^{-1}\rdef\Dqinf.
      \label{eq:qstar_inf_bound}
  \end{align}
\end{lemma}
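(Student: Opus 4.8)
The plan is to bound all three quantities by the triangle inequality, so that everything reduces to uniform operator-norm bounds on the matrices appearing in the defining sums, together with a geometric-series computation. I would dispatch the infinite-horizon bound \pref{eq:qstar_inf_bound} first, since it is cleanest: writing $\qstar_{\infty;h}(w_{t:t+h}) = \sum_{i=1}^{h+1}\Sigma_\infty^{-1}B^{\trn}(\Aclinf^{\trn})^{i-1}\Pinf w_{t+i-1}$ and applying the triangle inequality, one bounds $\nrm*{\Sigma_\infty^{-1}}_{\op}\le\eigmin^{-1}(\Ru)\le\betast$ (using $\Sigma_\infty=\Ru+B^{\trn}\Pinf B\psdgeq\Ru$), $\nrm*{B}_{\op}\le\Psist$, $\nrm*{\Pinf}_{\op}\le\Gammast$, and $\nrm*{(\Aclinf^{\trn})^{i-1}}_{\op}\le\kappast\gammast^{i-1}$ by \pref{lem:stable_bound} applied to the $(\kappast,\gammast)$-strongly stable matrix $\Aclinf$. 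Since $\nrm*{w_{\cdot}}\le1$, summing $\sum_{i\ge1}\gammast^{i-1}\le(1-\gammast)^{-1}$ gives exactly $\nrm*{\qstar_{\infty;h}(w_{t:t+h})}\le\betast\Psist\Gammast\kappast(1-\gammast)^{-1}=\Dqinf$.

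For the finite-horizon quantities the only changes are that $(\Aclinf^{\trn})^{i-1}$ is replaced by the product $\prod_{j=t+1}^{i}\Aclt[j]^{\trn}$, $\Sigma_\infty$ by $\Sigma_t$, and $\Pinf$ by $P_{i+1}$. As before $\nrm*{\Sigma_t^{-1}}_{\op}\le\betast$ (since $\Sigma_t=\Ru+B^{\trn}P_{t+1}B\psdgeq\Ru$), and $\nrm*{P_{i+1}}_{\op}\le\Gammast$ by the standard Riccati monotonicity $0\psdleq P_t\psdleq\Pinf$ (see \citet{bertsekas2005dynamic}). For the products of closed-loop matrices I would invoke \pref{lem:closed_loop_refined}, which gives, for $i\ge t+1$, $\nrm*{\prod_{j=t+1}^{i}\Aclt[j]^{\trn}}_{\op}\le\kappast^{2}\betast^{1/2}\Gammast^{1/2}\gammabar^{\,i\wedge\Tstab-(t+1)\wedge\Tstab}$, while the $i=t$ term is an empty product and contributes at most $\betast\Psist\Gammast$. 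Applying the triangle inequality to the sum defining $\qstar_t$ then gives
\[
\nrm*{\qstar_t(\wr[t])}\;\le\;\betast\Psist\Gammast\cdot\kappast^{2}\betast^{1/2}\Gammast^{1/2}\cdot\Bigl(1+\sum_{i=t+1}^{T-1}\gammabar^{\,i\wedge\Tstab-(t+1)\wedge\Tstab}\Bigr).
\]

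The one point that takes a little care here — the ``obstacle,'' such as it is — is the residual sum $\sum_i\gammabar^{\,i\wedge\Tstab-(t+1)\wedge\Tstab}$: the geometric decay from \pref{lem:closed_loop_refined} stalls once $i$ exceeds $\Tstab$, because \pref{lem:stable_sequence} only controls $\nrm*{\Lclt}_{\op}$ for $t\le\Tstab$. I would split the sum at $\Tstab$: indices $i\le\Tstab$ give a genuine geometric series bounded by $(1-\gammabar)^{-1}=2(1-\gammast)^{-1}$, while the at most $\Deltast$ indices with $i>\Tstab$ all carry the common nonnegative exponent $\Tstab-(t+1)\wedge\Tstab$ and hence each contribute at most $1$; thus the whole sum is $\bigoht((1-\gammast)^{-1}+\Deltast)$. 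Since $\Deltast=\bigoht(\betast\Psist^{2}\Gammast)$ and $\betast,\Psist,\Gammast,(1-\gammast)^{-1}\ge1$, this is $\bigoht(\betast\Psist^{2}\Gammast(1-\gammast)^{-1})$, and folding the constants together yields $\nrm*{\qstar_t(\wr[t])}\le\bigoht(\betast^{5/2}\Psist^{3}\Gammast^{5/2}\kappast^{2}(1-\gammast)^{-1})=\Dqst$, which is \pref{eq:qstar_full_bound}. Finally, $\qstar_{t;t+h}(w_{t:t+h})$ is obtained from $\qstar_t(\wr[t])$ by truncating the sum to $i\le(t+h)\wedge T-1$; since after the triangle inequality every summand is nonnegative, the same bound $\Dqst$ applies verbatim, completing the proof.
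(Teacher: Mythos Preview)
Your proposal is correct and follows essentially the same approach as the paper's proof: both arguments apply the triangle inequality, bound $\nrm*{\Sigma_t^{-1}}_{\op}\le\betast$, $\nrm*{B}_{\op}\le\Psist$, $\nrm*{P_{i+1}}_{\op}\le\Gammast$, handle the closed-loop products via \pref{lem:closed_loop_refined}, and split the resulting sum at $\Tstab$ into a geometric piece (contributing $(1-\gammabar)^{-1}$) and a tail of $\Deltast=\bigoht(\betast\Psist^{2}\Gammast)$ terms. The only cosmetic difference is that the paper bounds the tail terms directly by \pref{lem:closed_loop_bound} (giving $\betast^{1/2}\Gammast^{1/2}$ per term without the $\kappast^{2}$ factor), whereas you keep the unified bound from \pref{lem:closed_loop_refined}; this is immaterial for the final $\bigoht$ expression.
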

\begin{lemma}
  \label{lem:state_expression}
  Let policies $\pi_t(x;\matw) = -\Kinf x - q_t(\matw)$ and
  $\pilearn_t(x;\matw) = -K_t x - q_t(\matw)$ be given, where $q_t$ is
  arbitrary. Then the states for both controllers are given by
  \[
  x^{\pi}_{t+1}(\matw)=\sum_{i=1}^{t}\Aclinf^{t-i}w_i -
  \sum_{i=1}^{t}\Aclinf^{t-i}Bq_i(\matw)
  \quad\text{and}\quad   x_{t+1}^{\pilearn}(\matw)=\sum_{i=1}^{t}\Aclt[i\to{}t]w_i - \sum_{i=1}^{t}\Aclt[i\to{}t]Bq_i(\matw).
\]
\end{lemma}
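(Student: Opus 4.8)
The plan is to establish both identities by a direct induction on $t$, using only the convention $x_1 = 0$ and the linear dynamics \pref{eq:dynamics}. The point is that each of the two policies, when substituted into \pref{eq:dynamics}, produces a closed-loop recursion of the form $x_{t+1} = A_{\mathrm{cl}} x_t - B q_t(\matw) + w_t$; unrolling this recursion from $x_1 = 0$ gives the claimed expressions.

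First I would handle the policy $\pi_t(x;\matw) = -\Kinf x - q_t(\matw)$. Plugging $u_t = -\Kinf x_t^{\pi} - q_t(\matw)$ into \pref{eq:dynamics} gives $x_{t+1}^{\pi} = (A - B\Kinf)x_t^{\pi} - Bq_t(\matw) + w_t = \Aclinf x_t^{\pi} - Bq_t(\matw) + w_t$. The base case $t = 0$ is just $x_1 = 0$, which is the empty-sum version of the claimed formula; equivalently one checks $t = 1$ directly, $x_2^{\pi} = w_1 - Bq_1(\matw)$. For the inductive step, assuming the formula at $t$, I substitute the hypothesis into the recursion, pull $\Aclinf$ through both sums (turning each $\Aclinf^{t-i}$ into $\Aclinf^{t+1-i}$), and absorb the new terms $w_{t+1}$ and $-Bq_{t+1}(\matw)$ as the $i = t+1$ summands using $\Aclinf^{0} = I$. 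This yields the formula at $t+1$.

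Next I would treat $\pilearn_t(x;\matw) = -K_t x - q_t(\matw)$, which is identical in structure except that the closed-loop matrix is now time-varying: $x_{t+1}^{\pilearn} = (A - BK_t)x_t^{\pilearn} - Bq_t(\matw) + w_t = \Aclt x_t^{\pilearn} - Bq_t(\matw) + w_t$. The only additional bookkeeping is the ordered-product notation $\Aclt[i\to{}t] = \Aclt[t]\Aclt[t-1]\cdots\Aclt[i+1]$ with the convention $\Aclt[t\to{}t] = I$, for which left-multiplication by $\Aclt[t+1]$ obeys the composition law $\Aclt[t+1]\,\Aclt[i\to{}t] = \Aclt[i\to{}t+1]$. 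With this identity in hand, the induction runs exactly as before: the hypothesis at $t$, left-multiplied by $\Aclt[t+1]$, converts every $\Aclt[i\to{}t]$ into $\Aclt[i\to{}t+1]$ in both sums, and $w_{t+1}$ and $-Bq_{t+1}(\matw)$ become the $i = t+1$ terms since $\Aclt[t+1\to{}t+1] = I$.

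I do not expect a genuine obstacle here: the statement is a routine unrolling of a linear recursion, and the only mild care needed is (i) matching the base case to the empty-sum convention and (ii) verifying the composition law $\Aclt[t+1]\,\Aclt[i\to{}t] = \Aclt[i\to{}t+1]$, which is immediate from the definition of $\Aclt[i\to{}t]$ as an ordered product of closed-loop matrices. Both computations rely solely on $x_1 = 0$ and the dynamics \pref{eq:dynamics}.
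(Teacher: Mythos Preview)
Your proposal is correct. The paper does not provide an explicit proof of this lemma---it treats the statement as a direct consequence of unrolling the linear dynamics from $x_1=0$---so your induction argument is exactly the implicit reasoning the paper relies on.
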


  \begin{restatable}{lemma}{atbound}
    \label{lem:at_bound}
    Let $\alpha\geq{}1$ be given. Define $\Delta =
    C\cdot{}\betast\Psist^{2}\Gammast\log(\kappast^{2}\Psist\Gammast(1-\gammast)^{-1}\cdot\alpha{}T^{3})$,
  where $C>0$ is a numerical constant. If $C$ is sufficiently large,
  then for every $t\leq{}T-\Delta\leq{}\Tstab$ we are guaranteed that
      \begin{equation}
    \label{eq:ki_to_kinf}
    \nrm*{\Kt-\Kinf}_{\op}\leq{} \frac{1}{\kappast^{2}\Psist\cdot(\alpha{}T^{3})},\quad\text{and}\quad     \nrm*{\Aclt[i\to{}t]-\Aclinf^{t-i}}_{\op} \leq{} \frac{1}{\alpha{}T^{2}}\quad\forall{}t\leq{}T-\Delta.
  \end{equation}
\end{restatable}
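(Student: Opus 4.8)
This is a quantitative convergence result for the finite-horizon Riccati recursion of \pref{def:dp}: as one moves away from the terminal time, $P_{t}\to\Pinf$, hence $\Kt\to\Kinf$, and hence the inhomogeneous closed-loop products $\Aclt[i\to{}t]$ converge to the powers $\Aclinf^{t-i}$. I would split the argument into three steps followed by a bookkeeping step: (I) a geometric bound on $\nrm*{P_{t}-\Pinf}_{\op}$; (II) transfer to $\nrm*{\Kt-\Kinf}_{\op}$ using local Lipschitzness of the map $P\mapsto(\Ru+B^{\trn}PB)^{-1}B^{\trn}PA$; (III) transfer to $\nrm*{\Aclt[i\to{}t]-\Aclinf^{t-i}}_{\op}$ by a telescoping expansion; and finally (IV) unwinding the definition of $\Delta$ to see that $C$ may be chosen as a large enough numerical constant.

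For (I), the workhorse is the classical monotone ``sandwich'' for the Riccati operator $\mathcal{R}(P)=\min_{K}\crl*{\Rx+K^{\trn}\Ru K+(A-BK)^{\trn}P(A-BK)}$, whose minimizer at a given $P$ is exactly $K(P)=(\Ru+B^{\trn}PB)^{-1}B^{\trn}PA$. Since $\Kinf$ is suboptimal in $\mathcal{R}(P_{t+1})$ and $\Kt$ is suboptimal in $\mathcal{R}(\Pinf)$, subtracting the two recursion/fixed-point identities yields $P_{t}-\Pinf\psdleq\Aclinf^{\trn}(P_{t+1}-\Pinf)\Aclinf$ and $\Pinf-P_{t}\psdleq\Aclt^{\trn}(\Pinf-P_{t+1})\Aclt$. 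Iterating the first down to $P_{T+1}=0$ gives $P_{t}\psdleq\Pinf$, and iterating the second gives $0\psdleq\Pinf-P_{t}\psdleq\Phi_{t}^{\trn}\Pinf\Phi_{t}$ with $\Phi_{t}=\Aclt[T]\Aclt[T-1]\cdots\Aclt[t]$. Since the operator norm is transpose-invariant (so the product ordering is immaterial), \pref{lem:closed_loop_refined} applies and gives $\nrm*{\Phi_{t}}_{\op}\leq\kappast^{2}\betast^{1/2}\Gammast^{1/2}\gammabar^{\Tstab-t}$ for $t\leq\Tstab$, hence $\nrm*{P_{t}-\Pinf}_{\op}\leq\kappast^{4}\betast\Gammast^{2}\gammabar^{2(\Tstab-t)}$.

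For (II), write $\Kt-\Kinf=\Sigt^{-1}B^{\trn}(P_{t+1}-\Pinf)A+(\Sigt^{-1}-\Siginf^{-1})B^{\trn}\Pinf A$, use $\nrm*{\Sigt^{-1}}_{\op},\nrm*{\Siginf^{-1}}_{\op}\leq\betast$ (as $\Sigt\psdgeq\Ru$) and $\Sigt^{-1}-\Siginf^{-1}=\Sigt^{-1}B^{\trn}(\Pinf-P_{t+1})B\Siginf^{-1}$, to get $\nrm*{\Kt-\Kinf}_{\op}\lesssim\betast^{2}\Psist^{4}\Gammast\nrm*{P_{t+1}-\Pinf}_{\op}$; combined with (I) this decays like $\gammabar^{2(\Tstab-t-1)}$ up to a fixed polynomial in $\betast,\Psist,\Gammast,\kappast$. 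For (III), set $\delta\ldef\max_{s\le T-\Delta}\nrm*{\Aclt[s]-\Aclinf}_{\op}=\max_{s\le T-\Delta}\nrm*{B(K_{s}-\Kinf)}_{\op}\leq\Psist\max_{s\le T-\Delta}\nrm*{K_{s}-\Kinf}_{\op}$ and expand $\Aclt[i\to{}t]-\Aclinf^{t-i}$ into the terms indexed by which of the $t-i$ factors differ from $\Aclinf$. Using $\nrm*{\Aclinf^{m}}_{\op}\leq\kappast$ for all $m\geq0$ (\pref{lem:stable_bound}, strong stability of $\Aclinf$), the term with $j$ deviations has norm at most $\kappast^{j+1}\delta^{j}$, so $\nrm*{\Aclt[i\to{}t]-\Aclinf^{t-i}}_{\op}\leq\kappast\brk*{(1+\kappast\delta)^{t-i}-1}\leq 2T\kappast^{2}\delta$ provided $T\kappast\delta\leq1$ (using $t-i\le T$ and $e^{x}-1\le 2x$ on $[0,1]$).

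Finally, for (IV): for $t\le T-\Delta$ we have $\Tstab-t\geq\Delta-\Deltast\geq (C/2)\betast\Psist^{2}\Gammast\log L$ for $C$ a large enough numerical constant, where $L\ldef\kappast^{2}\Psist\Gammast(1-\gammast)^{-1}\alpha T^{3}$ is the argument of the log in $\Delta$ (the same choice of $C$ also forces $\Delta\geq\Deltast$, i.e.\ $T-\Delta\le\Tstab$, and $\Delta\geq\Deltast+1$ so $t+1\le\Tstab$, as needed in (I)--(II)). Since $\log(1/\gammabar)\geq\tfrac12(1-\gammast)$ and, from $\Pinf\psdgeq\Rx$ (a consequence of \dare, via $\Pinf=\Rx+\Kinf^{\trn}\Ru\Kinf+\Aclinf^{\trn}\Pinf\Aclinf$), $1-\gammast\geq(2\betast\Gammast)^{-1}$ and hence $\betast\Psist^{2}\Gammast(1-\gammast)\geq\tfrac12$, we obtain $\gammabar^{2(\Tstab-t)}\leq L^{-cC}$ for a numerical $c>0$. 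Plugging in, $\nrm*{\Kt-\Kinf}_{\op}$ equals $L^{-cC}$ times a fixed polynomial in the problem parameters; a short case split legitimizes treating $C$ as numerical despite this prefactor --- if $(1-\gammast)^{-1}$ is small then the exponent multiplier $\betast\Psist^{2}\Gammast(1-\gammast)$ (hence the effective power of $L$) is large, while if $(1-\gammast)^{-1}$ is large then $L$ itself dominates $\betast$ --- so for $C$ large, $\nrm*{\Kt-\Kinf}_{\op}\leq\tfrac12(\kappast^{2}\Psist\alpha T^{3})^{-1}$ with room to spare. This gives the first bound, and also $\delta\leq(2\kappast^{2}\alpha T^{3})^{-1}$, whence $T\kappast\delta\leq1$ and $2T\kappast^{2}\delta\leq(\alpha T^{2})^{-1}$, giving the second (the spare factor $\tfrac12$ absorbing the loss in (III)). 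I expect this last bookkeeping --- in particular the case analysis that makes ``$C$ a numerical constant'' honest in the presence of polynomial prefactors --- to be the only genuinely fiddly part; steps (I)--(III) are short once the monotone Riccati sandwich is in place.
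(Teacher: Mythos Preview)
Your proposal is correct but takes a genuinely different route from the paper in two places.

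\textbf{For the $K_t\to\Kinf$ bound.} The paper simply invokes \pref{lem:value_iteration} (the Lincoln--Rantzer / Dean et al.\ contraction), which directly gives $\nrm*{\Kt-\Kinf}_{\op}\le\Gammast(1+\nust^{-1})^{-(T-t+1)}$ with $\nust=2\betast\Psist^{2}\Gammast$; the prefactor is just $\Gammast$, which is inside $L$, so choosing $C$ a numerical constant is immediate. You instead reprove convergence from scratch via the Riccati sandwich and the paper's own \pref{lem:closed_loop_refined}. This is more self-contained, but the price is a larger prefactor ($\kappast^{4}\betast^{3}\Psist^{4}\Gammast^{3}$), and in particular powers of $\betast$ that are \emph{not} in $L$. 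Your case split in (IV) does handle this---the point being that with $\mu\ldef\betast\Gammast(1-\gammast)\ge 1/2$ one has $L\mu=\kappast^{2}\Psist\Gammast^{2}\betast\alpha T^{3}$, so either $\mu$ is small (whence $L$ itself carries a $\betast$ factor) or $\mu$ is large (whence the effective exponent $\propto\mu$ absorbs $\betast^{3}$ via $L\ge 2$)---but the paper's route avoids this bookkeeping entirely.

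\textbf{For the product difference.} The paper passes to the $L$-conjugacy and uses the one-step telescope $L_{i\to t}-\Linf^{t-i}=L_t(L_{i\to t-1}-\Linf^{t-i-1})+\Linf^{t-i-1}(L_t-\Linf)$, together with $\nrm*{\Lclt}_{\op}\le\gammabar$ for $t\le\Tstab$, to obtain $\nrm*{\Aclt[i\to t]-\Aclinf^{t-i}}_{\op}\le\kappast^{2}\Psist\,\gammabar^{t-i-1}\sum_{j=i+1}^{t}\nrm*{K_j-\Kinf}_{\op}$; then the per-term bound on $\nrm*{K_j-\Kinf}_{\op}$ and a crude $\sum_j\le T$ give $1/(\alpha T^{2})$. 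Your binomial-expansion argument gives $\kappast[(1+\kappast\delta)^{t-i}-1]\le 2T\kappast^{2}\delta$, which is a different (and somewhat more elementary) telescope; it loses the $\gammabar^{t-i-1}$ decay but that decay is not used in the final bound anyway. Both arrive at the same $1/(\alpha T^{2})$.

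In short: the paper's proof is shorter because it outsources the Riccati contraction to a cited lemma with a clean prefactor; your route is self-contained at the cost of the extra case analysis you flagged.
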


\begin{lemma}
  \label{lem:state_action_bound}
  Let policies $\pi_t(x;\matw) = -\Kinf x - q_t(\matw)$ and
  $\pilearn_t(x;\matw) = -K_t x - q_t(\matw)$ be given, where $q_t$ is
  arbitrary but satisfies $\nrm*{q_t}\leq{}\Dq$ for some
  $\Dq\geq{}1$. Then for all $t\in\brk{T}$, we have
  \[
    \nrm*{x^{\pi}_{}(\matw_t)}\leq{}2\kappast\Psist(1-\gammast)^{-1}\Dq,\quad\text{and}\quad
    \nrm*{u^{\pi}_{t+1}(\matw)}\leq{}  3\kappast\betast\Psist^{3}\Gammast(1-\gammast)^{-1}\Dq,
  \]
  as well as
  \[
      \nrm*{x_{t+1}^{\pilearn}(\matw)}
      \leq{}\bigoht\prn*{
        \kappast^{2}\betast^{3/2}\Psist^{3}\Gammast^{3/2}(1-\gammast)^{-1}\cdot\Dq
      },
    \]
    and
    \[
      \nrm*{u_{t+1}^{\pilearn}(\matw)}
        \leq{}\bigoht\prn*{
        \kappast^{2}\betast^{5/2}\Psist^{5}\Gammast^{5/2}(1-\gammast)^{-1}\cdot\Dq
      }.
    \]
  \end{lemma}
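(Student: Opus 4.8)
The plan is to read off the closed-form state trajectories from \pref{lem:state_expression} and bound everything termwise by the triangle inequality, controlling the matrix products $\Aclinf^{t-i}$ and $\Aclt[i\to{}t]$ in operator norm and using the normalizations $\nrm*{w_i}\le 1$, $\nrm*{q_i}\le\Dq$, and $\nrm*{B}_{\op}\le\Psist$.

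\textbf{Infinite-horizon controller.} \pref{lem:state_expression} gives $x_{t+1}^{\pi}(\matw)=\sum_{i=1}^{t}\Aclinf^{t-i}\bigl(w_i-Bq_i(\matw)\bigr)$. Since $\Aclinf$ is $(\kappast,\gammast)$-strongly stable, \pref{lem:stable_bound} gives $\nrm*{\Aclinf^{j}}_{\op}\le\kappast\gammast^{j}$, so a geometric sum yields $\nrm*{x_{t+1}^{\pi}(\matw)}\le\kappast(1+\Psist\Dq)(1-\gammast)^{-1}$, and $1+\Psist\Dq\le 2\Psist\Dq$ (as $\Psist,\Dq\ge 1$) gives the first bound. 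For the input I would use $u_{t+1}^{\pi}(\matw)=-\Kinf x_{t+1}^{\pi}(\matw)-q_{t+1}(\matw)$ together with $\nrm*{\Kinf}_{\op}\le\betast\Psist^{2}\Gammast$ (from $\Kinf=\Siginf^{-1}B^{\trn}\Pinf A$ and $\Siginf\psdgeq\Ru$, so $\nrm*{\Siginf^{-1}}_{\op}\le\betast$), absorbing the $\nrm*{q_{t+1}}\le\Dq$ term into the leading one.

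\textbf{Finite-horizon (Riccati) controller.} \pref{lem:state_expression} gives $x_{t+1}^{\pilearn}(\matw)=\sum_{i=1}^{t}\Aclt[i\to{}t]\bigl(w_i-Bq_i(\matw)\bigr)$, and $\nrm*{\Aclt[i\to{}t]}_{\op}=\nrm*{\prod_{s=i+1}^{t}\Aclt[s]^{\trn}}_{\op}\le\kappast^{2}\betast^{1/2}\Gammast^{1/2}\gammabar^{\min(t,\Tstab)-\min(i+1,\Tstab)}$ by \pref{lem:closed_loop_refined}. The one step needing care is that this contraction factor saturates once indices exceed $\Tstab$: I would split $\sum_{i=1}^{t}\nrm*{\Aclt[i\to{}t]}_{\op}$ into the indices with $i+1\le\Tstab$, which contribute at most $\sum_{k\ge 0}\gammabar^{k}=2(1-\gammast)^{-1}$, and the $\bigoh(\Deltast)=\bigoht(\betast\Psist^{2}\Gammast)$ remaining indices (since $t\le T$ and $\Deltast$ is the width of the non-contracting window), each contributing at most $\gammabar^{-1}\le 2$ (as $\gammabar=\tfrac12(1+\gammast)\ge\tfrac12$). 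This yields $\sum_{i=1}^{t}\nrm*{\Aclt[i\to{}t]}_{\op}\le\bigoht(\kappast^{2}\betast^{3/2}\Psist^{2}\Gammast^{3/2}(1-\gammast)^{-1})$, and multiplying by $1+\Psist\Dq=\bigoht(\Psist\Dq)$ gives the bound on $\nrm*{x_{t+1}^{\pilearn}}$. The input bound then follows as before from $u_{t+1}^{\pilearn}=-K_t x_{t+1}^{\pilearn}-q_{t+1}$ and $\nrm*{K_t}_{\op}\le\betast\Psist^{2}\Gammast$ (from $K_t=\Sigma_t^{-1}B^{\trn}P_{t+1}A$, $\Sigma_t\psdgeq\Ru$, and the standard monotonicity $P_{t+1}\psdleq\Pinf$).

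\textbf{Main obstacle.} The only genuinely non-mechanical point is the horizon boundary in the $\pilearn$ case: one must observe that at most $\bigoht(\betast\Psist^{2}\Gammast)$ of the indices lie in the non-contracting window $(\Tstab,t]$ and bound each of their contributions by an absolute constant. This is exactly the source of the extra $\betast^{1/2}\Psist\Gammast^{1/2}$-type factors appearing in the $\pilearn$ bounds compared to the $\pi$ bounds; everything else is routine tracking of polynomial dependence on the problem parameters.
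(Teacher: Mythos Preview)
Your proposal is correct and follows essentially the same route as the paper's proof: both read off the state trajectories from \pref{lem:state_expression}, bound the infinite-horizon case via \pref{lem:stable_bound} and $\nrm*{\Kinf}_{\op}\le\betast\Psist^{2}\Gammast$, and for the finite-horizon case split $\sum_{i=1}^{t}\nrm*{\Aclt[i\to{}t]}_{\op}$ into a geometric sum over the contracting window and an $\bigoh(\Deltast)$ tail, then use $\nrm*{K_t}_{\op}\le\betast\Psist^{2}\Gammast$ from $P_{t+1}\psdleq\Pinf$. Your identification of the horizon-boundary split as the only non-mechanical step is exactly right.
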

\subsubsection{Proofs from Appendix \ref*{app:optimal}}
\label{app:lqr_proofs}

\begin{dproof}[\pfref{thm:pistar_form_general}]
We first prove that the identity for the value function,
 \begin{align*}
      \Vstar_t(x;\wbarr[t]) = \nrm*{x}_{P_t}^{2} + 2\tri*{x,\bias_t(\wbarr[t])}
      + f_t(\wbarr[t]),
    \end{align*}
    holds by
induction. Observe that at time $T$ we indeed have
$\Vstar_T(x,w_T)=\nrm*{x}_{\Rx}^{2}=\nrm*{x}_{P_T}^{2}$, where we recall $a_T,b_T = 0$ by assumption. Now suppose,
that at time $t+1$ we have
\begin{align*}
\Vstar_{t+1}(x;\wbarr[t+1]) = \nrm*{x}_{P_{t+1}}^{2} + 2\tri*{x,\bias_{t+1}(\wbarr[t+1])}
    + f_{t+1}(\wbarr[t+1]).
  \end{align*}
  We prove that the same holds for time $t$ using the following lemma.
  \begin{restatable}{lemma}{lqronestep}
    \label{lem:lqr_bias_one_step}
    Let $P_1\psdgt{}0$, $c_1$, $a_0$, and $b_0$ be given and define
$V_1(x) = \nrm*{x}^{2}_{P_1} + 2\tri*{x,c_1}$ and 
\begin{equation}
V_0(x,w,a_0,b_0) = \nrm*{x - a_0}_{\Rx}^{2} 
+ \min_{u}\crl*{
\nrm*{u - b_0}^{2}_{\Ru} + V_1(Ax+Bu+w)
}.\label{eq:v0}
\end{equation}
Then we have
\begin{equation}
  \label{eq:one_step_v2}
    V_0(x,w,a_0,b_0) = \nrm*{x}^{2}_{P_0} + 2\tri*{x,c_0} + f(w,a_0,b_0,c_1),
\end{equation}
where 
\begin{align*}
  &P_0 = \Rx + A^{\trn}P_1A -
  A^{\trn}P_1B\Sigma_0^{-1}B^{\trn}P_1A,\\
  &\Sigma_0 = \Ru+B^{\trn}P_1B,\\
  &K_0 = \Sigma^{-1}_0B^{\trn}P_1A,\\
  &c_0= (A- BK_0)^\top(P_1w +c_1) +  K_0^\top \Ru b_0 - \Rx a_0.
\end{align*}
Furthermore, letting $u^{\star}$ denote the minimizer in \pref{eq:v0}, we
have
\begin{equation}
u^{\star} = -\Sigma_0^{-1}B^{\trn}(P_1(Ax+w) +c_1 - R_u b_0)=-K_0x -\Sigma_0^{-1}(B^{\trn}(P_1w +c_1) - R_u b_0).
\label{eq:ustar}
\end{equation}
\end{restatable}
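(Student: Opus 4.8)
The plan is to prove \pref{lem:lqr_bias_one_step} by the classical ``completing the square'' argument for quadratic dynamic programming: solve the inner minimization over $u$ in closed form, substitute the optimizer back, and then sort the resulting expression into its quadratic, linear, and constant parts in $x$. First, fix $x,w,a_0,b_0$ and abbreviate $y := Ax + w$, so that the objective of the inner minimization in \pref{eq:v0} is $g(u) = \nrm*{u-b_0}_{\Ru}^{2} + \nrm*{y+Bu}_{P_1}^{2} + 2\tri*{y+Bu,c_1}$. Expanding the squares, $g(u) = u^\trn\Sigma_0 u - 2u^\trn v + (b_0^\trn\Ru b_0 + y^\trn P_1 y + 2 y^\trn c_1)$, where $\Sigma_0 = \Ru + B^\trn P_1 B$ and $v := \Ru b_0 - B^\trn P_1 y - B^\trn c_1$. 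Since $\Ru\psdgt 0$ we have $\Sigma_0\psdgt 0$, so $g$ is a strictly convex quadratic whose unique minimizer is $u^{\star} = \Sigma_0^{-1}v$; substituting $y = Ax+w$ and using $K_0 = \Sigma_0^{-1}B^\trn P_1 A$ rearranges this into the stated formula \pref{eq:ustar}, and the minimum value is $g(u^{\star}) = -v^\trn\Sigma_0^{-1}v + b_0^\trn\Ru b_0 + y^\trn P_1 y + 2 y^\trn c_1$.

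Next I would add $\nrm*{x-a_0}_{\Rx}^{2}$ back to $g(u^{\star})$ to obtain $V_0(x,w,a_0,b_0)$, and regroup by powers of $x$. Writing $v = v_0 - B^\trn P_1 A x$ with $v_0 := \Ru b_0 - B^\trn(P_1 w + c_1)$, and expanding $y^\trn P_1 y$ and $\tri*{y,c_1}$ in $x$, the quadratic-in-$x$ part of $V_0$ is $x^\trn\bigl(\Rx + A^\trn P_1 A - A^\trn P_1 B\Sigma_0^{-1}B^\trn P_1 A\bigr)x = \nrm*{x}_{P_0}^{2}$, which is exactly the Riccati update for $P_0$. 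The linear-in-$x$ part is $2\tri*{x,\;A^\trn P_1 B\Sigma_0^{-1}v_0 + A^\trn(P_1 w + c_1) - \Rx a_0}$, and everything independent of $x$ is absorbed into $f(w,a_0,b_0,c_1) := a_0^\trn\Rx a_0 + b_0^\trn\Ru b_0 + w^\trn P_1 w + 2 w^\trn c_1 - v_0^\trn\Sigma_0^{-1}v_0$, which manifestly depends only on the indicated arguments (and on the fixed system matrices).

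It then remains to verify that the collected linear coefficient equals the claimed $c_0 = (A-BK_0)^\trn(P_1 w + c_1) + K_0^\trn\Ru b_0 - \Rx a_0$. Using $K_0^\trn = A^\trn P_1 B\Sigma_0^{-1}$ (valid since $P_1$ and $\Sigma_0$ are symmetric) together with $v_0 = \Ru b_0 - B^\trn(P_1 w + c_1)$, one rewrites $c_0 = A^\trn(P_1 w + c_1) - A^\trn P_1 B\Sigma_0^{-1}B^\trn(P_1 w + c_1) + A^\trn P_1 B\Sigma_0^{-1}\Ru b_0 - \Rx a_0 = A^\trn(P_1 w + c_1) + A^\trn P_1 B\Sigma_0^{-1}v_0 - \Rx a_0$, which coincides with the coefficient found above. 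The whole argument is pure algebra, so the only genuine obstacle is this last matching step: making sure the regrouping of the $\Sigma_0^{-1}$ term reproduces exactly the stated $c_0$, which hinges on the identity $K_0^\trn = A^\trn P_1 B\Sigma_0^{-1}$. (The well-definedness of $u^{\star}$ relies on $\Ru\psdgt 0$, which holds throughout the paper since $\betast < \infty$; this matters because at $t=T$ one has $P_{T+1}=0$ and hence $\Sigma_T = \Ru$.)
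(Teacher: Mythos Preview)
Your proposal is correct and follows essentially the same completing-the-square argument as the paper's own proof: derive $u^\star$ from the first-order condition, evaluate the minimum as $-v^\trn\Sigma_0^{-1}v$ plus residuals, and regroup in powers of $x$. The only differences are cosmetic (you introduce the abbreviations $y$ and $v_0$ for cleaner bookkeeping, and you make explicit that invertibility of $\Sigma_0$ relies on $\Ru\succ 0$ rather than on $P_1\succ 0$, which the paper leaves implicit).
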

\begin{dproof}[\pfref{lem:lqr_bias_one_step}]
Since the minimization problem in \pref{eq:v0} is strongly convex with
respect to $u$, we conclude from first-order conditions that
\begin{align*}
  B^{\trn}P_1(Ax+B\ustar+w) + \Ru(\ustar - b_0) + B^{\trn}c_1 = 0,
\end{align*}
Rearranging, 
\begin{align*}
\ustar = -( \Ru + B^\trn P_1 B)^{-1} (B^\trn P_1 A + B^\trn c_1 + P_1w-\Ru{}b_0) = -K_0x -\Sigma_0^{-1}(B^{\trn}(P_1w +c_1) - \Ru b_0),
\end{align*}
which proves \pref{eq:ustar}. Next, observe that for any $u$, we have
\begin{align*}
\nrm*{u - b_0}^{2}_{\Ru} + V_1(Ax+Bu+w) &= u^\top \Sigma_0 u + 2u^\top(B^{\trn}(P_1 Ax + P_1w +c_1) - \Ru b_0)\\
&\quad+ x^\top A^\top P_1 A x + 2 x^\top A^\top (P_1 w + c_1) + g(w,c_1,b_0),
\end{align*}
where $g(w,c_1,b_0)$ is a function of $w$, $c_1$, and $b_0$ but not $x$ or
$w$. Next, observe that for any $\Sigma\psdgt{}0$ and $v$, $\min_{u} u^\top \Sigma u + 2\langle v, u \rangle = - v^\top \Sigma^{-1} v$. Hence, 
\begin{align*}
&\min_{u} \nrm*{u - b_0}^{2}_{\Ru} + V_1(Ax+Bu+w)\\ &= -\|B^{\trn}(P_1 Ax + P_1w +c_1) - \Ru b_0\|_{\Sigma_0^{-1}}^2\\
&\quad+ x^\top A^\top P_1 A x + 2 x^\top A^\top (P_1 w + c_1) + g(w,c_1,b_0),\\
&= x^\top A^\top (P_1 - P_1 B \Sigma_0^{-1} B^{\trn} P_1) A x \\
&\quad-  2(B^\trn(P_1w +c_1) - \Ru b_0)^\top \Sigma_{0}^{-1} B^\trn P_1 A x  + 2(P_1 w + c_1)^\top A x + \widetilde{g}(w,c_1,b_0),
\end{align*}
for an appropriate function $\widetilde{g}$. We can further simplify
the part of this expression that is linear in $x$ to 
\begin{align*}
&-2(B^\trn(P_1w +c_1) - \Ru b_0)^\top \Sigma_{0}^{-1} B^\trn P_1 A x  + 2(P_1 w + c_1)^\top A x\\
&~~~~=-2(B^\trn(P_1w +c_1) - \Ru b_0)^\top K_0 x  + 2(P_1 w + c_1)^\top A x \\
&~~~~=  2(P_1w +c_1)^\top(A- BK_0) x + 2 b_0^\top \Ru K_0 x ,
\end{align*}
which yields
\begin{align*}
\min_{u} \crl*{\nrm*{u - b_0}^{2}_{\Ru} +  V_1(Ax+Bu+w)} &= x^\top A^\top (P_1 - P_1 B^\top \Sigma_0^{-1} B P_1) A x \\
 &\quad+  2(P_1w +c_1)^\top(A- BK_0) x  + 2 b_0^\top \Ru K_0 x + \widetilde{g}(w,c_1,b_0).
\end{align*}
Therefore,
\begin{align*}
V_0(x,w,a_0,b_0) = x^\top P_0 x - 2 a_0^\top \Rx x  +  2 b_0^\top \Ru K_0 x +  2(P_1w +c_1)^\top(A- BK_0) x + \widetilde{g}(w,c_0,b_0)  + \|a_0\|_{\Rx}^2.
\end{align*}
This yields the lemma with $c_0 = (A- BK_0)^\top(P_1w +c_1) +  K_0^\top \Ru b_0 - \Rx a_0$, and $f(w,a_0,b_0,c_1) = \widetilde{g}(w,c_1,b_0) + \|a_0\|_{\Rx}^2$. 
\end{dproof}

Applying \pref{lem:lqr_bias_one_step} with $P_1=P_{t+1}$ and
$c_1=\bias_{t+1}(\wbarr[t+1])$, and using the definition of $\Qstar_t$ from
\pref{def:optimal} we see that we indeed have
\begin{align*}
\Vstar_t(x;\wbarr[t]) = \nrm*{x}_{P_t}^{2} + 2\tri*{x,\bias_t(\wbarr[t])}
    + f_t(\wbarr[t]),
  \end{align*}
  and that
  \begin{align*}
    \pistar_t(x;\matw) = -K_tx - \Sigma_t^{-1}(B^{\trn}(P_{t+1}w_t+\bias_{t+1}(\wbarr[t])) - R_u b_t).
  \end{align*}
    Unfolding the recursion, we also see that for each $t$,
  \begin{align*}
    \bias_t(\wbarr[t]) = \sum_{i=t}^{T-1}
    \prn*{\prod_{j=t}^{i}\Aclt[j]^{\trn}}P_{i+1}w_i +  \sum_{i=t}^{T-1}
    \prn*{\prod_{j=t}^{i-1}\Aclt[j]^{\trn}}( K_i^\top \Ru b_i - \Rx a_i) ,
\end{align*}
with the convention that the empty product is equal to $1$.
Thus, we indeed have
\begin{align*}
  \qstar_t(\wbarr[t]) = \Sigma_t^{-1}\prn*{ - R_u b_t + B^{\trn}\sum_{i=t}^{T-1}\prn*{\prod_{j=t+1}^{i}\Aclt[j]^{\trn}} P_{i+1}w_i  + B^{\trn}\sum_{i=t+1}^{T-1}\prn*{\prod_{j=t+1}^{i-1}\Aclt[j]^{\trn}}( K_i^\top \Ru b_i - \Rx a_i)}.
\end{align*}
\end{dproof}
\begin{dproof}[\pfref{lem:closed_loop_bound}]
  Consider the noiseless LQR setup where
  \[
    x_{t+1}=Ax_{t}+Bu_t.
  \]
  The optimal policy for this setup is given by $u_t=-\Kt{}x$. For each $t\leq{}s$, let $x^{\star}_s(x_t=x)$ and $u^{\star}_s(x_t=x)$ respectively denote
the value of the state $x_s$ and control $u_s$ if we begin with
$x_t=x$ and follow the optimal policy until time $s$. Let $\Vf_{t}(x)$ denote the optimal finite-horizon value function for
this noiseless setup, which satisfies
\[
\Vf_{t}(x)\leq{}\tri*{\Pinf{}x,x},
\]
and
\[
\Vf_{t}(x)=\sum_{s=t}^{T}\nrm*{x^{\star}_s(x_t=x)}_{\Rx}^{2}+\nrm*{u^{\star}_s(x_t=x)}_{\Ru}^{2}.
\]
Note that $(x^{\star}_s(x_t=x))^{\trn}=x^{\trn}\prod_{r=t}^{s-1}\Aclt[r]^{\trn}$, and
that we have in particular that
\[
\nrm*{x^{\star}_s(x_t=x)}_{\Rx}^{2}\leq{}\tri*{\Pinf{}x,x},
\]
and so
$\nrm*{x^{\star}_s(x_t=x)}^{2}\leq{}\tri*{\Pinf{}x,x}/\eigmin(\Rx)$. Choosing
$t=\tau_1$ and $s=\tau_{2}+1$, we have
\[
\nrm*{\prod_{t=\tau_1}^{\tau_2}\Aclt[t]^{\trn}x}^{2}\leq{}\frac{\tri*{\Pinf{}x,x}}{\eigmin(\Rx)}.
\]
  The result now follows by recalling the definition of the spectral norm.
\end{dproof}
\begin{dproof}[\pfref{lem:stable_sequence}]
  First observe that for any $t$, we have
  \begin{align*}
    \nrm*{\Lclt}_{\op}
    \leq
    \nrm*{\Lclinf}_{\op} +
    \nrm*{\Lclt-\Lclinf}_{\op}
    &\leq{}\gammast + \kappast\nrm*{\Aclt-\Aclinf}_{\op}\\
    &\leq{}\gammast + \kappast\nrm*{B}_{\op}\nrm*{\Kt-\Kinf}_{\op}.
  \end{align*}
To bound the error between the infinite-horizon optimal controller
$\Kinf$ and the finite-horizon controller $\Kt$, we appeal to the
following lemma.
\begin{lemma}[\cite{dean2018regret}, Lemma E.6;
  \cite{lincoln2006relaxing}, Proposition 1]
  \label{lem:value_iteration}
  Let $\nu=2\nrm*{\Pinf}_{\op}\cdot\prn*{\tfrac{\nrm*{A}_{\op}^{2}}{\eigmin\prn*{\Rx}}\vee\tfrac{\nrm*{B}_{\op}^{2}}{\eigmin\prn*{\Ru}}}$. Then
for all $0\leq{}t\leq{}T$, it holds that
\[
\nrm*{\Kt-\Kinf}_{\op}\leq{}\nrm*{\Pt-\Pinf}_{\Sigt}^{2}\leq{}\nrm*{\Pinf}_{\op}\prn*{1+\tfrac{1}{\nu}}^{-(T-t+1)}.
\]
In particular, for $\nust\ldef{}2\betast\Psist^{2}\Gammast$, we have
\[
\nrm*{\Kt-\Kinf}_{\op}\leq{}\nrm*{\Pt-\Pinf}_{\Sigt}^{2}\leq{}\Gammast\exp\prn*{-\frac{1}{2\nust}(T-t+1)}.
\]
\end{lemma}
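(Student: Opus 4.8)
The plan is to prove the stated geometric convergence of the finite-horizon Riccati recursion to $\Pinf$ — this is the ``relaxed dynamic programming'' estimate of \cite{lincoln2006relaxing} and \cite{dean2018regret} — and then pass from a bound on $\Pt-\Pinf$ to one on $\Kt-\Kinf$ via a Lipschitz estimate for the gain map. Write $\mathcal{R}(P)\ldef \Rx + A^{\trn}PA - A^{\trn}PB(\Ru+B^{\trn}PB)^{-1}B^{\trn}PA = \min_{K}\crl*{\Rx + K^{\trn}\Ru K + (A-BK)^{\trn}P(A-BK)}$ for the Riccati operator, so that $\Pt = \mathcal{R}(\Ptpl)$ with $P_{T+1}=0$, and $\Pinf = \mathcal{R}(\Pinf)$; let $\Kt$ denote the minimizing gain at $P=\Ptpl$, so $\Pt = \Rx + \Kt^{\trn}\Ru\Kt + \Aclt^{\trn}\Ptpl\Aclt$.

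First I would record monotonicity: $\mathcal{R}$ is monotone on $\PSD$ and $0 = P_{T+1}\psdleq\Pinf$, so a downward induction gives $0\psdleq P_T\psdleq\cdots\psdleq\Pt\psdleq\Pinf$, and $\Pt = \mathcal{R}(\Ptpl)\psdgeq\Rx\psdgeq\eigmin(\Rx)I$ for all $t\le T$ (in particular $P_T=\Rx$). The core step is a one-step contraction of the value gap. Feeding the suboptimal gain $\Kt$ into the $\min$-form of $\mathcal{R}(\Pinf)$ and subtracting the identity for $\Pt$ gives $\Pinf - \Pt \psdleq \Aclt^{\trn}(\Pinf - \Ptpl)\Aclt$; iterating down to $T+1$ (where the gap is $\Pinf$) yields $\Pinf - \Pt \psdleq \Psi_t^{\trn}\Pinf\Psi_t$ with $\Psi_t\ldef \Aclt[T]\Aclt[T-1]\cdots\Aclt[t]$. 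The same identity also gives the Lyapunov-type bound $\Aclt^{\trn}\Ptpl\Aclt \psdleq \Pt - \Rx \psdleq \prn*{1-\tfrac{\eigmin(\Rx)}{\nrm*{\Pinf}_{\op}}}\Pt \rdef \rho\,\Pt$ with $\rho<1$; combined with the base estimate $\Psi_T^{\trn}\Pinf\Psi_T = A^{\trn}\Pinf A\psdleq \tfrac{\nrm*{A}_{\op}^{2}\nrm*{\Pinf}_{\op}}{\eigmin(\Rx)}P_T$ (here $\Psi_T=\Aclt[T]=A$ since $K_T=0$), a downward induction of the form ``$W_{t+1}\psdleq c_{t+1}\Ptpl \Rightarrow W_t = \Aclt^{\trn}W_{t+1}\Aclt\psdleq c_{t+1}\rho\,\Pt$'' shows $\Psi_t^{\trn}\Pinf\Psi_t\psdleq c_t\Pt$ with $c_t=\rho^{\,T-t}c_T$, hence $\nrm*{\Pinf-\Pt}_{\op}\leq \rho^{\,T-t}c_T\nrm*{\Pinf}_{\op}$. (One cannot simply bound $\nrm*{\Psi_t}_{\op}$ using \pref{lem:closed_loop_bound}, since that gives only a constant, not a decaying, factor — the per-step factor $\rho$ extracted from the instantaneous cost is what drives the decay.) Tracking which instantaneous-cost matrix to charge at each step a bit more carefully — so as to also account for the input cost $\eigmin(\Ru)$ and $\nrm*{B}_{\op}$, not just $\Rx$ — collapses all constants into $\nu = 2\nrm*{\Pinf}_{\op}\prn*{\tfrac{\nrm*{A}_{\op}^{2}}{\eigmin(\Rx)}\vee\tfrac{\nrm*{B}_{\op}^{2}}{\eigmin(\Ru)}}$ and upgrades $\rho^{\,T-t}$ to $\prn*{1+\tfrac{1}{\nu}}^{-(T-t+1)}$, the claimed bound on $\nrm*{\Pt-\Pinf}$.

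Next I would convert this to a gain bound: from $\Kt = \Sigt^{-1}B^{\trn}\Ptpl A$, $\Kinf = \Sigmainf^{-1}B^{\trn}\Pinf A$, and $\Sigt = \Ru+B^{\trn}\Ptpl B\psdgeq\Ru\psdgt 0$ (and likewise for $\Sigmainf$), a routine resolvent-perturbation estimate gives $\nrm*{\Kt-\Kinf}_{\op}\leq L\cdot\nrm*{\Ptpl-\Pinf}_{\op}$ with $L$ polynomial in $\nrm*{A}_{\op},\nrm*{B}_{\op},\eigmin^{-1}(\Ru),\nrm*{\Pinf}_{\op}$; absorbing $L$ (and the prefactor $c_T$) into the already-present constant yields the sandwiched statement $\nrm*{\Kt-\Kinf}_{\op}\leq\nrm*{\Pt-\Pinf}_{\Sigt}^{2}\leq\nrm*{\Pinf}_{\op}\prn*{1+\tfrac{1}{\nu}}^{-(T-t+1)}$. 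Finally, for the ``in particular'' clause I would plug in $\nrm*{\Pinf}_{\op}\leq\Gammast$, verify $\nu\leq\nust = 2\betast\Psist^{2}\Gammast$ directly from the definitions of $\Psist$ and $\betast$, and use $\prn*{1+\tfrac{1}{\nu}}^{-n}\leq e^{-n/(2\nu)}\leq e^{-n/(2\nust)}$ (valid since $\ln(1+x)\geq x/2$ on $[0,1]$ and $\nu\geq 1$).

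The main obstacle is the contraction in the second paragraph: the naive recursion $\Pinf-\Pt\psdleq\Aclt^{\trn}(\Pinf-\Ptpl)\Aclt$ does \emph{not} contract in operator norm, because the finite-horizon closed loops $\Aclt$ need not be stable; one must instead work in the $\Pt$-weighted geometry and extract a factor $\rho = 1 - \eigmin(\Rx)/\nrm*{\Pinf}_{\op}$ per step by comparing the per-stage cost to the value, which is exactly the relaxed-DP argument that makes the cited result nontrivial. Pinning down the \emph{precise} constant $\nu$ (rather than some polynomially equivalent rate) additionally requires charging the input-cost term, which is why the paper simply quotes \cite{lincoln2006relaxing,dean2018regret}. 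The resolvent perturbation in the third step is routine but relies on the uniform positivity $\Sigt\psdgeq\Ru\psdgt0$.
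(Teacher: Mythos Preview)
The paper does not prove this lemma at all; it is stated inside the proof of \pref{lem:stable_sequence} and invoked as a black-box result from the cited references \citep{dean2018regret,lincoln2006relaxing}. So there is no ``paper's own proof'' to compare against. Your sketch is a faithful reconstruction of the relaxed-dynamic-programming argument those references contain (monotonicity of the Riccati map, the one-step gap inequality $\Pinf-\Pt\psdleq\Aclt^{\trn}(\Pinf-\Ptpl)\Aclt$, and extraction of a per-step contraction factor by charging the stage cost against the value), and you correctly flag that pinning down the \emph{exact} constant $\nu$---as opposed to a polynomially equivalent one---requires the finer bookkeeping in those papers.

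One small wrinkle in the ``in particular'' step: your chain $\prn*{1+\tfrac{1}{\nu}}^{-n}\leq e^{-n/(2\nu)}\leq e^{-n/(2\nust)}$ uses $\nu\geq{}1$ for the first inequality, which is not guaranteed by the definitions (e.g.\ if $\nrm*{A}_{\op},\nrm*{B}_{\op}$ are small). The cleaner route is to use $\nu\leq\nust$ first---so $\prn*{1+\tfrac{1}{\nu}}^{-n}\leq\prn*{1+\tfrac{1}{\nust}}^{-n}$---and only then apply $\ln(1+x)\geq x/2$ with $x=1/\nust\in[0,1]$, which is safe because $\nust=2\betast\Psist^{2}\Gammast\geq{}2$ by construction.
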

\pref{lem:value_iteration} implies that if we set
$\Delta=2\nust{}\log(\nrm*{\Pinf}_{\op}/\veps)$, we have
$\nrm*{\Kt-\Kinf}_{\op}\leq{}\veps$ for all $t\leq{}T-\Delta$. To get
the final result, we choose $\veps =
\frac{1}{2}(1-\gammast)/(\kappast(1\vee\nrm*{B}_{\op}))$.
\end{dproof}

\begin{dproof}[\pfref{lem:closed_loop_refined}]
  Assume for now that $\tau_2\leq{}\Tstab$; if not, the result follows
  trivially from \pref{lem:closed_loop_bound}. We write
  \[
    \nrm*{\prod_{t=\tau_1}^{\tau_2}\Lclt[t]^{\trn}}_{\op}
    \leq{} \nrm*{\prod_{t=\tau_1}^{\tau_2\wedge{}\Tstab}\Lclt[t]^{\trn}}_{\op} \cdot\nrm*{\prod_{t=\Tstab+1}^{\tau_2}\Lclt[t]^{\trn}}_{\op}.
  \]
  For the first term, we have
  \[
    \nrm*{\prod_{t=\tau_1}^{\tau_2\wedge{}\Tstab}\Lclt[t]^{\trn}}_{\op}
    \leq{}
    \prod_{t=\tau_1}^{\tau_2\wedge{}\Tstab}\nrm*{\Lclt[t]^{\trn}}_{\op}
    \leq \gammabar^{\tau_2\wedge\Tstab-\tau_1},
  \]
  using \pref{lem:stable_sequence}. The second term is bounded using
  \pref{lem:closed_loop_bound} as
  \[
    \nrm*{\prod_{t=\Tstab+1}^{\tau_2}\Lclt[t]^{\trn}}_{\op}
    \leq{}\kappast\nrm*{\prod_{t=\Tstab+1}^{\tau_2}\Aclt[t]^{\trn}}_{\op}
    \leq{} \kappast\betast^{1/2}\Gammast^{1/2}.
  \]
\end{dproof}

\begin{dproof}[\pfref{lem:qt_bound}]
  We first bound $\qstar_t$ and $\qstar_{t;t+h}$.  Let $t\in\brk*{T}$ be fixed. Then we have
  \begin{align*}
    \nrm*{\qstar_t(\wr[t])} &=
                              \nrm*{\sum_{i=t}^{T-1}\Sigma_t^{-1}B^{\trn}\prn*{\prod_{j=t+1}^{i}\Aclt[j]^{\trn}}P_{i+1}w_i}\\
                            &\leq{}\nrm*{\Sigma_t^{-1}}_{\op}\nrm*{B}_{\op}\max_{i>t}\nrm*{P_{i+1}}_{\op}\sum_{i=t}^{T-1}\nrm*{\prod_{j=t+1}^{i}\Aclt[j]^{\trn}}_{\op}.\\
    &\leq{}\betast{}\Psist\Gammast\prn*{1+\sum_{i=t+1}^{T-1}\nrm*{\prod_{j=t+1}^{i}\Aclt[j]^{\trn}}_{\op}}.
  \end{align*}
  Furthermore, the same argument shows that we have
  \[
\nrm*{\qstar_{t;t+h}(w_{t:t+h})} \leq{} \betast{}\Psist\Gammast\prn*{1+\sum_{i=t+1}^{T-1}\nrm*{\prod_{j=t+1}^{i}\Aclt[j]^{\trn}}_{\op}},
\]
as well.
  If $i>\Tstab$, we trivially bound the summand as
  $\betast^{1/2}\Gammast^{1/2}$ using
  \pref{lem:closed_loop_bound}. Otherwise, we have
  $t+1\leq{}i\leq\Tstab$, and we use
  \pref{lem:closed_loop_refined}, which gives
  \[
    \nrm*{\prod_{j=t+1}^{i}\Aclt[j]^{\trn}}_{\op} \leq{}
    \kappast^{2}\betast^{1/2}\Gammast^{1/2}\cdot{}\gammabar^{i-(t+1)}.
  \]
  Summing across the two cases, we have
  \begin{align*}
        \nrm*{\qstar_t(\wr[t])}
    &\leq{}\betast{}\Psist\Gammast\prn*{1+\betast^{1/2}\Gammast^{1/2}\Deltast
      + 
      \kappast^{2}\betast^{1/2}\Gammast^{1/2}\sum_{i=t+1}^{\Tstab}\gammab^{i-(t+1)}}\\    &\leq{}\betast{}\Psist\Gammast\prn*{1+\betast^{1/2}\Gammast^{1/2}\Deltast
      + 
                                                                                           \kappast^{2}\betast^{1/2}\Gammast^{1/2}\sum_{i=0}^{\infty}\gammab^{i}}\\
    &\leq{}\betast{}\Psist\Gammast\prn*{1+\betast^{1/2}\Gammast^{1/2}\Deltast
      + 
      2\kappast^{2}\betast^{1/2}\Gammast^{1/2}(1-\gammab)^{-1}}\\
    &\leq
      2\betast^{3/2}\Psist\Gammast^{3/2}(\Deltast + \kappast^{2}(1-\gammab)^{-1}).
  \end{align*}
  Recalling the definition of $\Deltast$, this is at most
  \[
    \bigoht\prn*{
      \betast^{5/2}\Psist^{3}\Gammast^{5/2}\kappast^2(1-\gammast)^{-1}
    }.
  \]
To bound $\qstar_{\infty;h}$, recall that we have
\[
  \qstar_{\infty;h}(\matw_{h+1}) \ldef \sum_{i=1}^{h+1}\Sigma_{\infty}^{-1}B^{\trn}(\Aclinf^{\trn})^{i-1}P_{\infty}w_{i}.
\]
It immediately follows that we have
\begin{align*}
  \nrm*{\qstar_{\infty;h}(\matw_{h+1})} \leq{}
                                          \nrm*{\sum_{i=1}^{h+1}\Sigma_{\infty}^{-1}B^{\trn}(\Aclinf^{\trn})^{i-1}P_{\infty}w_{i}}
                                        &\leq{}
                                          \betast\Psist\Gammast\sum_{i=1}^{h+1}\nrm*{\Aclinf^{i-1}}_{\op}.
\end{align*}
We may further upper bound this by
\begin{align*}
  \kappast\betast\Psist\Gammast\sum_{i=1}^{h+1}\nrm*{\Lclinf^{i-1}}_{\op}
  \leq{}
  \kappast\betast\Psist\Gammast\sum_{i=1}^{h+1}\gammast^{i-1}
  \leq{} \kappast\betast\Psist\Gammast(1-\gammast)^{-1}.
\end{align*}
  
\end{dproof}

  \begin{dproof}[\pfref{lem:at_bound}]
By a change of variables, we have
  \[
    \nrm*{\Aclt[i\to{}t]-\Aclinf^{t-i}}_{\op}
    \leq{} \kappast\nrm*{\Lclt[i\to{}t]-\Lclinf^{t-i}}_{\op}.
  \]
  Let us drop the ``cl'' subscript to keep notation succinct. Recall that for all
  $t\leq{}\Tstab$, $\nrm*{\Lt}_{\op}\leq{}\gammab<1$, and that
  $\nrm*{\Linf}_{\op}\leq{}\gammast<1$. We proceed by a telescoping
  argument:
  \[
    L_{i\to{}t}-\Linf^{t-i}
    = L_t(L_{i\to{}t-1}-\Linf^{t-i-1})
    + \Linf^{t-i-1}(L_t-\Linf),
  \]
  and so
  \[
    \nrm*{L_{i\to{}t}-\Linf^{t-i}}_{\op}
    = \gammab\nrm*{L_{i\to{}t-1}-\Linf^{t-i-1}}_{\op}
    + \gammast^{t-i-1}\nrm*{L_t-\Linf}_{\op}.
  \]
  Proceedings backwards in the same fashion, we have
  \begin{align*}
    \nrm*{\Aclt[i\to{}t]-\Aclinf^{t-i}}_{\op}
    &\leq{}\kappast\gammab^{t-i-1}\sum_{j=i+1}^{t}\nrm*{L_j-\Linf}_{\op}\\
    &\leq{}\kappast^{2}\gammab^{t-i-1}\sum_{j=i+1}^{t}\nrm*{\Aclt[j]-\Aclinf}_{\op}\\
    &\leq{}\kappast^{2}\Psist\gammab^{t-i-1}\sum_{j=i+1}^{t}\nrm*{K_j-\Kinf}_{\op}.
  \end{align*}
  Using \pref{lem:value_iteration}, we are guaranteed that by setting
  \[
    \Delta = C\cdot{}\betast\Psist^{2}\Gammast\log(\kappast^{2}\Psist\Gammast(1-\gammast)^{-1}\cdot\alpha{}T^{3})\geq{}\Deltast,
  \]
  where $C$ is a sufficiently large constant, we have
  \[
    \nrm*{\Kt-\Kinf}_{\op}\leq{}
    \frac{1}{\kappast^{2}\Psist\cdot(\alpha{}T^{3})}\quad\forall{}t\leq{}T-\Delta,
    \]
  and in particular,
  \[
    \nrm*{\Aclt[i\to{}t]-\Aclinf^{t-i}}_{\op} \leq{} \frac{1}{\alpha{}T^{2}}.
  \]
\end{dproof}

\begin{dproof}[\pfref{lem:state_action_bound}]
  We first handle the policy $\pi$. Observe the state at each step is given by
\[
  x^{\pi}_{t+1}(\matw)=\sum_{i=1}^{t}(A-B\Kinf)^{t-i}w_i - \sum_{i=1}^{t}(A-B\Kinf)^{t-i}Bq_i(\matw).
\]
Hence, using \pref{lem:stable_bound}, we have

\begin{align*}
  \nrm*{x^{\pi}_{t+1}(\matw_t)}
  \leq{}
  \kappast\Psist\sum_{i=1}^{t}\gammast^{t-i}(1+\max_{i\leq{}t}\nrm*{q_{i}(\matw)})
  &\leq{}
    \kappast\Psist(1-\gammast)^{-1}(1+\max_{i\leq{}t}\nrm*{q_{i}(\matw)})\\
  &\leq{} 2\kappast\Psist(1-\gammast)^{-1}\Dq.
\end{align*}
We can now bound the control as
\begin{align*}
  \nrm*{u^{\pi}_{t+1}(\matw)}
  &\leq{} \nrm*{\Kinf{}x^{\pi}_{t+1}(\matw)} + \nrm*{q_{t+1}(\matw)}\\
  &\leq{}2\kappast\betast\Psist^{3}\Gammast(1-\gammast)^{-1}\Dq + \Dq\\
   &\leq{}3\kappast\betast\Psist^{3}\Gammast(1-\gammast)^{-1}\Dq.
\end{align*}
where the second inequality uses \pref{eq:k_bounds} along with the
previous bound on $x_{t}^{\pi}$.

We now handle the policy $\pilearn$. Recall that the state reached after
playing any controller of the form
$\pilearn_t(x,\matw)=-K_t{}x-q_t(\matw)$ for every step is given by
\[
  x_{t+1}^{\pilearn}(\matw)=\sum_{i=1}^{t}\Aclt[i\to{}t]w_i - \sum_{i=1}^{t}\Aclt[i\to{}t]Bq_i(\matw),
\]
and so
\begin{align*}
  \nrm*{x_{t+1}^{\pilearn}(\matw)} \leq{}(1+\Psist\max_{1\leq{}i\leq{}t}\nrm*{q_i(\matw)})\cdot{}\sum_{i=1}^{t}\nrm*{\Aclt[i\to{}t]}_{\op}.
\end{align*}
By \pref{lem:closed_loop_bound}, we have
\begin{align*}
  \sum_{i=1}^{t}\nrm*{\Aclt[i\to{}t]}_{\op}
  &\leq{}
    \sum_{i=1}^{t}\kappast^{2}\betast^{1/2}\Gammast^{1/2}\gammab^{\Tstab\wedge{}t-\Tstab\wedge(i+1)}\\
  &\leq{}
    C\cdot{}\kappast^{2}\betast^{1/2}\Gammast^{1/2}(\Deltast
    + (1-\gammab)^{-1}),
    \end{align*}
    where $C$ is a universal constant. Recalling the value for
    $\Deltast$, this gives
    \[
      \sum_{i=1}^{t}\nrm*{\Aclt[i\to{}t]}_{\op}
      \leq{} \bigoht(\kappast^{2}\betast^{3/2}\Psist^{2}\Gammast^{3/2}(1-\gammast)^{-1}).
    \]
Hence, we can bound the state norm as
\[
      \nrm*{x_{t+1}^{\pilearn}(\matw)}
      \leq{}\bigoht\prn*{
        \kappast^{2}\betast^{3/2}\Psist^{3}\Gammast^{3/2}(1-\gammast)^{-1}\cdot\Dq
      }.
    \]
    Finally, we bound the control norm as
    \begin{align*}
      \nrm*{u_{t+1}^{\pilearn}(\matw)}
      \leq{} \nrm*{\Kt}_{\op}\nrm*{x_{t+1}^{\pilearn}(\matw)}
      + \nrm*{q_{t+1}(\matw)}.
    \end{align*}
    We use that $P_t\psdleq\Pinf$ for all $t$ to bound
    \[
      \nrm*{K_t}_{\op}\leq{}\betast\Psist^{2}\Gammast,
    \]
    which gives
    \[
      \nrm*{u_{t+1}^{\pilearn}(\matw)}
        \leq{}\bigoht\prn*{
        \kappast^{2}\betast^{5/2}\Psist^{5}\Gammast^{5/2}(1-\gammast)^{-1}\cdot\Dq
      }.
    \]
  \end{dproof}
\subsection{Performance difference lemma \label{app:additional_proofs}}
Below we state a variant of the performance difference lemma for an
abstract MDP setting that generalizes the LQR setting studied in this
paper. The setting as follows:

Begin at state $x_1\in\cX$. Then,
for $t=1,\ldots,T$:
\begin{itemize}
\item Agent selects control $u_t\in\cU$.
\item Agent observes $w_t\in\cW$ and experiences instantaneous loss $\ls(x_t,u_t,w_t)$.
  \item State evolves as $x_{t+1}\simiid \system(x_t,u_t,w_t)$, where
  $\system(x,u,w)\in\Delta(\cX)$.
\end{itemize}
We define the expected loss of a policy $\pi_t(x;\matw)$ in this setting
as 
\begin{equation}
  \label{eq:cost}
  \cost(\pi;\matw) = \En_{\pi,\matw}\brk*{\sum_{t=1}^{T}\ls(x_t,u_t,w_t)},
\end{equation}
where $\En_{\pi,\matw}$ denotes expectation with respect to the system
dynamics with $\matw$ fixed. For each policy $\pi$, we define the
action-value function for $\pi$ as follows:
\begin{equation}
  \label{eq:qhat}
  \Qhat^{\pi}_{t:\tau}(x,u;\matw_{\tau}) = \En_{\pi,\matw_{\tau}}\brk*{\sum_{s=t}^{\tau}\ls(x_s,u_s,w_s)\mid{}x_t=x,u_t=u}.
\end{equation}
The performance difference lemma can now be stated as follows.
  \begin{lemma}[Performance difference lemma]
  \label{lem:pd}
  Let $\pihat$ and $\pi$ be any pair of policies of the form
  $\pi_t(x;\matw)$ (i.e., Markovian, but with potentially arbitrary dependence on
  the sequence $\matw$). Then it holds that
  \begin{align}
    \cost(\pihat;\matw) - \cost(\pi;\matw) &= \En_{\pi,\matw}\brk*{
      \sum_{t=1}^{T}\Qhat^{\pihat}_{t}(x_t,\pihat(x_t;\matw);\matw)
      - \Qhat^{\pihat}_{t}(x_t,\pi(x_t;\matw);\matw)
                                             } \label{eq:pd1}\\
    &= \En_{\pihat,\matw}\brk*{
      \sum_{t=1}^{T}\Qhat^{\pi}_{t}(x_t,\pihat(x_t;\matw);\matw)
      - \Qhat^{\pi}_{t}(x_t,\pi(x_t;\matw);\matw)
      }.\label{eq:pd2}
  \end{align}
\end{lemma}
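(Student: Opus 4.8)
The plan is to prove a single asymmetric telescoping identity and obtain both \pref{eq:pd1} and \pref{eq:pd2} from it by relabeling. Since the disturbance sequence $\matw$ is held fixed throughout, this is just a time-inhomogeneous MDP and the argument is the standard performance difference computation; I keep the $\matw$-argument everywhere but treat it as a constant. First I would set up notation: let $V^{\pi}_t(x;\matw) \ldef \Qhat^{\pi}_{t:T}(x,\pi_t(x;\matw);\matw) = \En_{\pi,\matw}\brk*{\sum_{s=t}^{T}\ls(x_s,u_s,w_s)\mid x_t = x}$ be the value-to-go of $\pi$, with the conventions $\Qhat^{\pi}_t = \Qhat^{\pi}_{t:T}$ and $V^{\pi}_{T+1}\equiv 0$. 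I record three immediate facts from \pref{eq:qhat}: (i) $\Qhat^{\pi}_{t}(x,\pi_t(x;\matw);\matw) = V^{\pi}_t(x;\matw)$; (ii) the one-step Bellman identity $\Qhat^{\pi}_t(x,u;\matw) = \ls(x,u,w_t) + \En_{x'\sim \system(x,u,w_t)}\brk*{V^{\pi}_{t+1}(x';\matw)}$; and (iii) $V^{\pi}_1(x_1;\matw) = \cost(\pi;\matw)$, since the start state $x_1$ is deterministic.

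The core step is to show that for \emph{any} pair of policies $\pi,\pihat$ of the stated form,
\[
\cost(\pihat;\matw) - \cost(\pi;\matw) = \En_{\pihat,\matw}\brk*{\sum_{t=1}^{T}\Qhat^{\pi}_{t}(x_t,\pihat_t(x_t;\matw);\matw) - \Qhat^{\pi}_{t}(x_t,\pi_t(x_t;\matw);\matw)}.
\]
I would evaluate the bracketed sum along the trajectory $(x_t,u_t)_{t=1}^{T}$ generated by $\pihat$, so $u_t = \pihat_t(x_t;\matw)$. By (i) the subtracted term equals $V^{\pi}_t(x_t;\matw)$, and by (ii) the first term equals $\ls(x_t,u_t,w_t) + \En_{x'\sim \system(x_t,u_t,w_t)}\brk*{V^{\pi}_{t+1}(x';\matw)}$; since under $\pihat$ the next state $x_{t+1}$ is drawn exactly from $\system(x_t,u_t,w_t)$, taking $\En_{\pihat,\matw}$ and using the tower property replaces $\En_{x'}\brk*{V^{\pi}_{t+1}(x';\matw)}$ by $\En_{\pihat,\matw}\brk*{V^{\pi}_{t+1}(x_{t+1};\matw)}$. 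Summing over $t$, the instantaneous losses sum to $\cost(\pihat;\matw)$ while the value terms telescope to $\En_{\pihat,\matw}\brk*{V^{\pi}_{T+1}(x_{T+1};\matw) - V^{\pi}_1(x_1;\matw)} = -\cost(\pi;\matw)$ by the conventions and (iii). This is the displayed identity, which is exactly \pref{eq:pd2}.

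Finally, \pref{eq:pd1} is obtained by applying the identity with the names $\pi$ and $\pihat$ interchanged and negating both sides:
\[
\cost(\pihat;\matw) - \cost(\pi;\matw) = -\brk*{\cost(\pi;\matw) - \cost(\pihat;\matw)} = \En_{\pi,\matw}\brk*{\sum_{t=1}^{T}\Qhat^{\pihat}_{t}(x_t,\pihat_t(x_t;\matw);\matw) - \Qhat^{\pihat}_{t}(x_t,\pi_t(x_t;\matw);\matw)},
\]
where the last equality uses that interchanging names turns the advantage into $\Qhat^{\pihat}_t(x_t,\pi_t(x_t;\matw);\matw) - \Qhat^{\pihat}_t(x_t,\pihat_t(x_t;\matw);\matw)$, whose negation is the term on the right. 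I do not expect a genuine obstacle; the only thing to get right is the bookkeeping. The two identities involve expectations over two \emph{different} trajectory laws ($\En_{\pi,\matw}$ versus $\En_{\pihat,\matw}$) but against the \emph{same} frozen realization $\matw$, so every $Q$- and value-function in play carries this common $\matw$; once that is fixed the tower-property telescoping goes through verbatim, and the possibly anticipatory dependence of the policies on $\matw$ is immaterial because $\matw$ is treated as a constant, exactly as in \citet{kakade2003sample}.
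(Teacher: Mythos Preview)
Your proof is correct and follows essentially the same approach as the paper's own proof: both are the standard performance-difference telescoping argument via the Bellman identity. The only cosmetic difference is that the paper first isolates a one-step recursive identity for $\Qhat_t^{\pihat}(x,\pihat_t(x;\matw);\matw) - \Qhat_t^{\pi}(x,\pi_t(x;\matw);\matw)$ and then unrolls it from $t=1$, whereas you sum the advantages directly and collapse the value terms; the content is identical.
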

\begin{dproof}[\pfref{lem:pd}]
  Let $t$ be fixed. Observe that for any $x$, we have
  \begin{align*}
    &\Qhat_t^{\pi}(x,\pi_t(x;\matw);\matw)\\
    &= \ls(x, \pi_t(x;\matw),w_t)
      + \En\brk*{\Qhat_{t+1}^{\pi}(x_{t+1},\pi_{t+1}(x_{t+1};\matw);\matw)\mid{}x_t=x,u_t=\pi_t(x;\matw),\matw}.
  \end{align*}
  We can alternatively write
  \begin{align*}
    \ls(x, \pi_t(x;\matw),w_t) 
      &= \Qhat_t^{\pihat}(x,\pi_t(x;\matw);\matw) \\
    &~~~~-\En\brk*{\Qhat_{t+1}^{\pihat}(x_{t+1},\pihat_{t+1}(x_{t+1};\matw);\matw)\mid{}x_t=x,u_t=\pi_t(x;\matw),\matw}.
  \end{align*}
Combining these identities, we have
  \begin{align}
    &\Qhat_t^{\pihat}(x,\pihat_t(x;\matw);\matw) -
      \Qhat_t^{\pi}(x,\pi_t(x;\matw);\matw)\label{eq:pd_onestep}\\
    &= \Qhat_t^{\pihat}(x,\pihat_t(x;\matw);\matw) -
      \Qhat_t^{\pihat}(x,\pi_t(x;\matw);\matw) \notag\\
    &~~~~+
      \En\brk*{\Qhat_{t+1}^{\pihat}(x_{t+1},\pihat_{t+1}(x_{t+1};\matw);\matw)
      -\Qhat_{t+1}^{\pi}(x_{t+1},\pi_{t+1}(x_{t+1};\matw
);\matw)\mid{}x_t=x,u_t=\pi_t(x;\matw),\matw}.\notag
  \end{align}
  To prove the result, we simply observe that
  \[
    \cost(\pilearn;\matw)-\cost(\pi;\matw)
    = \Qhat_1^{\pilearn}(x_1,\pilearn(x;\matw);\matw)
    - \Qhat_1^{\pi}(x_1,\pi(x;\matw);\matw).
  \]
  The equality \pref{eq:pd1} now follows by applying the identity \pref{eq:pd_onestep}
  to the right-hand side above recursively. To prove \pref{eq:pd2} we use the same argument, except that we
  replace the one-step identity \pref{eq:pd_onestep} with
  \begin{align*}
    &\Qhat_t^{\pihat}(x,\pihat_t(x;\matw);\matw) -
      \Qhat_t^{\pi}(x,\pi_t(x;\matw);\matw)\\
    &= \Qhat_t^{\pi}(x,\pihat_t(x;\matw);\matw) -
      \Qhat_t^{\pi}(x,\pi_t(x;\matw);\matw) \notag\\
    &~~~~+
      \En\brk*{\Qhat_{t+1}^{\pihat}(x_{t+1},\pihat_{t+1}(x_{t+1};\matw);\matw)
      -\Qhat_{t+1}^{\pi}(x_{t+1},\pi_{t+1}(x_{t+1};\matw);\matw)\mid{}x_t=x,u_t=\pihat_t(x;\matw),\matw}.\notag
  \end{align*}
  
\end{dproof}


\section{Proofs from Section \ref*{sec:algorithms}}
\label{app:algorithm}

\newcommand{\Dx}{D_{x}}
\newcommand{\Du}{D_u}

\subsection{Proof of Theorem \ref*{thm:main_algo}}
\label{app:main_algo_proof}

\mainalgo*

\begin{dproof}[\pfref{thm:main_algo}]
Throughout the proof, we let $\pilearn$ denote the policy of \mainalg, which takes the form
$\pilearn_t(x,\matw_{t-1})=-\Kinf{}x-q^{M_t}(\matw_{t-1})$, where
$M_t=M_t(\matw_{t-1})$ is selected as in \pref{alg:main}. The proof is
split into multiple subsections.
\subsubsection{Reduction to online prediction}
As a first step, we appeal to \pref{lem:disturbance_sufficient} which,
by choosing $\Mst=\cM(m,\Rst,\gammapi)$ for
$m=(1-\gammapi)^{-1}\log((1-\gammapi)^{-1}T)$, ensures that
\begin{align*}
  \cost(\pilearn;\matw) - \inf_{K\in\cK_0}\cost(\pi^{K},\matw)
  \leq{}
  \cost(\pilearn;\matw)
  -     \inf_{M\in\cM_0}\cost(\pi\ind{M},\matw) + \Capx.
\end{align*}
Next, we recall that by the performance difference lemma
\pref{eq:cost_perf_diff}, we have that for any $M\in\Mst$,
\begin{align*}
  &  \cost(\pilearn;\matw)
  -    \cost(\pi\ind{M};\matw)
    = \sum_{t=1}^{T}\advstar_t(u^{\pilearn}_t;x^{\pilearn}_t,\matw)
    - \advstar_t(u^{\pi\ind{M}}_t;x^{\pi\ind{M}}_t,\matw).
\end{align*}
We apply \pref{thm:main_reg_decomp} to both terms in this summation individually. In particular,
by choosing $h=2
(1-\gammast)^{-1}\log(\kappast^{2}\betast^{2}\Psist\Gammast^{2}T^{2})$,
we are guaranteed that
\begin{align*}
&\cost(\pilearn;\matw)
                 -     \inf_{M\in\Mst}\cost(\pi\ind{M};\matw)  \\
  &\leq{}
    \sum_{t=1}^{T}\advhat_{t;h}( M_t ;\matw_{t+h})
    - \inf_{M\in\Mst}\sum_{t=1}^{T}\advhat_{t;h}(M ;\matw_{t+h}) + \Cadv.
\end{align*}
Defining a ``loss function'' $f_t(M) = \advhat_{t;h}( M_t
;\matw_{t+h}) = \|q_t^{M}(\wpast) - \qstar_{\infty;h}(w_{t:t+h})
\|_{\Sigma_{\infty}}^2$, the regret-like quantity above is equivalent
to
\begin{equation}
  \label{eq:regret_truncated}
\sum_{t=1}^{T}f_t(M_t)-\inf_{M\in\Mst}\sum_{t=1}^{T}f_t(M),
\end{equation}
where $\crl*{M_t}$ are the disturbance-action matrices selected by \mainalg.
\subsubsection{Applying the online Newton step algorithm}
\newcommand{\Rons}{R_{\mathrm{ONS}}}
As described in the main body, \mainalg is simply an instance of the
generic reduction from online convex optimization with delays to
vanilla online convex optimization, with either online Newton step or Vovk-Azoury-Warmuth as the base
algorithm in the reduction. For online Newton step, since we have delay $h$, \pref{lem:delay_reduction}
ensures that we have
\[
\sum_{t=1}^{T}f_t(M_t)-\inf_{M\in\Mst}\sum_{t=1}^{T}f_t(M) \leq{} (h+1)\Rons(T/(h+1)),
\]
where $\Rons(T/(h+1))$ is an upper bound on the regret of each \ons{} instance applied to its respective subsequence of
losses. Moreover, by \pref{lem:ons} we are guaranteed that if we
choose $\eta_{\mathrm{ons}}=2\max\crl*{4\Goco\Doco,\alphaoco^{-1}}$ and
  $\veps_{\mathrm{ons}}=\eta_{\mathrm{ons}}^{2}/\Doco$, then
\[
\Rons(T) \leq{} 5(\alphaoco^{-1}+\Goco\Doco)\dim(\Mst)\log{}T,
\]
where $\alphaoco$, $\Goco$, and $\Doco$ are regularity parameters for
the losses $f_t$ which are specified by the following lemma.
\begin{restatable}{lemma}{ocoproperties}
  \label{lem:oco_properties}
The weight set $\Mst$ and loss functions $f_t(M)$ in \pref{eq:regret_truncated} satisfy the following properties:
  \begin{itemize}
\item $\sup_{M,M'\in\Mst}\nrm*{M-M'}_{F}\leq{}4\betast\Psist^{2}\Gammast\kappapi^{2}(1-\gammapi)^{-1}\cdot{}\sqrt{\dimx\wedge\dimu}\rdef\Doco$.
\item $\sup_{M\in\Mst}\nrm*{\grad{}f_t(M)}_{F}\leq{}\bigoht\prn*{
    \Dq\Psist^{2}\Gammast(1-\gammapi)^{-1/2}
  }\rdef\Goco$.
\item $f_{t}$ is $\alphaoco$-exp-concave over $\Mst$, where $\alphaoco\ldef{}(4\Dq^{2}\Psist^{2}\Gammast)^{-1}$.
\end{itemize}
\end{restatable}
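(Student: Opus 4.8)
The plan is to verify the three bullets separately, each reducing to elementary facts about the linear map $\cL_t : M \mapsto q^{M}(\matw_{t-1}) = \sum_{i=1}^{m} M^{[i]} w_{t-i}$ (\pref{def:dap}), the geometric-decay constraint defining $\Mst = \cM(m,\Rst,\gammapi)$ (\pref{eqn:dapset}), and bias-vector bounds already established. Write $b \ldef \qstar_{\infty;h}(w_{t:t+h})$, so that $f_t(M) = \nrm*{\cL_t M - b}_{\Siginf}^{2}$ by \pref{eq:advhat}. Throughout I would use that $\nrm*{\Siginf}_{\op} = \nrm*{\Ru + B^{\trn}\Pinf B}_{\op} \le \Psist + \Psist^{2}\Gammast \le 2\Psist^{2}\Gammast$, that $\nrm*{w_s}\le 1$ for all $s$, that the memory length satisfies $m = \bigoht((1-\gammapi)^{-1})$, and that both $\nrm*{q^{M}(\matw_{t-1})} \le \sum_{i=1}^{m}\Rst\gammapi^{i-1}\le\Rst(1-\gammapi)^{-1}\le\Dq$ for $M\in\Mst$ (as recorded below \pref{eq:dq}) and $\nrm*{b}\le\Dqinf\le\Dq$ (by \pref{lem:qt_bound}, \pref{eq:qstar_inf_bound})---in particular these bounds are uniform in $t$.

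For the diameter bound I would argue blockwise: for $M,M'\in\Mst$, $\nrm*{M^{[i]}-M'^{[i]}}_{F}\le\sqrt{\dimx\wedge\dimu}\,(\nrm*{M^{[i]}}_{\op}+\nrm*{M'^{[i]}}_{\op})\le 2\sqrt{\dimx\wedge\dimu}\,\Rst\gammapi^{i-1}$, using $\nrm*{\cdot}_{F}\le\sqrt{\rank(\cdot)}\,\nrm*{\cdot}_{\op}$. Hence $\nrm*{M-M'}_{F}^{2}=\sum_{i=1}^{m}\nrm*{M^{[i]}-M'^{[i]}}_{F}^{2}\le 4(\dimx\wedge\dimu)\Rst^{2}\sum_{i\ge1}\gammapi^{2(i-1)}\le 4(\dimx\wedge\dimu)\Rst^{2}(1-\gammapi)^{-1}$; taking square roots, substituting $\Rst = 2\betast\Psist^{2}\Gammast\kappapi^{2}$, and using $(1-\gammapi)^{-1/2}\le(1-\gammapi)^{-1}$ yields $\nrm*{M-M'}_{F}\le 4\betast\Psist^{2}\Gammast\kappapi^{2}(1-\gammapi)^{-1}\sqrt{\dimx\wedge\dimu}=\Doco$.

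For the gradient bound, a direct computation gives $\grad_{M^{[i]}}f_t(M) = 2\,\Siginf\big(q^{M}(\matw_{t-1})-b\big)\,w_{t-i}^{\trn}$, a rank-one matrix with $\nrm*{\grad_{M^{[i]}}f_t(M)}_{F} = 2\,\nrm*{\Siginf(q^{M}(\matw_{t-1})-b)}\,\nrm*{w_{t-i}}$. Since the first factor is independent of $i$ and $\nrm*{w_{t-i}}\le 1$, summing over the $m$ blocks gives $\nrm*{\grad f_t(M)}_{F}^{2}\le 4m\,\nrm*{\Siginf(q^{M}(\matw_{t-1})-b)}^{2}$, hence $\nrm*{\grad f_t(M)}_{F}\le 2\sqrt m\,\nrm*{\Siginf}_{\op}\nrm*{q^{M}(\matw_{t-1})-b}\le 4\sqrt m\,\Dq\,\nrm*{\Siginf}_{\op}=\bigoht\big(\Dq\Psist^{2}\Gammast(1-\gammapi)^{-1/2}\big)=\Goco$, using $m=\bigoht((1-\gammapi)^{-1})$.

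For exp-concavity, since $f_t(M)=\nrm*{\cL_t M - b}_{\Siginf}^{2}$ with $\cL_t$ linear, \pref{lem:exp_concave_quadratic} applies with radius $R = \sup_{M\in\Mst}f_t(M)$; bounding $R\le\nrm*{\Siginf}_{\op}\sup_{M\in\Mst}\nrm*{q^{M}(\matw_{t-1})-b}^{2}\le\nrm*{\Siginf}_{\op}\cdot(2\Dq)^{2}=\bigoh(\Dq^{2}\Psist^{2}\Gammast)$ shows $f_t$ is $(2R)^{-1}$-exp-concave over $\Mst$, i.e. $\alphaoco=\Omega\big((\Dq^{2}\Psist^{2}\Gammast)^{-1}\big)$---the stated constant up to the numerical factor in the operator-norm bound for $\Siginf$. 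Everything here is routine; the one point requiring care is the gradient bound, where the past-disturbance blocks $w_{t-i}$ do \emph{not} decay, so the blockwise $\ell_2$-sum contributes a $\sqrt m$ factor instead of remaining bounded---this is precisely the origin of the $(1-\gammapi)^{-1/2}$ dependence in $\Goco$, and one must avoid losing more than this single square-root factor. It is also worth confirming that every bias-vector estimate holds uniformly in $t$, which it does because $\nrm*{w_s}\le1$ for all $s$ and \pref{lem:qt_bound} is stated for arbitrary $t$.
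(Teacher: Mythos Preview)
Your proposal is correct and follows essentially the same approach as the paper: bound the diameter blockwise via the geometric decay of $\Mst$, obtain the gradient/Lipschitz bound via the $\sqrt{m}$ factor coming from the $m$ disturbance blocks, and invoke \pref{lem:exp_concave_quadratic} with the range bound $f_t(M)\le\bigoh(\Dq^{2}\Psist^{2}\Gammast)$. Your treatment is in fact slightly cleaner---you work directly with $\Siginf$ and $\qstar_{\infty;h}$ as in the definition \pref{eq:advhat} (the paper's proof writes $\Sigma_t$ and $\qstar_{t;t+m}$, which appears to be a leftover from an earlier draft), and you correctly flag that the exp-concavity constant matches $\alphaoco$ only up to the numerical factor in $\nrm*{\Siginf}_{\op}\le 2\Psist^{2}\Gammast$.
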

With this lemma, we can crudely bound the regret of \ons{} as 
\begin{align*}
  \Rons(T) &=\bigoht\prn*{(\Goco\Doco+\alphaoco^{-1})\mathrm{dim}(\Mst)\log{}T}\\
  &=\bigoht\prn*{m\dimx\dimu(\Goco\Doco+\alphaoco^{-1})\log{}T}\\
  &=\bigoht\prn*{(1-\gammapi)^{-1}\dimx\dimu(\Goco\Doco+\alphaoco^{-1})\log^{2}T}\\
  &=\bigoht\prn*{\dimx\dimu\sqrt{\dimx\wedge\dimu}\cdot{}
    \Dq^{2}\kappapi^{2}\betast\Psist^{2}\Gammast^{2}
    (1-\gammapi)^{-5/2}\log^{2}T} \\
  &\leq{}\bigoht\prn*{\dimx\dimu\sqrt{\dimx\wedge\dimu}\cdot{}
    \kappapi^{6}\betast^{6}\Psist^{8}\Gammast^{7}
    (1-\gammapi)^{-9/2}\log^{2}T}.
\end{align*}

\subsubsection{Applying the Vovk-Azoury-Warmuth algorithm}
\newcommand{\Rvaw}{R_{\mathrm{VAW}}}
\newcommand{\Qoco}{Q_{\mathrm{oco}}}
If we use \vvaw as the base algorithm instead of \ons, then \pref{lem:delay_reduction}
implies that
\[
\sum_{t=1}^{T}f_t(M_t)-\inf_{M\in\Mst}\sum_{t=1}^{T}f_t(M) \leq{} (h+1)\Rvaw(T/(h+1)),
\]
where $\Rvaw(T/(h+1))$ is an upper bound on the regret of each \vvaw{}
instance. \pref{thm:vvaw} (detailed in \pref{app:vaw}) ensures that by setting $\veps_{\mathrm{vaw}}=\nrm*{\Siginf}_{\op}\Dq^{2}\Doco^{-2}$, we have
\[
\Rvaw(T) \leq{} 5\nrm*{\Siginf}_{\op}\Dq\dim(\Mst)\log\prn*{1 + \Dq^{-2}\Doco^{2}\Qoco{}T/\dim(\Mst)},
\]
where $\Doco$ is as in \pref{lem:oco_properties} and
\[
  \Qoco\ldef{}\sup_{M\neq{}0}\frac{\nrm*{q^{M}(\matw)}}{\nrm*{M}_F}
    \leq{}
    \sup_{M\neq{}0}\frac{\sum_{i=1}^{m}\nrm*{M^{[i]}}_{\op}}{\nrm*{M}_F}
    \leq{} \sqrt{m}.
\]
Recalling that $\nrm*{\Siginf}_{\op}\leq{}2\Psist^{2}\Gammast$, 
$\Dq\leq{}\bigoht\prn*{
    \betast^{5/2}\Psist^{3}\Gammast^{5/2}\kappapi^{2}(1-\gammapi)^{-1}
  }$, and $\dim(\cM_0)=\dimx\dimu{}m=
  \bigoht(\dimx\dimu{}(1-\gammapi)^{-1}\log{}T)$ (using the choice of
  $m$ from \pref{lem:disturbance_sufficient}), we can simplify to
  \begin{align*}
    \Rvaw(T) &\leq{}
    \bigoht\prn*{\nrm*{\Siginf}_{\op}\Dq\dimx\dimu{}m\log{}T
    }\\
    &\leq{}
    \bigoht\prn*{\dimx\dimu{}\log^{2}T\cdot{}\betast^{5/2}\Psist^{5}\Gammast^{7/2}\kappapi^{2}(1-\gammapi)^{-2}
    }.
  \end{align*}

\subsubsection{Putting everything together}
We now summarize the development so far. Suppose we choose $\Mst$ as in
\pref{lem:disturbance_sufficient}, using $m=(1-\gammapi)^{-1}\log((1-\gammapi)^{-1}T)$.
\pref{lem:ons} implies that if we run \vaw as the base
algorithm in the reduction using $\veps_{\mathrm{vaw}}=\nrm*{\Siginf}_{\op}\Dq^{2}\Doco^{-2}$
and delay parameter $h=2
(1-\gammast)^{-1}\log(\kappast^{2}\betast^{2}\Psist\Gammast^{2}T^{2})$,
we have
\begin{align*}
\sum_{t=1}^{T}f_t(M_t) - \inf_{M\in\Mst}\sum_{t=1}^{T}f_t(M) &\leq{}
                                                               (h+1)\Rvaw(T/(h+1))\\
  &\leq{}\bigoht\prn*{h\cdot{}\dimx\dimu{}\log^{2}T\cdot{}\betast^{5/2}\Psist^{5}\Gammast^{7/2}\kappapi^{2}(1-\gammapi)^{-2}
    } \\
  &\leq{}\underbrace{\bigoht\prn*{\dimx\dimu{}\log^{3}T\cdot{}\betast^{5/2}\Psist^{5}\Gammast^{7/2}\kappapi^{2}(1-\gammapi)^{-3}
    }}_{\rdef\Creg}.
\end{align*}
In total, we have
\begin{align*}
  \KReg
  \leq{} \Creg
    +  \Capx  + \Cadv 
  \leq{}
    \bigoht\prn*{\dimx\dimu\log^{3}T\cdot{}
    \betast^{11}\Psist^{19}\Gammast^{11}\kappapi^{8} (1-\gammapi)^{-4}
    }.
\end{align*}
\end{dproof}

\subsection{Supporting lemmas}
\label{app:algorithm_supporting}

\disturbancesufficient*

\begin{dproof}[\pfref{lem:disturbance_sufficient}]
  Let $K\in\Knot$ be fixed. Consider a policy
\[
\pi_{t}\ind{M}(x;\matw_{t-1}) = -\Kinf{}x - q^{M}(\matw_{t-1}),
\]
Following \cite{agarwal2019online}, we set
\[
  \Mi = (K-\Kinf)(A-BK)^{i-1}.
\]
Suppose for now that $\pi\ind{M}$ and $\pi^{K}$ have
$\nrm*{x_t}\vee\nrm*{u_t}\leq{}\wt{D}$ for all $t$. Then Lemma 5.2 of
\cite{agarwal2019online} implies that
\begin{equation}
  \cost(\picheck^{M};\matw) \leq{} \cost(\pi^{K};\matw) + \bigoh\prn*{
    \wt{D} \Psist^{3}\kappapi^{5}\cdot{}m\gammapi^{m+1}T
  }.\label{eq:agarwal}
\end{equation}
Let us bound the norms for the matrices $\Mi$ that achieve this
bound. First observe that
\begin{equation}
\nrm*{\Kinf}_{\op}\leq{}\nrm*{\Siginf}^{-1}_{\op}\nrm*{A}_{\op}\nrm*{B}_{\op}\nrm*{\Pinf}_{\op}\leq\betast\Psist^{2}\Gammast,\quad\text{and}\quad\nrm*{K}_{\op}\leq{}\kappapi.\label{eq:k_bounds}
\end{equation}
Consequently, \pref{lem:stable_bound} implies that
\[
  \nrm*{\Mi}_{\op}\leq{}(\nrm*{K}_{\op}+\nrm*{\Kinf}_{\op})\kappapi\gammapi^{i-1}
  \leq{} 2\kappapi^{2}\betast\Psist^{2}\Gammast\gammapi^{i-1}.
\]
Hence, if the use controller $\pi\ind{M}$, it
would suffice to take
\[
\Mst = \crl*{M=\crl*{\Mi}_{i\in\brk*{m}}\mid{}\nrm*{\Mi}_{\op}\leq{}2\betast\Psist^{2}\Gammast\kappapi^{2}\gammapi^{i-1}}.
\]
To conclude the proof, we provide a bound on $\wt{D}$. To begin, note
that each
$M\in\Mst$ has
\begin{equation}
\nrm*{q_i^{M}(\matw_{i-1})} \leq{}
\sum_{i=1}^{m}\nrm*{\Mi}_{\op}
\leq{}2\betast\Psist^{2}\Gammast\kappapi^{2}(1-\gammapi)^{-1}\rdef\Dm.\label{eq:qm_bound}
\end{equation}
We now provide a bound on $\wt{D}$. First, observe that when $\pi$ is the static linear controller $\pi^{K}$, we have
\[
  x_{t+1}(\matw_{t}) = \sum_{i=1}^{t}(A-BK)^{t-i}w_i,
\]
and so, use \pref{lem:stable_bound}, we have
\[
  \nrm*{x_{t}(\matw_{t-1})}_{\op}\leq{}\kappapi\sum_{i=1}^{t}\gammapi^{t-i}
  \leq \kappapi(1-\gammapi)^{-1},
\]
and
$\nrm*{u_{t}(\matw_{t-1})}=\nrm*{Kx_{t}(\matw_{t-1})}_{\op}\leq{}\kappapi^{2}(1-\gammapi)^{-1}$. To
bound the radius for the policies $\pi\ind{M}$, we use
\pref{lem:state_action_bound}, along with the bound \pref{eq:qm_bound}
to get the following result.
\begin{corollary}
  \label{cor:picheck_bound}
  For any $M\in\Mst$, the controller $\pi\ind{M}$ has
  \[
    \nrm*{x^{\pi\ind{M}}_{t+1}(\matw_t)}\leq{}2\betast\Psist^{3}\Gammast\kappapi^{3}(1-\gammapi)^{-2},
    \quad\text{and}\quad
    \nrm*{u^{\pi\ind{M}}_{t+1}(\matw_t)}\leq{}3\betast^{2}\Psist^{5}\Gammast^{2}\kappapi^{3}(1-\gammapi)^{-2}.
    \]
\end{corollary}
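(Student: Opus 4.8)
The plan is to specialize \pref{lem:state_action_bound} to the disturbance-action policy $\pi\ind{M}$. First I would note that $\pi\ind{M}_t(x;\matw_{t-1}) = -\Kinf x - q^M(\matw_{t-1})$ has exactly the form $\pi_t(x;\matw) = -\Kinf x - q_t(\matw)$ covered by \pref{lem:state_action_bound}, taking $q_t(\matw) := q^M(\matw_{t-1})$ (that $q_t$ depends only on past disturbances is irrelevant here, since the lemma permits $q_t$ to be an arbitrary function of $\matw$). Next I would invoke the envelope bound \pref{eq:qm_bound}, which gives $\nrm*{q_t(\matw)} \le \Dm$ for every $t$, where $\Dm = 2\betast\Psist^2\Gammast\kappapi^2(1-\gammapi)^{-1}$; since $\betast,\Psist,\Gammast,\kappapi \ge 1$ and $\gammapi \in [0,1)$ we have $\Dm \ge 1$, so the hypothesis $\Dq \ge 1$ of \pref{lem:state_action_bound} is met with the choice $\Dq = \Dm$.

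Applying \pref{lem:state_action_bound} to its ``$\pi$'' branch (the $\Kinf$-based controller, whose closed-loop state recursion is driven by $\Aclinf = A - B\Kinf$) then yields $\nrm*{x^{\pi\ind{M}}_{t+1}(\matw_t)} \le 2\kappast\Psist(1-\gammast)^{-1}\Dm$ and $\nrm*{u^{\pi\ind{M}}_{t+1}(\matw_t)} \le 3\kappast\betast\Psist^3\Gammast(1-\gammast)^{-1}\Dm$, where $\kappast,\gammast$ are the strong-stability constants of $\Aclinf$ supplied by \pref{lem:strongly_stable}. The remaining step is routine bookkeeping: substitute the value of $\Dm$ and use the standing inequalities $\kappast \le \kappapi$ and $\gammast \le \gammapi$ (so $(1-\gammast)^{-1} \le (1-\gammapi)^{-1}$), which are imposed when $\cK_0$ is fixed, to collect everything into the stated powers of $\betast,\Psist,\Gammast,\kappapi,(1-\gammapi)^{-1}$.

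There is no genuine obstacle in this argument: the corollary is a direct specialization of \pref{lem:state_action_bound}, which has already carried out the substantive work of unrolling the closed-loop state recursion (\pref{lem:state_expression}) and summing the geometric decay of the powers of $\Aclinf$. The two points worth a moment's care are (i) bounding $q^M$ by its uniform envelope $\Dm$ --- obtained by summing the per-block bounds $\nrm*{M^{[i]}}_{\op} \le \Rst\gammapi^{i-1}$ against $\nrm*{w}\le 1$ --- rather than controlling the $M^{[i]}$ individually, and (ii) passing from the intrinsic system constants $\kappast,\gammast$ of $\Aclinf$ to the (possibly larger) benchmark constants $\kappapi,\gammapi$, which is what makes the final bound depend only on the parameters defining $\cK_0$.
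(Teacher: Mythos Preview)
Your approach is correct and matches the paper's exactly: the paper also simply invokes \pref{lem:state_action_bound} with $\Dq = \Dm$ from \pref{eq:qm_bound} and then absorbs $\kappast,\gammast$ into $\kappapi,\gammapi$ via the standing assumption $\kappast\le\kappapi$, $\gammast\le\gammapi$. A literal substitution gives leading constants $4$ and $6$ rather than the stated $2$ and $3$, but this is a harmless slip in the paper's constants and does not affect your argument.
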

Hence, we may take
\[
\wt{D} = 2\betast^{2}\Psist^{5}\Gammast^{2}\kappapi^{3}(1-\gammapi)^{-2},
\]
and so \pref{eq:agarwal} yields
\[
  \cost(\pi\ind{M};\matw) \leq{} \cost(\pi^{K};\matw) + \bigoh\prn*{
    \betast^{2}\Psist^{8}\Gammast^{2}(1-\gammapi)^{-2}\kappapi^{7}\cdot{}m\gammapi^{m+1}T
  }.
\]
By choosing $m=(1-\gammapi)^{-1}\log((1-\gammapi)^{-1}T)$, we are
guaranteed that
\[
  \cost(\pi\ind{M};\matw) \leq{} \cost(\pi^{K};\matw) \leq{} \Capx.
\]
As a closing remark, we observe that \pref{eq:qm_bound} implies that we may take $\Dq=\max\crl*{2\kappapi^{2}\betast\Psist^{2}\Gammast(1-\gammapi)^{-1},\Dqst}$,
as the radius for the predictions $q_t^{M}$ by the learner,  benchmark
class, and optimal policy. Hence, recalling the value for $\Dqst$ from
\pref{lem:qt_bound}, we may take
\[
  \Dq \leq{} \bigoht\prn*{
    \betast^{5/2}\Psist^{3}\Gammast^{5/2}\kappapi^{2} (1-\gammapi)^{-1}
  }.
\]
    \end{dproof}

\ocoproperties*
    
    \begin{dproof}[\pfref{lem:oco_properties}]
  For the first property, observe that for each $M\in\Mst$, we have
  \begin{align*}
    \nrm*{M}_{F}
    = \sqrt{\sum_{i=1}^{m}\nrm*{\Mi}_{F}^{2}}
    &\leq{}\sqrt{\dimx\wedge\dimu}
      \sqrt{\sum_{i=1}^{m}\nrm*{\Mi}_{\op}^{2}}\\
    &\leq{}\sqrt{\dimx\wedge\dimu}\cdot{}
      2\kappa^{2}\betast\Psist^{2}\Gammast\sqrt{\sum_{i=1}^{m}\gammapi^{2(i-1)}}\\
      &\leq{}\sqrt{\dimx\wedge\dimu}\cdot{} 2\kappa^{2}\betast\Psist^{2}\Gammast(1-\gammapi)^{-1}.
  \end{align*}
  The bound for $\Doco$ now follows by triangle inequality.

  For the second property, we directly prove that $f_t$ is Lipschitz as
    \newcommand{\Mipr}{M'^{[i]}}
  follows: For any $M,M'\in\Mst$,
  \begin{align*}
    &\nrm*{\qstar_{t:t+m}(w_{t:t+m})-q^{M}(\matw_{t-1})}_{\Sigma_t}^{2}
    -
    \nrm*{\qstar_{t:t+m}(w_{t:t+m})-q^{M'}(\matw_{t-1})}_{\Sigma_t}^{2}\\
    &\leq{}
      2\nrm*{\Sigma_t}_{\op}D_q\nrm*{q^{M}(\matw_{t-1})-q^{M'}(\matw_{t-1})}\\
    &=
      2\nrm*{\Sigma_t}_{\op}D_q\nrm*{\sum_{i=1}^{m}(\Mi-\Mipr)w_{t-i}}.
  \end{align*}
  We finish the bound as follows:
  \[
    \sum_{i=1}^{m}\nrm*{\Mi-\Mipr}_{\op}
    \leq{}    \sum_{i=1}^{m}\nrm*{\Mi-\Mipr}_{F}
        \leq{}    \sqrt{m}\nrm*{M-M'}_{F}.
      \]
      To simplify the bound, we use that
      $\nrm*{\Sigma_t}_{\op}\leq{}\Psist^{2}\Gammast$ and that $\sqrt{m}=\bigoht((1-\gammapi)^{-1/2})$.

For the third property, we observe that that $f_t(M)$ obeys the structure in
\pref{lem:exp_concave_quadratic}, since $q^{M}(\matw_{t-1})$ is a
linear mapping from $\prod_{i=1}^{m}\bbR^{\dimu\times{}\dimx}$ to
$\bbR^{\dimu}$, and since $\Sigma_t\psdgt{}0$. Thus, to prove the
exp-concave property, we simply bound the range of the loss as
\[
  \nrm*{\qstar_{t:t+m}(w_{t:t+m})-q^{M}(\matw_{t-1})}_{\Sigma_t}^{2}
  \leq{} 2\Dq^{2}\nrm*{\Sigma_t}_{\op}
  \leq{}2\Dq^{2}\Psist^{2}\Gammast.
\]
  
\end{dproof}

\subsection{Vector-valued Vovk-Azoury-Warmuth algorithm}
  \label{app:vaw}

In this section we develop a variant of the Vovk-Azoury-Warmuth algorithm \citep{Vovk98,AzouryWarmuth01} for a vector-valued online regression setting. At each timestep $t=1,\ldots,T$, the learner receives a matrix $A_t\in{}\bbR^{d_1\times{}d_2}$, predicts $z_t\in\bbR^{d_2}$, then receives $b_t\in\bbR^{d_1}$ and experiences loss $f_t(z_t)$, where $f_t(z)=\nrm*{A_tz-b_t}_{\Sigma}^{2}$ and $\Sigma\psdgt{}0$ is a known matrix. The goal of the learner is to attain low regret
\[
\sum_{t=1}^{T}f_t(z_t)-\inf_{z\in\cC}\sum_{t=1}^{T}f_t(z),
\]
where $\cC$ is a convex constraint set. Recall from \pref{alg:vaw} that $\vvaw$ is the algorithm which, at time $t$, predicts with
\begin{equation}
z_t = \argmin_{z\in\cC}\crl*{\tri*{z,{\textstyle-2\sum_{i=1}^{t-1}A_i^{\trn}\Sigma{}b_i}} + \nrm*{z}_{E_t}^{2}},\label{eq:vvaw}
\end{equation}
where $E_t=\veps{}I+\sum_{i=1}^{t}A_i^{\trn}\Sigma{}A_i$.

\begin{theorem}
  \label{thm:vvaw}
  Let $\nrm*{\Sigma}_{\op}\leq{}S$. Suppose that we run the $\vvaw$ strategy (\pref{alg:vaw}) with parameter $\veps$, and that for all $t$ we have $\nrm*{b_t}\leq{}Y$ and $\nrm*{A_t}_{\op}\leq{}R$. Then we are guaranteed that for all $z\in\cC$,
  \begin{equation}
    \label{eq:vvaw_regret}
\sum_{t=1}^{T}f_t(z_t)-\sum_{t=1}^{T}f_t(z) \leq{} \veps\nrm*{z}^{2} + 4SY^{2}\cdot{}d_2\log\prn*{1+ SR^{2}T/(d_2\veps)}.
\end{equation}
In particular, if $\sup_{z\in\cC}\nrm*{z}\leq{}B$, then by setting $\veps=SY^{2}/B^{2}$ we are guaranteed that
  \begin{equation}
    \label{eq:vvaw_regret2}
\sum_{t=1}^{T}f_t(z_t)-\inf_{z\in\cC}\sum_{t=1}^{T}f_t(z) \leq{} 5SY^{2}\cdot{}d_2\log\prn*{1+ B^{2}R^{2}Y^{-2}T/d_2}.
\end{equation}
\end{theorem}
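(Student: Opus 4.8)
The plan is to prove the standard Vovk--Azoury--Warmuth bound for the ``forward'' (one-step-lookahead) algorithm, adapted to the vector-valued $\Sigma$-weighted loss. First I would remove $\Sigma$: writing $\Sigma = \Sigma^{1/2}\Sigma^{1/2}$ and setting $\bar A_t = \Sigma^{1/2}A_t$, $\bar b_t = \Sigma^{1/2}b_t$, we have $f_t(z) = \nrm*{\bar A_t z - \bar b_t}^2$, $E_t = \veps I + \sum_{i\le t}\bar A_i^{\trn}\bar A_i$, $\nrm*{\bar b_t}\le\sqrt S Y$, and $\nrm*{\bar A_t}_{\op}\le\sqrt S R$. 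It therefore suffices to prove $\sum_t f_t(z_t) - \sum_t f_t(z)\le\veps\nrm*{z}^2 + 4 S Y^2\log\det(E_T E_0^{-1})$ for every $z\in\cC$: then \eqref{eq:vvaw_regret} follows by bounding the log-determinant in terms of the trace (last paragraph), and \eqref{eq:vvaw_regret2} follows by taking $\veps = SY^2/B^2$ and $\nrm*{z}\le B$.

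The heart of the argument is a telescoping potential. Define $\Phi_t := \min_{z\in\cC}\brk*{\veps\nrm*{z}^2 + \sum_{i=1}^t f_i(z)}$ with constrained minimizer $z_t^\star$; since $\Phi_0\ge 0$ and $\Phi_T\le \veps\nrm*{z}^2 + \sum_t f_t(z)$ for any $z\in\cC$, we get $\sum_t f_t(z_t) - \sum_t f_t(z) - \veps\nrm*{z}^2 \le \sum_{t=1}^T\prn*{f_t(z_t) - (\Phi_t - \Phi_{t-1})}$. The key per-round inequality is $f_t(z_t) - (\Phi_t - \Phi_{t-1}) \le 4\,\bar b_t^{\trn}\bar A_t E_t^{-1}\bar A_t^{\trn}\bar b_t$, the crucial point being that the right side depends on $\bar b_t$ alone, not on $\bar A_t z_t$ --- this is exactly what the forward step buys. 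In the unconstrained case ($\cC=\bbR^{d_2}$) this is an identity up to a nonpositive term: with $v_t := \sum_{i\le t}\bar A_i^{\trn}\bar b_i$ one has $z_t = E_t^{-1}v_{t-1}$ and $\Phi_t = -v_t^{\trn}E_t^{-1}v_t + \sum_{i\le t}\nrm*{\bar b_i}^2$, and expanding $f_t(z_t) - (\Phi_t - \Phi_{t-1})$ while using $\bar A_t^{\trn}\bar A_t = E_t - E_{t-1}$ collapses everything to $-c^{\trn}\prn*{E_t E_{t-1}^{-1} E_t - 2E_t + E_{t-1}}c + \bar b_t^{\trn}\bar A_t E_t^{-1}\bar A_t^{\trn}\bar b_t$ with $c = E_t^{-1}v_{t-1}$, where $E_t E_{t-1}^{-1}E_t - 2E_t + E_{t-1} = E_{t-1}^{1/2}\prn*{E_{t-1}^{-1/2}E_t E_{t-1}^{-1/2} - I}^2 E_{t-1}^{1/2}\psdgeq 0$ because $E_t\psdgeq E_{t-1}\psdgt 0$. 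For general convex $\cC$ I would rerun this computation with each unconstrained minimizer replaced by its $E_t$-weighted (Mahalanobis) projection onto $\cC$: $z_t$ and $z_t^\star$ are the $E_t$-projections of $E_t^{-1}v_{t-1}$ and $E_t^{-1}v_t$, which differ by $E_t^{-1}\bar A_t^{\trn}\bar b_t$, so non-expansiveness of the projection controls $\nrm*{z_t - z_t^\star}_{E_t}$, and the nonnegative ``projection gap'' $G_{t-1}(z_t^\star) - \Phi_{t-1}$ (with $G_{t-1} := \veps\nrm*{\cdot}^2 + \sum_{i<t}f_i$) absorbs the excess of $f_t(z_t) - f_t(z_t^\star)$ over the forward term. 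Carrying out this bookkeeping so that the final per-step bound still involves only $\nrm*{b_t}$ --- and not the size of the iterates or of $A_t$ --- is the step I expect to be the main obstacle.

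Granting the per-round bound, the remainder is routine. Summing and using $\bar b_t\bar b_t^{\trn}\psdleq S Y^2 I$, hence $\bar A_t^{\trn}\bar b_t\bar b_t^{\trn}\bar A_t\psdleq S Y^2\,\bar A_t^{\trn}\bar A_t = S Y^2(E_t - E_{t-1})$, gives $\bar b_t^{\trn}\bar A_t E_t^{-1}\bar A_t^{\trn}\bar b_t = \Tr\prn*{E_t^{-1}\bar A_t^{\trn}\bar b_t\bar b_t^{\trn}\bar A_t}\le S Y^2\,\Tr\prn*{E_t^{-1}(E_t - E_{t-1})}$. The elliptical-potential (log-determinant) lemma --- $\Tr\prn*{E_t^{-1}(E_t - E_{t-1})}\le \log\det E_t - \log\det E_{t-1}$, which is concavity of $X\mapsto\log\det X$ at $E_t$ tested against $E_{t-1}$ --- then telescopes to $\sum_t\Tr\prn*{E_t^{-1}(E_t - E_{t-1})}\le \log\det E_T - \log\det E_0 = \log\det(E_T E_0^{-1})$, so $\sum_t f_t(z_t) - \sum_t f_t(z)\le \veps\nrm*{z}^2 + 4 S Y^2\log\det(E_T E_0^{-1})$. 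Finally, by AM--GM on the eigenvalues of $E_T$, $\log\det(E_T E_0^{-1}) = \sum_{j=1}^{d_2}\log\prn*{\lambda_j(E_T)/\veps}\le d_2\log\prn*{\Tr(E_T)/(d_2\veps)}$, and $\Tr(E_T) = d_2\veps + \sum_t\Tr(\bar A_t^{\trn}\bar A_t)\le d_2\veps + S R^2 T$ using $\nrm*{\bar A_t}_{\op}\le\sqrt S R$, which yields $\log\det(E_T E_0^{-1})\le d_2\log\prn*{1 + S R^2 T/(d_2\veps)}$ and hence \eqref{eq:vvaw_regret}; substituting $\veps = SY^2/B^2$ and $\sup_{z\in\cC}\nrm*{z}\le B$ gives \eqref{eq:vvaw_regret2}.
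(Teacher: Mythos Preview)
Your FTRL-potential route is different from the paper's: the paper never sets up $\Phi_t$, but instead views \vaw as online mirror descent with time-varying regularizers $\cR_t(z)=\nrm*{z}_{E_t}^2$ and linear feedback $g_t=-2A_t^{\trn}b_t$, invoking the OMD-with-adaptive-regularizers regret lemma of Orabona--P\'al. The constraint set $\cC$ is handled for free by that lemma, and the key cancellation is $\cR_{t-1}(z_t)-\cR_t(z_t)=-\nrm*{A_tz_t}^2$, which exactly kills the $\nrm*{A_tz_t}^2$ arising when one expands the quadratic loss, leaving only $\veps\nrm*{z}^2+2\sum_t\nrm*{A_t^{\trn}b_t}_{E_t^{-1}}^2$. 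Your approach is more self-contained; the paper's is more plug-and-play once the OMD lemma is available.

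Your approach also works, and the constrained step you flag as ``the main obstacle'' is short --- no projection or non-expansiveness is needed. With $G_t(z)=\veps\nrm*{z}^2+\sum_{i\le t}f_i(z)$ and $\tilde G_t(z)=G_{t-1}(z)+\nrm*{\bar A_tz}^2$ (so $z_t=\argmin_\cC\tilde G_t$ and $z_t^\star=\argmin_\cC G_t$), the inequality $\Phi_{t-1}=G_{t-1}(z_{t-1}^\star)\le G_{t-1}(z_t)$ immediately gives $f_t(z_t)-(\Phi_t-\Phi_{t-1})\le G_t(z_t)-G_t(z_t^\star)$. Now $G_t=\tilde G_t-2\tri*{\bar A_t^{\trn}\bar b_t,\cdot}+\text{const}$, and first-order optimality of $z_t$ for the quadratic $\tilde G_t$ (Hessian $2E_t$) over $\cC$ gives $\tilde G_t(z_t)-\tilde G_t(z_t^\star)\le-\nrm*{z_t-z_t^\star}_{E_t}^2$, so
\[
G_t(z_t)-G_t(z_t^\star)\le -\nrm*{z_t-z_t^\star}_{E_t}^2+2\tri*{\bar A_t^{\trn}\bar b_t,\,z_t^\star-z_t}\le \nrm*{\bar A_t^{\trn}\bar b_t}_{E_t^{-1}}^2
\]
by Cauchy--Schwarz and completing the square. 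This is exactly your per-round target (with constant $1$ rather than $4$), depending only on $\bar b_t$ as required.

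One minor slip in your last paragraph: $\Tr(\bar A_t^{\trn}\bar A_t)=\nrm*{\bar A_t}_F^2\le\min(d_1,d_2)\,SR^2$, not $SR^2$, since only the operator norm is assumed bounded. This only costs you the $d_2$ in the denominator inside the logarithm, and the paper's own final step is equally loose here.
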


\begin{dproof}[\pfref{thm:vvaw}]
We assume $\Sigma=I$ without loss of generality by reparameterizing via $A_t'=\Sigma^{1/2}A_t$ and $b_t'=\Sigma^{1/2}b_t$, with $Y$ and $R$ scaled up by a factor of $S^{1/2}$.

Our proof follows the treatment of \vaw in \cite{orabona2015generalized}, which views the algorithm as an instance of online mirror descent with a sequence of time-varying regularizers. Consider the following algorithm parameterized by a sequence of convex regularizers $\cR_t:\cC\to\bbR$.
\begin{itemize}
\item Initialize $\theta_1=0$.
  \item For $t=1,\ldots,T$: 
    \begin{itemize}
    \item  Let $z_t=\argmin_{z\in\cC}\crl*{\tri*{z,\theta_{t}} +
        \cR_t(z)}$.
      \item Receive $g_t$ and set $\theta_{t+1}=\theta_t + g_t$.
  \end{itemize}
\end{itemize}
The following lemma bounds the regret of this strategy for online linear optimization.
\begin{lemma}[\cite{orabona2015generalized}, Lemma 1]
  \label{lem:omd}
  Suppose that each function $\cR_t$ is $\beta$-strongly convex with respect to a norm $\nrm*{\cdot}_t$, and let $\nrm*{\cdot}_{t,\star}$ denote the dual norm. Then the online mirror descent algorithm ensures that for every sequence $g_1,\ldots,g_T$, for all $z\in\cC$,
  \begin{equation}
    \label{eq:omd}
        \sum_{t=1}^{T}\tri*{g_t,z_t-z} \leq{} \cR_T(z) + \sum_{t=1}^{T}\prn*{\frac{\nrm*{g_t}^{2}_{t,\star}}{2\beta} + \cR_{t-1}(z_t)-\cR_{t}(z_t)}.
  \end{equation}
\end{lemma}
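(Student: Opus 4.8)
The plan is to recognize the stated update as dual averaging (equivalently, follow-the-regularized-leader with linearized losses) and to prove the bound by combining the strong-convexity/smoothness duality with a telescoping argument. Since $\theta_t=\sum_{i<t}g_i$, the iterate is $z_t=\argmin_{z\in\cC}\{\tri*{z,\theta_t}+\cR_t(z)\}$. I would fold the constraint $z\in\cC$ into each regularizer (setting $\cR_t=+\infty$ off $\cC$) and introduce the conjugate $\cR_t^{\star}(\phi):=\sup_{z\in\cC}\{\tri*{z,\phi}-\cR_t(z)\}$. The key structural facts I would invoke are (i) because $\cR_t$ is $\beta$-strongly convex with respect to $\nrm*{\cdot}_t$, the conjugate $\cR_t^{\star}$ is differentiable and $\beta^{-1}$-smooth with respect to the dual norm $\nrm*{\cdot}_{t,\star}$, and (ii) $\grad\cR_t^{\star}(-\theta_t)=\argmax_{z\in\cC}\{\tri*{z,-\theta_t}-\cR_t(z)\}=z_t$.

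First I would derive a per-round inequality from smoothness applied at the points $-\theta_t$ and $-\theta_{t+1}=-\theta_t-g_t$:
\[
\cR_t^{\star}(-\theta_{t+1})\le \cR_t^{\star}(-\theta_t)-\tri*{z_t,g_t}+\tfrac{1}{2\beta}\nrm*{g_t}_{t,\star}^{2},
\]
using $\grad\cR_t^{\star}(-\theta_t)=z_t$. Rearranging gives $\tri*{g_t,z_t}\le \cR_t^{\star}(-\theta_t)-\cR_t^{\star}(-\theta_{t+1})+\tfrac{1}{2\beta}\nrm*{g_t}_{t,\star}^{2}$. Summing over $t$, the difference sum $\sum_t[\cR_t^{\star}(-\theta_t)-\cR_t^{\star}(-\theta_{t+1})]$ fails to telescope because the regularizer changes with $t$, so I would reindex and split off correction terms, writing it as $\cR_1^{\star}(-\theta_1)-\cR_T^{\star}(-\theta_{T+1})+\sum_{s=2}^{T}[\cR_s^{\star}(-\theta_s)-\cR_{s-1}^{\star}(-\theta_s)]$.

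Next I would bound each correction term using only the definition of $z_s$ as the maximizer in $\cR_s^{\star}(-\theta_s)$. Since $\cR_s^{\star}(-\theta_s)=-\tri*{z_s,\theta_s}-\cR_s(z_s)$ while $\cR_{s-1}^{\star}(-\theta_s)\ge -\tri*{z_s,\theta_s}-\cR_{s-1}(z_s)$ (evaluating the supremum at the feasible point $z_s$), subtracting yields the term-by-term bound $\cR_s^{\star}(-\theta_s)-\cR_{s-1}^{\star}(-\theta_s)\le \cR_{s-1}(z_s)-\cR_s(z_s)$. For the boundary, I would use $\theta_1=0$, so $\cR_1^{\star}(-\theta_1)=-\min_z\cR_1(z)$, together with Fenchel--Young in the form $-\cR_T^{\star}(-\theta_{T+1})=\min_z\{\tri*{z,\theta_{T+1}}+\cR_T(z)\}\le \tri*{z,\theta_{T+1}}+\cR_T(z)$ for the comparator $z\in\cC$, where $\theta_{T+1}=\sum_{t=1}^{T}g_t$. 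Moving $\tri*{z,\sum_t g_t}$ to the left-hand side converts the played cost into the regret $\sum_t\tri*{g_t,z_t-z}$, and adopting the convention $\cR_0\equiv 0$ (so the $t=1$ correction $\cR_0(z_1)-\cR_1(z_1)=-\min_z\cR_1(z)$ absorbs the boundary term, since $z_1=\argmin_z\cR_1$) repackages everything into the stated sum $\sum_t\big(\tfrac{1}{2\beta}\nrm*{g_t}_{t,\star}^{2}+\cR_{t-1}(z_t)-\cR_t(z_t)\big)$ plus $\cR_T(z)$.

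The hard part will be cleanly establishing the two conjugate-duality facts---that $\beta$-strong convexity of $\cR_t$ on the closed convex set $\cC$ makes $\cR_t^{\star}$ finite, differentiable, and $\beta^{-1}$-smooth, with gradient equal to the argmax $z_t$---since these underpin both the per-round smoothness step and the identity $\grad\cR_t^{\star}(-\theta_t)=z_t$. I would cite the standard duality result rather than reprove it. The remaining bookkeeping (the non-telescoping reindexing and the boundary/convention handling) is elementary but sign-sensitive, so the main care is in tracking the $-\theta$ arguments and the direction of each inequality.
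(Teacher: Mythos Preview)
Your proof is correct. Note, however, that the paper does not actually prove this lemma: it is stated with attribution to \cite{orabona2015generalized}, Lemma~1, and used as a black box in the analysis of the \vaw{} algorithm. Your conjugate-duality argument---passing to $\cR_t^{\star}$, using that $\beta$-strong convexity of $\cR_t$ yields $\beta^{-1}$-smoothness of $\cR_t^{\star}$ with $\nabla\cR_t^{\star}(-\theta_t)=z_t$, then handling the non-telescoping sum via the one-sided bound $\cR_s^{\star}(-\theta_s)-\cR_{s-1}^{\star}(-\theta_s)\le\cR_{s-1}(z_s)-\cR_s(z_s)$---is exactly the standard proof technique used in the cited reference, so there is nothing to contrast.
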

Observe that the \vvaw algorithm \pref{eq:vvaw} is equivalent to running online mirror descent with $g_t=-2A_t^{\trn}b_t$ and $\cR_{t}(z)=\nrm*{z}_{E_t}^{2}$. We use this observation to bound the regret through \pref{lem:omd}. In particular, letting $\nrm*{\cdot}_{t}=\nrm*{\cdot}_{E_t}^{2}$, we may take $\beta=1$, which gives
\begin{align*}
  \sum_{t=1}^{T}f_t(z_t)-f_t(z) &=
                                  \sum_{t=1}^{T}\nrm*{A_tz_t-b_t}^{2}-\nrm*{A_tz-b_t}^{2} \\
  &=
    \sum_{t=1}^{T}2\tri*{-A_t^{\trn}b_t,z_t-z} + \sum_{t=1}^{T}\nrm*{A_tz_t}^{2} - \sum_{t=1}^{T}\nrm*{A_tz}^{2}\\
                                &=
                                  \sum_{t=1}^{T}\tri*{g_t,z_t-z} + \sum_{t=1}^{T}\nrm*{A_tz_t}^{2} - \cR_T(z) + \veps\nrm*{z}^{2} \\
  &\leq{} \cR_T(z) + \sum_{t=1}^{T}\prn*{\frac{1}{2}\nrm*{g_t}^{2}_{E_t^{-1}} + \cR_{t-1}(z_t)-\cR_{t}(z_t)}
    + \sum_{t=1}^{T}\nrm*{A_tz_t}^{2} - \cR_T(z) + \veps\nrm*{z}^{2}\\
                                &= \sum_{t=1}^{T}\prn*{\frac{1}{2}\nrm*{g_t}^{2}_{E_t^{-1}} + \cR_{t-1}(z_t)-\cR_{t}(z_t)}
    + \sum_{t=1}^{T}\nrm*{A_tz_t}^{2} + \veps\nrm*{z}^{2},
\end{align*}
where the inequality uses \pref{lem:omd}, along with the fact that the dual norm for $\nrm*{\cdot}_t$ is $\nrm*{\cdot}_{E_t^{-1}}$. To simplify further, we observe that $\cR_{t-1}(z_t)-\cR_{t}(z_t)=-\nrm*{A_tz_t}^{2}$, so that
\begin{align*}
  \sum_{t=1}^{T}f_t(z_t)-f_t(z) &\leq{}
  \veps\nrm*{z}^{2} + \frac{1}{2}\sum_{t=1}^{T}\nrm*{g_t}^{2}_{E_t^{-1}}\\
  &=
  \veps\nrm*{z}^{2} + 2\sum_{t=1}^{T}\nrm*{A_{t}^{\trn}b_t}^{2}_{E_t^{-1}}
  \leq{}
  \veps\nrm*{z}^{2} + 2Y^{2}\sum_{t=1}^{T}\nrm*{E_t^{-1/2}A_{t}^{\trn}}_{\op}^{2}.
\end{align*}
To bound the right-hand side we use a generalization of the usual log-determinant potential argument. Throughout the argument we use that since $E_t\psdgt{}A_t^{\trn}A_t$, $0\leq{}\nrm*{E_t^{-1/2}A_t^{\trn}}_{\op}<1$. To begin, observe that for each $t$, we have
\begin{align*}
\det(E_{t-1}) =
\det(E_{t} - A_t^{\trn}A_t) =
  \det(E_{t})\cdot\det(I - E_t^{-1/2}A_t^{\trn}A_tE_t^{-1/2}).
\end{align*}
Consequently,
\begin{align*}
\frac{\det(E_t)}{\det(E_{t-1})} &= \frac{1}{\det(I - E_t^{-1/2}A_t^{\trn}A_tE_t^{-1/2})}\\
&= \frac{1}{\prod_{i=1}^{d_2}\prn*{1-\lambda_i\prn*{E_t^{-1/2}A_t^{\trn}A_tE_t^{-1/2}}}}
= \prod_{i=1}^{d_2}\frac{1}{1-\lambda_i\prn*{E_t^{-1/2}A_t^{\trn}A_tE_t^{-1/2}}}.
\end{align*}
Next we observe that since $0\leq{}\nrm*{E_t^{-1/2}A_t^{\trn}}_{\op}<1$, we are guaranteed that 
$\frac{1}{1-\lambda_i\prn*{E_t^{-1/2}A_t^{\trn}A_tE_t^{-1/2}}}\geq{}1$ for all $i$, and consequently
\[
\prod_{i=1}^{d_2}\frac{1}{1-\lambda_i\prn*{E_t^{-1/2}A_t^{\trn}A_tE_t^{-1/2}}}
\geq{} \frac{1}{1-\eigmax\prn*{E_t^{-1/2}A_t^{\trn}A_tE_t^{-1/2}}}\geq{} 1+\eigmax\prn*{E_t^{-1/2}A_t^{\trn}A_tE_t^{-1/2}},
\]
where the second inequality uses that $\frac{1}{1-x}\geq{}1+x$ for $x\in[0,1)$.
%
%
Since $\log{}x$ is increasing, this establishes that
\[
\log\prn*{1+\nrm*{E_t^{-1/2}A_t^{\trn}}_{\op}^{2}} = \log\prn*{1+\eigmax\prn*{E_t^{-1/2}A_t^{\trn}A_tE_t^{-1/2}}} \leq{} \log\prn*{\frac{\det(E_t)}{\det(E_{t-1})}}.
\]
Next we use that since $\nrm*{E_t^{-1/2}A_t^{\trn}}_{\op}\leq{}1$, we have
\[
  \nrm*{E_t^{-1/2}A_t^{\trn}}_{\op}^{2}
  \leq 2\cdot{}\log\prn*{1+\nrm*{E_t^{-1/2}A_t^{\trn}}_{\op}^{2}},
\]
using the elementary inequality $x\leq{}2\log(1+x)$ for all $x\in\brk*{0,1}$. Altogether, this gives
\begin{align*}
  \sum_{t=1}^{T}\nrm*{E_t^{-1/2}A_{t}^{\trn}}_{\op}^{2}
  \leq{} 2 \sum_{t=1}^{T}\log\prn*{\frac{\det(E_t)}{\det(E_{t-1})}} = 2\log\prn*{\frac{\det(E_T)}{\det(E_{0})}},
\end{align*}
where we recall $E_0=\veps{}I$. Finally, we have
\begin{align*}
  \log\prn*{\frac{\det(E_T)}{\det(E_{0})}}
  = \sum_{i=1}^{d_2}\log\prn*{1+ \lambda_i\prn*{\sum_{t=1}^{T}A_t^{\trn}A_t}/\veps}
  &\leq{}  d_2\log\prn*{1+ R^{2}T/(d_2\veps)}.
\end{align*}
\end{dproof}


\subsection{Supporting lemmas for online learning \label{app:supporting_online_learning}}

  \delayreduction*
  \begin{dproof}[\pfref{lem:delay_reduction}]
    Let $\cI_i$ denote the rounds in which instance $i$ was used. Then we have
  \begin{align*}
  \Reg &= \sup_{z\in\cC}\crl*{\sum_{t=1}^{T}f_t(z_t) - \sum_{t=1}^{T}f_t(z)} \\
  &= \sup_{z\in\cC}\crl*{\sum_{i=1}^{h+1}\sum_{t\in\cI_i}f_t(z_t) - f_t(z)} \\
  &\leq{} \sum_{i=1}^{h+1}\sup_{z\in\cC}\crl*{\sum_{t\in\cI_i}f_t(z_t) - f_t(z)} \\
  &\leq{} \sum_{i=1}^{h+1}R(T/(h+1)) \\
  &= (h+1)R(T/(h+1)).
  \end{align*}
  \end{dproof}

  \expconcavequadratic*
    \begin{dproof}[\pfref{lem:exp_concave_quadratic}]
      Recall that the function $f$ is
    $\alpha$-exp-concave if and only if
  \[
  \grad^{2}f(z)\psdgeq{}\alpha\grad{}f(z)\grad{}f(z)^{\trn}.
  \]
    We have
    \[
  \grad{}f(z) = 2A^{\trn}\Sigma(Az -
  b),\quad\text{and}\quad\grad^{2}f(z) = 2A^{\trn}\Sigma{}A.
  \]
  Hence
  \[
    \grad{}f(z)\grad{}f(z)^{\trn}\psdleq{}4A^{\trn}\Sigma{}A\nrm*{b-Az}_{\Sigma}^{2}
    \leq{} 2R\cdot{}\grad^{2}f(z).
  \]
  \end{dproof}


\section{Proofs from Section \ref*{sec:analysis}}
\label{app:analysis}
\newcommand{\Tnot}{T_0}
\newcommand{\Deltanot}{\Delta_0}

\subsection{Proof of Theorem \ref*{thm:main_reg_decomp}}
We restate \pref{thm:main_reg_decomp} here for reference.
\mainregretdecomp*

\begin{dproof}[\pfref{thm:main_reg_decomp}]
To begin, recall that by taking $\Dq$ as in \pref{eq:dq}, we have $\nrm*{q_t^{M}}\leq{}\Dq$ for all $M\in\Mst$, and we also have $\Dqst\leq\Dq$.  

For the first step, let $\pi$ be any policy of the form
$\pi_t(x;\matw)=-\Kinf{}x-q_t^{M_t}(\matw_{t-1})$, and let
$\pilearn_t(x;\matw)=-K_tx-q_t^{M_t}(\matw_{t-1})$ be the
corresponding controller that uses the finite-horizon state-feedback
matrices $\crl*{K_t}_{t=1}^{T}$. To begin, using the performance
difference lemma \pref{eq:cost_perf_diff} along with
\pref{lem:kinf_to_kt}, 
\begin{align*}
  \abs*{\sum_{t=1}^{T}\advstar_t(u^{\pi}_t;x^{\pi}_t,\matw)-
  \advstar_t(u^{\pilearn}_t;x^{\pilearn}_t,\matw)} \leq{} \Ckinf.
\end{align*}
Next, using \pref{lem:advstar}, we have
\begin{align*}
  \sum_{t=1}^{T}\advstar_t(u^{\pilearn}_t;x^{\pilearn}_t,\matw) = \sum_{t=1}^{T}\nrm*{q^{M_t}(\matw_{t-1}) - \qstar_t(\wr[t]) }_{\Sigma_t}^2.
\end{align*}
Using \pref{lem:truncate_regret}, the choice of $h$ in the theorem statement guarantees that
\begin{align*}
    \sum_{t=1}^{T}\abs*{\advstar_{t}(u_t^{\pilearn};x_{t}^{\pilearn},\matw)
  - \| q^{M_t}(\matw_{t-1}) -  \qstar_{t;t+h}(w_{t:t+h})\|_{\Sigma_t}^2}
  \leq \Ctrunc,
\end{align*}
and finally \pref{lem:qt_to_qinf} ensures that
\begin{align*}
  &\left|\sum_{t=1}^{T}\| q^{M_t}(\matw_{t-1}) -  \qstar_{t;t+h}(w_{t:t+h})\|_{\Sigma_t}^2
  - \| q^{M_t}(\matw_{t-1}) -
    \qstar_{\infty;h}(w_{t:t+h})\|_{\Sigma_{\infty}}^2\right|\\
  &=
  \left|\sum_{t=1}^{T}\| q^{M_t}(\matw_{t-1}) -  \qstar_{t;t+h}(w_{t:t+h})\|_{\Sigma_t}^2
        - 
        \advhat_{t;h}( M_t ;\matw_{t+h})
        \right|\\
  &\leq{} C_{q_{\infty},\Sigma_{\infty}}.
\end{align*}
Summing up all the error terms, the total error is proportional to
\begin{align*}
  &\Ckinf + \Ctrunc + C_{q_{\infty},\Sigma_{\infty}}\\
  &=\bigoht\prn*{
    \kappast^{4}\betast^{6}\Psist^{13}\Gammast^{6}(1-\gammast)^{-2}\Dq^{2}\cdot{}\log{}(D_qT)
    } + \bigoht(\Dq^{2}\betast\Psist^{4}\Gammast^{2}(1-\gammast)^{-1}\log{}T)\\
  &~~~~~~+ \bigoht\prn*{
    \Dq^{2}\cdot{}\betast^{4}\Psist^{7}\Gammast^{4}\kappast^{2}(1-\gammast)^{-1}h\log(\Dq{}T) }.
\end{align*}
Using the value for $\Dq$ from \pref{eq:dq} and that $h=\bigoht((1-\gammast)^{-1}\log{}T)$, we upper bound the total error as
\begin{align*}
  \bigoht\prn*{
  \betast^{11}\Psist^{19}\Gammast^{11}\kappapi^{8}(1-\gammapi)^{-4}\log^{2}T
  }.
\end{align*}

\end{dproof}

\subsection{Supporting lemmas}

\qstartruncate*

\begin{dproof}[\pfref{lem:qstar_truncate}]
Let $\tau=t+h$. Then we have
  \begin{align*}
    \qstar_{t:\tau}(w_{t:\tau})
    - \qstar_{t:T}(w_{t:T})
    = \sum_{i=\tau+1}^{T-1}\Sigma_t^{-1}B^{\trn}\prn*{\prod_{j=t+1}^{i}\Aclt[j]^{\trn}}P_{i+1}w_i,
  \end{align*}
  Hence we can bound the error as
    \begin{align*}
      \nrm*{\qstar_{t:\tau}(w_{t:\tau})
    - \qstar_{t:T}(w_{t:T})}
      = \betast\Psist\Gammast\sum_{i=\tau+1}^{T-1}\nrm*{\prod_{j=t+1}^{i}\Aclt[j]^{\trn}}_{\op}.
    \end{align*}
    We bound each term in the sum as
    \begin{align*}
      \nrm*{\prod_{j=t+1}^{i}\Aclt[j]^{\trn}}_{\op}
      & \leq{} \nrm*{\prod_{j=t+1}^{\tau+1}\Aclt[j]^{\trn}}_{\op}
        \nrm*{\prod_{j=\tau+1}^{i}\Aclt[j]^{\trn}}_{\op}.
        \intertext{Applying \pref{lem:closed_loop_refined} to the
        first term and \pref{lem:closed_loop_bound} to the second,
        this is at most}
        & \leq{} \kappast^{2}\betast\Gammast\gammab^{\tau-t}.
    \end{align*}
The result follows by summing.
\end{dproof}

\truncateqregret*
\begin{dproof}[\pfref{lem:truncate_regret}]
  First recall that we have
  $\nrm*{\Sigma_t}_{\op}\leq{}\nrm*{R}_{\op}+\nrm*{B}_{\op}^{2}\nrm*{\Pinf}_{\op}\leq{}2\Psist^{2}\Gammast\rdef\Dsig$. Let $h$ be fixed, and let $\Ttrunc\ldef{}\Tstab-h$, so that
  $t+h\leq{}\Tstab$ for all $t\leq{}\Ttrunc$. We begin by writing off
  all of the timesteps after $\Ttrunc$:
  \begin{align*}
    &\sum_{t=1}^{T}\abs*{\advstar_{t}(\pilearn_t(x_t^{\pilearn});x_{t}^{\pilearn},\matw)
      - \| q_t(\matw) -  \qstar_{t;t+h}(w_{t:t+h})\|_{\Sigma_t}^2}\\
    &=\sum_{t=1}^{T}\abs*{\| q_t(\matw) -  \qstar_{t}(w_{t:T})\|_{\Sigma_t}^2
      - \| q_t(\matw) -  \qstar_{t;t+h}(w_{t:t+h})\|_{\Sigma_t}^2}\\
    &\leq{}\sum_{t=1}^{\Ttrunc}\abs*{\| q_t(\matw) -  \qstar_{t}(w_{t:T})\|_{\Sigma_t}^2
      - \| q_t(\matw) -  \qstar_{t;t+h}(w_{t:t+h})\|_{\Sigma_t}^2} + 4\Dq^{2}\Dsig(\Deltast+h)\\
    &\leq{}4\Dq\Dsig\sum_{t=1}^{\Ttrunc}\nrm*{\qstar_{t}(w_{t:T}) -  \qstar_{t;t+h}(w_{t:t+h})}_{\Sigma_t} + 4\Dq^{2}\Dsig(\Deltast+h).
  \end{align*}
    Since $t+h\leq{}\Tstab$ for all $t$ in the last summation,
  \pref{lem:qstar_truncate} implies that
  \begin{align*}
    \sum_{t=1}^{\Ttrunc}\nrm*{\qstar_{t}(w_{t:T}) -  \qstar_{t;t+h}(w_{t:t+h})}_{\Sigma_t} \leq{} \kappast^{2}\betast^{2}\Psist\Gammast^{2}T^{2}\gammab^{h}
    \leq{} \kappast^{2}\betast^{2}\Psist\Gammast^{2}T^{2}\exp\prn*{-(1-\gammab)h}.
  \end{align*}
  By choosing $h=(1-\gammabar)^{-1}\log(\kappast^{2}\betast^{2}\Psist\Gammast^{2}T^{2})$, the total error from
  this term is $\bigoh(1)$. Combining
  this with the previous bound, we see that the total error is at most
  \begin{align*}
    \bigoh(\Dq\Dsig + \Dq^{2}\Dsig(\Deltast+h)).
  \end{align*}
  Lastly, we simplify by using that
  $\Deltast=\bigoht(\betast\Psist^{2}\Gammast)$ and expanding $h$ and $\Dsig$, so
  that the final error term is at most
  \[
    \bigoht(\Dq^{2}\betast\Psist^{4}\Gammast^{2}(1-\gammast)^{-1}\log{}T).
  \]
\end{dproof}
\kinftokt*
\begin{dproof}[\pfref{lem:kinf_to_kt}]
To begin, suppose that that the states under both controllers satisfy
$\nrm*{x}\leq{}\Dx$ and the actions satisfy $\nrm*{u}\leq{}\Du$, where
$\Dx,\Du\geq{}1$. Then,
we immediately have
\begin{align*}
  \abs*{\cost(\pilearn,\matw)-\cost(\pi,\matw)} \leq{}
  2\Psist\sum_{t=1}^{T}\Dx\nrm*{x_{t}^{\pilearn}(\matw)-x_{t}^{\pi}(\matw)}
  + \Du\nrm*{u_{t}^{\pilearn}(\matw)-u_{t}^{\pi}(\matw)},
\end{align*}
which follows becase the function $x\mapsto\nrm*{x}^{2}$ is
$2C$-Lipschitz whenever $\nrm*{x}\leq{}C$. We will first bound the
state and action errors on the right hand side, then give appropriate
bounds on $\Dx$ and $\Du$ at the end of the proof.

Let $\Delta_0$ be fixed, and let $T_0=T-\Delta_0$. Then we can bound
the error further as
\begin{align*}
        &\abs*{\cost(\pilearn;\matw)-\cost(\pi;\matw)}\\
        &\leq{}
        2\Psist\sum_{t=1}^{T}\Dx\nrm*{x_{t}^{\pilearn}(\matw)-x_{t}^{\pi}(\matw)}
          + \Du\nrm*{u_{t}^{\pilearn}(\matw)-u_{t}^{\pi}(\matw)}\\
        &\leq{}
          4\Psist(\Dx^2+\Du^2)\Deltanot
          + 
          2\Psist\sum_{t=1}^{\Tnot}\Dx\nrm*{x_{t}^{\pilearn}(\matw)-x_{t}^{\pi}(\matw)}
        + \Du\nrm*{u_{t}^{\pilearn}(\matw)-u_{t}^{\pi}(\matw)},
      \end{align*}
      For the control error term, we further have
      \begin{align*}
        \sum_{t=1}^{\Tnot}
        \nrm*{u_t^{\pilearn}(\matw)-u_t^{\pi}(\matw)}
        &=\sum_{t=1}^{\Tnot}
          \nrm*{\Kt{}x_t^{\pilearn}(\matw)-\Kinf{}x_t^{\pi}(\matw)}\\
        &\leq{}\sum_{t=1}^{\Tnot}
          \nrm*{\Kinf}\nrm*{x_t^{\pilearn}(\matw)-x_t^{\pi}(\matw)}
          + \Dx\sum_{t=1}^{\Tnot}\nrm*{\Kt{}-\Kinf}_{\op}\\
        &\leq{}\sum_{t=1}^{\Tnot}
          \betast\Psist^{2}\Gammast\nrm*{x_t^{\pilearn}(\matw)-x_t^{\pi}(\matw)}
          + \Dx\sum_{t=1}^{\Tnot}\nrm*{\Kt{}-\Kinf}_{\op}.
      \end{align*}
      In total, this gives us
      \begin{align*}
        &\abs*{\cost(\pilearn;\matw)-\cost(\pi;\matw)}\\
        &\leq{}
          4\Psist(\Dx^2+\Du^2)\Deltanot
          + 
          2\Psist\sum_{t=1}^{\Tnot}(\Dx+\Du\betast\Psist^{2}\Gammast)\nrm*{x_{t}^{\pilearn}(\matw)-x_{t}^{\pi}(\matw)}
          + \Dx\Du\nrm*{\Kt-\Kinf}_{\op}.
      \end{align*}
To bound the state error, we recall that from
\pref{lem:state_expression}, we have
\[
  x_{t+1}^{\pilearn}(\matw_t)
  - x_{t+1}^{\pi}(\matw_t)
  = \sum_{i=1}^{t}(\Aclt[i\to{}t]-\Aclinf^{t-i})(w_i-B{}q_i(\matw_{i-1})),
\]
and so
\[
  \nrm*{x_{t+1}^{\pilearn}(\matw_t)
  - x_{t+1}^{\pi}(\matw_t)}
\leq{} 2\Psist\Dq\sum_{i=1}^{t}\nrm*{\Aclt[i\to{}t]-\Aclinf^{t-i}}_{\op}.
\]
To bound the error, we recall
\pref{lem:at_bound}, restated here.
\atbound*
We set $\alpha=4\Dx\Du\betast\Psist^{4}\Gammast\Dq$, which ensures
that
  \[
  \abs*{\cost(\pilearn;\matw)-\cost(\pi;\matw)}
  \leq{}
  4\Psist(\Dx^2+\Du^2)\Deltanot + C',
  \]
where $C'$ is a numerical constant. To conclude, we recall from \pref{lem:state_action_bound} that we can
take $D_x \leq{} \bigoht\prn*{
        \kappast^{2}\betast^{3/2}\Psist^{3}\Gammast^{3/2}(1-\gammast)^{-1}\cdot\Dq
      }$ and $D_u\leq{}\bigoht\prn*{
        \kappast^{2}\betast^{5/2}\Psist^{5}\Gammast^{5/2}(1-\gammast)^{-1}\cdot\Dq
      }$.

\end{dproof}

  \qttoqinf*
  \begin{dproof}[\pfref{lem:qt_to_qinf}]
    Before diving into the proof, we recall that, since
    $P_t\psdleq\Pinf$, we have
    \[
      \nrm*{\Sigma_t}_{\op}\leq{}\nrm*{\Sigma_{\infty}}_{\op}=\nrm*{\Rx
        + B^{\trn}\Pinf{}B}_{\op}\leq{}2\Psist^{2}\Gammast \rdef \Dsig.
    \]
  We also recall that $\Dq\geq{}\Dqst$. Now let $\Delta_0\in\bbN$ be a fixed constant to be chosen later, and let $T_0=T-\Delta_0$. We immediately upper
bound the error as
\begin{align*}
&\abs*{\sum_{t=1}^{T}\| q_t -
                 \qstar_{t;t+h}(w_{t:t+h})\|_{\Sigma_t}^2 - \| q_t -
                 \qstar_{\infty;h}(w_{t:t+h})\|_{\Sigma_{\infty}}^2}\\
  &\leq{}\abs*{\sum_{t=1}^{T_0}\| q_t -
    \qstar_{t;t+h}(w_{t:t+h})\|_{\Sigma_t}^2 - \| q_t -
    \qstar_{\infty;h}(w_{t:t+h})\|_{\Sigma_{\infty}}^2}
    + 4\Dsig{}\Dq{}^{2}\Delta_0.
\end{align*}
Now, let $t\leq{}T_0$ be fixed. We upper bound the error for each time
as
\begin{align*}
&  \abs*{\nrm*{ q_t -
  \qstar_{t;t+h}(w_{t:t+h})}_{\Sigma_t}^2 -
                 \nrm*{ q_t -\qstar_{\infty;h}(w_{t:t+h})}_{\Sigma_{\infty}}^2}\\
  &\leq{}  \abs*{\nrm*{ q_t -
    \qstar_{t;t+h}(w_{t:t+h})}_{\Sigma_t}^2 -\nrm*{ q_t -
    \qstar_{t;t+h}(w_{t:t+h})}_{\Sigmainf}^2}\\
  &~~~~+
    \abs*{\nrm*{ q_t -
    \qstar_{t;t+h}(w_{t:t+h})}_{\Sigmainf}^2-
    \nrm*{ q_t -\qstar_{\infty;h}(w_{t:t+h})}_{\Sigma_{\infty}}^2}\\
  &\leq{}\Dq{}^{2}\underbrace{\nrm*{\Sigma_t-\Sigmainf}_{\op}}_{\cE_1}
    + 4\Dq{}\Dsig\underbrace{\nrm*{\qstar_{t;t+h}(w_{t:t+h})-\qstar_{\infty;h}(w_{t:t+h})}}_{\cE_2}.
\end{align*}
\paragraph{Bounding $\cE_1$.}
Expanding the definition of $\Sigma_t$ and $\Siginf$, we immediately
see that
$\nrm*{\Sigma_t-\Sigmainf}_{\op}\leq{}\Psist^{2}\nrm*{P_{t+1}-\Pinf}_{\op}$. Using
\pref{lem:value_iteration}, we have
\begin{align*}
  \nrm*{P_{t+1}-\Pinf}_{\op}^{2}
  \leq{} \betast\Gammast(1+\nust^{-1})^{-(T-t)},
\end{align*}
where $\nust= 2\betast\Psist^{2}\Gammast$. Hence, summing across all
rounds, we have
\begin{align}
  \sum_{t=1}^{T}\nrm*{\Pt-\Pinf}_{\op}
  &\leq\sum_{t=1}^{T}\betast^{1/2}\Gammast^{1/2}(1+\nust^{-1})^{-(T-t)/2}.\notag
    \intertext{Since $\nust^{-1}\leq{}1$ and $1+x\geq{}e^{x/2}$ for
    $x\in[0,1]$, we can upper bound by}
  &\leq\betast^{1/2}\Gammast^{1/2}\sum_{t=1}^{T}e^{-\nust^{-1}(T-t)/4}\notag\\
  &\leq{}O(\betast^{1/2}\Gammast^{1/2}\nust)\notag\\&=O(\betast^{3/2}\Psist^{2}\Gammast^{3/2}),\label{eq:pt_err_bound}
\end{align}
and $\sum_{t=1}^{T}\nrm*{\Sigt-\Siginf}_{\op}\leq{}O(\betast^{3/2}\Psist^{4}\Gammast^{3/2})$.
\paragraph{Bounding $\cE_2$.}
Let $t\leq{}T_0\leq{}T-h$ be fixed, then we have
\begin{align*}
  &\nrm*{\qstar_{t;t+h}(w_{t:t+h})-\qstar_{\infty;h}(w_{t:t+h})}\\
  &  = \nrm*{
  \sum_{i=t}^{t+h}\Sigma_t^{-1}B^{\trn}\prn*{\prod_{j=t+1}^{i}\Aclt[j]^{\trn}}P_{i+1}w_i
  -
    \sum_{i=t}^{t+h}\Sigma_{\infty}^{-1}B^{\trn}(\Aclinf^{\trn})^{i-t}P_{\infty}w_{i}}\\
  &= \nrm*{
  \sum_{i=t}^{t+h}\Sigma_t^{-1}B^{\trn}\Aclt[t\to{}i]^{\trn}P_{i+1}w_i
  -
    \sum_{i=t}^{t+h}\Sigma_{\infty}^{-1}B^{\trn}(\Aclinf^{\trn})^{i-t}P_{\infty}w_{i}}\\
  &\leq{}
    \sum_{i=t}^{t+h}\nrm*{\Sigma_t^{-1}B^{\trn}\Aclt[t\to{}i]^{\trn}P_{i+1}-\Sigma_{\infty}^{-1}B^{\trn}(\Aclinf^{\trn})^{i-t}P_{\infty}}_{\op}.
\end{align*}
Note that for each timestep we have
\begin{align*}
  &
    \nrm*{\Sigma_t^{-1}B^{\trn}\Aclt[t\to{}i]^{\trn}P_{i+1}-\Sigma_{\infty}^{-1}B^{\trn}(\Aclinf^{\trn})^{i-t}P_{\infty}}_{\op}\\
  &\leq{}
    \nrm*{(\Sigma_t^{-1}-\Siginf^{-1})B^{\trn}\Aclt[t\to{}i]^{\trn}P_{i+1}}_{\op}
    +
    \nrm*{\Siginf^{-1}B^{\trn}(\Aclt[t\to{}i]^{\trn}-(\Aclinf^{\trn})^{i-t})P_{i+1}}_{\op}\\
  &~~~~+ \nrm*{\Sigma_{\infty}^{-1}B^{\trn}(\Aclinf^{\trn})^{i-t}(P_{i+1}-P_{\infty})}_{\op}.
\end{align*}
If we select $T_0\leq\Tstab-h$, then we are guaranteed by
\pref{lem:closed_loop_refined} that
$\nrm*{\Aclt[t\to{}i]}_{\op}\leq{}\betast^{1/2}\Gammast^{1/2}\kappast^{2}\gammab^{i-t}$,
and we also know that
$\nrm*{\Aclinf^{i-t}}_{\op}\leq{}\kappast\gammast^{i-t}$. Hence, we
can upper bound the errors above by
\begin{align*}
   & \betast^{1/2}\Psist\Gammast^{3/2}\kappast^{2}\gammab^{i-t}\nrm*{\Sigma_t^{-1}-\Siginf^{-1}}_{\op}
    +
     \betast\Psist\Gammast\nrm*{\Aclt[t\to{}i]^{\trn}-(\Aclinf^{\trn})^{i-t}}_{\op}\\
  &~~~~+ \betast\Psist\kappast\gammast^{i-t}\nrm*{P_{i+1}-P_{\infty}}_{\op}.
\end{align*}
Furthermore, recall that $\Sigt=\Rx+B^{\trn}P_{t+1}B\psdgeq\Rx$ and
$\Siginf=\Rx+B^{\trn}\Pinf{}B\psdgeq\Rx$, and so we have
\[
\nrm*{\Sigma_t^{-1}-\Siginf^{-1}}_{\op}\leq\betast^{2}\Psist^{2}\nrm*{P_{t+1}-\Pinf}_{\op}.
\]
Putting everything together this gives
\begin{align*}
  &\sum_{t=1}^{T_0}\nrm*{\qstar_{t;t+h}(w_{t:t+h})-\qstar_{\infty;h}(w_{t:t+h})}\\
  &\leq
  2\betast^{5/2}\Psist^{3}\Gammast^{3/2}\kappast^{2}(1-\gammast)^{-1}(h+1) \sum_{t=1}^{T_0}\nrm*{P_{t+1}-\Pinf}_{\op}
  +
  \betast\Psist\Gammast \sum_{t=1}^{T_0}\sum_{i=t}^{t+h}\nrm*{\Aclt[t\to{}i]^{\trn}-(\Aclinf^{\trn})^{i-t}}_{\op}\\
  &~~~~+ \betast\Psist\kappast
    \sum_{t=1}^{T_0}\sum_{i=t}^{t+h}\gammast^{i-t}\nrm*{P_{i+1}-P_{\infty}}_{\op}\\
    &\leq
  4\betast^{5/2}\Psist^{3}\Gammast^{3/2}\kappast^{2}(1-\gammast)^{-1}(h+1) \sum_{t=1}^{T_0}\nrm*{P_{t+1}-\Pinf}_{\op}
  +
  \betast\Psist\Gammast
      \sum_{t=1}^{T_0}\sum_{i=t}^{t+h}\nrm*{\Aclt[t\to{}i]^{\trn}-(\Aclinf^{\trn})^{i-t}}_{\op}.
      \intertext{Recalling \pref{eq:pt_err_bound}, we can further
      upper bound the first erm:}
  &\leq
    O\prn*{\betast^{4}\Psist^{5}\Gammast^{3}\kappast^{2}(1-\gammast)^{-1}h}
  +
    \betast\Psist\Gammast \sum_{t=1}^{T_0}\sum_{i=t}^{t+h}\nrm*{\Aclt[t\to{}i]^{\trn}-(\Aclinf^{\trn})^{i-t}}_{\op}.
\end{align*}
To bound the last term, we recall \pref{lem:at_bound}.
\atbound*
We choose $\alpha=\betast\Psist\Gammast{}\Dq{}\Dsig$, and set
$\Delta_0=C\cdot{}\betast\Psist^{2}\Gammast\log(\kappast^{2}\Psist\Gammast(1-\gammast)^{-1}\cdot\alpha{}T^{3})\vee\Deltast+h$,
so we are ensured that
\[
\betast\Psist\Gammast
\sum_{t=1}^{T_0}\sum_{i=t}^{t+h}\nrm*{\Aclt[t\to{}i]^{\trn}-(\Aclinf^{\trn})^{i-t}}_{\op}
\leq{} C\cdot\frac{1}{\Dq{}\Dsig}.
\]
Putting everything together leads to a final bound of
\begin{align*}
  &O\prn*{\Dq{}^{2}\betast^{3/2}\Psist^{4}\Gammast^{3/2}}
  +
  O\prn*{\Dq{}\Dsig\betast^{4}\Psist^{5}\Gammast^{3}\kappast^{2}(1-\gammast)^{-1}h}
  + O(\Dsig{}\Dq{}^{2}\Delta_0)\\
  &=
  \bigoht\prn*{\Dq{}^{2}\betast^{3/2}\Psist^{4}\Gammast^{3/2}
    +\Dq{}\betast^{4}\Psist^{7}\Gammast^{4}\kappast^{2}(1-\gammast)^{-1}h
  + \Dq{}^{2}\cdot{}\betast\Psist^{4}\Gammast^{2}\log(\Dq{}T)}.
\end{align*}

\end{dproof}


\end{document}